\documentclass[11pt,onecolumn,letterpaper,final]{IEEEtran}
\IEEEoverridecommandlockouts

\usepackage[colorlinks=true,citecolor=blue,linkcolor=blue]{hyperref}
\newcommand{\circled}[1]{\small{\raisebox{.6pt}{\textcircled{\raisebox{-.8pt}{#1}}}}}

\usepackage{amssymb}
\usepackage{amsmath,mathrsfs,dsfont}
\usepackage{nicefrac}
\usepackage{algorithm}
\usepackage{algorithmicx}
\usepackage{algpseudocode}

\usepackage{booktabs}

\usepackage{color}
\usepackage{enumitem}

\usepackage{array,cite}
\usepackage{graphicx,tikz}
\usepackage[mathscr]{euscript}
\usepackage{amsthm}
\usepackage{cite}

\usepackage{bm}
\usepackage{bbm}
\usepackage{color}

\usepackage{color}
\usepackage{epstopdf}
\usepackage{subcaption}
\usepackage{cleveref}
\usepackage{thmtools}
\usepackage{thm-restate}

\usepackage{bbding}
\usepackage{mathtools}
\usepackage{wrapfig}
\usepackage{colortbl}
\usepackage{slashbox}
\usepackage[htt]{hyphenat}
\allowdisplaybreaks
\captionsetup[subfigure]{subrefformat=simple,labelformat=simple}

\newcommand{\cfrakR}{\mathfrak{R}} 
\newcommand{\relu}[1]{\sigma\pth{#1}}

\newcommand{\tK}{\tilde K}

\newcommand{\tbx}{\tilde \bx}
\newcommand{\tby}{{\tilde \by}}
\newcommand{\tbu}{\tilde \bu}
\newcommand{\tbv}{\tilde \bv}

\newcommand{\bbx}{\overset{\rightharpoonup}{\bx}}
\newcommand{\tbbx}{\overset{\rightharpoonup}{\tilde \bx}}
\newcommand{\bbw}{\overset{\rightharpoonup}{\bw}}
\newcommand{\bbe}{\overset{\rightharpoonup}{\be}}

\newcommand{\cFnn}{\cF_{\mathop\mathrm{NN}}}

\newcommand{\poly}{\mathop\mathrm{poly}}



%

\title{Gradient Descent Finds Over-Parameterized Neural Networks with Sharp Generalization for Nonparametric Regression}


\author{Yingzhen Yang$^{1}$ and Ping Li$^{2}$,
\\
$^{1}$Arizona State University,
699 S Mill RD, Tempe, AZ 85281, USA\\
$^{2}$VecML Inc., Bellevue, WA 98004, USA \\
}
\graphicspath{{illustrations/}}
\newcounter{optproblem}

\newtheoremstyle{mytheoremstyle} 
    {\topsep}                    
    {\topsep}                    
    {\normalfont}                
    {}                           
    {\bfseries}                   
    {.}                          
    {.5em}                       
    {}  

\theoremstyle{mytheoremstyle}
\newtheorem{theorem}{Theorem}[section]
\newtheorem{remark}[theorem]{Remark}
\newtheorem{proposition}[theorem]{Proposition}
\newtheorem{corollary}[theorem]{Corollary}

\newtheorem*{theorem*}{Theorem}
\newtheorem*{lemma*}{Lemma}
\newtheorem*{remark*}{Remark}

\newtheorem{lemma}[theorem]{Lemma}







\theoremstyle{remark}
\newtheorem{definition}{Definition}[section]



\DeclareMathAlphabet{\pazocal}{OMS}{zplm}{m}{n}
\DeclareMathAlphabet{\mathpzc}{OMS}{pzc}{m}{it}

\setlist[itemize]{leftmargin=*}



\renewcommand{\hat}{\widehat}

\newcommand{\bfm}[1]{\ensuremath{\mathbf{#1}}}
\newcommand{\bfsym}[1]{\ensuremath{\boldsymbol{#1}}}

\def\ba{\boldsymbol a}   \def\bA{\bfm A}

   \def\bD{\bfm D}  
\def\be{\bfm e}   \def\bE{\bfm E}

   \def\bH{\bfm H}  
   \def\bI{\bfm I}  
     
   \def\bK{\bfm K}

     \def\NN{\mathbb{N}}

   \def\bQ{\bfm Q}  
   \def\bR{\bfm R}  \def\RR{\mathbb{R}}
\def\bs{\bfm s}   \def\bS{\bfm S}  
     
\def\bu{\bfm u}   \def\bU{\bfm U}  
\def\bv{\bfm v}     
\def\bw{\bfm w}   \def\bW{\bfm W}  
\def\bx{\bfm x}   \def\bX{\bfm X}  
\def\by{\bfm y}     
   \def\bZ{\bfm Z}  
\def\bzero{\bfm 0}

 \def\cB{{\cal  B}}

 \def\cE{{\cal  E}}
 \def\cF{{\cal  F}}
 
 \def\cH{{\cal  H}}

 \def\cN{{\cal  N}}
 \def\cO{{\cal  O}}

 \def\cR{{\cal  R}}

 \def\cV{{\cal  V}}
 \def\cW{{\cal  W}}
 \def\cX{{\cal  X}}

\def\balpha{\bfsym \alpha}
\def\bbeta{\bfsym \beta}

\def\bmu{\bfsym {\mu}}

\def\bsigma{\bfsym \sigma}
\def\bSigma{\bfsym \Sigma}

\def\hlambda{\hat{\lambda}}


\def\+#1{\mathcal{#1}}
\def\-#1{\textup{#1}}

\def\set#1{\left\{ #1 \right\}}
\def\pth#1{\left( #1 \right)}
\def\bth#1{\left[ #1 \right]}
\def\abth#1{\left | #1 \right |}

\def\defeq {\coloneqq}

\newcommand{\vect}[1]{{\textup{vec}\pth{#1}}}

\DeclareMathSymbol{\relcolon}{\mathrel}{operators}{"3A}


\newcommand{\La}{\left\langle\kern-0.64ex\left\langle}
\newcommand{\Ra}{\right\rangle\kern-0.64ex\right\rangle}

\def\Norm#1#2{{\left\vert\kern-0.4ex\left\vert\kern-0.4ex\left\vert #1
    \right\vert\kern-0.4ex\right\vert\kern-0.4ex\right\vert}_{#2}}
\def\norm#1#2{{\left\|#1\right\|}_{#2}}

\def\ltwonorm#1{\norm{#1}{2}}
\def\fnorm#1{\norm{#1}{\textup{F}}}
\def\supnorm#1{\norm{#1}{\infty}}

\def \Proj  {\mathbb{P}}

\def\tr#1{\textup{tr}\left(#1\right)}

\newcommand{\1}{{\rm 1}\kern-0.25em{\rm I}}
\def\indict#1{{\rm 1}\kern-0.25em{\rm I}_{\set{#1}}}



\def \eps  {\epsilon}
\def \eps {\varepsilon}

\def \diff {{\rm d}}
\def \iprod#1#2{\left\langle #1, #2 \right\rangle}

\def\set#1{\left\{#1\right\}}

\def\ball#1#2#3{\bfm{B}^{#1}\left(#2; #3\right)}

\def\unitsphere#1{\mathbb{S}^{#1}}


\def \E {\mathbb{E}}
\def\Expect#1#2{\E_{#1}\left[#2\right]}

\def \Pr {\textup{Pr}}
\newcommand{\Prob}[1]{\Pr\left[#1\right]}
\def \Var#1{\textup{Var}\left[#1\right]}

\def \lsim {\lesssim}
\def \gsim {\gtrsim}

\newcommand{\Unif}[1]{{\mathop\mathrm{Unif}}\left( #1 \right)}





\newcommand{\beq}{\begin{equation}}
\newcommand{\eeq}{\end{equation}}
\newcommand{\beqa}{\begin{eqnarray}}
\newcommand{\eeqa}{\end{eqnarray}}
\newcommand{\beqas}{\begin{eqnarray*}}
\newcommand{\eeqas}{\end{eqnarray*}}
\def\bal#1\eal{\begin{align}#1\end{align}}
\def\bals#1\eals{\begin{align*}#1\end{align*}}
\def\bsal#1\esal{\begin{small}\begin{align}#1\end{align}\end{small}}
\def\bsals#1\esals{\begin{small}\begin{align*}#1\end{align*}\end{small}}
\def\bsfal#1\esfal{\begin{small}\begin{flalign}#1\end{flalign}\end{small}}

\begin{document}

\maketitle

\begin{abstract}
We study nonparametric regression by an over-parameterized two-layer neural network trained by gradient descent (GD) in this paper. We show that, if the neural network is trained by GD with early stopping, then the trained network renders a sharp rate of the nonparametric regression risk of $\cO(\eps_n^2)$,  which is the same rate as that for the classical kernel regression trained by GD with early stopping, where $\eps_n$ is the critical population rate of the Neural Tangent Kernel (NTK) associated with the network and $n$ is the size of the training data. It is remarked that our result does not require distributional assumptions about the covariate as long as the covariate is bounded, in a strong contrast with many existing results which rely on specific distributions of the covariates such as the spherical uniform data distribution or distributions satisfying certain restrictive conditions. The rate $\cO(\eps_n^2)$ is known to be minimax optimal for specific cases, such as the case that the NTK has a polynomial eigenvalue decay rate which happens under certain distributional assumptions on the covariates. Our result formally fills the gap between training a classical kernel regression model and training an over-parameterized but finite-width neural network by GD for nonparametric regression without distributional assumptions on the bounded covariate. We also provide confirmative answers to certain open questions or address particular concerns in the literature of training over-parameterized neural networks by GD with early stopping for nonparametric regression, including the characterization of the stopping time, the lower bound for the network width, and the constant learning rate used in GD.
\end{abstract}
\begin{IEEEkeywords}
\noindent  Nonparametric Regression, Over-Parameterized Neural Network, Gradient Descent, Minimax~Optimal~Rate
\end{IEEEkeywords}

\section{Introduction}
With the stunning success of deep learning in various areas of machine learning~\cite{YannLecunNature05-DeepLearning},
generalization analysis for neural networks is of central interest
for statistical learning learning and deep learning. Considerable efforts
have been made to analyze the optimization of deep neural networks showing that gradient descent (GD)
and stochastic gradient descent (SGD) provably achieve vanishing training loss
\cite{du2018gradient-gd-dnns,AllenZhuLS19-convergence-dnns,DuLL0Z19-GD-dnns,AroraDHLW19-fine-grained-two-layer,ZouG19,SuY19-convergence-spectral}.
There are also extensive efforts devoted to generalization analysis of deep neural networks (DNNs) with algorithmic guarantees,
that is, the generalization bounds for neural networks trained by gradient descent or its variants.
It has been shown that with sufficient over-parameterization, that is, with enough number of neurons in hidden layers,
the training dynamics of deep neural networks (DNNs) can be approximated by that of a kernel method with the kernel induced by the neural network architecture, termed the Neural Tangent Kernel (NTK)~\cite{JacotHG18-NTK}, while other studies such as \cite{YangH21-feature-learning-infinite-network-width} show that infinite-width neural networks can still learn features.
The key idea of NTK based generalization analysis is that, for highly over-parameterized networks, the network weights almost remain around their random initialization. As a result, one can use the first-order Taylor expansion around initialization to approximate the neural network functions and analyze their generalization capability~\cite{CaoG19a-sgd-wide-dnns,
AroraDHLW19-fine-grained-two-layer,Ghorbani2021-linearized-two-layer-nn}. 

Many existing works in generalization analysis of neural networks focus on clean data, but it is a central problem in statistical learning that how neural networks can obtain sharp convergence rates for the risk of nonparametric regression where the observed data are corrupted by noise. Considerable research has been conducted in this direction which shows that various types of DNNs achieve optimal convergence rates for smooth~\cite{Yarotsky17-error-approximation-dnns,Bauer2019-regression-dnns-curse-dim,SchmidtHieber2020-regression-relu-dnns,Jiao2023-regression-polynomial-prefactors,ZhangW23-weight=decay-dnns} or non-smooth~\cite{ImaizumiF19-nonsmooth-regression} target functions for nonparametric regression. However, many of these works do not have algorithmic guarantees, that is, the DNNs in these works are constructed specially to achieve optimal rates with no guarantees that an optimization algorithm, such as GD or its variants, can obtain such constructed DNNs.
To this end, efforts have been made in the literature
to study the minimax optimal  risk rates for nonparametric regression with over-parameterized neural networks trained by GD with either
early stopping~\cite{Li2024-edr-general-domain} or
$\ell^2$-regularization~\cite{HuWLC21-regularization-minimax-uniform-spherical,
SuhKH22-overparameterized-gd-minimax}. However, most existing works
either require spherical uniform distribution for the covariates
\cite{HuWLC21-regularization-minimax-uniform-spherical,
SuhKH22-overparameterized-gd-minimax} or certain restrictive conditions on the distribution of the covariates~\cite{Li2024-edr-general-domain}.

It remains an interesting and important question for the statistical learning and theoretical deep learning literature that if an over-parameterized neural network trained by GD can achieve sharp risk rates for nonparametric regression without assumptions or restrictions on the distribution of the covariates, so that theoretical guarantees can be obtained for data in more practical scenarios. In this paper, we give a confirmative answer to this question. We present sharp risk rate which is distribution-free in the bounded covariate for nonparametric regression with an over-parameterized two-layer NN trained by GD with early stopping. Throughout this paper, distribution-free in the bounded covariate means that there are no distributional assumptions about the covariate as long as
the covariate lies on a bounded (or compact) input space. Furthermore, our results give confirmative answers to certain open questions or address particular concerns in the literature of training over-parameterized
neural networks by GD with early stopping for nonparametric regression with minimax optimal rates, including the characterization of the stopping time in the early-stopping mechanism, the lower bound for the network width, and the constant learning rate used in GD. Benefiting from our analysis which is distribution-free in the bounded covariate, our answers to these open questions or concerns do not
require distributional assumptions about bounded covariate. Section~\ref{sec:summary-main-results} summarizes our main results with their significance and comparison to relevant existing works.

We organize this paper as follows. We first introduce the necessary notations in the remainder of this section. We then introduce in Section~\ref{sec:setup} the problem setup for nonparametric regression. Our main results are summarized in Section~\ref{sec:summary-main-results} and formally introduced in Section~\ref{sec:main-results}.
The training algorithm for the over-parameterized two-layer neural network is introduced in Section~\ref{sec:training}. The roadmap of proofs and the novel proof strategy of this work are presented in Section~\ref{sec:proof-roadmap}.

\vspace{0.1in}\noindent \textbf{Notations.} We use bold letters for matrices and vectors, and regular lower letter for scalars throughout this paper. The bold letter with a single superscript indicates the corresponding column of a matrix, e.g., $\bA^{(i)}$ is the $i$-th column of matrix $\bA$, and the bold letter with subscripts indicates the corresponding rows of elements of a matrix or vector. We put an arrow on top of
a letter with subscript if it denotes a vector, e.g.,
$\bbx_i$ denotes the $i$-th training
feature. $\norm{\cdot}{F}$ and
$\norm{\cdot}{p}$ denote the Frobenius norm and the vector $\ell^{p}$-norm or the matrix $p$-norm. $[m\relcolon n]$ denotes all the integers between $m$ and $n$ inclusively, and $[1\relcolon n]$ is also written as $[n]$. $\Var{\cdot}$ denotes the variance of a random variable. $\bI_n$ is a $n \times n$ identity matrix.  $\indict{E}$ is an indicator function which takes the value of $1$ if event $E$ happens, or $0$ otherwise. The complement of a set $A$ is denoted by $A^c$, and $\abth{A}$ is the cardinality of the set $A$. $\vect{\cdot}$ denotes the vectorization of a matrix or a set of vectors, and $\tr{\cdot}$ is the trace of a matrix.
We denote the unit sphere in $d$-dimensional Euclidean space by $\unitsphere{d-1} \defeq \{\bx \colon  \bx \in \RR^d, \ltwonorm{\bx} =1\}$. Let $\cX$ denote the input space, and
$L^2(\cX, \mu) $ denote the space of square-integrable functions on $\cX$ with probability measure $\mu$, and the inner product $\iprod{\cdot}{\cdot}_{L^2(\mu)}$ and $\norm{\cdot}{{L^2(\mu)}}^2$ are defined as $\iprod{f}{g}_{L^2(\mu)} \coloneqq \int_{\cX}f(x)g(x) \diff \mu(x)$ and $\norm{f}{L^2(\mu)}^2 \coloneqq \int_{\cX}f^2(x) \diff \mu (x) <\infty$. $\ball{}{\bx}{r}$ is the Euclidean closed ball centered at $\bx$ with radius $r$. Given a function $g \colon \cX \to \RR$, its $L^{\infty}$-norm is denoted by $\norm{g}{\infty} \defeq \sup_{\bx \in \cX} \abth{g(\bx)}$, and
$L^{\infty}$ is the function class whose elements have bounded $L^{\infty}$-norm. $\iprod{\cdot}{\cdot}_{\cH}$ and $\norm{\cdot}{\cH}$ denote the inner product and the norm in the Hilbert space $\cH$. $a = \cO(b)$ or $a \lsim b$ indicates that there exists a constant $c>0$ such that $a \le cb$. $\tilde \cO$ indicates there are specific requirements in the constants of the $\cO$ notation. $a = o(b)$ and $a = w(b)$ indicate that $\lim \abth{a/b}  = 0$ and $\lim \abth{a/b}  = \infty$, respectively. $a \asymp b$  or $a = \Theta(b)$ denotes that
there exists constants $c_1,c_2>0$ such that $c_1b \le a \le c_2b$. $\Unif{\unitsphere{d-1}}$ denotes the uniform distribution on $\unitsphere{d-1}$.
The constants defined throughout this paper may change from line to line.
For a Reproducing Kernel Hilbert Space $\cH$, we use $\cH(\mu_0)$ to denote
the ball centered at the origin with radius $\mu_0$ in $\cH$.
We use $\Expect{P}{\cdot}$ to denote the expectation with respect to the distribution $P$.
\vspace{-.05in}
\section{Problem Setup}
\label{sec:setup}
We introduce the problem setup for nonparametric regression in this section.
\subsection{Two-Layer Neural Network}
\label{sec:setup-two-layer-nn}

We are given the training data $\set{(\bbx_i,  y_i)}_{i=1}^n$ where each data point is a tuple of feature vector $\bbx_i \in \cX \subseteq \RR^d$ and its response $y_i \in \RR$. Throughout this paper we assume
that no two training features coincide, that is, $\bbx_i \neq \bbx_j$ for all $i,j \in [n]$ and $i \neq j$.
We denote the training feature vectors by $\bS = \set{\bbx_i}_{i=1}^n$, and denote by $P_n$ the empirical distribution over $\bS$. All the responses are stacked as a vector $ \by = [ y_1, \ldots,  y_n]^\top \in \RR^n$.
The response $ y_i$ is given by $ y_i= f^*(\bbx_i) + w_i$ for
$i \in [n]$,
where $\set{w_i}_{i=1}^n$ are i.i.d. sub-Gaussian random noise with mean $0$ and variance proxy $\sigma_0^2$, that is, $\Expect{}{\exp(\lambda w_i)} \le \exp(\lambda^2 \sigma_0^2/2)$ for any $\lambda \in \RR$.
$f^*$ is the target function to be detailed later. We define $\by \defeq \bth{y_1,\ldots,y_n}$, $\bw \defeq \bth{w_1,\ldots,w_n}^{\top}$, and use $f^*(\bS) \defeq \bth{f^*(\bbx_1),\ldots,f^*(\bbx_n)}^{\top}$ to denote the clean target labels.
The feature vectors in $\bS$ are drawn i.i.d. according to an underlying unknown data distribution $P$ supported on $\cX$ with the probability measure $\mu$. In this work, $P$  is absolutely continuous with respect to the usual Lebesgue measure in $\RR^d$.

For a vector $\bx \in \RR^d$, we let $\tbx = \bth{\bx^{\top} \,\, 1}^{\top} \in \RR^{d+1}$ obtained by appending $1$ at the last coordinate of $\bx$.
We consider a two-layer neural network (NN) in this paper whose
mapping function is
\bal\label{eq:two-layer-nn}
&f(\bW,\bx) =  \frac{1}{\sqrt{m}}\sum_{r=1}^{m}
a_r \relu{{\bbw_r}^\top \tbx},
\eal%
where $\bx \in \cX$ is the input, $\sigma(\cdot) = \max\set{\cdot,0}$ is the ReLU activation function, $\bW = \set{\bbw_r}_{r=1}^m$ with $\bbw_r \in \RR^{d+1}$ for $r \in [m]$ denotes the weighting vectors in the first layer and $m$ is the number of neurons. $\ba = \bth{a_1, \ldots, a_m} \in \RR^m$ denotes the weights of the second layer.
It is noted that the construction of $\tbx$ allows for learning the bias $\bth{\bbw_r}_{d+1}$ in the first layer by ${\bbw_r}^\top \tbx = \bth{\bbw_r}_{1\colon d}^{\top} \bx + \bth{\bbw_r}_{d+1}$
for $r \in [m]$. Throughout this paper we also write $\bW,\bw_r$ as $\bW_{\bS},\bw_{\bS,r}$ from time to time so as to indicate that the weights are trained on the training features $\bS$.
Moreover, we let $\tbbx_i = \bth{\bbx_i^{\top} \,\, 1}^{\top} \in \RR^{d+1}$ for all $i \in [n]$.

\subsection{Kernel and Kernel Regression for Nonparametric Regression}
\label{sec:setup-kernels-target-function}

We define the kernel function
\bal\label{eq:kernel-two-layer}
&K(\bu, \bv) \defeq  \frac{ \iprod{\tbu}{\tbv}}{2\pi} \pth{\pi -\arccos \iprod{\tbu/\ltwonorm{\tbu}}{\tbv/\ltwonorm{\tbv}}}, ~~~ \forall ~ \bu, \bv \in \cX,
\eal%
where $\tbu = \bth{\bu^{\top} \,\, 1}^{\top} \in \RR^{d+1}$ and $\tbv = \bth{\bv^{\top} \,\, 1}^{\top} \in \RR^{d+1}$.
It can be verified that $K$ is the NTK~\cite{JacotHG18-NTK} associated with
the two-layer NN (\ref{eq:two-layer-nn}) with the constant second layer weights $\ba$, and $K$ is a positive-definite (PD) kernel. Let the gram matrix of $K$ over the training data $\bS$ be $\bK \in \RR^{n \times n}, \bK_{ij} = K(\bbx_i,\bbx_j)$ for $i,j \in [n]$,
and $\bK_n \defeq \bK/n$ is the empirical NTK matrix. Let the eigendecomposition  of $\bK_n$ be $\bK_n = \bU \bSigma {\bU}^{\top}$ where $\bU$ is a $n \times n$ orthogonal matrix, and $\bSigma$ is a diagonal matrix with its diagonal elements $\set{\hat \lambda_i}_{i=1}^n$ being eigenvalues of $\bK_n$ and sorted in a non-increasing order.  It is proved in existing works, such as~\cite{du2018gradient-gd-dnns}, that $\bK_n$ is non-singular.
Let $\cH_{K}$ be the Reproducing Kernel Hilbert Space (RKHS) associated with
$K$, and every element in $\cH_K$ is a function defined on $\cX$.
Since $K$ is continuous, it can be verified that if the $L^2$-norm of $K$
in the product space $L^2(\cX^2, \mu \otimes \mu )$ is finite, that is,
\bal\label{eq:K-L2-norm-bound}
\int_{\cX \times \cX} K^2(\bu,\bv) \diff \mu(\bu) \otimes \mu(\bv) < \infty,
\eal
then it follows from standard functional analysis for RKHS on both bounded and unbounded input spaces, such as \cite[Proposition 1]{RKHS-unbounded-domain-2005},
that the integral operator associated with $K$, $T_K \colon L^2(\cX,\mu) \to L^2(\cX,\mu)$,
$\allowdisplaybreaks (T_K f)(\bx) \defeq \int_{\cX} \allowdisplaybreaks K(\bx,\bx') f(\bx') \diff \mu(\bx')$ is a bounded, positive, self-adjoint, and compact operator on $L^2(\cX,\mu)$ By the spectral theorem, there is a countable orthonormal basis $\set{e_j}_{j \ge 1} \subseteq L^2(\cX,\mu)$ of $T_K(L^2(\cX,\mu))$. Moreover, $e_j$ is the eigenfunction of $T_K$ with
$\lambda_j$ being the corresponding eigenvalue, that is, $T_K e_j = \lambda_j e_j$ for all $j \ge 1$, and $\lambda_1 \ge \lambda_2 \ge \ldots > 0$.
It is well known that $\set{v_j = \sqrt {\lambda_j} e_j}_{j\ \ge 1}$ is an orthonormal basis of $\cH_K$. It follows from the Mercer's theorem that $K(\bu,\bv') = \sum_{j \ge 1} \lambda_j e_j(\bu) e_j(\bv)$. For a positive constant $\mu_0$, we define $\cH_{K}(\mu_0) \defeq \set{f \in \cH_{K} \colon \norm{f}{\cH} \le \mu_0}$ as the closed ball in $\cH_K$ centered at $0$ with radius $\mu_0$. We note that $\cH_{K}(\mu_0)$ is also specified by $\cH_{K}(\mu_0) = \{f \in L^2(\cX,\mu)\colon f = \sum_{j =1}^{\infty} \beta_j e_j,
\sum_{j = 1}^{\infty} \beta_j^2/\lambda_j \le \mu_0^2\}$.

We consider a bounded (or compact) input space $\cX$ throughout this paper, and we let  $\sup_{\bx \in \cX} \ltwonorm{\tbx} \le u_0$ with $u_0 > 1$ being a positive constant. Then $\sup_{\bu,\bv \in \cX} \abth{K(\bu, \bv)} \le u_0^2 /2$ which is finite, so that the condition (\ref{eq:K-L2-norm-bound}) holds and the above discussion about $K$ is valid.

\vspace{0.1in}\noindent \textbf{The task of nonparametric regression.} With $f^* \in \cH_K(\mu_0)$, the task of the analysis for nonparametric regression is to find an estimator $\hat f$ from the training data $\set{(\bbx_i,  y_i)}_{i=1}^n$ so that
the risk $\Expect{P}{\pth{\hat f - f^*}^2}$ converges to $0$ with a fast rate. We note that $\supnorm{f^*} \le \mu_0u_0/{\sqrt 2}$. We aim to establish a sharp rate of the risk where the estimator $\hat f$ is obtained from the over-parameterized neural network (\ref{eq:two-layer-nn}) trained by GD with early stopping.
%

\vspace{0.1in}\noindent \textbf{Sharp rate of the risk of nonparametric regression using classical kernel regression.}
The statistical learning literature has established rich results in the
sharp convergence rates for the risk of nonparametric kernel regression~\cite{Stone1985,Yang1999-minimax-rates-convergence,
RaskuttiWY14-early-stopping-kernel-regression,
Yuan2016-minimax-additive-models}, with one representative result in
\cite{RaskuttiWY14-early-stopping-kernel-regression} about kernel regression trained by GD with early stopping.
Let $\eps_n$ be the critical population rate of the PD kernel $K$,
which is also referred to as the critical
radius~\cite{Wainwright2019-high-dim-statistics} of $K$.~\cite[Theorem 2]{RaskuttiWY14-early-stopping-kernel-regression} shows the following
sharp bound for the nonparametric regression risk of a kernel regression model trained by GD with early stopping when $f^* \in \cH_K(\mu_0)$. That is, with probability at least
$1-\Theta\pth{\exp(-\Theta(n\eps_n^2)}$,
\bal\label{eq:minimax-kernel-regression-bound}
\Expect{P}{\pth{f_{\hat T} - f^*}^2} \lsim \eps_n^2,
\eal
where $\hat T$ is the stopping time whose formal definition is deferred to Section~\ref{sec:kernel-complexity}, and $f_{\hat T} $ is the kernel regressor at the $\hat T$-th step of GD for the optimization problem of kernel regression.
The risk bound (\ref{eq:minimax-kernel-regression-bound}) is rather sharp, since it is minimax optimal in several popular learning setups, such as the setup where the eigenvalues $\set{\lambda_i}_{i \ge 1}$ exhibit a
certain polynomial decay. Such risk bound
 (\ref{eq:minimax-kernel-regression-bound}) also holds for a general PD kernel rather than the NTK (\ref{eq:kernel-two-layer}), and
the risk bound (\ref{eq:minimax-kernel-regression-bound}) is also minimax optimal when the PD kernel is low rank. It is also remarked that the risk bound (\ref{eq:minimax-kernel-regression-bound}) is distribution-free in the bounded covariate which does not require distributional assumptions on the covariates.
Interested readers are referred to ~\cite{RaskuttiWY14-early-stopping-kernel-regression} for more details.

The main result of this paper
is that the
over-parameterized two-layer NN (\ref{eq:two-layer-nn})
trained by GD with early stopping achieves the same order of risk rate as that in
(\ref{eq:minimax-kernel-regression-bound}) without additional distributional assumptions on the covariates as long as the input space $\cX$ is bounded or compact, which is summarized in the next section.

\section{Summary of Main Results}
\label{sec:summary-main-results}
We present a summary of our main results with a detailed discussion about the relevant literature in this section.
First, Theorem~\ref{theorem:LRC-population-NN-fixed-point} in Section~\ref{sec:main-results-details} shows that when the input space is bounded or compact, then the neural network (\ref{eq:two-layer-nn}) trained by GD with early stopping using Algorithm~\ref{alg:GD} enjoys a sharp rate of the nonparametric regression risk, $\cO\pth{\eps_n^2}$, which is the same as that for
the classical kernel regression in (\ref{eq:minimax-kernel-regression-bound}).
Such rate of  nonparametric regression risk in Theorem~\ref{theorem:LRC-population-NN-fixed-point} is distribution-free in the bounded covariate, which is close to practical scenarios without distributional assumptions on the bounded covariates. Table~\ref{table:main-results-comparison} compares our work
 to several existing works for nonparametric regression with a common setup, that is, $f^* \in \cH_{\tK}$ and the responses $\set{y_i}_{i=1}^n$ are corrupted by i.i.d. Gaussian or sub-Gaussian noise. In  Table~\ref{table:main-results-comparison},
the target function $f^*$ belongs to $\cH_{\tK}$, the RKHS associated with the NTK $\tK$ of the network in each particular existing work. We note that $\tK$ is the NTK of the network considered in a particular existing work which may not be the same as our NTK (\ref{eq:kernel-two-layer}).

Theorem~\ref{theorem:LRC-population-NN-fixed-point} immediately leads to minimax optimal rates obtained by several existing works, such as \cite[Theorem 5.2]{HuWLC21-regularization-minimax-uniform-spherical} and \cite[Theorem 3.11]{SuhKH22-overparameterized-gd-minimax}, as its special cases. In particular, when the eigenvalues of the integral operator associated with $K$ has a particular polynomial eigenvalue decay rate (EDR), that is,
$\lambda_j \asymp j^{-2\alpha}$ for all $j \ge 1$, then Corollary~\ref{corollary:minimax-nonparametric-regression} as a direct consequence of Theorem~\ref{theorem:LRC-population-NN-fixed-point} shows the sharp risk of the order
$\cO(n^{-2\alpha/(2\alpha+1)})$, which is minimax optimal for such polynomial EDR
\cite{Stone1985,Yang1999-minimax-rates-convergence,Yuan2016-minimax-additive-models}. The literature in training over-parameterized neural networks for nonparametric regression has
explored various distribution assumptions about $P$ so that such polynomial EDR holds. As an example of the polynomial EDR,
$\lambda_j \asymp j^{-(d+1)/d}$ for $j \ge 1$
happens for our NTK (\ref{eq:kernel-two-layer}) and the NTK in \cite{Li2024-edr-general-domain} if $\cX = \unitsphere{d-1}$ and $P = \Unif{\unitsphere{d-1}}$, or more generally
the probability density function of $P$ satisfies $p(\bx) \lsim (1+\ltwonorm{\bx}^2)^{-{(d+3)}}$ for all $\bx \in \cX$. Please refer to Proposition~\ref{proposition:EDR-unitsphere-Sd} for a proof for such polynomial EDR of the NTK (\ref{eq:kernel-two-layer}) which is based on the proofs in \cite{BiettiM19,BiettiB21,Li2024-edr-general-domain}. As another example of the polynomial EDR, Remark~\ref{remark:edr-on-unitsphere-in-Rd} explains that when $\cX = \unitsphere{d-1}$, $P = \Unif{\unitsphere{d-1}}$, and the bias of the neural network (\ref{eq:two-layer-nn}) is not learned (that is, $\bth{\bbw_r}_{d+1} = 0$ for all $r \in [m]$ when training the neural network by GD) and $f^*$ is in the RKHS associated with the corresponding NTK ($K_1$ defined in Remark~\ref{remark:edr-on-unitsphere-in-Rd}) with a bounded RKHS-norm, then the polynomial EDR
$\lambda_j \asymp j^{-d/(d-1)}$ for $j \ge 1$ holds for the corresponding NTK and
our main results still hold, leading to the minimax optimal rate of the order $\cO(n^{-d/(2d-1)})$.

It is remarked that all the results and discussions about the polynomial EDR in this paper are for the setting with fixed dimension $d \ge 4$, which is a widely adopted setting in the existing works such as
\cite{HuWLC21-regularization-minimax-uniform-spherical,Li2024-edr-general-domain}. The polynomial EDR, $\lambda_j \asymp j^{-d/(d-1)}$ for $j \ge 1$, holds for the NTK considered in
\cite{HuWLC21-regularization-minimax-uniform-spherical,SuhKH22-overparameterized-gd-minimax} under the assumption that $\cX = \unitsphere{d-1}$ and $P = \Unif{\unitsphere{d-1}}$. Under such assumption, the minimax optimal rate $\cO(n^{-d/(2d-1)})$ is obtained by \cite[Theorem 5.2]{HuWLC21-regularization-minimax-uniform-spherical} and \cite[Theorem 3.11]{SuhKH22-overparameterized-gd-minimax}, which is in fact derived as a special case of
our Corollary~\ref{corollary:minimax-nonparametric-regression} with $2\alpha = d/(d-1)$ under the same assumption with the minor changes mentioned above and detailed in
Remark~\ref{remark:edr-on-unitsphere-in-Rd}. In fact, without such changes Corollary~\ref{corollary:minimax-nonparametric-regression} leads to a slightly sharper and minimax optimal rate of the order $\cO(n^{-(d+1)/(2d+1)})$ with $2\alpha = (d+1)/d$. As another example, under the assumption that $P$ is sub-Gaussian supported on an unbounded $\cX \subseteq \RR^d$,
\cite[Proposition 13]{Li2024-edr-general-domain} shows the polynomial EDR, $\lambda_j \asymp j^{-(d+1)/d}$ for $j \ge 1$, holds, and in this case the minimax optimal rate is
$\cO(({\log n}/n)^{(d+1)/(2d+1)})$ for an unbounded input space according to \cite[Theorem 1]{Caponnetto2007-optimal-least-reguarlized} (with $c=1,b=(d+1)/d$ in that theorem).
The nearly-optimal rate of the order
$\cO(\log^2(1/\delta) \cdot n^{-(d+1)/(2d+1)})$
with probability $1-\delta$ and $\delta \in (0,1)$ is achieved by \cite[Proposition 13 with $s=1$]{Li2024-edr-general-domain}. $s=1$ in \cite[Proposition 13]{Li2024-edr-general-domain} ensures that the target function $f^*$
 is in the RKHS associated with the NTK $\tK$ of the DNN considered in \cite{Li2024-edr-general-domain} with a bounded RKHS-norm, which is the setup considered in this paper. We note that the same rate
 $\cO(\log^2(1/\delta) \cdot n^{-(d+1)/(2d+1)})$  can be obtained by applying the same proof strategy of \cite[Proposition 13]{Li2024-edr-general-domain} to the case with bounded input space when the probability density function $p$ of the covariate distribution $P$ satisfies a restrictive condition, that is,
 $p(\bx) \lsim (1+\ltwonorm{\bx}^2)^{-{(d+3)}}$ for all $\bx \in \cX$ according to \cite[Theorem 8 and Theorem 10]{Li2024-edr-general-domain}. In contrast, with a bounded input space
 $\cX$ and under the same assumption that $p(\bx) \lsim (1+\ltwonorm{\bx}^2)^{-{(d+3)}}$ for all $\bx \in \cX$, Proposition~\ref{proposition:EDR-unitsphere-Sd} shows that the
 polynomial EDR, $\lambda_j \asymp j^{-(d+1)/d}$
 for $j \ge 1$, holds for our NTK (\ref{eq:kernel-two-layer}), so that our Corollary~\ref{corollary:minimax-nonparametric-regression} leads to a sharper rate of
 the order $\cO(n^{-(d+1)/(2d+1)})$ which is also minimax optimal for a bounded input space. On the other hand, \cite[Proposition 13]{Li2024-edr-general-domain}
achieves the rate $\cO(\log^2(1/\delta) \cdot n^{-(s(d+1))/(s(d+1)+d)})$
for the target function in an
interpolation Hilbert space $\bth{\cH_{\tK}}^s$ for $s \ge 0$, and $\bth{\cH_{\tK}}^1 = \cH_{\tK}$.


\begin{table}[t!]
        \centering
        \caption{Comparison between our result and the existing works with the minimax optimal risk rates and the distributional assumptions on the covariate
        for nonparametric regression by training over-parameterized neural networks with algorithmic guarantees. The listed results here are under a common and popular setup that $f^* \in \cH_{\tK}$ with a bounded RKHS-norm, where $\tK$ is the NTK of the network considered in a particular existing work, and the responses $\set{y_i}_{i=1}^n$ are corrupted by i.i.d. Gaussian or sub-Gaussian noise with zero mean. Discussions about more relevant works are presented in Section~\ref{sec:summary-main-results}. }
        \resizebox{1\linewidth}{!}{
        \begin{tabular}{|l|c|c|c|c|}
                \hline
                \textbf{Existing Works and Our Result}
                & \textbf{Distributional Assumptions on the Covariates} & \textbf{Eigenvalue Decay Rate (EDR)} &\textbf{Rate of Nonparametric Regression Risk }
\\ \hline
\cite[Theorem 2]{KuzborskijS21-minimax-early-stopping}
 & No  & -- &Not minimax optimal, $\sigma_0^2 + \cO(n^{-{2}/(2+d)})$ \\ \hline
\begin{tabular}{@{}c@{}}
\cite[Theorem 5.2]{HuWLC21-regularization-minimax-uniform-spherical},\\
 ~\cite[Theorem 3.11]{SuhKH22-overparameterized-gd-minimax}
\end{tabular}
&$P$ is $\Unif{\cX}$. & $\lambda_j \asymp j^{-d/(d-1)}$
&minimax optimal, $\cO(n^{-{d}/(2d-1)})$ \\ \hline
\begin{tabular}{@{}c@{}}
Our Results: Theorem~\ref{theorem:LRC-population-NN-fixed-point} \\
and Corollary~\ref{corollary:minimax-nonparametric-regression}
\end{tabular}
 &\cellcolor{blue!15}No assumption about $P$ as long as $\cX$ is bounded.  &\cellcolor{blue!15}
{No requirement for  EDR} &\cellcolor{blue!15} \begin{tabular}{@{}c@{}}
$\cO\pth{\eps_n^2}$, which leads to minimax optimal rates, \\
such as those claimed in \cite{HuWLC21-regularization-minimax-uniform-spherical,SuhKH22-overparameterized-gd-minimax} as special cases.
\end{tabular}
\\ \hline
       \end{tabular}
        }
\label{table:main-results-comparison}
\end{table}

We further note that \cite{KuzborskijS21-minimax-early-stopping} considers
a Lipschitz continuous target function $f^*$. Although the result in
\cite[Theorem 2]{KuzborskijS21-minimax-early-stopping} does not require distributional assumptions, its risk rate under this common setup ($f^* \in \cH_{\tK}$ and responses are corrupted by i.i.d. Gaussian or sub-Gaussian noise) is not minimax optimal due to the term $\sigma_0^2$ in the risk bound. In fact, when $ f^*$ is Lipschitz continuous,  the minimax optimal rate is $\cO(n^{-{2}/(2+d)})$~\cite{Gyorfi2002-distri-free-nonpara-regression}. We note that
\cite[Theorem 1]{KuzborskijS21-minimax-early-stopping} shows
the minimax optimal rate of $\cO(n^{-{2}/(2+d)})$, however, this rate is derived for the noiseless case where
the responses are not corrupted by noise. Furthermore, the other term
$\cO(n^{-{2}/(2+d)})$ in its risk bound suffers from the curse of dimension with a slow rate to $0$ for high-dimensional data.


Second, our results provide confirmative answers to several outstanding open questions or address particular concerns in the existing literature about training over-parameterized neural networks for nonparametric regression by GD with early stopping and sharp risk rates, which are detailed below.

\vspace{0.1in}\noindent \textbf{Stopping time in the early-stopping mechanism.}
An open question raised in
\cite{KuzborskijS21-minimax-early-stopping,
HuWLC21-regularization-minimax-uniform-spherical} is how to characterize the stopping time in the early-stopping mechanism when training the over-parameterized network by GD. Let $\hat T$ be the stopping time,
\cite[Proposition 13 with $s=1$]{Li2024-edr-general-domain} shows that the stopping time should satisfy $\hat T \asymp n^{(d+1)/(2d+1)}$ under the assumption
that $P$ is a sub-Gaussian distribution. In contrast,
Theorem~\ref{theorem:LRC-population-NN-fixed-point} provides a characterization of $\hat T$ showing that $\hat T \asymp \eps_n^{-2}$, which is distribution-free in the bounded covariate. Under such distributional assumption required by~\cite{Li2024-edr-general-domain},
it follows from~Proposition~\ref{proposition:EDR-unitsphere-Sd}
 that the polynomial EDR, $\lambda_j \asymp j^{-(d+1)/d}$ for all $j \ge 1$, holds for our
 NTK (\ref{eq:kernel-two-layer}), so that
$\eps_n^{-2} \asymp n^{(d+1)/(2d+1)}$ by \cite[Corollary 3]{RaskuttiWY14-early-stopping-kernel-regression}. As a result,  the stopping time  established by our Corollary~\ref{corollary:minimax-nonparametric-regression} is of the same order $\Theta(n^{(d+1)/(2d+1)})$ as that in~\cite[Proposition 13]{Li2024-edr-general-domain} with $s=1$.

\vspace{0.1in}\noindent \textbf{Lower bound for the network width $m$.} Our main result,
Theorem~\ref{theorem:LRC-population-NN-fixed-point}, requires that
the network width $m$, which is the number of  neurons in the first layer
of the two-layer NN (\ref{eq:two-layer-nn}), satisfies $m \gsim {d^{\frac 52}}/{\eps_n^{25}}$. Such lower bound for $m$ solely depends on $d$ and $\eps_n$. Under the polynomial EDR,
Corollary~\ref{corollary:minimax-nonparametric-regression}, as a direct consequence of Theorem~\ref{theorem:LRC-population-NN-fixed-point}, shows that
$m$ should satisfy $m \gsim
n^{\frac{25\alpha}{2\alpha+1}} d^{\frac 52}$ so that GD with early stopping leads to the minimax rate of $\cO(n^{-{2\alpha}/(2\alpha+1)})$. We remark that this is the first time that the lower bound for the network width $m$ is specified only in terms of $n$ and $d$ under the polynomial EDR with a minimax optimal risk rate for nonparametric regression, which can be easily estimated from the training data. In contrast, under the same polynomial EDR,
all the existing works
~\cite{HuWLC21-regularization-minimax-uniform-spherical,
SuhKH22-overparameterized-gd-minimax,Li2024-edr-general-domain} require
$m \gsim \poly(n,1/\hlambda_n)$. The problem with the lower bound $\poly(n,1/\hlambda_n)$ is that one needs additional assumptions on the training data~\cite{Bartlett_Montanari_Rakhlin_2021,NguyenMM21-smallest-eigen-NTK} to find the lower bound for $\hlambda_n$, which is the minimal eigenvalue of
the empirical NTK matrix $\bK_n$, to further estimate the lower bound for $m$ using the training data.

Corollary~\ref{corollary:minimax-nonparametric-regression} also gives a competitive and smaller lower bound for the network width $m$ than some existing works which give explicit orders of the lower bound for $m$. For example, under the
assumption of uniform spherical distribution of the covariates,~\cite[Theorem 3.11]{SuhKH22-overparameterized-gd-minimax}
requires that $m/\log^3 m \gsim L^{20} n^{24}$ where $L$ is the number of layers of the DNN used in that work, and $m/\log^3 m \gsim 2^{20} n^{24}$ even with $L=2$ for the two-layer NN (\ref{eq:two-layer-nn}) used in our work. Furthermore, the proof of~\cite[Proposition 13]{Li2024-edr-general-domain} suggests that
$m \gsim n^{24} (\log m)^{12}$. Both lower bounds for $m$
in~\cite[Theorem 3.11]{SuhKH22-overparameterized-gd-minimax} and~\cite[Proposition 13]{Li2024-edr-general-domain} are  much larger than our lower bound for $m$, $n^{\frac{25\alpha}{2\alpha+1}} d^{\frac 52}$, when $n \to \infty$ and
$d$ is fixed, which is the setup considered in
\cite{Li2024-edr-general-domain}. It is worthwhile to mention that
\cite{SuhKH22-overparameterized-gd-minimax,Li2024-edr-general-domain}
use DNNs with multiple layers for nonparametric regression. Through our careful analysis, a shallow two-layer NN (\ref{eq:two-layer-nn}) exhibits the same
minimax risk rate as its deeper counterpart under the same assumptions with much smaller network width. This observation further supports the claim
in~\cite{BiettiB21} that shallow over-parameterized
neural networks with ReLU activations exhibit the same approximation
 properties as their deeper counterparts in our
 nonparametric regression setup.

\vspace{0.1in}\noindent \textbf{Training the neural network with constant learning rate
$\eta = \Theta(1)$.} It is also worthwhile to mention that our main result,
Theorem~\ref{theorem:LRC-population-NN-fixed-point}, suggests that a constant learning rate $\eta = \Theta(1)$ can be used for GD when training the two-layer NN (\ref{eq:two-layer-nn}), which could lead to better empirical optimization performance in practice. This is because any $\eta \in (0,2/u_0^2)$ can be used as the learning rate with $u_0$ being a positive constant. Some existing works in fact require an infinitesimal $\eta$. For example, when $\cX$ is bounded, \cite[Theorem 5.2]{HuWLC21-regularization-minimax-uniform-spherical}
requires the learning rates for both the squared loss and the $\ell^2$-regularization term to have the order of $o(n^{-(3d-1)/(2d-1)}) \to 0$
as $n \to \infty$. \cite[Theorem 3.11]{SuhKH22-overparameterized-gd-minimax}
uses the learning rate of $\cO(1/(n^2 L^2 m)) \to 0$ as $n \to \infty$, where $L$ is thd depth of the neural network. Furthermore, \cite[Proposition 13]{Li2024-edr-general-domain} uses the gradient flow where $\eta \to 0$ instead of the practical GD to train the neural network for an unbounded input space. We note that \cite{NitandaS21} also employs constant learning rate in SGD to train neural networks.

\vspace{0.1in}\noindent
\textbf{More discussion about this work and the relevant literature. } We herein provide more discussion about the results of this work and comparison
to the existing relevant works with sharp rates for nonparametric regression. While this paper establishes sharp rate which is distribution-free
in the bounded covariate, such rate still depends on a bounded input space and the condition that the target function $f^* \in \cH_K(\mu_0)$.
Some other existing works consider certain target function $f^*$ not belonging to the RKHS ball centered at the origin with constant or low radius, such as \cite{HaasHLS23,Bordelon2024}.
However, the target functions in \cite{HaasHLS23,Bordelon2024} escape the finite norm or low norm regime of RKHS at the cost of  restrictive conditions on the probability density function
of the covariate distribution or the training process. In particular, \cite[Theorem G.5]{HaasHLS23} requires the condition for a bounded probability density function (in its condition (D3)) of the distribution $P$, which is not required by our result. Moreover, the training process of the model in \cite{Bordelon2024} requires information about
the target function (in its Eq. (4)) and certain distribution $P$ which admits certain polynomial EDR, that is, $\lambda_j \asymp j^{-\alpha}$ with $\alpha > 1$, which happens under certain restrictive conditions
on $P$.

We also note that in this work, only the first layer of an over-parameterized two-layer neural network is trained, while the weights of the second layer are randomly initialized and then fixed
in the training process. In
existing works such as \cite{HuWLC21-regularization-minimax-uniform-spherical,SuhKH22-overparameterized-gd-minimax,Allen-ZhuLL19-generalization-dnns},
all the layers of a deep neural network with more than two layers are trained by GD or its variants. However, this work shows that only training the first layer still leads to
a sharp rate for nonparametric regression, which supports the claim in \cite{BiettiB21} that a shallow over-parameterized
neural network with ReLU activations exhibits the same approximation
properties as its deeper counterpart.


\section{Training by Gradient Descent and Preconditioned Gradient Descent}
\label{sec:training}
In the training process of our two-layer NN (\ref{eq:two-layer-nn}), only $\bW$ is optimized with $\ba$ randomly initialized to $\pm 1$ with equal probabilities and then fixed. The following quadratic loss function is minimized during the training process:
\bal \label{eq:obj-dnns}
L(\bW) \defeq \frac{1}{2n} \sum_{i=1}^{n} \pth{f(\bW,\bbx_i) - y_i}^2.
\eal%

In the $(t+1)$-th step of GD with $t \ge 0$, the weights of the neural network, $\bW_{\bS}$, are updated by one-step of GD through
\bal\label{eq:GD-two-layer-nn}
\vect{\bW_{\bS}(t+1)} - \vect{\bW_{\bS}(t)} = - \frac{\eta}{n} \bZ_{\bS}(t)
(\hat \by(t) -  \by),
\eal
where $\by_i = y_i$, $\hat \by(t) \in \RR^n$ with $\bth{\hat \by(t)}_i = f(\bW(t),\bbx_i)$. The notations with the subscript $\bS$ indicate the dependence on the training features $\bS$. We also denote $f(\bW(t),\cdot)$ as
$f_t(\cdot)$ as the neural network function with the weighting vectors $\bW(t)$ obtained right after the $t$-th step of GD.
We define $\bZ_{\bS}(t) \in \RR^{m(d+1) \times n}$ which is computed by
\bal\label{eq:GD-Z-two-layer-nn}
\bth{\bZ_{\bS}(t)}_{[(r-1)(d+1)+1:r(d+1)]i} = \frac {1}{{\sqrt m}}
\indict{\bbw_r(t)^\top \tbbx_i \ge 0}  \tbbx_i a_r, \, i \in [n], r \in [m],
\eal%
where $\bth{\bZ_{\bS}(t)}_{[(r-1)(d+1)+1:r(d+1)]i} \in \RR^{d+1}$ is a vector with elements in the $i$-th column of $\bZ_{\bS}(t)$ with indices in
$[(r-1)(d+1)+1:r(d+1)]$. We employ the following particular type of random initialization so that $\hat \by(0) = \bzero$, which has been used in earlier works such as~\cite{Chizat2019-lazy-training-differentiable-programming}. In our two-layer NN, $m$ is even, $\set{\bbw_{2r'}(0)}_{r'=1}^{m/2}$ and $\set{a_{2r'}}_{r'=1}^{m/2}$ are initialized randomly and independently according to
\bal\label{eq:random-init}
\bbw_{2r'}(0) \sim \cN(\bzero,\kappa^2 \bI_{d+1}), a_{2r'} \sim {\textup {unif}}\pth{\left\{-1,1\right\}}, \quad \forall r' \in [m/2],
\eal%
where $\cN(\bmu,\bSigma)$ denotes a Gaussian distribution with mean $\bmu$ and covariance $\bSigma$, ${\textup {unif}}\pth{\left\{-1,1\right\}}$ denotes a uniform distribution over $\set{1,-1}$, $0<\kappa \le 1$ controls the magnitude of initialization, and $\kappa \asymp 1$. We set $\bbw_{2r'-1}(0) = \bbw_{2r'}(0)$ and $a_{2r'-1} = -a_{2r}$ for all $r' \in [m/2]$. It then can be verified that $\hat \by(0) = \bzero$, that is, the initial output of the two-layer NN (\ref{eq:two-layer-nn}) is zero. Once randomly initialized, $\ba$ is fixed during the training. We use $\bW(0)$ to denote the set of all the random weighting vectors at initialization, that is, \ $\bW(0) = \set{\bbw_r(0)}_{r=1}^m$.
We run Algorithm~\ref{alg:GD} to train the two-layer NN by GD, where
$T$ is the total number of steps for GD. Early stopping is enforced in Algorithm~\ref{alg:GD} through a bounded $T$ via $T \le \hat T$.


\section{Main Results}
\label{sec:main-results}
We present the definition of kernel complexity in this section, and then introduce the main results for nonparametric regression of this paper.

\subsection{Kernel Complexity}
\label{sec:kernel-complexity}
The local kernel complexity has been studied by
\cite{bartlett2005,koltchinskii2006,Mendelson02-geometric-kernel-machine}. For the PD kernel $K$, we define the empirical kernel complexity $\hat R_K$
and the population kernel complexity $R_K$ as
\bal\label{eq:kernel-LRC-empirical}
\hat R_{K}(\eps) \defeq \sqrt{\frac 1n
\sum\limits_{i=1}^n \min\set{\hat \lambda_i,\eps^2}},
\quad
R_{K}(\eps) \defeq \sqrt{\frac 1n \sum\limits_{i=1}^{\infty} \min\set{\lambda_i,\eps^2}}.
\eal
It can be verified that both
 $\sigma_0 R_K(\eps)$ and $\sigma_0 \hat R_K(\eps)$ are
 sub-root functions~\cite{bartlett2005} in terms of $\eps^2$.
 The formal definition of sub-root functions is deferred to
 Definition~\ref{def:sub-root-function} in  the appendix.
For a given noise ratio $\sigma_0$, the critical empirical radius
$\hat \eps_n > 0$ is the smallest positive solution to the inequality
$\hat R_K(\eps) \le {\eps^2}/{\sigma_0}$, where
$\hat \eps_n^2$ is the also the fixed point of $\sigma_0 \hat R_K(\eps)$ as a function
of $\eps^2$: $\sigma_0 \hat R_K(\hat \eps_n) = \hat \eps_n^2$.
Similarly, the critical population rate $\eps_n$ is
defined to be the smallest positive solution to the inequality
$R_K(\eps) \le {\eps^2}/{\sigma}$, where
$\eps_n^2$ is the fixed point of $\sigma_0 \hat R_K(
\eps)$ as a function of
$\eps^2$: $\sigma_0 R_K(\eps_n) = \eps_n^2$.

Let $\eta_t \defeq \eta t$ for all $t > 0$, we then define the
 stopping time $\hat T$  as
\bal\label{eq:stopping-time-hatT}
\hat T
 \defeq \min\set{t \colon \hat R_{K}(\sqrt{1/\eta_t}) > (\sigma \eta_t)^{-1}}-1.
\eal
The stopping time in fact is the upper bound for the number of steps $T$ for Algorithm~\ref{alg:GD} as to be shown in Section~\ref{sec:main-results-details}, which in turn enforces the early stopping mechanism.

\subsection{Main Results}
\label{sec:main-results-details}
\begin{algorithm}[!htbp]
\renewcommand{\algorithmicrequire}{\textbf{Input:}}
\renewcommand\algorithmicensure {\textbf{Output:} }
\caption{Training the Two-Layer NN by GD}
\label{alg:GD}
\begin{algorithmic}[1]
\State $\bW(T) \leftarrow$ Training-by-GD($T,\bW(0)$)
\State \textbf{\bf input: } $T,\bW(0),\eta$
\State \textbf{\bf for } $t=1,\ldots,T$ \,\,\textbf{\bf do }
\State \quad Perform the $t$-th step of GD by
(\ref{eq:GD-two-layer-nn})
\State \textbf{\bf end for }
\end{algorithmic}
\end{algorithm}

The main results of this paper are presented in this section.

\begin{theorem}\label{theorem:LRC-population-NN-fixed-point}
Suppose that $\cX$ is bounded and $n \gsim \max\set{1/\lambda_1,\sigma_0^2 u_0^2/2}$.
 Let $c_T, c_t \in (0,1]$ be arbitrary positive constants, and
$c_T \hat T \le T \le \hat T$. Suppose
$m$ satisfies
\bal\label{eq:m-cond-LRC-population-NN-fixed-point}
m \gsim \frac{d^{\frac 52}}{\eps_n^{25}},
\eal
and the neural network
$f(\bW(t),\cdot)$ is
trained by GD using Algorithm~\ref{alg:GD} with the learning rate $\eta = \Theta(1)$ such that $\eta=\Theta(1) \in (0,2/u_0^2)$ and $T \le \hat T$.
Then for every $t \in [c_t T \colon T]$, with probability at least
$1 -  \exp\pth{-\Theta(n)}
- 7\exp\pth{-\Theta(n \eps_n^2)} -2/n$ over the random noise $\bw$, the random training features $\bS$ and
the random initialization $\bW(0)$,
the stopping time satisfies
$\hat T \asymp  \eps_n^{-2}$, and $f(\bW(t),\cdot) = f_t$ satisfies
\bal\label{eq:LRC-population-NN-fixed-point}
&\Expect{P}{(f_t-f^*)^2}
\lsim \eps^2_n.
\eal
\end{theorem}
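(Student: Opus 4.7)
The plan is to reduce the analysis of the trained network $f_t$ to that of an NTK kernel-regression estimator trained by GD with early stopping, for which the sharp rate $\cO(\eps_n^2)$ is known from \cite{RaskuttiWY14-early-stopping-kernel-regression}, and then control the perturbation between $f_t$ and its NTK linearization. The high-level decomposition is
\bals
\Expect{P}{(f_t - f^*)^2} \;\lsim\; \Expect{P}{(f_t - f_t^{\mathrm{lin}})^2} \;+\; \Expect{P}{(f_t^{\mathrm{lin}} - f^*)^2},
\eals
where $f_t^{\mathrm{lin}}$ denotes the output of the first-order Taylor expansion of $f(\bW(t),\cdot)$ around $\bW(0)$ evolved under the same GD dynamics. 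Since $\hat\by(0)=\bzero$ by the paired initialization, $f_t^{\mathrm{lin}}$ is precisely a kernel-regression iterate driven by the empirical NTK $\bK_n$ with responses $\by$ and learning rate $\eta \in (0,2/u_0^2)$.

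First, I would establish the NTK coupling: show that under the width lower bound $m \gsim d^{5/2}/\eps_n^{25}$ and $T \le \hat T \asymp \eps_n^{-2}$, the trajectory $\bW(t)$ stays in a small neighborhood of $\bW(0)$ (in, say, $\ell^\infty$-row-norm), the tangent features $\bZ_{\bS}(t)$ remain close to $\bZ_{\bS}(0)$ (few ReLU activation flips), and consequently $\abth{f_t(\bx) - f_t^{\mathrm{lin}}(\bx)} \lsim \eps_n$ uniformly over $\bx \in \cX$. This is where the high power of $1/\eps_n$ in the width bound is consumed: the cumulative drift of $\bW$ over $T \asymp \eps_n^{-2}$ steps must be controlled in terms of $1/\sqrt m$, and a uniform-in-$\bx$ bound requires union-bounding over activation regions. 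Combining this pointwise bound with the boundedness of $\cX$ yields $\Expect{P}{(f_t - f_t^{\mathrm{lin}})^2} \lsim \eps_n^2$.

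Second, I would analyze $f_t^{\mathrm{lin}}$. Writing $\hat f_t = f_t^{\mathrm{lin}}$ as the kernel-regression iterate on $\bS$ and using the eigendecomposition $\bK_n = \bU\bSigma\bU^\top$, the iterates admit a closed form $\hat f_t(\bS) = (\bI - (\bI - \eta\bK_n)^t)\by$. The stopping time definition $\hat T = \min\set{t\colon \hat R_K(\sqrt{1/\eta_t}) > (\sigma\eta_t)^{-1}} - 1$ is exactly the one calibrated so that squared bias and variance of this kernel iterate are balanced at scale $\hat\eps_n^2$. Invoking the local Rademacher / local complexity machinery for kernels over a bounded domain (distribution-free, as the argument only uses boundedness of $K$ and sub-Gaussianity of $\bw$) gives the empirical bound $\tfrac1n\norm{\hat f_t - f^*(\bS)}{2}^2 \lsim \hat\eps_n^2$, and a localized uniform concentration bound over the Hilbert ball $\cH_K(\mu_0+o(1))$ transfers this to the population bound $\Expect{P}{(\hat f_t - f^*)^2} \lsim \eps_n^2$, together with $\hat\eps_n \asymp \eps_n$ by sub-root-function concentration. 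This step reproduces (a slightly adapted version of) the kernel-regression analysis of \cite{RaskuttiWY14-early-stopping-kernel-regression} and crucially does not need any distributional assumption beyond boundedness of $\cX$.

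Third, I would verify $\hat T \asymp \eps_n^{-2}$: the fixed-point relation $\sigma_0 R_K(\eps_n) = \eps_n^2$ combined with $\hat\eps_n \asymp \eps_n$ implies that the inequality in (\ref{eq:stopping-time-hatT}) first fails precisely when $\eta_t \asymp \eps_n^{-2}$, hence $\hat T \asymp \eps_n^{-2}$. For any $t \in [c_t T\colon T]$ with $c_T \hat T \le T \le \hat T$, the bias-variance tradeoff stays at the balanced scale so the population rate $\cO(\eps_n^2)$ persists. Collecting the two risk terms through the triangle inequality and taking a union bound over the three probability events (noise, training features, initialization) gives the stated $1 - \exp(-\Theta(n)) - 7\exp(-\Theta(n\eps_n^2)) - 2/n$ confidence. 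The main obstacle I anticipate is the first step: propagating the NTK coupling uniformly over $\bx \in \cX$ (not just over the training set) for $T$ as large as $\eps_n^{-2}$ without invoking any covariate distribution, since standard NTK coupling bounds degrade over long horizons and generically rely on concentration of the empirical NTK matrix that itself is often established under distributional assumptions; the distribution-free flavor here forces a pointwise-in-$\bx$ argument that only uses $\norm{\tbx}{2} \le u_0$.
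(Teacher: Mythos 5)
Your proposal uses the decomposition
$f_t - f^* = (f_t - \hat f_t^{\textup{(NTK)}}) + (\hat f_t^{\textup{(NTK)}} - f^*)$,
which is precisely the ``common proof strategy'' that the paper explicitly distinguishes itself from in Section~\ref{sec:detailed-roadmap-key-results}. The paper instead decomposes $f_t = h_t + e_t$ with $h_t \in \cH_K(B_h)$ and $\supnorm{e_t}\le w$, and here $h_t$ is not required to equal (or approximate in $L^\infty$ or $L^2$) the kernel iterate $\hat f_t^{\textup{(NTK)}}$: writing $h_t = \hat f_t^{\textup{(NTK)}} + \tilde e_2(\cdot,t)$, the perturbation $\tilde e_2$ is only controlled in $\cH_K$-norm by the constant $1$, so it is simply absorbed into the Hilbert ball $\cH_K(B_h)$. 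The generalization bound then comes from a local-Rademacher-complexity analysis of $\cF(B_h,w)$ where $e_t$ enters linearly in $w$ (Lemma~\ref{lemma:LRC-population-NN}), which is what allows the paper to take $w = \eps_n^2$ and yields the stated width bound $m \gsim d^{5/2}/\eps_n^{25}$.

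There is a concrete gap in your route that prevents it from delivering the stated width bound. You need $\sup_{\bx\in\cX}\abth{f_t(\bx) - \hat f_t^{\textup{(NTK)}}(\bx)} \lsim \eps_n$ so that $\Expect{P}{(f_t - \hat f_t^{\textup{(NTK)}})^2}\lsim\eps_n^2$. But $f_t - \hat f_t^{\textup{(NTK)}} = e_t + \tilde e_2(\cdot,t)$, and the $\tilde e_2$ piece is an accumulated trajectory error that, following the proof of Lemma~\ref{lemma:bounded-Linfty-vt-sum-et}, satisfies $\norm{\tilde e_2(\cdot,t)}{\cH_K}\le T\eta\sqrt{\hat\lambda_1}\,\tau$; under the paper's choice $\tau\asymp 1/(\eta T)$ this is $\Theta(1)$, hence $\supnorm{\tilde e_2}\lsim u_0$ is a constant, not $\eps_n$. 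To force $\supnorm{\tilde e_2}\lsim\eps_n$ you must tighten $\tau$ to $\tau\lsim\eps_n/(\eta T)\asymp\eps_n^3$ (using $T\asymp\eps_n^{-2}$), and then the optimization requirement (\ref{eq:m-cond-empirical-loss-convergence}) forces $m\gsim T^{15/2}d^{5/2}/\tau^5\asymp d^{5/2}/\eps_n^{30}$. Likewise forcing $\supnorm{e_t}\lsim\eps_n$ (rather than $\eps_n^2$) costs $m\gsim T^{15/2}d^{5/2}/\eps_n^5\asymp d^{5/2}/\eps_n^{20}$. Your route therefore proves the risk bound only with $m\gsim d^{5/2}/\eps_n^{30}$, strictly worse than the theorem's $d^{5/2}/\eps_n^{25}$, and no amount of care in the coupling step fixes this, because $\tilde e_2$ genuinely has constant-order $L^\infty$ and $L^2$ norm under the paper's width assumption. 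Your intuition that the distribution-free kernel bound of \cite{RaskuttiWY14-early-stopping-kernel-regression} handles the second term is correct, and your identification of $\hat T$ via the fixed-point balance is also correct; the missing idea is precisely the paper's decomposition into an RKHS-ball member plus an $L^\infty$-small remainder, which weakens the required coupling from an $L^\infty$-bound of order $\eps_n$ on $f_t - \hat f_t^{\textup{(NTK)}}$ to an $\cH_K$-norm bound of order $1$ on $\tilde e_2$ together with an $L^\infty$-bound of order $\eps_n^2$ on $e_t$.
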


When the polynomial EDR holds, we can apply Theorem~\ref{theorem:LRC-population-NN-fixed-point} to obtain the following corollary.
\begin{corollary}
[Applying Theorem~\ref{theorem:LRC-population-NN-fixed-point} to the
special case of polynomial EDR]
\label{corollary:minimax-nonparametric-regression}
Under the conditions of
Theorem~\ref{theorem:LRC-population-NN-fixed-point},
suppose that
$\lambda_j \asymp j^{-2\alpha}$ for $j \ge 1$ and $\alpha > 1/2$.
and $m$ satisfies
\bal\label{eq:m-N-cond-LRC-population-NN-concrete}
m \gsim
n^{\frac{25\alpha}{2\alpha+1}} d^{\frac 52}.
\eal
Let the neural network
$f(\bW(t),\cdot)$ be
trained by GD using Algorithm~\ref{alg:GD} with the learning rate
$\eta = \Theta(1)$ such that $\eta \in (0,2/u_0^2)$ and $T \le \hat T$.
Then for every $t \in [c_t T \colon T]$, with probability at least
$1 -  \exp\pth{-\Theta(n)}
- 7\exp\pth{-\Theta(n^{1/(2\alpha+1)})}  - 2/n$
over the random noise $\bw$, the random training features $\bS$ and
the random initialization $\bW(0)$, the stopping time satisfies
$\hat T \asymp n^{\frac{2\alpha}{2\alpha+1}} $, and
\bal\label{eq:minimax-nonparametric-regression}
\Expect{P}{(f_t-f^*)^2} \lsim
\pth{\frac{1}{n}}^{\frac{2\alpha}{2\alpha+1}}.
\eal
\end{corollary}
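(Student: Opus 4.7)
The plan is to derive the corollary as a direct specialization of Theorem~\ref{theorem:LRC-population-NN-fixed-point}, whose only non-trivial input is the value of the critical population rate $\eps_n$ under the polynomial EDR $\lambda_j \asymp j^{-2\alpha}$. Once $\eps_n^2$ is pinned down as a function of $n$ and $\alpha$, every conclusion of the corollary (the width lower bound, the stopping time, the probability bound, and the risk rate) follows by substitution.

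First I would compute $R_K(\eps)$ in closed form up to constants. Under $\lambda_j \asymp j^{-2\alpha}$, the crossover index is $j^{\ast}\asymp \eps^{-1/\alpha}$, i.e., $\lambda_j\ge\eps^2$ iff $j\lesssim\eps^{-1/\alpha}$. Splitting the sum in (\ref{eq:kernel-LRC-empirical}) at $j^{\ast}$ gives
\bals
\sum_{j\ge 1}\min\{\lambda_j,\eps^2\}
\asymp j^{\ast}\eps^{2} + \sum_{j>j^{\ast}} j^{-2\alpha}
\asymp \eps^{2-1/\alpha},
\eals
where the tail sum converges because $\alpha>1/2$ and is of order $(j^{\ast})^{1-2\alpha}\asymp\eps^{2-1/\alpha}$. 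Hence $R_K(\eps)\asymp n^{-1/2}\eps^{\,1-1/(2\alpha)}$. Solving the fixed-point equation $\sigma_0 R_K(\eps_n)=\eps_n^{2}$ yields $\eps_n^{\,1+1/(2\alpha)}\asymp n^{-1/2}$, so that
\bals
\eps_n^{2}\asymp n^{-\frac{2\alpha}{2\alpha+1}},\qquad n\eps_n^{2}\asymp n^{\frac{1}{2\alpha+1}}.
\eals

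Next I would simply substitute this expression for $\eps_n^{2}$ into each quantitative statement of Theorem~\ref{theorem:LRC-population-NN-fixed-point}. The width condition (\ref{eq:m-cond-LRC-population-NN-fixed-point}) becomes $m\gtrsim d^{5/2}\eps_n^{-25}\asymp d^{5/2}n^{25\alpha/(2\alpha+1)}$, which is precisely (\ref{eq:m-N-cond-LRC-population-NN-concrete}); the stopping-time characterization $\hat T\asymp\eps_n^{-2}$ becomes $\hat T\asymp n^{2\alpha/(2\alpha+1)}$; the probability term $\exp(-\Theta(n\eps_n^{2}))$ becomes $\exp(-\Theta(n^{1/(2\alpha+1)}))$; and the risk bound (\ref{eq:LRC-population-NN-fixed-point}) becomes (\ref{eq:minimax-nonparametric-regression}). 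The term $\exp(-\Theta(n))$ and the factor $2/n$ are unchanged, since they do not depend on $\eps_n$.

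The only real obstacle is making the critical-radius calculation fully rigorous rather than heuristic, in particular controlling the implicit constants in $R_K(\eps)\asymp n^{-1/2}\eps^{1-1/(2\alpha)}$ uniformly in small $\eps$. This is standard (see, e.g., \cite{RaskuttiWY14-early-stopping-kernel-regression,Wainwright2019-high-dim-statistics}): one shows matching upper and lower bounds on $\sum_{j\ge 1}\min\{\lambda_j,\eps^2\}$ by using $c_1 j^{-2\alpha}\le \lambda_j\le c_2 j^{-2\alpha}$ and integrating $x^{-2\alpha}$ on $[j^\ast,\infty)$. The sub-root property of $\sigma_0 R_K$ then ensures that the fixed point $\eps_n^{2}$ is unique and of the stated order. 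With that lemma in hand, the corollary reduces to the substitution described above; no additional structural argument about the neural network, the NTK, or the GD dynamics is needed beyond what Theorem~\ref{theorem:LRC-population-NN-fixed-point} already provides.
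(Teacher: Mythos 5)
Your proposal is correct and follows the same route as the paper: the corollary is just Theorem~\ref{theorem:LRC-population-NN-fixed-point} specialized by substituting the value of $\eps_n^2$ under the polynomial EDR. The only cosmetic difference is that the paper obtains $\eps_n^2 \asymp n^{-2\alpha/(2\alpha+1)}$ by citing \cite[Corollary 3]{RaskuttiWY14-early-stopping-kernel-regression}, whereas you re-derive it from the fixed-point equation $\sigma_0 R_K(\eps_n)=\eps_n^2$ via the crossover-index calculation; both are standard and yield the same exponent, after which the width condition, stopping time, probability bound, and risk rate all follow by direct substitution exactly as you describe.
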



The significance of
Theorem~\ref{theorem:LRC-population-NN-fixed-point} and Corollary~\ref{corollary:minimax-nonparametric-regression} and comparison to existing works are presented in Section~\ref{sec:summary-main-results}. To the best of our knowledge,
Theorem~\ref{theorem:LRC-population-NN-fixed-point} is the first theoretical result which proves that over-parameterized
neural network trained by gradient descent with early stopping achieves the sharp rate of $\cO(\eps_n^2)$
\textit{without distributional assumption on the covariate} as long as the input space $\cX$ is bounded. Moreover, we present simulation results Section~\ref{sec:simulation} of the appendix, where the two-layer NN in (\ref{eq:two-layer-nn}) is trained by GD using Algoirthm~\ref{alg:GD}  and the early-stopping time theoretically predicted by
Corollary~\ref{corollary:minimax-nonparametric-regression} is studied.

\section{Roadmap of Proofs}
\label{sec:proof-roadmap}
We present the roadmap of our theoretical results which lead to the main result, Theorem~\ref{theorem:LRC-population-NN-fixed-point} in Section~\ref{sec:main-results}. We first present in Section~\ref{sec:uniform-convergence-ntk-more} our results about the uniform convergence to the NTK
(\ref{eq:kernel-two-layer}) and more, which are crucial in the analysis of training dynamics by GD. We then introduce the basic definitions in
Section~\ref{sec:proofs-definitions}, and the detailed roadmap and key technical results with our novel proof strategy for this work in Section~\ref{sec:detailed-roadmap-key-results} which lead to the main result in
Theorem~\ref{theorem:LRC-population-NN-fixed-point}. The proofs of
Theorem~\ref{theorem:LRC-population-NN-fixed-point} and Corollary~\ref{corollary:minimax-nonparametric-regression} are presented in
Section~\ref{sec:proofs-main-results}, and Section~\ref{sec:proofs-key-results} presents the proofs of the key results in
Section~\ref{sec:detailed-roadmap-key-results}.

\subsection{Uniform Convergence to the NTK (\ref{eq:kernel-two-layer}) and More}
\label{sec:uniform-convergence-ntk-more}
We define the following functions with $\bW = \set{\bw_r}_{r=1}^m$:
\bal
h(\bw,\bu,\bv) &\defeq \tbu^{\top} \tbv \indict{\bw^{\top} \tbu \ge 0} \indict{\bw^{\top} \tbv \ge 0}, \quad &\hat h(\bW,\bu,\bv) &\defeq \frac {1}{m} \sum\limits_{r=1}^m h(\bbw_r,\bu,\bv), \label{eq:h-hat-h}  \\
v_R(\bw,\bu) &\defeq \indict{\abth{\bw^{\top}\tbu} \le R}, \quad &\hat v_R(\bW,\bu) &\defeq  \frac 1m \sum\limits_{r=1}^m v_R(\bbw_r,\bu), \label{eq:v-hat-v}
\eal%
where $\bu,\bv \in \RR^d$ and $\tbu = \bth{\bu^{\top},1}^{\top}$, $\tbv = \bth{\bv^{\top},1}^{\top}$.
Then we have the following theorem stating the uniform convergence of $\hat h(\bW(0),\cdot,\cdot)$ to $K(\cdot,\cdot)$ and uniform convergence
of $\hat v_R(\bW(0),\cdot)$ to $\frac{2R}{\sqrt {2\pi} \kappa}$ for a
positive number $R\lsim \eta u_0 T /{\sqrt m}$, and $R$ is formally defined in (\ref{eq:def-R}).
It is remarked that while existing works such as \cite{Li2024-edr-general-domain} also has uniform convergence results for over-parameterized neural network,
our result does not depend on the H\"older continuity of the NTK.
\begin{theorem}\label{theorem:good-random-initialization}
The following results hold with
$\eta \lsim 1$, $m \gsim \max\set{n^{2/(d+1)},\Theta(T^{\frac 53})}$, and $m/\log m \ge d$.
\begin{itemize}[leftmargin=.26in]
\item[(1)] With probability at least $1-1/n$ over the random initialization $\bW(0) = \set{\bbw_r(0)}_{r=1}^m$,
\bal\label{eq:good-initialization-sup-hat-h}
\sup_{\bu \in \cX,\bv \in \cX} \abth{ K(\bu,\bv) - \hat h(\bW(0),\bu,\bv) } \le  u_0^2 C_1(m/2,d,1/n) \lsim u_0^2 \sqrt{\frac{d \log m}{m}}.
\eal%
\item[(2)] With probability at least $1-1/n$ over the random initialization $\bW(0) = \set{\bbw_r(0)}_{r=1}^m$,
\bal\label{eq:good-initialization-sup-hat-V_R}
&\sup_{\bu \in \cX}\hat v_R(\bW(0),\bu)
\le \frac{2R}{\sqrt {2\pi} \kappa} + C_2(m/2,d,1/n) \lsim \sqrt{d} m^{-\frac 15} T^{\frac 12},
\eal%
 \end{itemize}
 where $C_1(m/2,d,1/n),C_2(m/2,d,1/n)$ are two positive numbers depending on $(m,d,n)$, with their formal definitions deferred to
(\ref{eq:C1}) and (\ref{eq:C2}) in Section~\ref{sec:proofs-key-results}.
\end{theorem}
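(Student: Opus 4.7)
The plan is to prove both parts by the same two-ingredient scheme: (i) pointwise concentration via Hoeffding's inequality, exploiting the paired initialization $\bbw_{2r'-1}(0) = \bbw_{2r'}(0)$ which expresses each empirical average as $m/2$ independent bounded summands; and (ii) an $\varepsilon$-net argument on the bounded domain $\cX$ together with a sandwich trick to absorb the non-Lipschitz indicators.

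For part (1), I would first write $\hat h(\bW(0),\bu,\bv) = \frac{2}{m}\sum_{r'=1}^{m/2} h(\bbw_{2r'}(0),\bu,\bv)$ with summands lying in $[0,u_0^2]$. Because $\bbw_{2r'}(0) \sim \cN(\bzero, \kappa^2 \bI_{d+1})$, the standard Gaussian orthant-probability formula gives $\E\bth{h(\bbw_{2r'}(0),\bu,\bv)} = \tbu^{\top} \tbv \cdot \Pr\bth{\bbw^{\top}\tbu \ge 0,\ \bbw^{\top}\tbv \ge 0} = K(\bu,\bv)$, so Hoeffding yields pointwise concentration at rate $u_0^2 \sqrt{\log(1/\delta)/m}$. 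To upgrade to a uniform bound, I would take an $\varepsilon$-net $\cN_\varepsilon \subset \cX$ of cardinality at most $(\diam{\cX}/\varepsilon)^{\Theta(d)}$, apply a union bound, and for an arbitrary $\bu$ with nearest net point $\bu_0$ use the sandwich $\indict{\bbw^{\top}\tbu \ge 0} \le \indict{\bbw^{\top}\tbu_0 \ge -\varepsilon\,\ltwonorm{\bbw}}$ and its lower counterpart. Combined with a high-probability bound $\max_r \ltwonorm{\bbw_r(0)} \lsim \kappa\sqrt{d\log m}$ on an event of probability at least $1-1/(2n)$, balancing $\varepsilon$ against the resulting slack produces the stated rate $u_0^2 \sqrt{d \log m/m}$.

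For part (2), the same decomposition yields $\hat v_R(\bW(0),\bu) = \frac{2}{m}\sum_{r'=1}^{m/2} v_R(\bbw_{2r'}(0),\bu)$ with summands in $\set{0,1}$. Since $\bbw^{\top}\tbu \sim \cN(0, \kappa^2 \ltwonorm{\tbu}^2)$ with $\ltwonorm{\tbu} \ge 1$, a standard Gaussian density bound gives $\E\bth{v_R(\bbw,\bu)} \le 2R/(\sqrt{2\pi}\kappa)$. Hoeffding and a union bound handle the net points; for an arbitrary $\bu$, the sandwich $v_R(\bbw,\bu) \le v_{R+\varepsilon\ltwonorm{\bbw}}(\bbw,\bu_0)$ reduces the task to bounding the inflated expectation $\lsim (R + \varepsilon \sqrt{d})/\kappa$. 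Balancing $\varepsilon$ against the Hoeffding and net-cardinality terms under the assumptions $m \gsim T^{5/3}$ and $m/\log^{5/3} m \ge d$ gives the claimed $\sqrt{d}\, m^{-1/5} T^{1/2}$.

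The principal obstacle is that the summands are discontinuous in $(\bu,\bv)$, which rules out Lipschitz-based covering estimates and forces the sandwich approach. A secondary subtlety is a mild apparent circularity: the boundary term introduced when sandwiching the indicators in part (1) has exactly the form of $\hat v_{\varepsilon'}$ from part (2). I would resolve this not by invoking part (2) inside part (1), but by using only the simple Gaussian anti-concentration bound on the expected value of the boundary indicator, so both claims can be established on a common high-probability event of probability at least $1-1/n$ over $\bW(0)$. Once the sandwich scaffolding is in place, the remainder is a standard Hoeffding-plus-union-bound calculation.
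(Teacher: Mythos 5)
Your proposal is correct in outline and takes a genuinely different route from the paper. The paper's proof of this theorem is a one-line reduction to Theorem~\ref{theorem:sup-hat-g} and Theorem~\ref{theorem:V_R} on the unit sphere $\unitsphere{d}$ (using $K(\bu,\bv)=\ltwonorm{\tbu}\ltwonorm{\tbv}\tK(\tbu/\ltwonorm{\tbu},\tbv/\ltwonorm{\tbv})$ and $\ltwonorm{\tbu}\ge 1$), and those two theorems run an empirical-process argument: for each net point, Talagrand's concentration inequality (Theorem~\ref{theorem:Talagrand-inequality}) bounds the \emph{supremum over the local ball} directly, and the Rademacher complexity of the ball-localized indicator class is controlled in Lemmas~\ref{lemma:rademacher-bound-hat-h} and \ref{lemma:rademacher-bound-hat-v} via a Markov argument on the number of indicator flips, which is in turn bounded by the marginal anti-concentration estimate of Lemma~\ref{lemma:uniform-marginal-bound}. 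You instead use pointwise Hoeffding plus a union bound over the net, and handle the slack between an arbitrary point and its net point by the indicator sandwich $\indict{\bbw^{\top}\tbu\ge 0}\le\indict{\bbw^{\top}\tbu_0\ge -\varepsilon\ltwonorm{\bbw}}$; the resulting boundary term is then the same flip-count quantity as in the paper, again controlled by Lemma~\ref{lemma:uniform-marginal-bound}. Both approaches therefore bottom out in the same Gaussian anti-concentration bound, but yours avoids empirical-process machinery and Rademacher complexity entirely at the cost of doing the supremum transfer by hand. Two small points worth tightening: (i) when multiplying the sandwiched indicators by $\tbu^{\top}\tbv$, the cleanest route is the triangle inequality $\abth{\hat h(\bW,\bu,\bv)-\hat h(\bW,\bu_0,\bv_0)}\le \abth{\tbu^{\top}\tbv-\tbu_0^{\top}\tbv_0}+u_0^2\cdot\frac 1m\sum_r\pth{\indict{\abth{\bbw_r^{\top}\tbu_0}\le\varepsilon\ltwonorm{\bbw_r}}+\indict{\abth{\bbw_r^{\top}\tbv_0}\le\varepsilon\ltwonorm{\bbw_r}}}$, which bypasses any sign bookkeeping on $\tbu^{\top}\tbv$; and (ii) you do not actually need the high-probability event $\max_r\ltwonorm{\bbw_r(0)}\lsim\kappa\sqrt{d\log m}$ for part (1), since the boundary expectation is already $\lsim\sqrt d\,\varepsilon$ by Lemma~\ref{lemma:uniform-marginal-bound}; for part (2) a similar conditioning on the direction of $\bbw_r(0)$ together with $\E\bth{1/\ltwonorm{\bbw_r(0)}}\asymp 1/(\kappa\sqrt d)$ avoids the truncation as well. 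Your resulting bounds are in fact tighter than the paper's $C_2$ (which carries an $m^{-1/5}T^{1/2}$ inflation from the $m^{1/(2(d+1))}$ splitting in Lemma~\ref{lemma:rademacher-bound-hat-v}), but both are $\lsim$ the stated quantity under the hypotheses $m\gsim T^{5/3}$, $m/\log^{5/3}m\ge d$, so the theorem follows either way.
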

\begin{proof}
This theorem follows from Theorem~\ref{theorem:sup-hat-g}
and Theorem~\ref{theorem:V_R} in Section~\ref{sec:proofs-key-results}.
Note that
$K(\bu,\bv) = \ltwonorm{\tbu} \ltwonorm{\tbv} \tK(\tbu/\ltwonorm{\tbu},\tbv/\ltwonorm{\tbv})$, $h(\bw,\bu,\bv) = \ltwonorm{\tbu} \ltwonorm{\tbv}
\tilde h(\bw,\tbu/\ltwonorm{\tbu},\tbv/\ltwonorm{\tbv})$ with $\tK$ and $\tilde h$ defined in Theorem~\ref{theorem:sup-hat-g}, and
\bals
\hat h(\bW,\bu,\bv) = \frac {1}{m} \sum\limits_{r=1}^m h(\bbw_r,\bu,\bv) = \frac {1}{m/2} \sum\limits_{r'=1}^{m/2} h(\bbw_{2r}(0),\bu,\bv),
\eals
then part (1) directly follows from Theorem~\ref{theorem:sup-hat-g}.  Furthermore,
since $\ltwonorm{\tbu} \ge 1$, we have
\bals
\hat v_R(\bW(0),\bu) =
\sum\limits_{r=1}^m \indict{\abth{\bbw_r(0)^{\top}\tbu} \le R}/m
\le \sum\limits_{r=1}^m \indict{\abth{\bbw_r(0)^{\top}\tbu/\ltwonorm{\tbu}} \le R/\ltwonorm{\tbu}}/m
\le  \tilde v_R({\bW(0)},\tbu/\ltwonorm{\tbu}),
\eals
where $\tilde v_R$ is defined in Theorem~\ref{theorem:V_R},
so that part (2) directly follows from Theorem~\ref{theorem:V_R} by noting that $R \lsim m^{-\frac 15} T^{\frac 12}$ when $m \ge \Theta(T^{\frac 53})$.
\end{proof}
We define
\bal\label{eq:set-of-good-random-initialization}
\cW_0 \defeq \set{\bW(0) \colon (\ref{eq:good-initialization-sup-hat-h}) ,(\ref{eq:good-initialization-sup-hat-V_R}) \textup { hold}}
\eal%
as the set of all the good random initializations which satisfy (\ref{eq:good-initialization-sup-hat-h})  and (\ref{eq:good-initialization-sup-hat-V_R}) in Theorem~\ref{theorem:good-random-initialization}.
Theorem~\ref{theorem:good-random-initialization} shows that we have good random initialization with high probability, that is, $\Prob{\bW(0)
\in \cW_0} \ge 1-2/n$. When $\bW(0) \in \cW_0 $, the uniform convergence results, (\ref{eq:good-initialization-sup-hat-h})
and (\ref{eq:good-initialization-sup-hat-V_R}), hold with high probability, which is important for the analysis of the training dynamics of the two-layer NN
(\ref{eq:two-layer-nn}) by GD.

\subsection{Basic Definitions}
\label{sec:proofs-definitions}
We introduce the following definitions for our analysis.
We define
\bal\label{eq:ut}
\bu(t) \defeq \hat \by(t) - \by
\eal
as the difference between the network output
$\hat \by(t)$ and the training response vector $\by$ right after the $t$-th step of GD.
Let $\tau \le 1$ be a positive number. For $t \ge 0$ and $T \ge 1$ we define the following quantities:
$c_{\bu} \defeq \max\set{{\mu_0}/{\sqrt {2e \eta }}, \mu_0u_0/{\sqrt 2}} + \sigma_0 + \tau + 1$,
\bal\label{eq:def-R}
&R \defeq \frac{\eta c_{\bu} u_0 T}{\sqrt m},
\eal
\bal\label{eq:cV_S}
\cV_t \defeq \set{\bv \in \RR^n \colon \bv = -\pth{\bI_n- \eta \bK_n }^t f^*(\bS)},
\eal
\bal\label{eq:cE_S}
\cE_{t,\tau} \defeq &\set{\be \colon \be = \bbe_1 + \bbe_2 \in \RR^n, \bbe_1 = -\pth{\bI_n-\eta\bK_n}^t \bw,
\ltwonorm{\bbe_2} \le {\sqrt n} \tau }.
\eal
In particular, Theorem~\ref{theorem:empirical-loss-convergence} in the next subsection shows that with high probability over the random noise $\bw$, the distance of every weighting vector $\bw_r(t)$ to its initialization $\bw_r(0)$  is bounded by $R$. In addition, $\bu(t)$ can be composed into two vectors,
$\bu(t) = \bv(t) + \be(t)$ such that $\bv(t) \in \cV_t$
and $\be(t) \in \cE_{t,\tau}$.

We then define the set of the neural network weights during the training by GD using Algorithm~\ref{alg:GD} as follows:
\bal\label{eq:weights-nn-on-good-training}
&\cW(\bS,\bW(0),T) \defeq \left\{\bW \colon \exists t \in [T] {\textup{ s.t. }}\vect{\bW} = \vect{\bW(0)} - \sum_{t'=0}^{t-1} \frac{\eta}{n}  \bZ_{\bS}(t') \bu(t'), \right. \nonumber \\
& \left. \bu(t') \in \RR^{n}, \bu(t') = \bv(t') + \be(t'),
\bv(t') \in \cV_{t'}, \be(t') \in \cE_{t',\tau}, {\textup { for all } } t' \in [0,t-1] \vphantom{\frac12}  \right\}.
\eal%

We will also show by Theorem~\ref{theorem:empirical-loss-convergence} that with high probability over $\bw$, $\cW(\bS,\bW(0),T)$ is the set of the weights of the two-layer NN  (\ref{eq:two-layer-nn}) trained by GD on the training data $\bS$ with the random initialization $\bW(0)$ and the number of steps of GD not greater than $T$.
The set of the functions represented by the neural network with weights in $\cW(\bS,\bW(0),T)$ is then defined as
\bal\label{eq:random-function-class}
\cFnn(\bS,\bW(0),T) \defeq \set{f_t = f(\bW(t),\cdot) \colon \exists \, t \in [T], \bW(t) \in \cW(\bS,\bW(0),T)}.
\eal%
We also define the function class $\cF(B,w)$ for any $B,w > 0$ as
\bal
\cF(B,w) &\defeq \set{f \colon f = h+e, h \in \cH_{K}(B),
\supnorm{e} \le w}. \label{eq:def-cF-ext-general}
\eal
We will show by
Theorem~\ref{theorem:bounded-NN-class} in the next subsection
that with high probability over $\bw$,
$\cFnn(\bS,\bW(0),T)$ is a subset of $\cF(B_h,w)$, where a smaller
$w$ requires a larger network width $m$, and
\bal
B_h &\defeq \mu_0 +1+ {\sqrt 2}. \label{eq:B_h}
\eal

The Rademacher complexity of a function class and its empirical version are defined below.
\begin{definition}\label{def:RC}
Let $\{\sigma_i\}_{i=1}^n$ be $n$ i.i.d. random variables such that $\Pr[\sigma_i = 1] = \Pr[\sigma_i = -1] = \frac{1}{2}$. The Rademacher complexity of a function class $\cF$ is defined as
\bal\label{eq:RC}
&\cfrakR(\cF) = \Expect{\set{\bbx_i}_{i=1}^n, \set{\sigma_i}_{i=1}^n}{\sup_{f \in \cF} {\frac{1}{n} \sum\limits_{i=1}^n {\sigma_i}{f(\bbx_i)}} }.
\eal%
The empirical Rademacher complexity is defined as
\bal\label{eq:empirical-RC}
&\hat \cfrakR(\cF) = \Expect{\set{\sigma_i}_{i=1}^n} { \sup_{f \in \cF} {\frac{1}{n} \sum\limits_{i=1}^n {\sigma_i}{f(\bbx_i)}} },
\eal%
For simplicity of notations, Rademacher complexity and empirical Rademacher complexity are also denoted by ${\E}\left[\sup_{f \in \cF} {\frac{1}{n} \sum\limits_{i=1}^n {\sigma_i}{f(\bbx_i)}} \right]$ and ${\E}_{\sigma}\left[\sup_{f \in \cF} {\frac{1}{n} \sum\limits_{i=1}^n {\sigma_i}{f(\bbx_i)}} \right]$ respectively. 
\end{definition}

For data $\set{\bbx}_{i=1}^n$ and a function class $\cF$, we define the notation $R_n \cF$ by $R_n \cF \coloneqq \sup_{f \in \cF} \frac{1}{n} \sum\limits_{i=1}^n \sigma_i f(\bbx_i)$.

\subsection{Detailed Roadmap and Key Results}
\label{sec:detailed-roadmap-key-results}

Because our main result,
Theorem~\ref{theorem:LRC-population-NN-fixed-point}, is proved by
Theorem~\ref{theorem:LRC-population-NN-eigenvalue} and Theorem~\ref{theorem:empirical-loss-bound} deferred to Section~\ref{sec:detailed-key-results},
we illustrate in Figure~\ref{fig:proof-roadmap} the roadmap containing the intermediate key theoretical results which lead
to
Theorem~\ref{theorem:LRC-population-NN-fixed-point}.

\begin{figure}[!htbp]
\begin{center}
\includegraphics[width=0.92\textwidth]{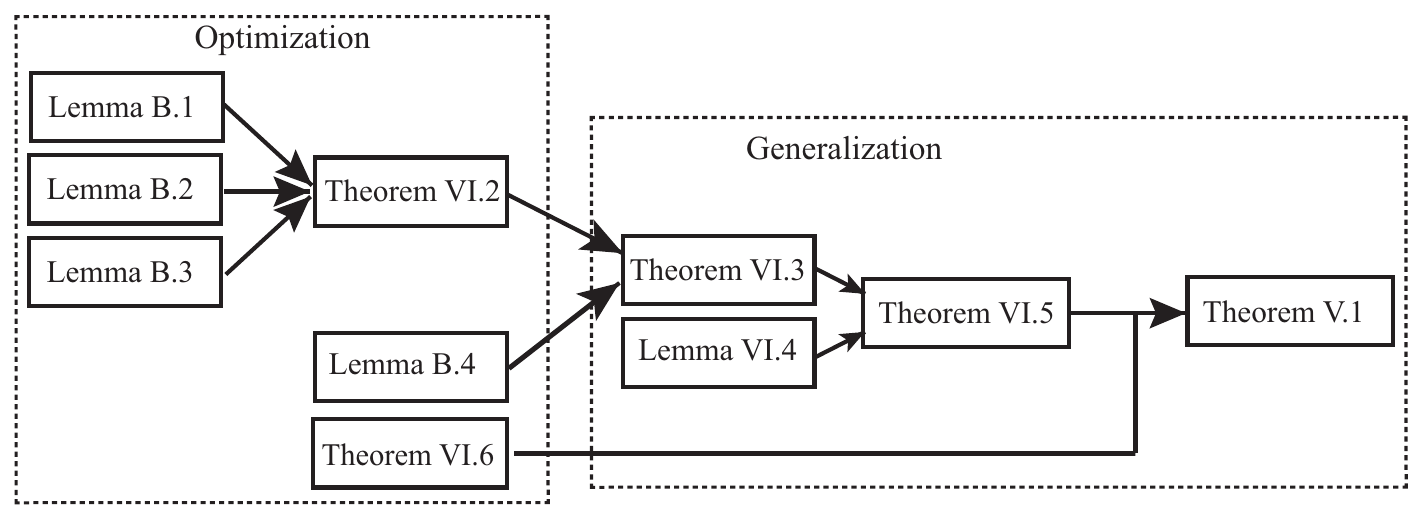}
\end{center}
\caption{Roadmap of key results leading to the main result, Theorem~\ref{theorem:LRC-population-NN-fixed-point}. The uniform convergence results in
Theorem~\ref{theorem:good-random-initialization} are used in all the optimization results and Theorem~\ref{theorem:bounded-NN-class}.}
\label{fig:proof-roadmap}
\end{figure}

\subsubsection{Summary of the  approaches and key technical results in the proofs.}
Our results are built upon two significant technical results  of independent interest. First, uniform convergence to the NTK (\ref{eq:kernel-two-layer}) is established during the training process by GD, so that we can have a nice decomposition of the neural network function at any step of GD  into a function in the RKHS associated with the NTK (\ref{eq:kernel-two-layer})  and an error function with a small $L^{\infty}$-norm.
In particular, with the uniform convergence by
Theorem~\ref{theorem:good-random-initialization} and the optimization results in Theorem~\ref{theorem:empirical-loss-convergence} and
Lemma~\ref{lemma:bounded-Linfty-vt-sum-et},
Theorem~\ref{theorem:bounded-NN-class} shows that with high probability, the neural network function $f(\bW(t),\cdot)$ right after the $t$-th step of GD can be decomposed into
two functions by $f(\bW(t),\cdot) = f_t = h  + e$, where $h \in \cH_{K}$ is a function in the RKHS associated with $K$ with a bounded $\cH_{K}$-norm.
The error function $e$ has a small $L^{\infty}$-norm, that is, $\supnorm{e} \le w$ with $w$ being a small number controlled by the network width
$m$, and larger $m$ leads to smaller $w$. Second,
local Rademacher complexity is employed
to tightly bound the risk of nonparametric regression in
Theorem~\ref{theorem:LRC-population-NN-eigenvalue}, which is based on
 the Rademacher complexity of a localized subset of the function class $\cF(B_h,w)$ in
Lemma~\ref{lemma:LRC-population-NN}.

We  use
Theorem~\ref{theorem:bounded-NN-class} and Lemma~\ref{lemma:LRC-population-NN}
to derive Theorem~\ref{theorem:LRC-population-NN-eigenvalue}. Theorem~\ref{theorem:LRC-population-NN-eigenvalue}
states that if $m$ is sufficiently large, then for every $t \in [T]$, with high probability, the risk
of $f_t$  satisfies
\bal\label{eq:theorem10-demo}
\Expect{P}{(f_t-f^*)^2} - 2 \Expect{P_n}{(f_t-f^*)^2}
\lsim \eps_n^2 + w.
\eal
We then obtain Theorem~\ref{theorem:LRC-population-NN-fixed-point} using (\ref{eq:theorem10-demo}) where $w$ is set to $\eps_n^2$, with the empirical loss $\Expect{P_n}{(f_t-f^*)^2}$ bounded by
$\Theta(1/(\eta t)) \asymp \eps_n^2$ with high probability by Theorem~\ref{theorem:empirical-loss-bound}.

\vspace{0.1in}\noindent \textbf{Novel proof strategy of this work.} We remark that the proof strategy of our main result,
Theorem~\ref{theorem:LRC-population-NN-fixed-point}, summarized above is significantly different from the existing works in training over-parameterized neural networks for nonparametric regression with minimax rates
\cite{HuWLC21-regularization-minimax-uniform-spherical,
SuhKH22-overparameterized-gd-minimax,Li2024-edr-general-domain}. In particular, the common proof strategy in these works uses the decomposition
$f_t-f^* = (f_t - \hat f_t^{\textup{(NTK)}}) + (\hat f_t^{\textup{(NTK)}} - f^*)$ and then shows that
both $\norm{f_t - \hat f_t^{\textup{(NTK)}}}{L^2}$ and $\norm{\hat f_t^{\textup{(NTK)}}- f^*}{L^2}$ are bounded by certain sharp rate (minimax optimal or nearly minimax optimal rate), where
$\hat f_t^{\textup{(NTK)}}$ is the kernel regressor obtained by either kernel
ridge regression~\cite{HuWLC21-regularization-minimax-uniform-spherical,
SuhKH22-overparameterized-gd-minimax} or GD with
early stopping~\cite{Li2024-edr-general-domain}. The remark after Theorem~\ref{theorem:bounded-NN-class} details a formulation of $\hat f_t^{\textup{(NTK)}}$.
$\norm{\hat f_t^{\textup{(NTK)}}- f^*}{L^2}$ is bounded by such sharp rate under certain distributional assumptions in the covariate, and this is one reason for the distributional assumptions about the covariate in the existing works such as \cite{HuWLC21-regularization-minimax-uniform-spherical,SuhKH22-overparameterized-gd-minimax,Li2024-edr-general-domain}.
In a strong contrast, our analysis does not rely on such decomposition of $f_t-f^*$. Instead of approximating $f_t$ by $\hat f_t^{\textup{(NTK)}}$, we have a new decomposition of $f_t$ by $f_t = h_t  + e_t$ where
$f_t$ is approximated by $h_t$ with $e_t$ being the approximation error. As suggested by the remark after Theorem~\ref{theorem:bounded-NN-class},
we have $h_t = \hat f_t^{\textup{(NTK)}}  + \tilde e_2(\cdot,t)$ so that
$f_t = \hat f_t^{\textup{(NTK)}}  + \tilde e_2(\cdot,t)+ e_t$. Our analysis only requires the network width
$m$ to be suitably large so that the $\cH_K$-norm of $\tilde e_2(\cdot,t)$
is bounded by a positive constant and $\supnorm{e_t } \le w$ with $w$ set to the sharp rate, while the common proof strategy in\cite{HuWLC21-regularization-minimax-uniform-spherical,
SuhKH22-overparameterized-gd-minimax,Li2024-edr-general-domain} needs $m$ to be sufficiently large so that both
$\supnorm{\tilde e_2(\cdot,t)}$ and $\supnorm{e_t}$ are bounded by the sharp rate such as $\cO(n^{-(d+1)/(2d+1)})$ and then $\norm{f_t - \hat f_t^{\textup{(NTK)}}}{L^2}$ is bounded by the same sharp rate. Detailed in Section~\ref{sec:summary-main-results}, such novel proof strategy leads to a smaller lower bound for $m$ in our main result compared to some existing works.
Importantly, the sharp rate in our Theorem~\ref{theorem:LRC-population-NN-fixed-point} is distribution-free in the bounded covariate fundamentally because local Rademacher complexity \cite{bartlett2005} based analysis employed in this work does not need distributional assumptions about the bounded covariate.

It is also worthwhile to mention that our proof strategy is also significantly different from the existing works in the classical kernel regression such as
\cite{RaskuttiWY14-early-stopping-kernel-regression}. It is remarked that \cite{RaskuttiWY14-early-stopping-kernel-regression} uses the off-the-shelf local Rademacher complexity based generalization bound \cite{bartlett2005} to derive the risk of the order $\mathcal O(\epsilon_n^2)$ for the usual kernel regressor (e.g., in \cite[Lemma 10]{RaskuttiWY14-early-stopping-kernel-regression} ). For the NN function $f_t$ that uniformly converges to the usual kernel regressor, this work establishes a new theoretical framework with a novel decomposition of $f_t$ detailed above, and then the tight risk bound is derived by a tight bound for the Rademacher complexity of a localized subsest of the function class, $\cF(B_h,w)$ which comprises all the two-layer NN functions trained by GD, in Lemma~\ref{lemma:LRC-population-NN} and Theorem~\ref{theorem:LRC-population-NN-eigenvalue}.
\subsubsection{Key Technical Results}
\label{sec:detailed-key-results}
We present our key technical results regarding optimization and generalization of the two-layer NN (\ref{eq:two-layer-nn}) trained by GD with early
stopping.

Theorem~\ref{theorem:empirical-loss-convergence} is our main result about the optimization of the network (\ref{eq:two-layer-nn}), which
states that with high probability over the random noise $\bw$, the weights of the network $\bW(t)$ obtained right after the $t$-th step of GD
using Algorithm~\ref{alg:GD} belongs to
$\cW(\bS,\bW(0),T)$. Furthermore, every weighing vector $\bw_r$ has bounded distance to the initialization $\bw_r(0)$.
The proof of Theorem~\ref{theorem:empirical-loss-convergence} is
based on
Lemma~\ref{lemma:yt-y-bound},
Lemma~\ref{lemma:empirical-loss-convergence-contraction},
and Lemma~\ref{lemma:weight-vector-movement} deferred to
Section~\ref{sec:lemmas-main-results} of the appendix.

\begin{theorem}\label{theorem:empirical-loss-convergence}
Suppose
\bal\label{eq:m-cond-empirical-loss-convergence}
m \gsim  {(\eta c_{\bu})^5 u_0^{10} T^{\frac {15}{2}} d^{\frac 52}}/{\tau^5},
\eal
the neural network $f(\bW(t),\cdot)$ trained by GD
using Algorithm~\ref{alg:GD} with the learning rate $\eta \in (0,2/u_0^2)$, the random initialization $\bW(0) \in \cW_0$. Then with probability at least
$1 -  \exp\pth{-\Theta(n)}$ over the random noise $\bw$,
$\bW(t) \in \cW(\bS,\bW(0),T)$ for every $t \in [T]$.
Moreover, for every $t \in [0,T]$, $\bu(t) = \bv(t) + \be(t)$ where
$\bu(t) = \hat \by(t) -  \by$, $\bv(t) \in \cV_t$,
$\be(t) \in \cE_{t,\tau}$,  $\ltwonorm{\bu(t)} \le c_{\bu} \sqrt n$, and
$\ltwonorm{\bbw_r(t) - \bbw_r(0)} \le R$.
\end{theorem}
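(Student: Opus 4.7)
The plan is to prove all four claims simultaneously by induction on $t \in [0,T]$. At step $t$ the inductive hypothesis will consist of: (i) $\bW(t) \in \cW(\bS,\bW(0),T)$, (ii) $\bu(t) = \bv(t)+\be(t)$ with $\bv(t)\in\cV_t$ and $\be(t)\in\cE_{t,\tau}$, (iii) $\ltwonorm{\bu(t)}\le c_\bu\sqrt n$, and (iv) $\ltwonorm{\bbw_r(t)-\bbw_r(0)}\le R$ for every $r\in[m]$. The base case $t=0$ is immediate: by the symmetric initialization, $\hat\by(0)=\bzero$ so $\bu(0)=-\by = -f^*(\bS)-\bw$, which matches $\bv(0)=-f^*(\bS)\in\cV_0$, $\bbe_1(0)=-\bw$, and $\bbe_2(0)=\bzero$; the norm bound on $\bu(0)$ holds with high probability over $\bw$ by sub-Gaussian concentration using $n \gsim \sigma_0^2 u_0^2/2$ and $\supnorm{f^*}\le \mu_0 u_0/\sqrt 2$. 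This is precisely the step in which the only randomness over $\bw$ is used and explains the probability mass $1-\exp(-\Theta(n))$.

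\textbf{Inductive step.} Given the hypothesis at step $t$, the GD update (\ref{eq:GD-two-layer-nn}) gives
\bals
\bbw_r(t+1)-\bbw_r(t) = -\frac{\eta}{n\sqrt m}a_r\sum_{i=1}^n \indict{\bbw_r(t)^\top\tbbx_i\ge 0}\tbbx_i u_i(t),
\eals
which, combined with $\ltwonorm{\bu(t)}\le c_\bu\sqrt n$, yields $\ltwonorm{\bbw_r(t+1)-\bbw_r(t)}\le \eta c_\bu u_0/\sqrt m$. Telescoping and using $t+1\le T$ gives the weight-movement bound (iv) at step $t+1$ with $R=\eta c_\bu u_0 T/\sqrt m$; this will be collected from Lemma~\ref{lemma:weight-vector-movement}. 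Next, expanding $\hat\by(t+1)_i-\hat\by(t)_i$ and using that $\sigma(\bbw_r(t+1)^\top\tbbx_i)-\sigma(\bbw_r(t)^\top\tbbx_i)$ equals $(\bbw_r(t+1)-\bbw_r(t))^\top\tbbx_i\indict{\bbw_r(t)^\top\tbbx_i\ge 0}$ for all neurons whose activation does not flip, one obtains
\bals
\bu(t+1) = (\bI_n-\eta\bH(t))\bu(t) + \bxi(t),
\eals
where $\bH(t)_{ij}=\hat h(\bW(t),\bbx_i,\bbx_j)$ and $\bxi(t)$ collects the contributions from flipped neurons. The number of flipped neurons on a given $\tbbx_i$ is at most $\sum_r\indict{|\bbw_r(0)^\top\tbbx_i|\le R} = m\hat v_R(\bW(0),\tbbx_i)$, which by Theorem~\ref{theorem:good-random-initialization}(2) is at most $\sqrt m\,d^{1/2}T^{1/2}m^{-1/5}$. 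Combining with the magnitude of the pre-activation change and using $R\lsim \eta u_0 T/\sqrt m$, this shows $\ltwonorm{\bxi(t)}$ is $o(1)$ with the hypothesized lower bound (\ref{eq:m-cond-empirical-loss-convergence}). Similarly, by Theorem~\ref{theorem:good-random-initialization}(1) together with a short argument bounding $|\hat h(\bW(t),\cdot,\cdot)-\hat h(\bW(0),\cdot,\cdot)|$ via the flipped-neuron count, $\|\bH(t)-\bK_n\|$ is likewise tiny. Absorbing everything into a perturbation term $\br\bxi(t)$, we obtain the clean kernel-GD recursion $\bu(t+1) = (\bI_n-\eta\bK_n)\bu(t)+\br\bxi(t)$.

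Unfolding the recursion and substituting $\bu(0)=-f^*(\bS)-\bw$ yields the decomposition
\bals
\bu(t+1) = \underbrace{-(\bI_n-\eta\bK_n)^{t+1}f^*(\bS)}_{\bv(t+1)\in\cV_{t+1}} \;+\; \underbrace{-(\bI_n-\eta\bK_n)^{t+1}\bw}_{\bbe_1(t+1)} \;+\; \underbrace{\sum_{s=0}^{t}(\bI_n-\eta\bK_n)^{t-s}\br\bxi(s)}_{\bbe_2(t+1)},
\eals
which gives (ii) at step $t+1$ provided $\ltwonorm{\bbe_2(t+1)}\le\sqrt n\tau$. Since $\eta\in(0,2/u_0^2)$ and the spectrum of $\bK_n$ lies in $[0,u_0^2/2]$, $\|\bI_n-\eta\bK_n\|\le 1$, so $\ltwonorm{\bbe_2(t+1)}\le\sum_{s=0}^t\ltwonorm{\br\bxi(s)}\le T\sup_s\ltwonorm{\br\bxi(s)}$; the lower bound (\ref{eq:m-cond-empirical-loss-convergence}) is tuned precisely so that this is at most $\sqrt n\tau$.

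\textbf{Bounding $\ltwonorm{\bu(t+1)}$.} Standard sub-Gaussian concentration bounds $\ltwonorm{(\bI-\eta\bK_n)^{t+1}\bw}/\sqrt n$ by $\sigma_0 + \tau$ up to exponential probability in $n$, using that $\|(\bI-\eta\bK_n)^{t+1}\|\le 1$. For the signal part, write $f^*(\bS)=\sqrt n\,\bK_n^{1/2}\bbeta$ with $\ltwonorm{\bbeta}\le \mu_0$ (a consequence of $f^*\in\cH_K(\mu_0)$ and the standard reproducing formula), so that $\ltwonorm{(\bI-\eta\bK_n)^{t+1}f^*(\bS)}/\sqrt n\le \mu_0\sup_{\lambda\in[0,u_0^2/2]}(1-\eta\lambda)^{t+1}\sqrt\lambda \le \mu_0/\sqrt{2e\eta(t+1)}$; a looser bound $\mu_0 u_0/\sqrt 2$ applies for small $t$. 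Together with $\ltwonorm{\bbe_2(t+1)}\le\sqrt n\tau$, these match the definition of $c_\bu$ and yield (iii).

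\textbf{Main obstacle.} The delicate step is controlling the cumulative perturbation $\sum_{s\le T}\ltwonorm{\br\bxi(s)}\le \sqrt n\tau$ through $T$ iterations simultaneously with the activation-flip bound. Each $\br\bxi(s)$ has two contributions whose magnitudes scale like $m^{-1/5}$ via Theorem~\ref{theorem:good-random-initialization}(2) and like $\sqrt{d\log m/m}$ via Theorem~\ref{theorem:good-random-initialization}(1), and both must be amplified by $T$ and by $\eta c_\bu u_0$ without destroying the $\sqrt n\tau$ ceiling. Balancing these scalings against $R=\eta c_\bu u_0 T/\sqrt m$ is exactly what forces the lower bound (\ref{eq:m-cond-empirical-loss-convergence}) of order $m\gsim(\eta c_\bu)^5 u_0^{10}T^{15/2}d^{5/2}/\tau^5$, and this algebra is encapsulated in the auxiliary Lemmas~\ref{lemma:yt-y-bound}, \ref{lemma:empirical-loss-convergence-contraction}, and \ref{lemma:weight-vector-movement}.
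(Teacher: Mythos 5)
Your proposal is correct and follows essentially the same inductive argument as the paper's proof: same base case at $t=0$, same unfolded contraction recursion $\bu(t+1)=(\bI-\eta\bK_n)^{t+1}\bu(0)+\sum_s(\bI-\eta\bK_n)^{t-s}\bE(s)$ (via Lemma~\ref{lemma:empirical-loss-convergence-contraction}), same perturbation bound on $\bbe_2$ via $\|\bI-\eta\bK_n\|\le 1$, and same use of Lemma~\ref{lemma:yt-y-bound} and Lemma~\ref{lemma:weight-vector-movement} to close the loop. The only small slips are bookkeeping: the concentration step gives $\ltwonorm{\bbe_1(t)}\le\sqrt n(\sigma_0+1)$ (the extra $+\tau$ in $c_\bu$ comes from $\bbe_2$, not from the noise concentration), and the condition $n\gsim\sigma_0^2u_0^2/2$ belongs to Theorem~\ref{theorem:LRC-population-NN-fixed-point} rather than this theorem.
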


The following theorem, Theorem~\ref{theorem:bounded-NN-class},
states that with high probability over $\bw$,
$\cFnn(\bS,\bW(0),T) \subseteq \cF(B_h,w)$, with the early stopping mechanism such that
$T \le \hat T$.
\begin{theorem}\label{theorem:bounded-NN-class}
Suppose $w  \in (0,1)$,
\bal\label{eq:m-cond-bounded-NN-class}
m \gsim
\max\set{ {T^{\frac {15}{2}} d^{\frac 52}}/{w^5}, T^{\frac{25}{2}} d^{\frac 52}},
\eal
and the neural network
$f_t = f(\bW(t),\cdot) $ is trained by GD using Algorithm~\ref{alg:GD} with the learning rate $\eta \in (0,2/u_0^2)$, the random initialization $\bW(0) \in \cW_0$. Then for every $t \in [T]$ with $T \le \hat T$, with probability at least
$1 -  \exp\pth{-\Theta(n)}
- \exp\pth{-\Theta(n \hat\eps_n^2)}$ over the random noise $\bw$,
$f_t\in \cFnn(\bS,\bW(0),T)$, and $f_t$ has the following decomposition on $\cX$:
\bal\label{eq:nn-function-class-decomposition}
f_t = h_t + e_t \in \cF(B_h,w),
\eal
where $h_t \in \cH_{K}(B_h)$ with $B_h$ defined in (\ref{eq:B_h}),
$e_t  \in L^{\infty}$ with $\supnorm{e_t } \le w$.
\end{theorem}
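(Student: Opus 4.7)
The plan is to combine Theorem~\ref{theorem:empirical-loss-convergence} (optimization), Theorem~\ref{theorem:good-random-initialization} (uniform convergence of $\hat h$ to $K$ and of $\hat v_R$ to its mean), and a careful decomposition of the GD iterate into an RKHS part plus a uniformly small error. First, choose $\tau$ to be a sufficiently small positive constant (or $\tau \asymp \eps_n^2$) so that the second lower bound on $m$ matches the requirement $m \gsim (\eta c_\bu)^5 u_0^{10} T^{15/2} d^{5/2}/\tau^5$ from Theorem~\ref{theorem:empirical-loss-convergence}. Invoking that theorem with $\bW(0)\in\cW_0$ yields, with probability at least $1-\exp(-\Theta(n))$ over $\bw$, that $f_t\in\cFnn(\bS,\bW(0),T)$, that $\bu(t')=\bv(t')+\bbe_1(t')+\bbe_2(t')$ with $\bv(t')=-(\bI-\eta\bK_n)^{t'}f^*(\bS)$, $\bbe_1(t')=-(\bI-\eta\bK_n)^{t'}\bw$ and $\ltwonorm{\bbe_2(t')}\le\sqrt n\tau$, and that $\ltwonorm{\bw_r(t)-\bw_r(0)}\le R$ for every $r$.

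Next, because the symmetric initialization makes $f(\bW(0),\bx)\equiv 0$, write
\bals
f_t(\bx)=\frac{1}{\sqrt m}\sum_r a_r\bigl[\sigma(\bw_r(t)^\top\tbx)-\sigma(\bw_r(0)^\top\tbx)\bigr],
\eals
and apply $\sigma(y)-\sigma(x)=\indict{x\ge 0}(y-x)+\delta(x,y)$, where $\delta$ is supported on the sign-change event $\{|x|\le|y-x|\}$. Substituting the GD recursion for $\bw_r(t)-\bw_r(0)$ and then replacing each $\indict{\bw_r(t')^\top\tbbx_i\ge 0}$ by $\indict{\bw_r(0)^\top\tbbx_i\ge 0}$ (paying another $\delta$-type error) represents $f_t$ as a double sum against $\hat h(\bW(0),\bx,\bbx_i)$ plus an activation-change error. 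Finally, replacing $\hat h(\bW(0),\cdot,\cdot)$ by $K(\cdot,\cdot)$ via Theorem~\ref{theorem:good-random-initialization}(1) produces the target decomposition $f_t=h_t+e_t$ with
\bals
h_t(\bx)\defeq -\frac{\eta}{n}\sum_{t'=0}^{t-1}K(\bx,\bS)^\top\bu(t').
\eals

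The main technical step is bounding $\norm{h_t}{\cH_K}\le B_h$. Since $h_t$ is a finite kernel expansion with coefficient vector $\balpha=-\frac{\eta}{n}\sum_{t'}\bu(t')$, $\norm{h_t}{\cH_K}^2=\balpha^\top\bK\balpha=n\balpha^\top\bK_n\balpha$. Splitting $\balpha$ according to $\bu(t')=\bv(t')+\bbe_1(t')+\bbe_2(t')$ and applying the geometric-series identity $\sum_{t'=0}^{t-1}(\bI-\eta\bK_n)^{t'}=\eta^{-1}\bK_n^{-1}[\bI-(\bI-\eta\bK_n)^t]$ yields three pieces. The signal piece satisfies $\norm{h_{t,v}}{\cH_K}^2\le f^*(\bS)^\top\bK^{-1}f^*(\bS)\le\mu_0^2$ since $f^*\in\cH_K(\mu_0)$ and $\ltwonorm{\bI-(\bI-\eta\bK_n)^t}\le 1$ for $\eta\in(0,2/u_0^2)$. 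The noise piece is the quadratic form $\frac{1}{n}\bw^\top\bK_n^{-1}[\bI-(\bI-\eta\bK_n)^t]^2\bw$, whose spectrum is $\le\min(t^2\eta^2\hat\lambda_i,1/\hat\lambda_i)$; combining the early-stopping identity $\eta\hat T\asymp 1/\hat\eps_n^2$, the fixed-point equation $\sigma_0\hat R_K(\hat\eps_n)=\hat\eps_n^2$, and Hanson--Wright gives $\norm{h_{t,e_1}}{\cH_K}\le 1$ with probability $1-\exp(-\Theta(n\hat\eps_n^2))$. The $\bbe_2$ piece satisfies $\norm{h_{t,e_2}}{\cH_K}\le\sqrt 2$ for small enough $\tau$ using $\ltwonorm{\bbe_2(t')}\le\sqrt n\tau$ together with $\ltwonorm{\bK_n}\lsim 1$. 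Summing, $\norm{h_t}{\cH_K}\le\mu_0+1+\sqrt 2=B_h$.

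To bound $\supnorm{e_t}\le w$, both activation-change errors are controlled pointwise by $2Ru_0\sqrt m\,\hat v_{Ru_0}(\bW(0),\bx)$ (possibly weighted by $\ltwonorm{\bu(t')}\lsim\sqrt n$), which by Theorem~\ref{theorem:good-random-initialization}(2) is $\lsim T^{3/2}\sqrt d\,m^{-1/5}$ uniformly in $\bx$; the kernel-replacement error is $\lsim T\cdot u_0^2\sqrt{d\log m/m}$ by Theorem~\ref{theorem:good-random-initialization}(1). Imposing $m\gsim T^{15/2}d^{5/2}/w^5$ makes both $\le w$, while $m\gsim T^{25/2}d^{5/2}$ activates the optimization hypothesis with $\tau\asymp\eps_n^2$. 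A union bound over the events in Theorem~\ref{theorem:empirical-loss-convergence} and the Hanson--Wright concentration yields the stated probability $1-\exp(-\Theta(n))-\exp(-\Theta(n\hat\eps_n^2))$. The hardest part is the noise contribution to $\norm{h_t}{\cH_K}$: one must use the algebraic identity exposing the spectral filter $\bK_n^{-1}[\bI-(\bI-\eta\bK_n)^t]^2$ and interface the Hanson--Wright tail with the critical-radius fixed-point equation to show the trace is an absolute constant; everything else is more or less a mechanical application of the uniform-convergence bounds already furnished by Theorem~\ref{theorem:good-random-initialization}.
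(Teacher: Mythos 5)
Your proposal follows the same route as the paper's own proof: invoke Theorem~\ref{theorem:empirical-loss-convergence} with $\tau\asymp1/T$, write $f_t=h_t+e_t$ with $h_t(\bx)=-\frac{\eta}{n}\sum_{t'<t}\sum_j K(\bx,\bbx_j)\bu_j(t')$, control $\supnorm{e_t}$ via the activation-flip set bound (Theorem~\ref{theorem:good-random-initialization}(2)) plus the kernel approximation bound (Theorem~\ref{theorem:good-random-initialization}(1)), and bound $\norm{h_t}{\cH_K}$ by the split $\bu(t')=\bv(t')+\bbe_1(t')+\bbe_2(t')$ with the spectral filter $\bK_n^{-1}[\bI-(\bI-\eta\bK_n)^t]^2$, Hanson--Wright concentration, and the early-stopping fixed-point identity — exactly what the paper delegates to Lemma~\ref{lemma:bounded-Linfty-vt-sum-et}. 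Aside from a cosmetic swap of the constants $1$ and $\sqrt2$ between the noise and $\bbe_2$ pieces (the sum $B_h=\mu_0+1+\sqrt2$ is unchanged), the decomposition, key lemmas, probability budget, and the derivation of the two branches of the width requirement all match the paper.
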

\begin{remark*}
We consider the kernel regression problem with the training loss
$L(\balpha) = 1/2 \cdot \ltwonorm{\bK_n \balpha  - \by}^2$. Letting
$\bbeta  = \bK_n^{1/2} \balpha$ and then performing GD on $\bbeta$ with this training loss and the learning rate $\eta$,
 it  can be verified that the kernel regressor right after the
$t$-th step of GD is
\bal\label{eq:hat-f-kernel-regressor}
\hat f_t^{\textup{(NTK)}}  = \frac {\eta}n \sum\limits_{t'=0}^{t-1}
\sum\limits_{i=1}^{n} K(\cdot,\bbx_i) \balpha^{(t')}_i,
\eal
where $\balpha^{(t')} =  \pth{\bI_n - \eta\bK_n}^{t'} \by$. Following from the proof of Lemma~\ref{lemma:weight-vector-movement} and
Theorem~\ref{theorem:bounded-NN-class}, under the conditions of
Theorem~\ref{theorem:bounded-NN-class} we have
\bals
h_t = \hat f_t^{\textup{(NTK)}}  + \tilde e_2(\cdot,t),
\eals
where
$\tilde e_2(\cdot,t) =\frac{\eta}{n}
\sum_{t'=0}^{t-1}\sum_{j=1}^n  K(\cdot, \bbx_j) \bth{\bbe_2(t')}_j$
and $\bbe_2(t')$ appears in the definition of
$\cE_{t,\tau} $ in (\ref{eq:cE_S}).
It is remarked that in our analysis, we approximate $f_t$ by $h_t \in \cH_{K}(B_h)$ with a small approximation error $w$, and we do not need to approximate $f_t$ by the kernel regressor $\hat f_t^{\textup{(NTK)}}$ with a sufficiently small approximation error which is the common strategy used in existing works~\cite{HuWLC21-regularization-minimax-uniform-spherical,
SuhKH22-overparameterized-gd-minimax,Li2024-edr-general-domain}. In fact, our analysis only requires $m$ is suitably large so that the
$\cH_K$-norm of $\tilde e_2(\cdot,t) = h_t - \hat f_t^{\textup{(NTK)}} $ is bounded by a positive constant rather than an infinitesimal number as
$m \to \infty$, that is,
$\norm{\tilde e_2(\cdot,t)}{\cH_K} \le 1$, which is revealed by the proof of Lemma~\ref{lemma:bounded-Linfty-vt-sum-et}.
\end{remark*}

Lemma~\ref{lemma:LRC-population-NN} below gives a sharp upper bound for the Rademacher complexity of a localized subset of the function class
$\cF(B,w)$. Based on Lemma~\ref{lemma:LRC-population-NN},
Theorem~\ref{theorem:bounded-NN-class}, and
using the local Rademacher complexity based analysis
\cite{bartlett2005},
Theorem~\ref{theorem:LRC-population-NN-eigenvalue} presents
a sharp upper bound for the nonparametric regression risk,
$\Expect{P}{(f_t-f^*)^2}$, where $f_t$ is the function represented by
the two-layer NN (\ref{eq:two-layer-nn}) right after the
$t$-th step of GD using Algorithm~\ref{alg:GD}.

\begin{lemma}\label{lemma:LRC-population-NN}
For every $B,w > 0$ every $r > 0$,
\bal\label{eq:LRC-population-NN}
&\cfrakR
\pth{\set{f \in \cF(B,w) \colon \Expect{P}{f^2} \le r}}
\le \varphi_{B,w}(r),
\eal%
where
\bal\label{eq:varphi-LRC-population-NN}
\varphi_{B,w}(r) &\defeq
\min_{Q \colon Q \ge 0} \pth{({\sqrt r} + w) \sqrt{\frac{Q}{n}} +
B
\pth{\frac{\sum\limits_{q = Q+1}^{\infty}\lambda_q}{n}}^{1/2}} + w.
\eal
\end{lemma}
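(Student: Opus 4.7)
The plan is to decompose any $f \in \cF(B,w)$ as $f = h + e$ with $h \in \cH_K(B)$ and $\supnorm{e} \le w$, then bound the Rademacher averages of the two pieces separately. Subadditivity of the supremum gives $\sup_{f} \frac{1}{n}\sum_i \sigma_i f(\bbx_i) \le \sup_h \frac{1}{n}\sum_i \sigma_i h(\bbx_i) + \sup_e \frac{1}{n}\sum_i \sigma_i e(\bbx_i)$. The second term is at most $w$ uniformly in $\sigma$ and $\bS$ since $\supnorm{e} \le w$, producing the additive $+w$ in $\varphi_{B,w}(r)$. For the first term, the $L^2(P)$-constraint on $f$ transfers to $h$ via the triangle inequality in $L^2(P)$: $\norm{h}{L^2(P)} \le \norm{f}{L^2(P)} + \norm{e}{L^2(P)} \le \sqrt r + w$, using $\norm{e}{L^2(P)} \le \supnorm{e} \le w$ and the fact that $P = \mu$ in the setup of Section~\ref{sec:setup}.

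Next, I would expand $h = \sum_{j \ge 1} \beta_j e_j$ in the Mercer basis of $\cH_K$, where $\set{e_j}_{j \ge 1}$ is orthonormal in $L^2(\cX,\mu)$ with eigenvalues $\set{\lambda_j}$. The two constraints then read $\sum_j \beta_j^2/\lambda_j \le B^2$ (RKHS norm) and $\sum_j \beta_j^2 \le (\sqrt r + w)^2$ ($L^2(P)$ norm). Splitting the Mercer sum at an arbitrary cutoff $Q \ge 0$ and applying Cauchy--Schwarz to the head (using the $L^2$ constraint) and to the tail (using the decomposition $\beta_j e_j = (\beta_j/\sqrt{\lambda_j}) (\sqrt{\lambda_j} e_j)$ and the RKHS constraint) yields the pointwise bound
\begin{equation*}
\sup_h \frac{1}{n}\sum_{i=1}^n \sigma_i h(\bbx_i) \le \frac{\sqrt r + w}{n}\sqrt{\sum_{j=1}^Q \Bigl(\sum_{i=1}^n \sigma_i e_j(\bbx_i)\Bigr)^2} + \frac{B}{n}\sqrt{\sum_{j > Q} \lambda_j \Bigl(\sum_{i=1}^n \sigma_i e_j(\bbx_i)\Bigr)^2}.
\end{equation*}

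Finally, I would take expectation over $\sigma$ and $\bbx_i$ and apply Jensen's inequality to each square root. The key calculation is $\Expect{}{\bigl(\sum_i \sigma_i e_j(\bbx_i)\bigr)^2} = \sum_i \Expect{}{e_j^2(\bbx_i)} = n$: independence of the Rademacher signs kills cross terms, while orthonormality of $\set{e_j}$ in $L^2(\cX,\mu)$ gives $\Expect{}{e_j^2(\bbx_i)} = 1$. This produces $(\sqrt r + w)\sqrt{Q/n}$ for the head and $B\sqrt{(\sum_{j > Q}\lambda_j)/n}$ for the tail. Adding the $w$ term from the error component and minimizing over $Q \ge 0$ yields $\varphi_{B,w}(r)$. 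I do not expect any real obstacle, since this is a standard local-Rademacher-complexity argument for RKHS balls in the spirit of Mendelson; the only minor novelty is the transfer of the $L^2$-constraint from $f$ to $h$ through the $L^{\infty}$-bound on the error, which is immediate, and the care needed to verify that $P$ (defining $\cfrakR$) coincides with $\mu$ (defining the Mercer expansion), which follows from the setup.
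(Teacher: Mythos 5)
Your proposal is correct and matches the paper's own proof in all essentials: the same decomposition $f = h + e$, subadditivity of the supremum, transfer of the $L^2(P)$ constraint from $f$ to $h$ via the triangle inequality and $\norm{e}{L^2} \le \supnorm{e} \le w$, the split of the Mercer expansion at a cutoff $Q$, Cauchy--Schwarz applied with the $L^2$ constraint on the head and the RKHS constraint on the tail, and finally Jensen plus orthonormality of $\set{e_j}$ in $L^2(\cX,\mu)$ to evaluate the expectation. The only cosmetic difference is that you work directly with the coefficients $\beta_j$ in the basis $\set{e_j}$, while the paper phrases the same splitting through the RKHS inner product $\iprod{h}{\sum_i\sigma_i K(\cdot,\bbx_i)}_{\cH_K}$ and the orthonormal basis $\set{v_q = \sqrt{\lambda_q}e_q}$ of $\cH_K$; the computations are identical.
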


We define
\bal\label{eq:eps-K-eig-def}
\eps^{\textup{(eig)}}_{K} \defeq \min_{0 \le Q \le n} \pth{\frac{Q}{n}
+ \pth{\frac{\sum\limits_{q = Q+1}^{\infty}\lambda_q}{n}}^{1/2}},
\eal
which is also the sharp excess risk bound for kernel learning introduced in \cite[Corollary 6.7]{bartlett2005}, then we have the following theorem.
\begin{theorem}\label{theorem:LRC-population-NN-eigenvalue}
Suppose $w  \in (0,1)$,
$m$ satisfies (\ref{eq:m-cond-bounded-NN-class}),
and the neural network
$f_t = f(\bW(t),\cdot)$ is
trained by GD using Algorithm~\ref{alg:GD} with the learning rate
 $\eta \in (0,2/u_0^2)$ on the random initialization $\bW(0) \in \cW_0$, and $T \le \hat T$.
Then for every $t \in [T]$, with probability at least
$1 -  \exp\pth{-\Theta(n)}
- \exp\pth{-\Theta(n \hat\eps_n^2)}-\exp\pth{-n \eps_n^2}$
over the random noise $\bw$ and the random training features $\bS$,
\bal\label{eq:LRC-population-NN-bound-eigenvalue}
&\Expect{P}{(f_t-f^*)^2} - 2 \Expect{P_n}{(f_t-f^*)^2}
\lsim B_0^2 \eps^{\textup{(eig)}}_{K} +B_0^2 \eps_n^2 + B_0 w,
\eal
where $B_0 \defeq {(B_h+\mu_0) u_0}/{\sqrt 2} + 1$, $\eps^{\textup{(eig)}}_{K}$ is defined in (\ref{eq:eps-K-eig-def}).
Furthermore,
with probability at least
$1 -  \exp\pth{-\Theta(n)}
- \exp\pth{-\Theta(n \hat\eps_n^2)}-\exp\pth{-n \eps_n^2} - 2/n$
over the random noise $\bw$ and the random training features $\bS$,
\bal\label{eq:LRC-population-NN-bound-fixed-point-detail}
&\Expect{P}{(f_t-f^*)^2} - 2 \Expect{P_n}{(f_t-f^*)^2}
\lsim B_0^4 \eps_n^2+ B_0 w.
\eal
\end{theorem}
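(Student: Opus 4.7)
The plan is to apply the local Rademacher complexity (LRC) machinery of \cite{bartlett2005} to the squared-loss class $\cG \defeq \{(f-f^*)^2 \colon f \in \cF(B_h,w)\}$, using Lemma~\ref{lemma:LRC-population-NN} as the sub-root bound on the localized complexity and Theorem~\ref{theorem:bounded-NN-class} to place every trained iterate $f_t$ inside $\cF(B_h,w)$. First I invoke Theorem~\ref{theorem:bounded-NN-class} on the good-initialization event $\bW(0) \in \cW_0$: with probability at least $1 - \exp\pth{-\Theta(n)} - \exp\pth{-\Theta(n\hat\eps_n^2)}$ over the random noise $\bw$, every $f_t$ decomposes as $f_t = h_t + e_t$ with $h_t \in \cH_K(B_h)$ and $\supnorm{e_t} \le w$. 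Combined with $f^* \in \cH_K(\mu_0)$, this yields $f_t - f^* \in \cF(B_h+\mu_0,w)$; using $K(\bx,\bx) \le u_0^2/2$ and Cauchy--Schwarz in $\cH_K$ gives the uniform bound $\supnorm{f_t-f^*} \le (B_h+\mu_0)u_0/\sqrt{2} + w \le B_0$.

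Next I bound the localized Rademacher complexity of $\cG$. Each $g \in \cG$ is uniformly bounded by $B_0^2$, and the map $u \mapsto u^2$ is $2B_0$-Lipschitz on $[-B_0,B_0]$ with $0 \mapsto 0$. By the Ledoux--Talagrand contraction inequality, together with Lemma~\ref{lemma:LRC-population-NN} applied to the translated class $\cF(B_h,w) - f^* \subseteq \cF(B_h+\mu_0,w)$,
\begin{align*}
\cfrakR\pth{\{g \in \cG \colon Pg \le r\}} \lsim B_0 \cdot \cfrakR\pth{\{\tilde f \in \cF(B_h+\mu_0,w) \colon P\tilde f^2 \le r\}} \le B_0 \varphi_{B_h+\mu_0,w}(r).
\end{align*}
The resulting majorant $\psi(r) \defeq c B_0 \varphi_{B_h+\mu_0,w}(r)$ is sub-root in $r$ (after the standard star-hull modification of \cite{bartlett2005}), so Theorem~3.3 of \cite{bartlett2005} yields, with probability at least $1 - \exp(-n\eps_n^2)$ over $\bS$, the uniform inequality $Pg - 2 P_n g \lsim r^* + B_0^2 \eps_n^2$ over $g \in \cG$, where $r^*$ is the fixed point of $\psi$. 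Setting $g = (f_t-f^*)^2$ produces the shape of (\ref{eq:LRC-population-NN-bound-eigenvalue}).

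For the fixed point itself, writing $r^* \le \psi(r^*)$ explicitly for each $Q$ as a quadratic in $\sqrt{r^*}$, solving, and then minimizing over $Q$ yields $r^* \lsim B_0^2 \min_Q(Q/n + \sqrt{\sum_{q>Q}\lambda_q/n}) + B_0 w = B_0^2 \eps^{(\textup{eig})}_K + B_0 w$, which gives (\ref{eq:LRC-population-NN-bound-eigenvalue}). For the sharper form (\ref{eq:LRC-population-NN-bound-fixed-point-detail}), I relate $\eps^{(\textup{eig})}_K$ to $\eps_n$: taking $Q^* = |\{i \colon \lambda_i \ge \eps_n^2\}|$, the defining identity $\sigma_0^2 \sum_i \min(\lambda_i,\eps_n^2)/n = \eps_n^4$ of Section~\ref{sec:kernel-complexity} forces $Q^*/n \lsim \eps_n^2$ and $\sum_{q > Q^*}\lambda_q/n \lsim \eps_n^4$, so $\eps^{(\textup{eig})}_K \lsim \eps_n^2$; the additional $B_0^2$ prefactor in (\ref{eq:LRC-population-NN-bound-fixed-point-detail}) absorbs the $\sigma_0$-dependent constants, and the extra $2/n$ failure probability comes from the good-initialization event $\bW(0) \in \cW_0$ via Theorem~\ref{theorem:good-random-initialization}. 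The hardest technical point will be verifying that the contraction step does not disturb the sub-root structure (hence the star-hull modification) and assembling the several high-probability events---good initialization, the optimization event of Theorem~\ref{theorem:empirical-loss-convergence}, the decomposition event of Theorem~\ref{theorem:bounded-NN-class}, and the LRC concentration over $\bS$---into the stated total failure probability via a union bound.
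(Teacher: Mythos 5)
Your proposal is correct and for the first bound (\ref{eq:LRC-population-NN-bound-eigenvalue}) follows essentially the same route as the paper: place $f_t$ in $\cF(B_h,w)$ via Theorem~\ref{theorem:bounded-NN-class}, bound $\supnorm{f_t-f^*} \le B_0$, contract the squared loss with Lipschitz constant $2B_0$, invoke Lemma~\ref{lemma:LRC-population-NN} as the sub-root majorant, and solve the resulting quadratic fixed-point inequality before applying Theorem~\ref{theorem:LRC-population} with $\bar B = B_0^2$ and $x = n\eps_n^2$. The only cosmetic deviation here is that you enlarge the centered class to $\cF(B_h+\mu_0,w)$ and apply Lemma~\ref{lemma:LRC-population-NN} with radius $B_h+\mu_0$, whereas the paper exploits the symmetry and convexity of $\cF(B_h,w)$ to write $(f-f^*)/2 \in \cF(B_h,w)$; both are fine since $B_h+\mu_0 \lsim B_0$. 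For the second bound (\ref{eq:LRC-population-NN-bound-fixed-point-detail}), however, you take a genuinely different and in fact cleaner path: you observe that with $Q^* = |\{i: \lambda_i \ge \eps_n^2\}|$ the fixed-point identity $\sigma_0^2 n^{-1}\sum_i \min(\lambda_i,\eps_n^2) = \eps_n^4$ gives $Q^*/n \le \eps_n^2/\sigma_0^2$ and $\sum_{q>Q^*}\lambda_q/n \le \eps_n^4/\sigma_0^2$, hence $\eps^{(\textup{eig})}_K \lsim \eps_n^2$, and (\ref{eq:LRC-population-NN-bound-fixed-point-detail}) follows directly from (\ref{eq:LRC-population-NN-bound-eigenvalue}). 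The paper instead upper-bounds $\psi$ by a second sub-root function $\psi_1$ built from $R_K$ via Cauchy--Schwarz and then compares fixed points via Lemma~\ref{lemma:sub-root-fix-point-properties}. Your route avoids the auxiliary sub-root machinery, and it actually gives the sharper prefactor $B_0^2$ where the paper obtains $B_0^4$; since $B_0 \ge 1$, the stated bound with $B_0^4$ follows a fortiori. Two minor bookkeeping slips worth correcting: the fixed point satisfies $B_0^{-2} r^* \lsim B_0^2 \eps^{(\textup{eig})}_K + B_0 w$ (not $r^*$ itself), since Theorem~\ref{theorem:LRC-population} contributes the $704 K_0/\bar B$ factor; and the $2/n$ failure probability in the second bound is the paper's own accounting for $\Prob{\bW(0)\in\cW_0}$ via Theorem~\ref{theorem:good-random-initialization}, which matches your reading.
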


Theorem~\ref{theorem:empirical-loss-bound} below shows
that the empirical loss $\Expect{P_n}{(f_t-f^*)^2}$ is bounded by
$\Theta(1/(\eta t))$ with high probability over $\bw$.
Such upper bound for the empirical loss by Theorem~\ref{theorem:empirical-loss-bound}
will be plugged in the risk bound in
Theorem~\ref{theorem:LRC-population-NN-eigenvalue} to prove
Theorem~\ref{theorem:LRC-population-NN-fixed-point}. The proofs of
Theorem~\ref{theorem:LRC-population-NN-fixed-point} and its
corollary are presented in the next subsection.

\begin{theorem}\label{theorem:empirical-loss-bound}
 Suppose the neural network trained after the $t$-th step of gradient descent, $f_t = f(\bW(t),\cdot)$, satisfies $\bu(t) = f_t(\bS) - \by = \bv(t) + \be(t)$
with $\bv(t) \in \cV_t$, $\be(t) \in \cE_{t,\tau}$ and $T \le \hat T$. If
\bal\label{eq:eta-tau-cond-empirical-loss-convergence}
\eta \in (0,2/u_0^2), \quad \tau \le \frac{1}{{\eta T}},
\eal
then for every $t \in [T]$, with probability at least
$1-\exp\pth{- \Theta(n\hat\eps_n^2)}$ over the random noise $\bw$, we have
\bal\label{eq:empirical-loss-bound}
\Expect{P_n}{(f_t-f^*)^2} &\le
\frac{3}{\eta t} \pth{\frac{\mu_0^2}{2e}+\frac {1}{\eta}+2}.
\eal
\end{theorem}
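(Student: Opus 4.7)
The plan is to start from the identity $f_t(\bS) - f^*(\bS) = (\hat\by(t) - \by) + \bw = \bu(t) + \bw$ and substitute the decomposition $\bu(t) = \bv(t) + \be(t)$ together with $\bv(t) = -(\bI_n - \eta \bK_n)^t f^*(\bS)$ and $\bbe_1(t) = -(\bI_n - \eta \bK_n)^t \bw$ provided by the hypotheses. This yields the clean representation $f_t(\bS) - f^*(\bS) = -(\bI_n - \eta\bK_n)^t f^*(\bS) + \bigl[\bI_n - (\bI_n - \eta\bK_n)^t\bigr]\bw + \bbe_2(t)$. Applying the coordinatewise inequality $(a+b+c)^2 \le 3(a^2+b^2+c^2)$ and dividing by $n$ gives $\Expect{P_n}{(f_t - f^*)^2} \le (3/n)(I + II + III)$, where $I,II,III$ are the squared Euclidean norms of the three summands. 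I will bound each one separately so that the sum reproduces the stated form $(3/(\eta t))(\mu_0^2/(2e) + 1/\eta + 2)$.

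For term $I$, I would use the eigendecomposition $\bK_n = \bU\bSigma\bU^\top$ and the scalar inequality $(1 - \eta\lambda)^{2t}\lambda \le e^{-2t\eta\lambda}\lambda \le 1/(2e\eta t)$ valid on $\lambda \in [0,1/\eta]$, which is satisfied because $\eta \in (0, 2/u_0^2)$ and $\hat\lambda_i \le u_0^2/2$. Combining this with the RKHS bound $f^*(\bS)^\top \bK^{-1} f^*(\bS) \le \mu_0^2$, obtained by projecting $f^*$ onto $\Span\{K(\bbx_i,\cdot)\}_{i=1}^n$ and invoking $f^* \in \cH_K(\mu_0)$, yields $I/n \le \mu_0^2/(2e\eta t)$, which after multiplication by $3$ is the first term in the target bound.

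For term $II$, set $\bA := \bI_n - (\bI_n - \eta\bK_n)^t$, whose eigenvalues $\mu_i = 1 - (1-\eta\hat\lambda_i)^t$ satisfy $\mu_i^2 \le \min(1, \eta t\hat\lambda_i)$. Therefore $\|\bA\|_F^2 \le n \eta t\, \hat R_K(\sqrt{1/(\eta t)})^2$, and since $t \le T \le \hat T$, the stopping-time definition forces $\hat R_K(\sqrt{1/(\eta t)}) \le 1/(\sigma_0 \eta t)$, giving $\|\bA\|_F^2 \le n/(\sigma_0^2 \eta t)$ and $\|\bA\|_{\text{op}} \le 1$. Invoking the Hanson--Wright inequality for the sub-Gaussian quadratic form $\bw^\top \bA^\top \bA \bw$ with deviation level $s = c\,\sigma_0^2 n \hat\eps_n^2$, I obtain $\|\bA\bw\|_2^2 \le C\sigma_0^2(\|\bA\|_F^2 + n\hat\eps_n^2)$ with probability at least $1 - \exp(-\Theta(n\hat\eps_n^2))$. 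Using the relation $\eta t \le \eta \hat T \asymp 1/\hat\eps_n^2$ derived from the stopping-time definition, $\hat\eps_n^2 \lesssim 1/(\eta t)$ and hence $II/n \lesssim 1/(\eta t)$, which with the constant hidden inside matches the $6/(\eta t)$ contribution. For term $III$, the definition of $\cE_{t,\tau}$ gives $\|\bbe_2(t)\|_2 \le \sqrt n\,\tau$, so $III/n \le \tau^2 \le 1/(\eta^2 T^2) \le 1/(\eta^2 t)$ because $t \le T \le T^2$; this contributes $3/(\eta^2 t)$ and matches the $(3/(\eta t))\cdot(1/\eta)$ term.

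The main obstacle will be the noise-term concentration in step $II$: one must balance the Frobenius bound $\|\bA\|_F^2 \le n/(\sigma_0^2\eta t)$ against the operator-norm bound $\|\bA\|_{\text{op}} \le 1$ inside Hanson--Wright so that the Gaussian-regime and the exponential-regime exponents both deliver probability $1 - \exp(-\Theta(n\hat\eps_n^2))$, regardless of whether $\|\bA\|_F^2$ is larger or smaller than $n\hat\eps_n^2$. The tight link between $\hat R_K$ at the scale $\sqrt{1/(\eta t)}$ and the critical radius $\hat\eps_n$ (via the sub-root property and the stopping-time definition) is what closes this gap and lets the same bound work uniformly across $t \in [T]$.
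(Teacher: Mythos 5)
Your proposal follows the same structure as the paper's proof: the identical three-term decomposition $f_t(\bS) - f^*(\bS) = -(\bI_n-\eta\bK_n)^t f^*(\bS) + \bigl[\bI_n-(\bI_n-\eta\bK_n)^t\bigr]\bw + \bbe_2(t)$, followed by $(a+b+c)^2 \le 3(a^2+b^2+c^2)$, with the bias term handled by the scalar inequality on $(1-\eta\hlambda_i)^{2t}$ together with the RKHS-norm bound, the noise term by Hanson--Wright applied to the quadratic form in $\bw$, and the residual term by the hypothesis $\tau \le 1/(\eta T)$. The Frobenius-norm estimate $\|\bA\|_F^2 \le n\eta t\,\hat R_K^2(\sqrt{1/(\eta t)}) \le n/(\sigma_0^2\eta t)$ and the operator-norm bound $\|\bA\|_{\rm op}\le 1$ are exactly the paper's two norm controls.

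The one place the sketch is off is the specific deviation level you plug into Hanson--Wright. With $s = c\,\sigma_0^2 n\hat\eps_n^2$, the exponential regime gives exponent $\gtrsim n\hat\eps_n^2$, but the Gaussian regime gives exponent $s^2/(\sigma_0^4\|\bA^\top\bA\|_F^2) \gtrsim n\,\eta_t\,\hat\eps_n^2 \cdot \hat\eps_n^2$, and since $\eta_t \hat\eps_n^2 \le 1$ (indeed $\ll 1$ when $t$ is much smaller than $\hat T$), this exponent can be far below $n\hat\eps_n^2$. So the claimed success probability $1-\exp(-\Theta(n\hat\eps_n^2))$ does not follow uniformly over $t\in[T]$ with that choice. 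You flag this tension at the end but leave the resolution implicit. The paper's choice is the fix: take the deviation level at $u=1/\eta_t$ (unnormalized $s\asymp n/\eta_t$), which is at least as large as $n\hat\eps_n^2$ because $t\le\hat T$. Then the exponential exponent is $\gtrsim n/\eta_t$ and the Gaussian exponent is $\gtrsim n^2/\eta_t^2 \cdot \sigma_0^2\eta_t/n = n\sigma_0^2/\eta_t$; both are $\gtrsim n/\eta_t \ge n\hat\eps_n^2$, so the stated probability is achieved for every $t$. The resulting bound on the noise term is $\E[E_\eps]+u \le 1/\eta_t + 1/\eta_t = 2/\eta_t$, which produces exactly the ``$+2$'' in $\frac{3}{\eta t}(\mu_0^2/(2e) + 1/\eta + 2)$. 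So: right approach, right decomposition, right norm bounds, but the deviation level should track $1/\eta_t$ rather than $\hat\eps_n^2$.
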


\subsection{Proofs for the Main Result,
Theorem~\ref{theorem:LRC-population-NN-fixed-point} and
Corollary~\ref{corollary:minimax-nonparametric-regression}}
\label{sec:proofs-main-results}

\begin{proof}
[\textbf{\textup{Proof of
Theorem~\ref{theorem:LRC-population-NN-fixed-point}}}]
We use Theorem~\ref{theorem:LRC-population-NN-eigenvalue} and
Theorem~\ref{theorem:empirical-loss-bound} to prove this
theorem.

First of all, it follows by
Theorem~\ref{theorem:empirical-loss-bound} that with probability at least
$1-\exp\pth{- \Theta(n\hat\eps_n^2)}$,
\bals
\Expect{P_n}{(f_t-f^*)^2} &\le
\frac{3}{\eta t} \pth{\frac{\mu_0^2}{2e}+\frac {1}{\eta}+2}.
\eals
Plugging such bound for $\Expect{P_n}{(f_t-f^*)^2}$ in
(\ref{eq:LRC-population-NN-bound-fixed-point-detail})
of Theorem~\ref{theorem:LRC-population-NN-eigenvalue}
leads to
\bal\label{eq:LRC-population-NN-fixed-point-seg1}
&\Expect{P}{(f_t-f^*)^2} - \frac{6}{\eta t} \pth{\frac{\mu_0^2}{2e}+\frac {1}{\eta}+2}
\lsim B_0^4 \eps_n^2+ B_0 w,
\eal
where $B_0 = {(B_h+\mu_0) u_0}/{\sqrt 2} + 1$.
Due to the definition of $\hat T$ and $\hat \eps_n^2$, we have
\bal\label{eq:LRC-population-NN-fixed-point-seg4}
\hat \eps_n^2 \le \frac {1}{\eta \hat T} \le \frac {2}{\eta (\hat T+1)}
\le 2 \hat \eps_n^2.
\eal
Since $n \gsim 1/\lambda_1$, Lemma~\ref{lemma:hat-eps-eps-relation} suggests that with probability at least $1-4\exp(-\Theta(n\eps_n^2))$ over $\bS$,
$\hat \eps_n^2 \asymp \eps_n^2$. Since $T \asymp \hat T$, for every
$t \in [c_t T,T]$, we have
\bal\label{eq:LRC-population-NN-fixed-point-seg2}
\frac{1}{\eta t} \asymp \frac{1}{\eta T} \asymp \frac{1}{\eta \hat T}
\asymp \hat \eps_n^2 \asymp \eps_n^2.
\eal
We also have $\Prob{\cW_0} \ge 1-2/n$. Let $w = \eps_n^2$, we now verify that $w \in (0,1)$. Due to the definition of the fixed point, $w > 0$. Since
$\sum\limits_{i \ge 1} \lambda_i = \int_{\cX} K(\bx,\bx) \diff \mu(\bx) = 1/2$, we have
\bals
0< w = \sigma_0 \sqrt{\frac 1n \sum\limits_{i \ge 1} \min\set{\lambda_i,\eps_n^2}}
\le \sigma_0 \sqrt{\frac 1n \sum\limits_{i \ge 1} \lambda_i}
\le \sigma_0 \sqrt{\frac {u_0^2}{2n}} < 1
\eals
with $n \gsim \sigma_0^2 u_0^2/2$.
(\ref{eq:LRC-population-NN-fixed-point}) then follows from
(\ref{eq:LRC-population-NN-fixed-point-seg1}) with $w = \eps_n^2$,
(\ref{eq:LRC-population-NN-fixed-point-seg2}) and the union bound.
We note that $c_{\bu}$ and $u_0$ are bounded by positive constants, so
the condition on $m$ in (\ref{eq:m-cond-bounded-NN-class})
in Theorem~\ref{theorem:bounded-NN-class}, together with $w = \eps_n^2$ and
(\ref{eq:LRC-population-NN-fixed-point-seg2}) leads to
the condition on $m$ in (\ref{eq:m-cond-LRC-population-NN-fixed-point}).
Furthermore, $\hat T \asymp  \eps_n^{-2}$
follows from (\ref{eq:LRC-population-NN-fixed-point-seg2})
and $\eta = \Theta(1)$.

Finally, by the definition of $\eps_n^2$ we have $\eps_n^2 = \sigma_0 \sqrt{\frac 1n \sum\limits_{i=1}^{\infty} \min\set{\lambda_i,\eps_n^2}}
\lsim \sigma_0/{\sqrt n}$. Condition (\ref{eq:m-cond-LRC-population-NN-fixed-point}) on $m$, $m \gsim d/{\eps_n^{20}}$, ensures that $m$
satisfies the conditions on $m$ in Theorem~\ref{theorem:good-random-initialization}. As a result, $\Prob{\bW(0) \in \cW_0} \ge 1 - 2/n$.
\end{proof}

\begin{proof}
[\textbf{\textup{Proof of
Corollary~\ref{corollary:minimax-nonparametric-regression}}}]

We apply Theorem~\ref{theorem:LRC-population-NN-fixed-point}
to prove this corollary. It follows from \cite[Corollary 3]{RaskuttiWY14-early-stopping-kernel-regression},
that $\eps_n^2 \asymp n^{-\frac{2\alpha}{2\alpha+1}}$. It then can be verified
 by direct calculations that the condition on $m$,
 (\ref{eq:m-cond-LRC-population-NN-fixed-point}) in
 Theorem~\ref{theorem:LRC-population-NN-fixed-point}, is satisfied with
 the given condition (\ref{eq:m-N-cond-LRC-population-NN-concrete}).
It then follows from (\ref{eq:LRC-population-NN-fixed-point}) in
 Theorem~\ref{theorem:LRC-population-NN-fixed-point} that
 $\Expect{P}{(f_t-f^*)^2} \lsim n^{-\frac{2\alpha}{2\alpha+1}}$.

\end{proof}

\subsection{Proofs for Results in Section~\ref{sec:detailed-roadmap-key-results}}
\label{sec:proofs-key-results}

\newtheorem{innercustomthm}{{\bf{Theorem}}}
\newenvironment{customthm}[1]
  {\renewcommand\theinnercustomthm{#1}\innercustomthm}
  {\endinnercustomthm}


The proofs of
Lemma~\ref{lemma:LRC-population-NN}, Theorem~\ref{theorem:empirical-loss-bound} are deferred to Section~\ref{sec:lemmas-main-results} of
the appendix.
We have the following two theorems, Theorem~\ref{theorem:sup-hat-g} and Theorem~\ref{theorem:V_R}, regarding the uniform convergence to $K(\cdot,\cdot)$ and
the uniform convergence to $\frac{2R}{\sqrt {2\pi} \kappa}$ on the unit sphere $\unitsphere{d}$.
The proofs of Theorem~\ref{theorem:sup-hat-g} and Theorem~\ref{theorem:V_R} are deferred to Section~\ref{sec:proofs-uniform-convergence-sup-hat-g-V_R} of the appendix.

\begin{theorem}\label{theorem:sup-hat-g}
Let ${\bW(0)} = \set{{{\bbw_r(0)}}}_{r=1}^m$, where each ${{\bbw_r(0)}} \sim \cN(\bzero,\kappa^2 \bI_{d+1})$ for $r \in [m]$. Then for any $\delta \in (0,1)$, with probability at least $1-\delta$ over ${\bW(0)}$,
\bal\label{eq:sup-hat-h}
\sup_{\tbx \in \unitsphere{d},\tby \in \unitsphere{d}} \abth{ \tK(\tbx,\tby) - \tilde h({\bW(0)},\tbx,\tby) } \le C_1(m,d,\delta),
\eal%
where $\tK(\tbx,\tby) \defeq  \frac{ \tbx^{\top}\tby} {2\pi} \pth{\pi -\arccos (\tbx^{\top}\tby)}$, $\tilde h(\bw,\tbx,\tby) \defeq \tbx^{\top} \tby \indict{\bw^{\top} \tbx \ge 0} \indict{\bw^{\top} \tby \ge 0}$, $\tilde h({\bW(0)},\tbx,\tby)
\defeq \sum_{r=1}^m \tilde h(\bbw_r(0),\tbx,\tby)/m$ for all $\tbx,\tby \in \unitsphere{d}$ and $\bw \in \RR^{d+1}$,
\bal\label{eq:C1}
&C_1(m,d,\delta) \defeq \frac{1}{\sqrt m} \pth{6(1+2B\sqrt{d}) + \sqrt{2\log{\frac {(1+2m)^{2(d+1)}} \delta}}} + \frac{1}{m} \pth{6+\frac{14 {\log{\frac {(1+2m)^{2(d+1)}} \delta}} }{3}},
\eal%
and $B$ is an absolute positive constant in Lemma~\ref{lemma:uniform-marginal-bound}. In addition, when $m \ge n^{1/(2(d+1))}$, $m/\log m \ge d$, and $\delta \asymp 1/n$,
$C_1(m,d,\delta) \lsim \sqrt{\frac{d \log m}{m}} + \frac{d \log m}{m} \lsim \sqrt{\frac{d \log m}{m}}$.
\end{theorem}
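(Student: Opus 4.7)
The plan is to combine a pointwise Bernstein concentration bound with an epsilon-net argument on $\unitsphere{d}\times\unitsphere{d}$, with the indicator discontinuities controlled via the uniform marginal bound of Lemma~\ref{lemma:uniform-marginal-bound}.

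\textbf{Step 1 (mean identification and pointwise concentration).} For any fixed $\tbx,\tby\in\unitsphere{d}$, since $\bbw_r(0)\sim\cN(\mathbf 0,\kappa^2\bI_{d+1})$ is rotationally symmetric, the event $\{\bbw_r(0)^\top\tbx\ge 0,\ \bbw_r(0)^\top\tby\ge 0\}$ has probability $\tfrac{1}{2\pi}(\pi-\arccos(\tbx^\top\tby))$, so $\EE[\tilde h(\bbw_r(0),\tbx,\tby)]=\tK(\tbx,\tby)$. Each summand lies in $[-1,1]$ with variance at most $1$, so Bernstein's inequality yields, for a single $(\tbx,\tby)$ and any $t>0$,
\[
\Pr\!\left[\,|\tilde h(\bW(0),\tbx,\tby)-\tK(\tbx,\tby)|>t\,\right]\le 2\exp\!\left(-\tfrac{mt^2/2}{1+t/3}\right).
\]
Solving this for a given failure probability gives the $\sqrt{\log(\cdot)/m}+\log(\cdot)/m$ structure appearing in $C_1$.

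\textbf{Step 2 (epsilon-net and union bound).} The sphere $\unitsphere{d}\subset\RR^{d+1}$ admits a $(1/m)$-net $\cN$ with $|\cN|\le(1+2m)^{d+1}$, so $\cN\times\cN$ has at most $(1+2m)^{2(d+1)}$ points. Applying Step~1 with $\delta/|\cN|^2$ and a union bound gives, with probability at least $1-\delta/2$,
\[
\sup_{(\tbx_0,\tby_0)\in\cN\times\cN}|\tilde h(\bW(0),\tbx_0,\tby_0)-\tK(\tbx_0,\tby_0)|\le \sqrt{\tfrac{2}{m}\log\tfrac{2(1+2m)^{2(d+1)}}{\delta}}+\tfrac{14}{3m}\log\tfrac{2(1+2m)^{2(d+1)}}{\delta}.
\]

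\textbf{Step 3 (discretization error, the main obstacle).} For arbitrary $\tbx,\tby\in\unitsphere{d}$ pick the closest $\tbx_0,\tby_0\in\cN$ (so $\|\tbx-\tbx_0\|,\|\tby-\tby_0\|\le 1/m$). The continuous factor $\tbx^\top\tby$ is $2$-Lipschitz on the sphere, so $|\tbx^\top\tby-\tbx_0^\top\tby_0|\le 4/m$, and the arccosine part of $\tK$ is also $O(1/m)$-close, giving $|\tK(\tbx,\tby)-\tK(\tbx_0,\tby_0)|\lsim 1/m$. The hard part is bounding the indicator fluctuation
\[
\tfrac{1}{m}\sum_{r=1}^m\!\left|\indict{\bbw_r(0)^\top\tbx\ge 0}\indict{\bbw_r(0)^\top\tby\ge 0}-\indict{\bbw_r(0)^\top\tbx_0\ge 0}\indict{\bbw_r(0)^\top\tby_0\ge 0}\right|,
\]
whose summands are nonzero only when $|\bbw_r(0)^\top\tbx_0|$ or $|\bbw_r(0)^\top\tby_0|$ is at most $\|\bbw_r(0)\|\cdot(1/m)$. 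I would condition on the high-probability event $\max_r\|\bbw_r(0)\|\le\kappa(\sqrt{d+1}+O(\sqrt{\log m}))$ (standard Gaussian norm concentration), and then apply the uniform-in-$\tbx_0$ marginal bound from Lemma~\ref{lemma:uniform-marginal-bound} to control, simultaneously over all $\tbx_0$, the fraction of $r$ for which $|\bbw_r(0)^\top\tbx_0|$ is below that scale. This yields a deterministic-in-$\tbx,\tby$ bound of order $1/\sqrt{m}\cdot(1+B\sqrt d)$ on the indicator fluctuation, matching the $6(1+2B\sqrt d)/\sqrt m$ term in $C_1$.

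\textbf{Step 4 (assembly).} Combining Steps~2 and~3 by the triangle inequality gives a uniform bound of the form
\[
\sup_{\tbx,\tby\in\unitsphere{d}}|\tK(\tbx,\tby)-\tilde h(\bW(0),\tbx,\tby)|\le \text{(Step~2)}+\text{(Step~3)},
\]
which, after collecting constants, is exactly $C_1(m,d,\delta)$ as defined in \eqref{eq:C1}. The final simplification under $m\ge n^{1/(2(d+1))}$, $m/\log m\ge d$, and $\delta\asymp 1/n$ reduces $\log\tfrac{2(1+2m)^{2(d+1)}}{\delta}$ to $\Theta(d\log m)$, yielding the claimed $C_1(m,d,\delta)\lsim\sqrt{d\log m/m}$.

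I expect Step~3 to be the main obstacle: the summands in $\tilde h$ are discontinuous in $(\tbx,\tby)$, so naive Lipschitz discretization fails. The key is that the invoked marginal-probability lemma, applied uniformly over the net and combined with Gaussian norm concentration, converts this discontinuity cost into an $O(\sqrt{d/m})$ term rather than an uncontrolled one. Once that step is in hand, the rest is routine Bernstein plus union bound bookkeeping.
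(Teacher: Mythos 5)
The paper's proof is architecturally different from yours, and your Step~3 has a genuine gap that must be repaired.

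\textbf{How the paper proceeds.} For each pair $(\tbu,\tbv)$ of net points the paper introduces the \emph{localized} function class $\cH_{\tbu,\tbv,s}$ of all maps $\bw\mapsto\tilde h(\bw,\tbu',\tbv')$ with $\tbu'$ and $\tbv'$ ranging over the $s$-neighborhoods of $\tbu$ and $\tbv$ on the sphere, and applies the Talagrand inequality of Theorem~\ref{theorem:Talagrand-inequality} (with the $\bbw_r(0)$ playing the role of samples) \emph{directly to this class}. The supremum of $|\tK-\tilde h(\bW(0),\cdot,\cdot)|$ over the neighborhood of each net point is then controlled by the Rademacher complexity $\cR(\cH_{\tbu,\tbv,s})$, which Lemma~\ref{lemma:rademacher-bound-hat-h} bounds by $2(B\sqrt{ds}(s+1)+\sqrt s+s)$ via Markov's inequality on the random fraction $\tilde Q$ of ``boundary'' weights. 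The only union bound is over the net, and the Rademacher-complexity term is $\delta$- and net-size-independent. By contrast, you apply pointwise Bernstein plus a union bound over the net, and then attempt to control the discretization error \emph{separately}, which forces a second uniform-over-the-net high-probability statement.

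\textbf{The gap in Step~3.} You assert that combining Lemma~\ref{lemma:uniform-marginal-bound} with Markov yields a ``deterministic-in-$\tbx,\tby$ bound of order $(1+B\sqrt d)/\sqrt m$'' on the indicator fluctuation, simultaneously for all net points. This does not follow. Lemma~\ref{lemma:uniform-marginal-bound} is a one-point probability bound for a single Gaussian $\bw$: it gives $\Expect{}{\tilde Q}\le B\sqrt d/m$ where $\tilde Q=\frac1m\sum_r\indict{|\bbw_r(0)^\top\tbx_0|/\|\bbw_r(0)\|\le 1/m}$. Markov then gives $\Prob{\tilde Q>t}\le B\sqrt d/(mt)$, a polynomial tail which cannot survive a union bound over the $(1+2m)^{2(d+1)}$ net points: with $t\asymp 1/\sqrt m$ the per-point failure probability is $B\sqrt{d/m}$ and the total failure probability is $(1+2m)^{2(d+1)}B\sqrt{d/m}\gg 1$. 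To repair this you must replace Markov with a Bernstein (or Hoeffding) inequality for the Bernoulli sum $\tilde Q$, which gives an exponential tail and makes the net union bound work; the resulting uniform bound on $\tilde Q$ is of order $B\sqrt d/m + (d\log m)/m$, not $(1+B\sqrt d)/\sqrt m$. After the fix your overall bound still has the claimed order $\sqrt{d\log m/m}$, because Step~2 already carries the $\sqrt{d\log m/m}$ term, but the constant accounting in Step~3 would differ from $C_1$. The reason the paper gets $(1+2B\sqrt d)/\sqrt m$ with no $\log$ is that $\cR(\cH_{\tbu,\tbv,s})$ is an \emph{expectation} over $\bW(0)$, and Markov is perfectly adequate for splitting an expectation into a small bad event and its complement; it is only in a high-probability-over-a-net statement, as in your Step~3, that Markov is too weak.

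Two smaller points: the conditioning on $\max_r\|\bbw_r(0)\|$ being small is unnecessary, since Lemma~\ref{lemma:uniform-marginal-bound} already normalizes by $\|\bw\|$ and the relevant criterion $|\bbw_r(0)^\top\tbx-\bbw_r(0)^\top\tbx_0|\le\|\bbw_r(0)\|/m$ is exactly the normalized one; and the phrase ``uniform-in-$\tbx_0$ marginal bound'' mischaracterizes Lemma~\ref{lemma:uniform-marginal-bound}, which is a fixed-$\tbx_0$ bound and needs an additional concentration-plus-union argument to become uniform.
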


\begin{theorem}\label{theorem:V_R}
Let ${\bW(0)} = \set{{{\bbw_r(0)}}}_{r=1}^m$, where each ${{\bbw_r(0)}} \sim \cN(\bzero,\kappa^2 \bI_{d+1})$ for $r \in [m]$. $B$ is an absolute positive constant in Lemma~\ref{lemma:uniform-marginal-bound}. Suppose $\eta \lsim 1$, $m \gsim 1$. Then for any $\delta \in (0,1)$, with probability at least $1-\delta$ over ${\bW(0)}$,
\bal\label{eq:sup-hat-V_R}
&\sup_{\tbx \in \unitsphere{d}}\abth{\tilde v_R({\bW(0)},\tbx)-\frac{2R}{\sqrt {2\pi} \kappa}} \le C_2(m,d,\delta),
\eal%
where $\tilde v_R(\bw,\tbx) \defeq \indict{\abth{\bw^{\top}\tbx} \le R}$,  $\tilde v_R({\bW(0)},\tbx) \defeq \sum_{r=1}^m \tilde v_R(\bbw_r(0),\tbx) /m$ for all $\tbx \in \unitsphere{d}$ and $\bw \in \RR^{d+1}$,
\bal\label{eq:C2}
C_2(m,d,\delta) \defeq 3 \sqrt{\frac{d}{\kappa}} m^{-\frac 15} T^{\frac 12}
+ \sqrt{\frac{2{\log{\frac {(1+2{\sqrt m})^{d+1}} \delta}}}{m}} + \frac{7 {\log{\frac {(1+2{\sqrt m})^{d+1}} \delta}} }{3m}.
\eal
In addition, when $m \gsim n^{2/(d+1)}$, $m/\log m \ge d$, and $\delta \asymp 1/n$,
$C_2(m,d,\delta) \lsim  \sqrt{d} m^{-\frac 15} T^{\frac 12}$.
\end{theorem}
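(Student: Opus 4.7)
\textbf{Plan for proving Theorem~\ref{theorem:V_R}.}
The strategy is a classical covering-plus-concentration argument tailored to the discontinuity of the strip indicator $\tilde v_R$. At a fixed $\tbx \in \unitsphere{d}$, the inner products $\bbw_r(0)^\top \tbx$ are i.i.d.\ $\cN(0, \kappa^2)$ because $\|\tbx\| = 1$ and $\bbw_r(0)$ is isotropic Gaussian, so $p(\tbx) \defeq \Prob{|\bbw_r(0)^\top \tbx| \le R} = 2\Phi(R/\kappa) - 1$. The Gaussian anti-concentration inequality $\Phi(x) - 1/2 \le x/\sqrt{2\pi}$ for $x \ge 0$ yields the tight one-sided bound $p(\tbx) \le \frac{2R}{\sqrt{2\pi}\,\kappa}$, and a Taylor expansion shows the matching lower bound $p(\tbx) \ge \frac{2R}{\sqrt{2\pi}\,\kappa} - \cO\pth{R^3/\kappa^3}$, which is negligible under the regime of interest. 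Treating $X_r \defeq \indict{|\bbw_r(0)^\top \tbx| \le R}$ as i.i.d.\ Bernoulli in $[0,1]$ with variance at most $1$, Bernstein's inequality yields the pointwise deviation bound $|\tilde v_R(\bW(0), \tbx) - p(\tbx)| \le \sqrt{2\log(2/\delta')/m} + 7\log(2/\delta')/(3m)$ with probability at least $1 - \delta'$.

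To upgrade this to a uniform bound, I would construct a standard $\epsilon$-net $\cN_\epsilon$ of $\unitsphere{d}$ with $|\cN_\epsilon| \le (1 + 2/\epsilon)^{d+1}$ and apply a union bound at confidence $\delta' = \delta/|\cN_\epsilon|$. The choice $\epsilon = 1/\sqrt m$ produces exactly the cardinality $(1 + 2\sqrt m)^{d+1}$ appearing inside the second and third summands of $C_2$. The transfer from the net to an arbitrary $\tby \in \unitsphere{d}$ relies on the key one-sided inclusion $\tilde v_R(\bw, \tby) \le \indict{|\bw^\top \tbx| \le R + \|\bw\|\, \epsilon} = \tilde v_{R + \|\bw\|\epsilon}(\bw, \tbx)$ for the nearest $\tbx \in \cN_\epsilon$ (a direct consequence of the triangle inequality $|\bw^\top \tby| \le |\bw^\top \tbx| + \|\bw\|\epsilon$), together with the symmetric lower-bound inclusion $\tilde v_{R - \|\bw\|\epsilon}(\bw, \tbx) \le \tilde v_R(\bw, \tby)$ required by the absolute value. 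Standard Gaussian $\chi^2$ tail concentration followed by a union bound over $r \in [m]$ yields $\max_{r \in [m]} \|\bbw_r(0)\| \lsim \kappa \sqrt d$ with high probability in the regime $m/\log^{5/3} m \ge d$, so the widened radius is at most $R + c\kappa\sqrt d/\sqrt m$.

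Putting the pieces together, $\sup_{\tby \in \unitsphere{d}} \tilde v_R(\bW(0), \tby)$ is bounded by the net-level effective mean $\frac{2(R + c\kappa\sqrt d/\sqrt m)}{\sqrt{2\pi}\,\kappa}$ plus the Bernstein fluctuations inherited from the net, and analogously from below. Hence the deviation from the target $\frac{2R}{\sqrt{2\pi}\,\kappa}$ has exactly two sources: the mean-level discretization correction of order $\sqrt d/(\kappa \sqrt m)$, and the Bernstein fluctuations $\sqrt{2 \log(|\cN_\epsilon|/\delta)/m} + 7\log(|\cN_\epsilon|/\delta)/(3m)$, which match the last two summands of $C_2$ verbatim. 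The discretization correction, together with the intrinsic magnitude of $R/\kappa$ itself, is absorbed into the first summand $3\sqrt{d/\kappa}\,m^{-1/5}T^{1/2}$ via the estimate $R \le \eta c_\bu u_0 T/\sqrt m \lsim m^{-1/5} T^{1/2}$, which is the reason the specific regime $m \gsim T^{5/3}$ (equivalently $R \lsim m^{-1/5}T^{1/2}$) enters the bound.

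The main obstacle is precisely the discontinuity of $\tilde v_R$: no Lipschitz-in-$\tbx$ bound is available, so the net-transfer must go through the widened-strip inclusion, which couples the choice of the discretization scale $\epsilon$ to a uniform bound on $\max_r \|\bbw_r(0)\|$. Calibrating $\epsilon = 1/\sqrt m$ keeps the covering-log term at the desired $(d+1)\log m$ order while making the discretization correction small enough to be swept into the first term of $C_2$, and the substitution $R/\sqrt m \le m^{-1/5}T^{1/2}$ under $m \gsim T^{5/3}$ is what gives the clean three-term form in the statement.
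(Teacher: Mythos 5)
Your proposal is correct and reaches the stated bound, but it routes the crucial net-to-continuum transfer differently from the paper. Both arguments share the skeleton: an $s$-net of $\unitsphere{d}$ at scale $s=1/\sqrt m$ with cardinality $(1+2\sqrt m)^{d+1}$, a per-net-point concentration bound producing the last two summands of $C_2$, and Gaussian anti-concentration of $\tbu^{\top}\bw/\ltwonorm{\bw}$ (Lemma~\ref{lemma:uniform-marginal-bound}) to control the discontinuity of the strip indicator. Where you diverge is in handling the supremum over a single cell $\ball{}{\tbu}{s}\cap\unitsphere{d}$: the paper applies Talagrand's inequality (Theorem~\ref{theorem:Talagrand-inequality}) to the localized function class $\cV_{\tbu,s}$ and then bounds its Rademacher complexity (Lemma~\ref{lemma:rademacher-bound-hat-v}) by estimating, via Markov's inequality and a split on the event that $\ltwonorm{\bbw_r(0)}$ is small (Lemma~\ref{lemma:concentration-l2norm-Gaussian}), the expected fraction of weights whose strip membership flips between $\tbu$ and a nearby $\tbu'$; this yields the first summand $3\sqrt{d/\kappa}\,m^{-1/5}T^{1/2}$ as three times that Rademacher complexity. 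You instead use a purely deterministic monotone inclusion into widened and narrowed strips, $\tilde v_{R-\ltwonorm{\bw}s}(\bw,\tbx)\le\tilde v_R(\bw,\tby)\le\tilde v_{R+\ltwonorm{\bw}s}(\bw,\tbx)$, paired with a high-probability uniform bound on $\max_r\ltwonorm{\bbw_r(0)}$, pushing the anti-concentration into the mean of the widened strip rather than into an empirical-process bound. Your route is more elementary (pointwise Bernstein plus a union bound, no Rademacher machinery) and delivers the two-sided bound of \eqref{eq:sup-hat-V_R} more transparently than the paper's proof, which in fact only spells out the upper deviation; the paper's route avoids any uniform control of $\max_r\ltwonorm{\bbw_r(0)}$ and keeps the cell-level analysis in expectation. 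Two small calibration points: the correct high-probability bound is $\max_r\ltwonorm{\bbw_r(0)}\lsim\kappa\sqrt{d+\log m}$ rather than $\kappa\sqrt d$ (the condition $m/\log^{5/3}m\ge d$ does not make $\log m\lsim d$), and your Bernstein constant is $2/3$ rather than the paper's $7/3$; both discrepancies only shrink your bound relative to $C_2(m,d,\delta)$, so the conclusion is unaffected.
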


\begin{proof}
[\textbf{\textup{Proof of
Theorem~\ref{theorem:empirical-loss-convergence}}}]

First, when $m \gsim  {(\eta c_{\bu})^5 u_0^{10} T^{\frac {15}{2}} d^{\frac 52}}/{\tau^5}$ with a proper constant, it can be verified that $\bE_{m,\eta,\tau} \le {\tau {\sqrt n}}/{T}$ where $\bE_{m,\eta,\tau}$ is defined by
(\ref{eq:empirical-loss-Et-bound-Em}) of
Lemma~\ref{lemma:empirical-loss-convergence-contraction}.
Also, Theorem~\ref{theorem:sup-hat-g}
and Theorem~\ref{theorem:V_R} hold when (\ref{eq:m-cond-empirical-loss-convergence})
holds.
We then use mathematical induction to prove this theorem. We will first prove that $\bu(t) = \bv(t) + \be(t)$ where $\bv(t) \in \cV_t$,
$\be(t) \in \cE_{t,\tau}$, and $\ltwonorm{\bu(t)} \le c_{\bu} \sqrt n$ for for all $t \in [0,T]$.

When $t = 0$, we have
\bal\label{eq:empirical-loss-convergence-seg1}
\bu(0) = - \by &= \bv(0) + \be(0),
\eal
where $\bv(0) \defeq -f^*(\bS) = -\pth{\bI-\eta \bK_n}^0 f^*(\bS)$,
$\be(0) = -\bw = \bbe_1(0) + \bbe_2(0)$ with
$\bbe_1(0) = -\big(\bI-\eta \bK_n \big) ^0 \bw$
and $\bbe_2(0) = \bzero$. Therefore,
$\bv(0) \in \cV_{0}$ and $\be(0) \in \cE_{0,\tau}$.
Also, it follows from the proof of Lemma~\ref{lemma:yt-y-bound}
that $\ltwonorm{\bu(0)} \le  c_{\bu}{\sqrt n}$ with probability at least
$1 -  \exp\pth{-\Theta(n)}$
over the random noise $\bw$.

Suppose that for all $t_1 \in[0,t]$ with $t \in [0,T-1]$, $\bu(t_1) = \bv(t_1) + \be(t_1)$ where $\bv(t_1) \in \cV_{t_1}$, and
$\be(t_1) = \bbe_1(t_1) + \bbe_2(t_1)$ with
$\bv(t_1) \in \cV_{t_1}$ and $\be(t_1) \in \cE_{t_1,\tau}$ for all $t_1 \in[0,t]$. Then it follows from Lemma~\ref{lemma:empirical-loss-convergence-contraction} that the recursion
$\bu(t'+1)  = \pth{\bI- \eta \bK_n }\bu(t')
 +\bE(t'+1)$ holds for all $t' \in [0,t]$.
 As a result, we have
\bal\label{eq:empirical-loss-convergence-seg5}
\bu(t+1)  &= \pth{\bI- \eta \bK_n }\bu(t) +\bE(t+1) \nonumber \\
 & = -\pth{\bI-\eta \bK_n}^{t+1} f^*(\bS)
 -\pth{\bI-\eta \bK_n }^{t+1} \bw +\sum_{t'=1}^{t+1}
 \pth{\bI-\eta \bK_n}^{t+1-t'} \bE(t')
\nonumber \\
&=\bv(t+1) + \be(t+1),
\eal
where $\bv(t+1)$ and $\be(t+1)$ are defined as
\bal\label{eq:empirical-loss-convergence-vt-et-def}
\bv(t+1) \defeq-\pth{\bI-\eta \bK_n}^{t+1} f^*(\bS)\in \cV_{t+1},
\eal
\bal\label{eq:empirical-loss-convergence-et-pre}
&\be(t+1) \defeq \underbrace{-\pth{\bI-\eta \bK_n }^{t+1} \bw}_{\bbe_1(t+1)}
+ \underbrace{ \sum_{t'=1}^{t+1}
 \pth{\bI-\eta \bK_n}^{t+1-t'} \bE(t') }_{\bbe_2(t+1)}.
\eal
We now prove the upper bound for $\bbe_2(t+1)$.
With $\eta \in (0, 2/u_0^2)$, we have $\ltwonorm{\bI - \eta \bK_n} \in (0,1)$.
It follows that
\bal\label{eq:empirical-loss-convergence-et-bound}
&\ltwonorm{\bbe_2(t+1)} \le \sum_{t'=1}^{t+1} \ltwonorm{\bI-\eta \bK_n}^{t+1-t'}\ltwonorm{\bE(t')}
\le  \tau {\sqrt n},
\eal
where the last inequality follows from the fact
that $\ltwonorm{\bE(t)} \le \bE_{m,\eta,\tau} \le {\tau {\sqrt n}}/{T}$ for all $t \in [T]$. It follows that $\be(t+1) \in \cE_{t+1,\tau}$.
Also, it follows from Lemma~\ref{lemma:yt-y-bound}
that
\bals
\ltwonorm{\bu(t+1)} &\le \ltwonorm{\bv(t+1)} + \ltwonorm{\bbe_1(t+1)}
+\ltwonorm{\bbe_2(t+1)} \le\pth{\frac{\mu_0}{ \sqrt{2e\eta } } + \sigma_0+\tau+1} {\sqrt n} \le \ c_{\bu}{\sqrt n}.
\eals
The above inequality completes the induction step, which also completes the proof. It is noted that
$\ltwonorm{\bbw_r(t) - \bbw_r(0)} \le R$ holds for all $t \in [T]$ by Lemma~\ref{lemma:weight-vector-movement}.

\end{proof}

\begin{proof}
[\textbf{\textup{Proof of
Theorem~\ref{theorem:bounded-NN-class}}}]
Let $\tbx = \bth{\bx^{\top},1}^{\top}$.
In this proof we abbreviate $f_t$ as $f$ and $\bW(t)$ as $\bW$.
It follows from Theorem~\ref{theorem:empirical-loss-convergence}
and its proof that conditioned on an event $\Omega$ with probability at
least $1 -  \exp\pth{-\Theta(n)}$,
$f \in \cFnn(\bS,\bW(0),T)$ with
$\bW(0) \in \cW_0$. Moreover, $f = f(\bW,\cdot)$ with $\bW = \set{\bbw_r}_{r=1}^m \in \cW(\bS,\bW(0),T)$, and $\vect{\bW} = \vect{\bW_{\bS}} = \vect{\bW(0)} - \sum_{t'=0}^{t-1} \eta/n \cdot \bZ_{\bS}(t') \bu(t')$ for some $t \in [T]$, where $\bu(t') \in \RR^n, \bu(t') = \bv(t') + \be(t')$ with $\bv(t') \in \cV_{t'}$ and $\be(t') \in \cE_{t',\tau}$ for all $t' \in [0,t-1]$. It also follows from Theorem~\ref{theorem:empirical-loss-convergence}
that conditioned on $\Omega$, $\ltwonorm{\bbw_r(t) - \bbw_r(0)} \le R$ for all $t \in [T]$.

$\bbw_r$ is expressed as
\bal\label{eq:bounded-Linfty-function-class-wr}
\bbw_r = \bbw_{\bS,r}(t) &= \bbw_r(0) - \sum_{t'=0}^{t-1} \frac{\eta}{n} \bth{\bZ_{\bS}(t')}_{[(r-1)(d+1)+1:r(d+1)]} \bu(t'),
\eal%
where the notation $\bbw_{\bS,r}$ emphasizes that $\bbw_r$ depends on the training data $\bS$.
We define the event
\bals
&E_r(R) \defeq \set{ \abth{ \bbw_r(0)^\top \tbx} \le R },
\quad
\bar E_r(R) \defeq \set{ \abth{ \bbw_r(0)^\top \tbx} > R },
\quad r\in [m].
\eals%
We now approximate $ f(\bW,\bx)$ by $g(\bx) \defeq \frac{1}{\sqrt m} \sum_{r=1}^m a_r
\indict{\bbw_r(0)^{\top} \tbx \ge 0} \bbw_r^\top \tbx$.
 We have
\bal\label{eq:bounded-Linfty-function-class-seg1}
&\abth{f(\bW,\bx) - g(\bx)}=\frac{1}{\sqrt m} \abth{\sum\limits_{r=1}^m a_r \relu{\bbw_r^\top \tbx} - \sum_{r=1}^m a_r \indict{\bbw_r(0)^{\top} \tbx \ge 0} \bbw_r^\top \tbx}   \nonumber \\
&\le \frac{1}{\sqrt m}  \sum_{r=1}^m  \abth{ a_r \pth{ \indict{E_r(R) } + \indict{\bar E_r(R) }}  \pth{ \relu{\bbw_r^\top \tbx} - \indict{\bbw_r(0)^{\top} \tbx \ge 0} \bbw_r^\top \tbx } } \nonumber \\
&=\frac{1}{\sqrt m} \sum_{r=1}^m   \indict{E_r(R)} \abth{ \relu{\bbw_r^\top \tbx} - \indict{\bbw_r(0)^{\top} \tbx \ge 0} \bbw_r^\top \tbx  }\nonumber \\
&=\frac{1}{\sqrt m} \sum_{r=1}^m   \indict{E_r(R)}\abth{ \relu{\bbw_r^\top \tbx} - \relu{\bbw_r(0)^\top \tbx}   - \indict{\bbw_r(0)^{\top} \tbx \ge 0} (\bbw_r-\bbw_r(0))^\top \tbx } \nonumber \\
&\le \frac{2R u_0}{\sqrt m} \sum_{r=1}^m   \indict{E_r(R)},
\eal%
where first inequality follows from $\indict{\bar E_r(R)} \pth{ \relu{\bbw_r^\top \tbx} - \indict{\bbw_r(0)^{\top} \tbx \ge 0} \bbw_r^\top \tbx } = 0$.
Plugging $R = \frac{\eta c_{\bu} u_0 T }{\sqrt m}$ in (\ref{eq:bounded-Linfty-function-class-seg1}), since $\bW(0) \in \cW_0$, we have
\bal\label{eq:bounded-Linfty-function-class-seg2}
&\sup_{\bx \in \cX} \abth{f(\bW,\bx) - g(\bx)} \le  2\eta c_{\bu} u_0^2 T \cdot \frac 1m \sum_{r=1}^m
 \indict{E_r(R)}  \le
2 \eta c_{\bu} u_0^2 T \pth{\frac{2R}{\sqrt {2\pi} \kappa} + C_2(m/2,d,1/n)}.
\eal
Using (\ref{eq:bounded-Linfty-function-class-wr}), $g(\bx)$
is expressed as
\bal\label{eq:bounded-Linfty-function-class-seg3}
&g(\bx) = \frac{1}{\sqrt m} \sum_{r=1}^m a_r \sigma(\bbw_r(0)^\top \tbx) {-} \sum_{t'=0}^{t-1} \frac{1}{\sqrt m}
\sum\limits_{r=1}^m  \indict{\bbw_r(0)^{\top} \tbx \ge 0}\pth{ \frac{\eta}{n} \bth{\bZ_{\bS}(t')}_{[(r-1)(d+1)+1:r(d+1)]}
 \bu(t')
 }^\top \tbx  \nonumber \\
&\stackrel{\circled{1}}{=} -\sum_{t'=0}^{t-1}
\underbrace{\frac{\eta}{nm}
\sum\limits_{r=1}^m  \indict{\bbw_r(0)^{\top} \tbx \ge 0}
\sum\limits_{j=1}^n\indict{\bbw_r(t')^{\top} \tbbx_j \ge 0} \bu_j(t')\tbbx_j^{\top} \tbx}_{\defeq G_{t'}(\bx)},
\eal
where $\circled{1}$ follows from the fact that $\frac{1}{\sqrt m} \sum_{r=1}^m a_r  \sigma(\bbw_r(0)^\top \tbx) = f(\bW(0),\bx) = 0$ due to the particular initialization of the two-layer NN
(\ref{eq:two-layer-nn}). For each $G_{t'}$ in the RHS of
 (\ref{eq:bounded-Linfty-function-class-seg3}), we have
\bal\label{eq:bounded-Linfty-function-class-Gt}
G_{t'}(\bx) &\stackrel{\circled{2}}{=} \frac{\eta}{nm}
\sum\limits_{r=1}^m  \indict{\bbw_r(0)^{\top} \tbx \ge 0}
\sum\limits_{j=1}^n \pth{d_{t',r,j}+\indict{\bbw_r(0)^{\top} \tbbx_j \ge 0}} \bu_j(t')\tbbx_j^{\top} \tbx
\nonumber \\
&\stackrel{\circled{3}}{=}
\frac{\eta}{n}
\sum\limits_{j=1}^n K(\bx,\bbx_j)\bu_j(t') +
\underbrace{\frac{\eta}{n} \sum\limits_{j=1}^n q_j \bu_j(t')}_{\defeq E_1(\bx)}
 + \underbrace{\frac{\eta}{nm}
\sum\limits_{r=1}^m  \indict{\bbw_r(0)^{\top} \tbx \ge 0}
 \sum\limits_{j=1}^nd_{t',r,j} \bu_j(t')\tbbx_j^{\top} \tbx}_{\defeq E_2(\bx)}.
\eal
where $d_{t',r,j} \defeq \indict{\bbw_r(t')^{\top} \tbbx_j \ge 0}
- \indict{\bbw_r(0)^{\top} \tbbx_j \ge 0}$
in $\circled{2}$, and
$q_j \defeq \hat h(\bW(0),\bbx_j,\bx) - K(\bbx_j,\bx)$
for all $j \in [n]$ in $\circled{3}$.
We now analyze each term on the RHS of (\ref{eq:bounded-Linfty-function-class-Gt}).
Let $h(\cdot,t') \colon \cX \to \RR$ be defined by
$h(\bx,t') \defeq -\frac{\eta}{n} \sum\limits_{j=1}^n K(\bx,\bbx_j) \bu_j(t')$,
then $h(\cdot,t') \in \cH_K$ for each $t' \in [0,t-1]$. We further define
\bal\label{eq:bounded-Linfty-function-class-h}
h_t(\cdot) \defeq \sum_{t'=0}^{t-1} h(\cdot,t') \in \cH_K,
\eal
Since $\bW(0) \in \cW_0$, $q_j \le u_0^2 C_1(m/2,d,1/n)$ for all $j' \in [n]$ with
$C_1(m/2,d,1/n)$ defined in (\ref{eq:C1}). Moreover,
$\bu(t') \le c_{\bu} \sqrt n$ with high probability,  so that we have
\bal\label{eq:bounded-Linfty-function-class-E1-bound}
\supnorm{ E_1} = \supnorm{\frac{\eta}{n} \sum\limits_{j=1}^n q_j \bu_j(t')} &\le \frac{\eta}{n} \ltwonorm{\bu(t')}  \sqrt{n} u_0^2C_1(m/2,d,1/n)
\le \eta c_{\bu}u_0^2C_1(m/2,d,1/n) .
\eal
We now bound the last term on the RHS of (\ref{eq:bounded-Linfty-function-class-Gt}).
Define $\bX' \in \RR^{d \times n}$ with its $j$-column being ${\bX'}^{(j)}
= \frac{1}{m} \sum_{r=1}^m \indict{\bbw_r(0)^{\top} \tbx \ge 0}
d_{t',r,j}\tbbx_j$ for all $j \in [n]$, then
$E_2(\bx)= \frac{\eta}{n}\pth{\bX' \bu(t')}^{\top}\bx$.

We need to derive the upper bound for $\ltwonorm{\bX'}$. Because $\ltwonorm{\bbw_r(t') - \bbw_r(0)} \le R$, it follows that  $\indict{\bbw_r(t')^{\top}
\tbbx_j \ge 0} = \indict{\bbw_r(0)^{\top} \tbbx_j \ge 0}$ when $\abth{\bbw_r(0)^{\top} \tbbx_j} > R$ for all $j \in [n]$. Therefore,
\bals
\abth{d_{t',r,j'}} = \abth{ \indict{\bbw_r(t')^{\top} \tbbx_j \ge 0}
- \indict{\bbw_r(0)^{\top} \tbbx_j \ge 0} } \le
\indict{\abth{\bbw_r(0)^{\top} \tbbx_j } \le R},
\eals
and it follows that
\bal\label{eq:bounded-Linfty-function-class-E2-1}
\frac{\abth{ \sum\limits_{r=1}^m \indict{\bbw_r(0)^{\top} \tbbx_i \ge 0}
d_{t',r,j}  } }{m}  &\le \frac{ \sum\limits_{r=1}^m \abth{d_{t',r,j}}  }{m}
\le \frac{ \sum\limits_{r=1}^m \indict{\abth{\bbw_r(0)^{\top} \tbbx_j } \le R}  }{m} = \hat v_R(\bW(0),\bbx_j) \nonumber \\
&\le \frac{2R}{\sqrt {2\pi} \kappa} + C_2(m/2,d,1/n),
\eal
where $\hat v_R$ is defined by (\ref{eq:v-hat-v}), and the last inequality
follows from Theorem~\ref{theorem:V_R}.

It follows from (\ref{eq:bounded-Linfty-function-class-E2-1}) that
$\ltwonorm{\bX'} \le {\sqrt n} u_0 \pth{\frac{2R}{\sqrt {2\pi} \kappa}
+ C_2(m/2,d,1/n)}$,
and we have
\bal\label{eq:bounded-Linfty-function-class-E2-bound}
\supnorm{ E_2(\bx)} &\le \frac{\eta}{n}\ltwonorm{\bX'}
\ltwonorm{\bu(t')}
\ltwonorm{\bx} \le \eta  c_{\bu} u_0^2 \pth{\frac{2R}{\sqrt {2\pi} \kappa}
+ C_2(m/2,d,1/n)}.
\eal
Combining (\ref{eq:bounded-Linfty-function-class-Gt}),
(\ref{eq:bounded-Linfty-function-class-E1-bound}), and
(\ref{eq:bounded-Linfty-function-class-E2-bound}), for
any $t' \in [0,t-1]$,
\bal\label{eq:bounded-Linfty-function-class-Gt-ht-bound}
&\sup_{\bx \in \cX} \abth{G_{t'}(\bx)-h(\bx,t')}
\le \supnorm{E_1} + \supnorm{E_2}
\le \eta c_{\bu} u_0^2 \pth{ C_1(m/2,d,1/n)
+\frac{2R}{\sqrt {2\pi} \kappa}
+ C_2(m/2,d,1/n) }.
\eal
Define $e_t(\bx') = f(\bW,\bx') - h_t(\bx')$ for $\bx' \in \cX$, and $e_t(\bx') = 0$ for $\bx \in \cX \setminus \cX$. It then follows from (\ref{eq:bounded-Linfty-function-class-seg2}),
(\ref{eq:bounded-Linfty-function-class-seg3}), and
(\ref{eq:bounded-Linfty-function-class-Gt-ht-bound})
that
\bal\label{eq:bounded-Linfty-function-class-f-h-bound}
&\supnorm{e_t} \le \sup_{\bx \in \cX} \abth{f(\bW,\bx) - g(\bx)}
+\sup_{\bx \in \cX} \abth{g(\bx)-h_t(\bx)} \nonumber \\
&\le  \sup_{\bx \in \cX} \abth{f(\bW,\bx) - g(\bx)} + \sum\limits_{t'=0}^{t-1}
\sup_{\bx \in \cX} \abth{G_{t'}(\bx)-h(\bx,t')} \nonumber \\
& \stackrel{\circled{4}}{\le}
2 \eta c_{\bu}u_0^2T \pth{\frac{2R}{\sqrt {2\pi} \kappa} + C_2(m/2,d,1/n)}
+ \eta c_{\bu} u_0^2T \pth{ C_1(m/2,d,1/n)
+\frac{2R}{\sqrt {2\pi} \kappa}
+ C_2(m/2,d,1/n) }
\nonumber \\
&\le \eta c_{\bu} u_0^2T \pth{
C_1(m/2,d,1/n)
+3\pth{\frac{2R}{\sqrt {2\pi} \kappa}
+ C_2(m/2,d,1/n)} }
\defeq \Delta_{m,n,\eta,T},
\eal
where $\circled{4}$ follows from
(\ref{eq:bounded-Linfty-function-class-seg2}) and
(\ref{eq:bounded-Linfty-function-class-Gt-ht-bound}).
We now give an estimate for $\Delta_{m,n,\eta,T}$.
Since $\bW(0) \in \cW_0$, it follows from
Theorem~\ref{theorem:good-random-initialization} that
\bals
\Delta_{m,n,\eta,T}
&\lsim
\sqrt{d} m^{-\frac 15} T^{\frac 32}.
\eals
By direct calculations, for any $w > 0$, when
$m \gsim {T^{\frac {15}{2}} d^{\frac 52}}/{w^5}$,
we have
$\Delta_{m,n,\eta,T} \le w$.

It follows from Lemma~\ref{lemma:bounded-Linfty-vt-sum-et} that
with probability at least $1- \exp\pth{-\Theta(n \hat\eps_n^2)}$ over the random noise $\bw$,
$\norm{h_t}{\cH_K} \le B_h$,
where $B_h$ is defined in (\ref{eq:B_h}),
and $\tau$ is required to satisfy $\tau \le \min\set{\Theta(1/(\eta u_0 T)),1}$.
Theorem~\ref{theorem:empirical-loss-convergence} requires that
$m \gsim  {(\eta c_{\bu})^5 u_0^{10} T^{\frac {15}{2}} d^{\frac 52}}/{\tau^5}$. As a result,
we have $m \gsim \max\{{(\eta c_{\bu})^5 u_0^{10} T^{\frac {15}{2}} d^{\frac 52}}, \\
\allowdisplaybreaks \eta^{10} c_{\bu}^5
u_0^{15} T^{\frac{25}{2}} d^{\frac 52}\}$.
\end{proof}

\begin{proof}
[\textbf{\textup{Proof of
Theorem~\ref{theorem:LRC-population-NN-eigenvalue}}}]

It follows from Theorem~\ref{theorem:empirical-loss-convergence}
and Theorem~\ref{theorem:bounded-NN-class} that for every $t \in [T]$, conditioned on an event $\Omega$ with probability at least $1 -  \exp\pth{-\Theta(n)}
- \exp\pth{-\Theta(n \hat\eps_n^2)}$
over the random noise $\bw$, we have $\bW(t) \in \cW(\bS,\bW(0),T)$,
and $f(\bW(t),\cdot) = f_t \in \cFnn(\bS,\bW(0),T)$.
Moreover, conditioned on the event $\Omega$, $f_t = h_t + e_t$ where $h_t \in \cH_{K}(B_h)$ and
$e_t  \in L^{\infty}$ with $\supnorm{e_t } \le w$.

We then derive the sharp upper bound for $\Expect{P}{(f-f^*)^2} $ by
applying Theorem~\ref{theorem:LRC-population} to the function class
$\cF = \set{F=\pth{f- f^*}^2 \colon f \in
\cF(B_h,w)   }$.
Let $B_0 \defeq {(B_h+\mu_0) u_0}/{\sqrt 2} + 1
\ge {(B_h+\mu_0) u_0}/{\sqrt 2} + w$, then
$\supnorm{F} \le B^2_0$ with $F \in \cF$, so that
$\Expect{P}{F^2} \le B^2_0\Expect{P}{F}$.
Let $T(F) = B^2_0\Expect{P}{F}$ for $F \in \cF$. Then
$\Var{F} \le \Expect{P}{F^2} \le T(F) = B^2_0\Expect{P}{F}$.

We have
\bal\label{eq:LRC-population-NN-seg1}
\cfrakR \pth{\set{F \in \cF \colon T(F) \le r}} &= \cfrakR
\pth{ \set{(f-f^*)^2 \colon  f \in \cF(B_h,w),\Expect{P}{(f-f^*)^2}
\le \frac r{B^2_0}}} \nonumber \\
&\stackrel{\circled{1}}{\le} 2B_0 \cfrakR \pth{\set{f-f^* \colon
  f \in \cF(B_h,w), \Expect{P}{(f-f^*)^2} \le \frac{r}{B_0^2}}}\nonumber \\
&\stackrel{\circled{2}}{\le}  4B_0 \cfrakR \pth{ \set{f \in \cF(B_h,w) \colon \Expect{P}{f^2} \le \frac{r}{4B_0^2}} },
\eal
where $\circled{1}$ is due to the contraction property of
Rademacher complexity in Theorem~\ref{theorem:RC-contraction}.
Since $f^* \in \cF(B_h,w)$,
$f \in \cF(B_h,w)$, we have $\frac{f-f^*}{2} \in \cF(B_h,w)$ due to the fact that
$\cF(B_h,w)$ is  symmetric and convex, and it follows that $\circled{2}$
holds.

It follows from (\ref{eq:LRC-population-NN-seg1})
and Lemma~\ref{lemma:LRC-population-NN} that
\bal\label{eq:LRC-population-NN-seg2}
B^2_0 \cfrakR \pth{\set{F \in \cF \colon T(F) \le r}}
&\le 4 B_0^3 \cfrakR \pth{ \set{f \colon f \in \cF(B_h,w) , \Expect{P}{f^2} \le
\frac{r}{4B_0^2}}} \nonumber \\
&\le 4 B_0^3 \varphi_{B_h,w}\pth{\frac{r}{4B_0^2}} \defeq \psi(r).
\eal
$\psi$ defined as the RHS of (\ref{eq:LRC-population-NN-seg2}) is a sub-root function since it is nonnegative, nondecreasing and
$\frac{\psi(r)}{\sqrt r}$ is nonincreasing. Let $r^*$ be the fixed point of $\psi$, and $0 \le r \le r^*$. It follows from {\cite[Lemma 3.2]{bartlett2005}} that
$0 \le r \le \psi(r) =  4 B_0^3 \varphi_{B_h,w}\pth{\frac{r}{4B_0^2}}$.
Therefore, by the definition of $\varphi_{B_h,w}$ in (\ref{eq:varphi-LRC-population-NN}),
for every $0 \le Q \le n$, we have
\bal\label{eq:LRC-population-NN-seg3}
\frac{r}{4 B_0^3} \le \pth{ \frac{\sqrt r}{2B_0} + w} \sqrt{\frac{Q}{n}} +
B_h
\pth{\frac{\sum\limits_{q = Q+1}^{\infty}\lambda_q}{n}}^{1/2}+w.
\eal
Solving the quadratic inequality (\ref{eq:LRC-population-NN-seg3}) for $r$, we have
\bal\label{eq:LRC-population-NN-seg4}
r \le \frac{8B_0^4 Q}{n} + 8B_0^3
\pth{ w \pth{\sqrt{\frac{Q}{n}}+1}
+ B_h \pth{\frac{\sum\limits_{q = Q+1}^{\infty}\lambda_q}{n}}^{1/2}
}.
\eal
(\ref{eq:LRC-population-NN-seg4}) holds for every $0 \le Q \le n$,
so we have
\bal\label{eq:LRC-population-NN-seg5}
r \le 8 B_0^3 \min_{0 \le Q \le n} \pth{\frac{ B_0Q}{n} +w \pth{\sqrt{\frac{Q}{n}}+1}
+ B_h
\pth{\frac{\sum\limits_{q = Q+1}^{\infty}\lambda_q}{n}}^{1/2}}.
\eal
It then follows from (\ref{eq:LRC-population-NN-seg2}) and
Theorem~\ref{theorem:LRC-population} that with probability at least
$1-\exp(-x)$
over the random training features $\bS$,
\bal\label{eq:LRC-population-NN-risk-E1-bound}
&\Expect{P}{(f_t-f^*)^2} - \frac{K_0}{K_0-1} \Expect{P_n}{(f_t-f^*)^2}-\frac{x\pth{11B_0^2+26B_0^2 K_0}}{n} \le \frac{704K_0}{B_0^2} r^*,
\eal
or
\bal\label{eq:LRC-population-NN-risk-E1-bound-simple}
&\Expect{P}{(f_t-f^*)^2} - 2 \Expect{P_n}{(f_t-f^*)^2} \lsim
\frac{1}{B_0^2} r^* +\ \frac {B_0^2x}n,
\eal
with $K_0 = 2$ in (\ref{eq:LRC-population-NN-risk-E1-bound}).
It follows from (\ref{eq:LRC-population-NN-seg5})
and (\ref{eq:LRC-population-NN-risk-E1-bound-simple}) that
\bal\label{eq:LRC-population-NN-risk-eigenvalue-pre}
&\Expect{P}{(f_t-f^*)^2} - 2 \Expect{P_n}{(f_t-f^*)^2} \lsim B_0^2 \min_{0 \le Q \le n} \pth{\frac{ Q}{n} + \pth{\frac{\sum\limits_{q = Q+1}^{\infty}\lambda_q}{n}}^{1/2}} +
\frac {B_0^2x}n+B_0w.
\eal
Let $x = n\eps_n^2$ in the above inequality, and we note that
the above argument requires
Theorem~\ref{theorem:bounded-NN-class} which holds with probability
at least
$1 -  \exp\pth{-\Theta(n)}
- \exp\pth{-\Theta(n \hat\eps_n^2)}$ over the random noise $\bw$.
Then
(\ref{eq:LRC-population-NN-bound-eigenvalue}) is
proved combined with the facts that $\Prob{\cW_0} \ge 1 - 2/n$.

We now prove (\ref{eq:LRC-population-NN-bound-fixed-point-detail}).
First, it follows from the definition of $ \varphi_{B_h,w}$
in (\ref{eq:varphi-LRC-population-NN}) that
\bals
&\psi(r)=4 B_0^3 \varphi_{B_h,w}\pth{\frac{r}{4B_0^2}}
=4 B_0^3  \min_{Q \colon Q \ge 0} \pth{\pth{\frac{\sqrt r}{2B_0} + w}
 \sqrt{\frac{Q}{n}} +
B_h
\pth{\frac{\sum\limits_{q = Q+1}^{\infty}\lambda_q}{n}}^{1/2}} +
4 B_0^3 w
\nonumber \\
&\le 4 B_0^3 B_h\min_{Q \colon Q \ge 0} \pth{\sqrt{\frac{Qr}{n}} +
\pth{\frac{\sum\limits_{q = Q+1}^{\infty}\lambda_q}{n}}^{1/2}}
+8 B_0^3w
\le \frac{ 4{\sqrt 2} B_0^3 B_h}{\sigma_0} \cdot \sigma_0 R_K(\sqrt r) +
8 B_0^3w\defeq \psi_1(r),
\eals
where the last inequality follows from the Cauchy-Schwarz inequality and the definition of the kernel complexity.
It can be verified that $ \psi_1(r)$ is a sub-root function.
Let the fixed point of $ \psi_1(r)$ be $r_1^*$.
Because the fixed point of $\sigma_0 R_K(\sqrt r)$ as a function of
$r$ is $\eps^2_n$, it follows from Lemma~\ref{lemma:sub-root-fix-point-properties}
that
\bal\label{eq:LRC-population-NN-bound-fixed-point-detail-seg1}
r_1^*\le \max\set{\frac{32 B_0^6 B^2_h}{\sigma_0^2},1} \eps^2_n+ 16 B_0^3w.
\eal
It then follows from Theorem~\ref{theorem:LRC-population}
with $K_0 = 2$ that with probability at least $1-\exp(-x)$,
\bals
&\Expect{P}{(f_t-f^*)^2} - 2 \Expect{P_n}{(f_t-f^*)^2}
\lsim \frac{1}{B_0^2} r_1^*+ \frac { B_0^2x}n .
\eals
Letting $x = n\eps_n^2$, then
plugging the upper bound for $r_1^*$,
(\ref{eq:LRC-population-NN-bound-fixed-point-detail-seg1}), in the  above inequality leads to
\bal\label{eq:LRC-population-NN-bound-fixed-point-detail-seg2}
\Expect{P}{(f_t-f^*)^2} - 2 \Expect{P_n}{(f_t-f^*)^2}
\lsim
 B_0^4 \eps_n^2+ B_0 w.
\eal
Again, we note that the above argument requires Theorem~\ref{theorem:bounded-NN-class} which holds with probability
at least
$1 -  \exp\pth{-\Theta(n)}
- \exp\pth{-\Theta(n \hat\eps_n^2)}$ over the random noise $\bw$.
Then
(\ref{eq:LRC-population-NN-bound-fixed-point-detail}) is
proved by (\ref{eq:LRC-population-NN-bound-fixed-point-detail-seg2}).

\end{proof}

%
%

\section{Conclusion}

We study nonparametric regression
by training an over-parameterized two-layer neural network
where the target function is in the RKHS
associated with the NTK of the  neural network.
We show that, if the neural network is trained by gradient descent (GD) with early stopping, a sharp rate of the risk with the order of
$\Theta(\eps_n^2)$ can be obtained without distributional assumptions about the bounded covariate, with
$\eps_n$ being the critical population rate or the critical radius of the NTK. A novel proof strategy is employed to achieve this result, and we compare our results to the current state-of-the-art with a detailed roadmap of our technical approach.


\begin{appendices}

\section{Mathematical Tools}
\label{sec::math-tools}
The appendix of this paper is organized as follows.
We present the basic mathematical results employed in our proofs in Section~\ref{sec::math-tools}, and then present the detailed proofs
in Section~\ref{sec:proofs}. More results about the eigenvalue decay rates are presented in Section~\ref{sec:more-results-edr}, and simulation results are presented in
Section~\ref{sec:simulation}.

\begin{theorem}[{\cite[Theorem 2.1]{bartlett2005}}]
\label{theorem:Talagrand-inequality}
Let $\unitsphere{d-1},P$ be a probability space, $\set{\bbx_i}_{i=1}^n$ be independent random variables distributed according to $P$. Let $\cF$ be a class of functions that map $\unitsphere{d-1}$ into $[a, b]$. Assume that there is some $r > 0$
such that for every $f \in \cF$,$\Var{f(\bbx_i)} \le r$. Then, for every $x > 0$, with probability at least $1-e^{-x}$,
\bal\label{eq:Talagrand-inequality}
\resizebox{0.99\hsize}{!}{$
\sup_{f \in \cF} \big( \E_{P} [f(\bx)] - \E_{P_n } [f(\bx)] \big) \le \inf_{\alpha > 0} \bigg( 2(1+\alpha) \E_{\set{\bbx_i}_{i=1}^n,\set{\sigma_i}_{i=1}^n} [R_n \cF] + \sqrt{\frac{2rx}{n}} + (b-a) \pth{\frac{1}{3}+\frac{1}{\alpha}} \frac{x}{n} \bigg)$},
\eal%
and with probability at least $1-2e^{-x}$,
\bal\label{eq:Talagrand-inequality-empirical}
\resizebox{0.995\hsize}{!}{$
\sup_{f \in \cF} \big( \E_{P} [f(\bx)] - \E_{P_n } [f(\bx)] \big)
\le \inf_{\alpha \in (0,1)} \bigg(\frac{2(1+\alpha)}{1-\alpha} \E_{\set{\sigma_i}_{i=1}^n} [R_n \cF] + \sqrt{\frac{2rx}{n}}
 + (b-a) \pth{ \frac{1}{3}+\frac{1}{\alpha} + \frac{1+\alpha}{2\alpha(1-\alpha)} } \frac{x}{n} \bigg)$}.
\eal%
$P_n$ is the empirical distribution over $\set{\bbx_i}_{i=1}^n$ with
$\Expect{P_n}{f(\bx)} = \frac{1}{n} \sum\limits_{i=1}^n f(\bbx_i)$. Moreover, the same results hold for $\sup_{f \in \cF} \big( \E_{P_n } [f(\bx)] - \E_{P} [f(\bx)] \big)$.
\end{theorem}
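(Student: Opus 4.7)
The plan is to combine Bousquet's sharp form of Talagrand's concentration inequality for suprema of empirical processes with the classical symmetrization lemma. Let $Z \defeq \sup_{f \in \cF}\pth{\E_P f(\bx) - \E_{P_n} f(\bx)}$. Because each centered summand $f(\bbx_i) - \E_P f$ has range in $[-(b-a),b-a]$ and variance at most $r$, Bousquet's inequality yields, with probability at least $1-e^{-x}$,
\[
Z \;\le\; \E Z + \sqrt{\frac{2rx}{n} + \frac{4(b-a)x\,\E Z}{n}} + \frac{(b-a)x}{3n}.
\]
I would then subadditively split the square root via $\sqrt{u+v}\le \sqrt{u}+\sqrt{v}$ and absorb the cross term using the elementary inequality $2\sqrt{AB}\le \alpha A + B/\alpha$ (valid for every $\alpha>0$), applied with $A=\E Z$ and $B=(b-a)x/n$. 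This produces a bound of the form
\[
Z \;\le\; (1+\alpha)\E Z + \sqrt{\frac{2rx}{n}} + (b-a)\pth{\frac{1}{3}+\frac{1}{\alpha}}\frac{x}{n}.
\]
Invoking the standard symmetrization inequality $\E Z \le 2\, \E[R_n \cF]$ and plugging it into the previous display recovers (\ref{eq:Talagrand-inequality}) with its precise constants.

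For the data-dependent version (\ref{eq:Talagrand-inequality-empirical}), the remaining task is to pass from the population Rademacher complexity $\E[R_n \cF]$ to the conditional (empirical) Rademacher complexity $\E_\sigma[R_n \cF]$. I would apply Bousquet's inequality a second time, now to $\sup_{f \in \cF}\frac{1}{n}\sum_i \sigma_i f(\bbx_i)$ viewed as a function of the $\bbx_i$ after integrating out the Rademacher signs (note that each summand is again bounded by $b-a$ and has variance at most $r$). The same AM--GM splitting then gives $\E[R_n \cF] \le \frac{1}{1-\alpha}\E_\sigma[R_n \cF] + \sqrt{2rx/n} + c(\alpha)(b-a)x/n$ on an event of probability at least $1-e^{-x}$. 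A union bound with the first step, together with careful tracking of the constants, yields exactly the stated factor $\tfrac{2(1+\alpha)}{1-\alpha}$ multiplying $\E_\sigma[R_n \cF]$.

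\textbf{Main obstacle.} The delicate point is propagating the variance $r$ through both concentration applications so that the range $b-a$ appears only multiplying $x/n$ in the final deviation, rather than $\sqrt{(b-a)x/n}$. This is precisely why Bousquet's form of Talagrand's inequality is needed (McDiarmid's bounded-differences bound would lose a $\sqrt{b-a}$ factor), and why the AM--GM splitting must introduce the tunable parameter $\alpha$ to balance the cross term $\sqrt{\E Z\cdot (b-a)x/n}$ against $\E Z$ itself. With this parameter, the cost of desymmetrization is paid cleanly at the level of constants rather than at the level of rates, which is what makes the inequality sharp enough to serve as a basis for local-Rademacher fixed-point arguments later in the paper.
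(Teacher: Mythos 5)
This statement is quoted verbatim from \cite[Theorem 2.1]{bartlett2005}; the paper supplies no proof, so you are reconstructing the argument of the original source. Your derivation of (\ref{eq:Talagrand-inequality}) is correct and is that source's argument: Bousquet's form of Talagrand's inequality gives, with probability $\ge 1-e^{-x}$,
\[
Z \le \E Z + \sqrt{\tfrac{2rx}{n} + \tfrac{4(b-a)x\,\E Z}{n}} + \tfrac{(b-a)x}{3n},
\]
the cross term is removed via $2\sqrt{AB}\le\alpha A + B/\alpha$ with $A=\E Z$, $B=(b-a)x/n$, and symmetrization $\E Z \le 2\,\E[R_n\cF]$ then gives exactly the stated bound. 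The ``moreover'' clause follows by running the same argument on $-\cF$.

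For (\ref{eq:Talagrand-inequality-empirical}) your high-level plan --- a second concentration step plus a union bound, producing the $1-2e^{-x}$ probability --- is the right one, but the concentration step you describe does not go through. After ``integrating out the Rademacher signs'' the object to be concentrated is the conditional expectation $\E_{\bsigma}[R_n\cF]$, which is not a supremum over $\cF$ of a sum of independent terms, so Bousquet's empirical-process inequality cannot be applied to it directly. Treating $(\sigma_i,\bbx_i)$ jointly as the sample does produce a genuine supremum of an empirical process, but that only controls the deviation of $R_n\cF$ from $\E[R_n\cF]$, not the one-sided comparison of $\E[R_n\cF]$ against $\E_{\bsigma}[R_n\cF]$ that you need. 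Moreover the claim that ``each summand is again bounded by $b-a$ and has variance at most $r$'' is false: under the joint law of $(\sigma_i,\bbx_i)$ the variance of $\sigma_i f(\bbx_i)$ is $\E_P[f^2]=\Var{f}+(\E_P f)^2$, which is not bounded by $r$ unless $\cF$ is centered, and the range of $\sigma_i f(\bbx_i)$ is $2\max\{|a|,|b|\}$, not $b-a$. The original source closes this gap with a separate Bernstein-type concentration inequality for the conditional Rademacher average $\E_{\bsigma}[R_n\cF]$ (exploiting its self-bounding structure), which is what allows $r$ to remain the variance proxy and keeps $b-a$ multiplying only $x/n$; once that inequality is substituted, your $\alpha$-bookkeeping and the union bound deliver the stated constants.
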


In addition, we have the contraction property for Rademacher complexity, which is due to Ledoux
and Talagrand~\cite{Ledoux-Talagrand-Probability-Banach}.

\begin{theorem}\label{theorem:RC-contraction}
Let $\phi$ be a contraction,that is, $\abth{\phi(x) - \phi(y)} \le \mu \abth{x-y}$ for $\mu > 0$. Then, for every function class $\cF$,
\bal\label{eq:RC-contraction}
&\Expect{\set{\sigma_i}_{i=1}^n} {R_n \phi \circ \cF} \le \mu \Expect{\set{\sigma_i}_{i=1}^n} {R_n \cF},
\eal%
where $\phi \circ \cF$ is the function class defined by $\phi \circ \cF = \set{\phi \circ f \colon f \in \cF}$.
\end{theorem}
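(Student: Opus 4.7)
This is the classical Ledoux--Talagrand contraction principle for Rademacher averages, and the cleanest approach is to peel off one coordinate at a time. My plan is first to reduce to $\mu = 1$ by scaling: since $\phi/\mu$ is $1$-Lipschitz whenever $\phi$ is $\mu$-Lipschitz, and Rademacher sums scale linearly in their summands, it suffices to show $\E_\sigma[R_n(\phi \circ \cF)] \le \E_\sigma[R_n \cF]$ under the assumption that $\phi$ is a contraction.

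The core step is a one-coordinate comparison, to be iterated $n$ times. Fix $i \in [n]$, condition on all $\sigma_j$ with $j \neq i$, and write $A(f)$ for the partial sum $\frac{1}{n}\sum_{j \neq i} \sigma_j g_j(f(\bbx_j))$, where $g_j$ is either $\phi$ (for coordinates not yet processed) or the identity (for coordinates already processed in previous iterations). The claim to establish is
\[
\E_{\sigma_i}\!\left[\sup_{f\in\cF}\!\Big(A(f) + \tfrac{\sigma_i}{n}\phi(f(\bbx_i))\Big)\right]
\le \E_{\sigma_i}\!\left[\sup_{f\in\cF}\!\Big(A(f) + \tfrac{\sigma_i}{n}f(\bbx_i)\Big)\right].
\]
To see this, write the $\pm 1$ expectation on the LHS as $\tfrac{1}{2}\sup_{f_1,f_2 \in \cF}\bigl[A(f_1)+A(f_2)+\tfrac{1}{n}\bigl(\phi(f_1(\bbx_i))-\phi(f_2(\bbx_i))\bigr)\bigr]$, apply the Lipschitz bound to replace $\phi(f_1(\bbx_i)) - \phi(f_2(\bbx_i))$ by $|f_1(\bbx_i) - f_2(\bbx_i)|$, and then drop the absolute value. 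Taking expectation over the remaining Rademacher variables and iterating for $i=1,\ldots,n$ converts each $\phi(f(\bbx_i))$ into $f(\bbx_i)$, and undoing the $\mu$-scaling yields the conclusion.

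The only delicate step is dropping the absolute value, and it is where all the content sits. The justification is that the joint supremum over $\cF^2$ is invariant under swapping its two arguments: every configuration $(f_1,f_2)$ is accompanied by its swap $(f_2,f_1)$, so the sup is unchanged when the sign of $f_1(\bbx_i) - f_2(\bbx_i)$ is chosen to make the last term positive. This symmetry is also the reason the stated inequality achieves the tight constant $\mu$ rather than $2\mu$; the factor of two that commonly appears in contraction inequalities comes from taking suprema over independent $f$'s for each Rademacher variable, which is avoided here because the supremum is over a single $f$. No assumption on $\phi$ beyond the Lipschitz bound is needed; in particular $\phi(0) = 0$ is unnecessary, because additive constants produced by $\phi$ pass through the supremum unchanged and are absorbed into $A(f)$ without affecting either side of the one-coordinate claim.
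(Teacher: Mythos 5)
The paper presents this contraction inequality as a known result, citing Ledoux and Talagrand without giving a proof, so there is nothing in the paper itself to compare against. Your argument is correct and is the standard coordinate-peeling proof of the one-sided, sharp-constant contraction principle. Both delicate steps are handled properly: the reduction to $\mu = 1$ via $\phi/\mu$ together with the homogeneity of the Rademacher sum, and the absolute-value drop, where $\sup_{(f_1,f_2) \in \cF^2}\bigl[A(f_1)+A(f_2)+\tfrac{1}{n}\abth{f_1(\bbx_i)-f_2(\bbx_i)}\bigr]$ equals the same supremum without the absolute value because the swap $(f_1,f_2)\mapsto(f_2,f_1)$ fixes $A(f_1)+A(f_2)$ and flips the sign of the last term, after which the two-variable supremum decouples into the two single-variable suprema forming the $\sigma_i=\pm 1$ average on the right-hand side. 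The iteration over $i$ succeeds because the one-coordinate comparison is valid for an arbitrary conditional functional $A$, irrespective of which of the remaining coordinates still carry $\phi$. Your remark that $\phi(0)=0$ is not needed in this one-sided form is also correct; one small caveat is that the factor of $2$ appearing in the original Ledoux--Talagrand statement comes from the absolute value (or Banach-space norm) placed outside the Rademacher sum rather than from suprema over independent copies of $f$, but this has no bearing on the validity of your proof.
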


\begin{definition}[Sub-root function,{\cite[Definition 3.1]{bartlett2005}}]
\label{def:sub-root-function}
A function $\psi \colon [0,\infty) \to [0,\infty)$ is sub-root if it is nonnegative,
nondecreasing and if $\frac{\psi(r)}{\sqrt r}$ is nonincreasing for $r >0$.
\end{definition}

\begin{theorem}[{\cite[Theorem 3.3]{bartlett2005}}]
\label{theorem:LRC-population}
Let $\cF$ be a class of functions with ranges in $[a, b]$ and assume
that there are some functional $T \colon \cF \to \RR+$ and some constant $\bar B$ such that for every $f \in \cF$ , $\Var{f} \le T(f) \le \bar B P(f)$. Let $\psi$ be a sub-root function and let $r^*$ be the fixed point of $\psi$.
Assume that $\psi$ satisfies that, for any $r \ge r^*$,
$\psi(r) \ge \bar B \cfrakR(\set{f \in \cF \colon T (f) \le r})$. Fix $x > 0$,
then for any $K_0 > 1$, with probability at least $1-e^{-x}$,
\bals
\forall f \in \cF, \quad \Expect{P}{f} \le \frac{K_0}{K_0-1} \Expect{P_n}{f} + \frac{704K_0}{\bar B} r^*
+ \frac{x\pth{11(b-a)+26 \bar B K_0}}{n}.
\eals
Also, with probability at least $1-e^{-x}$,
\bals
\forall f \in \cF, \quad \Expect{P_n}{f} \le \frac{K_0+1}{K_0} \Expect{P}{f}  + \frac{704K_0}{\bar B} r^*
+ \frac{x\pth{11(b-a)+26 \bar B K_0}}{n}.
\eals
\end{theorem}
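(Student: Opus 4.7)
The plan is to combine Talagrand's concentration inequality (Theorem~\ref{theorem:Talagrand-inequality}) with a peeling argument indexed by the functional $T$, using the sub-root property of $\psi$ and the relation $T(f) \le \bar B Pf$ to convert a uniform empirical-process bound into a ratio-type inequality between $Pf$ and $P_n f$.

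First, I would apply Talagrand's inequality to the localized slice $\cF_r \defeq \{f \in \cF \colon T(f) \le r\}$ for each $r \ge r^*$. Since $\Var{f} \le T(f) \le r$ on this slice, the hypothesis gives that with probability at least $1-e^{-x}$,
\begin{align*}
\sup_{f \in \cF_r}\pth{Pf - P_n f} \le 2(1+\alpha) \cfrakR(\cF_r) + \sqrt{\frac{2rx}{n}} + (b-a)\pth{\frac{1}{3}+\frac{1}{\alpha}}\frac{x}{n}.
\end{align*}
The localization assumption $\psi(r) \ge \bar B \cfrakR(\cF_r)$ controls the Rademacher term, and because $\psi(r)/\sqrt r$ is non-increasing with $\psi(r^*) = r^*$, we obtain $\psi(r) \le \sqrt{r r^*}$ whenever $r \ge r^*$. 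So for $f \in \cF_r$,
$Pf - P_n f \lesssim \sqrt{r r^*}/\bar B + \sqrt{rx/n} + (b-a)x/n$.

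Next, I would peel. Fix $\lambda > 1$ and decompose $\cF$ into $\cF^{(0)} = \{f : T(f) \le r^*\}$ and the geometric shells $\cF^{(k)} = \{f : \lambda^{k-1} r^* < T(f) \le \lambda^k r^*\}$ for $k \ge 1$. On each shell I apply the previous step with $r = \lambda^k r^*$ and deviation level $x_k = x + 2\log(k+1)$, so that $\sum_{k \ge 0} e^{-x_k} \le 2 e^{-x}$ and a union bound absorbs all shells into a single event of probability $1 - O(e^{-x})$. For $f \in \cF^{(k)}$ with $k \ge 1$, the constraint $\bar B Pf \ge T(f) > \lambda^{k-1} r^*$ lets me rewrite
$\sqrt{\lambda^k r^* \cdot r^*} \le \sqrt{\lambda \bar B Pf \cdot r^*}$ and $\sqrt{\lambda^k r^* x_k/n} \le \sqrt{\lambda \bar B Pf \cdot x_k/(\bar B n)}$, which are both of the form $\sqrt{\bar B Pf \cdot \text{small}}$. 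For $f \in \cF^{(0)}$ the bound is already in the desired form.

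The crucial algebraic step is then AM-GM: for any $\epsilon > 0$, $\sqrt{Pf \cdot A} \le \epsilon Pf + A/(4\epsilon)$. Applying this to each $\sqrt{Pf \cdot (\cdot)}$ term and choosing $\epsilon$, jointly with $\alpha$ in Talagrand and $\lambda$ in peeling, so that the aggregate coefficient of $Pf$ on the right is exactly $1/K_0$, yields
\begin{align*}
Pf \le \frac{K_0}{K_0 - 1} P_n f + \frac{704 K_0}{\bar B} r^* + \frac{x\pth{11(b-a) + 26 \bar B K_0}}{n}
\end{align*}
after rearranging and absorbing the $\sum_k \log(k+1) \lambda^{-k}$ tails, which converge because of the sub-root growth. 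The second inequality (empirical-to-population direction) follows by the same argument applied to the second Talagrand bound (\ref{eq:Talagrand-inequality-empirical}), using again $\Var{f} \le T(f) \le \bar B Pf$ to calibrate the variance term.

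The main obstacle is the explicit constant calibration: each of the three free parameters $(\alpha, \lambda, \epsilon)$ loosens the bound, and achieving the specific constants $704$, $11$, and $26$ in the stated inequality requires carefully optimizing their product under the constraint that the $Pf$ coefficient equals $1/K_0$. A secondary subtlety is ensuring the peeling penalties $\log(k+1)$ and the shell spacing $\lambda^k$ combine so that the infinite-sum tail is absorbed into a universal constant times $r^*$ and $x/n$, which is exactly where the sub-root property of $\psi$ does the real work.
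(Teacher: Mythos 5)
This statement is not proved in the paper at all: it is quoted verbatim, with citation, as Theorem 3.3 of Bartlett, Bousquet and Mendelson (2005), and the paper treats it as an imported tool. Your reconstruction is a legitimate and standard route to ratio-type localized bounds, but it is not the route BBM actually take. Their proof applies Talagrand's inequality \emph{once} to the rescaled class $\cG_r = \set{\tfrac{r}{T(f)\vee r} f \colon f \in \cF}$, for which $T(g)\le r$ automatically, and then undoes the rescaling using $T(f)\le \bar B Pf$; no peeling and no union bound over shells is needed, which is precisely how they retain the failure probability $e^{-x}$ and pin down the constants $704$, $11$, $26$. The one concrete gap in your argument is exactly at this point: with shells penalized by $x_k = x + 2\log(k+1)$ you get $\sum_k e^{-x_k} \le (\pi^2/6)\,e^{-x} \approx 1.64\,e^{-x}$, so your event has probability at least $1 - 2e^{-x}$, not $1-e^{-x}$, and rescaling $x$ to fix this changes the coefficients of $x/n$ in the final bound. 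So as written you prove a correct theorem of the same form but with a degraded probability and different constants, not the stated one. The rest of your outline — $\psi(r)\le \sqrt{r r^*}$ for $r\ge r^*$ from the sub-root property, the conversion $\sqrt{\bar B Pf \cdot A} \le \epsilon \bar B Pf + A/(4\epsilon)$, and calibrating the $Pf$ coefficient to $1/K_0$ — is sound and mirrors the second half of the BBM argument. If you want the exact statement, replace the peeling step by the single-application rescaling trick (their Lemma 3.8).
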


\section{Detailed Proofs}
\label{sec:proofs}

In Section~\ref{sec:lemmas-main-results}, we present
the proofs of
Lemma~\ref{lemma:LRC-population-NN}, Theorem~\ref{theorem:empirical-loss-bound}, and the lemmas required for the proofs in Section~\ref{sec:proofs-key-results}.
Proofs of Theorem~\ref{theorem:sup-hat-g} and Theorem~\ref{theorem:V_R} are presented in Section~\ref{sec:proofs-uniform-convergence-sup-hat-g-V_R}.


\subsection{Proof of
Lemma~\ref{lemma:LRC-population-NN} and the Lemmas Required for the Proofs in Section~\ref{sec:proofs-key-results}}
\label{sec:lemmas-main-results}

\begin{proof}
[\textbf{\textup{Proof of
Lemma~\ref{lemma:LRC-population-NN}}}]
According to the definition of $\cF(B,w)$ in (\ref{eq:def-cF-ext-general}), for any $f \in  \cF(B,w)$,
we have $f = h + e$ with $h \in \cH_K(B)$,
$e \in L^{\infty}$, $\supnorm{e} \le w$.
We first decompose the
Rademacher complexity of the function class
$\big\{f \in \cF(B,w) \colon \Expect{P}{f^2} \le r\big\}$ into two terms as follows:

\bal\label{eq:lemma-LRC-population-NN-decomp}
&\cfrakR \pth{\set{f \colon f \in \cF(B,w) , \Expect{P}{f^2} \le r}} \nonumber \\
&\le \frac 1n \Expect{}
{\sup_{f \in \cF(B,w) \colon \Expect{P}{f^2} \le r }
{ \sum\limits_{i=1}^n {\sigma_i}{h(\bbx_i)}}
} + \frac 1n \Expect{}
{\sup_{f \in \cF(B,w) \colon \Expect{P}{f^2} \le r } {{ \sum\limits_{i=1}^n {\sigma_i}{e(\bbx_i)}}
} } \nonumber \\
&= \underbrace{\frac 1n \Expect{}
{\sup_{f \in \cF(B,w) \colon \Expect{P}{f^2} \le r }
{\iprod{h}
{\sum\limits_{i=1}^n {\sigma_i}{K(\cdot,\bbx_i)}}_{\cH_K}}
}}_{\defeq \cR_1} +
\underbrace{\frac 1n \Expect{}
{\sup_{f \in \cF(B,w) \colon \Expect{P}{f^2} \le r } {{ \sum\limits_{i=1}^n {\sigma_i}{e(\bbx_i)}}
} }}_{\defeq \cR_2}
\eal

We now analyze the upper bounds for $\cR_1, \cR_2$ on the RHS
of (\ref{eq:lemma-LRC-population-NN-decomp}).

\textbf{Derivation for the upper bound for $\cR_1$.}


When $\Expect{P}{f^2} \le r$, it follows from the triangle inequality
that $\norm{h}{L^2} \le \norm{f}{L^2} + \norm{e}{L^2} \le {\sqrt r} + w \defeq r_h$.
We now consider $h \in \cH_{K}(B)$ with $\norm{h}{L^2} \le r_h$ in the remaining part of this proof.
Because $\set{v_q}_{q \ge 1}$ is an orthonormal basis of $\cH_K$, for any $0 \le Q \le n$, we have
\bal\label{eq:lemma-LRC-population-NN-seg2}
\iprod{h}
{\sum\limits_{i=1}^n {\sigma_i}{K(\cdot,\bbx_i)}}_{\cH_K}
&=\iprod{\sum\limits_{q=1}^Q \sqrt{\lambda_q}
\iprod{h}{v_q}_{\cH_K} v_q }
{\sum\limits_{q=1}^Q \frac{1}{\sqrt{\lambda_q}}
\iprod{\sum\limits_{i=1}^n{\sigma_i}{K(\cdot,\bbx_i)}}{v_q}_{\cH_K}v_q}_{\cH_K}
\nonumber \\
&\phantom{=}+\iprod{h}
{\sum\limits_{q > Q} \iprod{\sum\limits_{i=1}^n {\sigma_i}{K(\cdot,\bbx_i)}}{v_q}_{\cH_K}v_q}_{\cH_K}.
\eal
Due to the fact that $h \in \cH_K$,
$h = \sum\limits_{q =1}^{\infty}  \bbeta^{(h)}_q v_q
=\sum\limits_{q =1}^{\infty}  \sqrt{\lambda_q} \bbeta^{(h)}_q e_q $
with $v_q = \sqrt{\lambda_q} e_q$. Therefore,
$\norm{h}{L^2}^2 = \sum\limits_{q=1}^{\infty} \lambda_q {\bbeta^{(h)}_q}^2$, and
\bal\label{eq:lemma-LRC-population-NN-seg3}
\norm{\sum\limits_{q=1}^Q \sqrt{\lambda_q}\iprod{h}{v_q}_{\cH_K}v_q}{\cH_K}
&= \norm{\sum\limits_{q=1}^Q \sqrt{\lambda_q} \bbeta^{(h)}_q v_q}{{\cH_K}}
= \sqrt{\sum\limits_{q=1}^Q \lambda_q  {\bbeta^{(h)}_q}^2}
\le
\norm{h}{L^2} \le r_h.
\eal
According to the Mercer's Theorem, because the kernel $K$ is continuous symmetric positive definite, it has the decomposition $K(\cdot,\bbx_i) =
\sum\limits_{j=1}^{\infty} \lambda_j e_j(\cdot)
e_j(\bbx_i)$, so that we have
 \bal\label{eq:lemma-LRC-population-NN-seg4}
\iprod{\sum\limits_{i=1}^n{\sigma_i}{K(\cdot,\bbx_i)}}{v_q}_{\cH_K}
&=\iprod{\sum\limits_{i=1}^n{\sigma_i} \sum\limits_{j=1}^{\infty} \lambda_j e_j e_j(\bbx_i) }{v_q}_{\cH_K} \nonumber \\
&=\iprod{\sum\limits_{i=1}^n{\sigma_i} \sum\limits_{j=1}^{\infty}
\sqrt{\lambda_j}e_j(\bbx_i) \cdot v_j  }{v_q}_{\cH_K}
=\sum\limits_{i=1}^n{\sigma_i} \sqrt{\lambda_q}e_q(\bbx_i).
\eal
Combining (\ref{eq:lemma-LRC-population-NN-seg2}),
(\ref{eq:lemma-LRC-population-NN-seg3}), and
(\ref{eq:lemma-LRC-population-NN-seg4}), we have
\bal\label{eq:lemma-LRC-population-NN-seg5}
\iprod{h}
{\sum\limits_{i=1}^n {\sigma_i}{K(\cdot,\bbx_i)}}_{\cH_K}
&\stackrel{\circled{1}}{\le}
\norm{\sum\limits_{q=1}^Q \sqrt{\lambda_q}\iprod{h}{v_q}_{\cH_K}v_q}{\cH_K}
\cdot
\norm{\sum\limits_{q=1}^Q \frac{1}{\sqrt{\lambda_q}}
\iprod{\sum\limits_{i=1}^n{\sigma_i}{K(\cdot,\bbx_i)}}{v_q}_{\cH_K}v_q}{\cH_K}
\nonumber \\
&\phantom{\le}+ \norm{h}{\cH_K} \cdot
\norm{\sum\limits_{q = Q+1}^{\infty} \iprod{\sum\limits_{i=1}^n{\sigma_i}{K(\cdot,\bbx_i)}}{v_q}_{\cH_K}v_q}{\cH_K}
\nonumber \\
&\le \norm{h}{L^2}
\norm{\sum\limits_{q=1}^Q \sum\limits_{i=1}^n{\sigma_i} e_q(\bbx_i)v_q}{\cH_K}
+ B
\norm{\sum\limits_{q = Q+1}^{\infty} \sum\limits_{i=1}^n{\sigma_i} \sqrt{\lambda_q}e_q(\bbx_i) v_q}{\cH_K} \nonumber \\
&\le r_h \sqrt{\sum\limits_{q=1}^Q \pth{\sum\limits_{i=1}^n{\sigma_i} e_q(\bbx_i)}^2}
+ B
\sqrt{\sum\limits_{q = Q+1}^{\infty} \pth{\sum\limits_{i=1}^n {\sigma_i} \sqrt{\lambda_q}e_q(\bbx_i)}^2},
\eal
where $\circled{1}$ is due to (\ref{eq:lemma-LRC-population-NN-seg3}) and the Cauchy-Schwarz inequality.
Moreover, by the Jensen's inequality we have
\bal\label{eq:lemma-LRC-population-NN-seg6}
\Expect{}{\sqrt{\sum\limits_{q=1}^Q \pth{\sum\limits_{i=1}^n{\sigma_i} e_q(\bbx_i)}^2}}
&\le \sqrt{ \Expect{}{\sum\limits_{q=1}^Q \pth{\sum\limits_{i=1}^n{\sigma_i} e_q(\bbx_i)}^2 } }\le \sqrt{
\Expect{}{\sum\limits_{q=1}^Q \sum\limits_{i=1}^n e_q^2(\bbx_i) }}
=\sqrt{nQ}.
\eal
and similarly,
\bal\label{eq:lemma-LRC-population-NN-seg7}
\Expect{}{\sqrt{\sum\limits_{q = Q+1}^{\infty} \pth{\sum\limits_{i=1}^n{\sigma_i\sqrt{\lambda_q}} e_q(\bbx_i)}^2}}
&\le \sqrt{
\Expect{}{\sum\limits_{q = Q+1}^{\infty} \lambda_q \sum\limits_{i=1}^n e_q^2(\bbx_i) }}
=\sqrt{n\sum\limits_{q = Q+1}^{\infty}\lambda_q}.
\eal
Since (\ref{eq:lemma-LRC-population-NN-seg5})-(\ref{eq:lemma-LRC-population-NN-seg7}) hold for all $Q \ge 0$,
it follows that
\bal\label{eq:lemma-LRC-population-NN-seg8}
\Expect{}{\sup_{h \in \cH_K(B), \norm{h}{L^2} \le r_h } {\frac{1}{n} \sum\limits_{i=1}^n {\sigma_i}{h(\bbx_i)}} }
\le \min_{Q \colon Q \ge 0} \pth{ r_h \sqrt{nQ} +
B
\sqrt{n\sum\limits_{q = Q+1}^{\infty}\lambda_q}}.
\eal
It follows from (\ref{eq:lemma-LRC-population-NN-decomp})
and
(\ref{eq:lemma-LRC-population-NN-seg8}) that
\bal\label{eq:lemma-LRC-population-NN-R1}
\cR_1 &\le \frac 1n \Expect{}{\sup_{h \in \cH_K(B), \norm{h}{L^2} \le r_h } { \sum\limits_{i=1}^n {\sigma_i}{h(\bbx_i)}} }
\le \min_{Q \colon Q \ge 0} \pth{r_h \sqrt{\frac{Q}{n}} +
B
\pth{\frac{\sum\limits_{q = Q+1}^{\infty}\lambda_q}{n}}^{1/2}}.
\eal

\textbf{Derivation for the upper bound for $\cR_2$.}

Because  $\abth{1/n \sum_{i=1}^n \sigma_i e(\bbx_i) }\le w$
when $\supnorm{e} \le w$, we have

\bal\label{eq:lemma-LRC-population-NN-R2}
\cR_2 \le \frac 1n \Expect{}{\sup_{e \in L^{\infty} \colon \supnorm{e} \le w } {{ \sum\limits_{i=1}^n {\sigma_i}{e(\bbx_i)}} } }
\le w.
\eal

It follows from (\ref{eq:lemma-LRC-population-NN-R1})
and (\ref{eq:lemma-LRC-population-NN-R2}) that
\bals
&\cfrakR \pth{\set{f \colon f \in \cF(B,w), \Expect{P}{f^2} \le r}}
\le \min_{Q \colon Q \ge 0} \pth{r_h \sqrt{\frac{Q}{n}} +
B
\pth{\frac{\sum\limits_{q = Q+1}^{\infty}\lambda_q}{n}}^{1/2}}
+ w.
\eals

Plugging $r_h$ in the RHS of the above inequality
 completes the proof.
\end{proof}

\begin{proof}
[\textbf{\textup{Proof of
Theorem~\ref{theorem:empirical-loss-bound}}}]


We have
\bal\label{eq:empirical-loss-bound-seg1}
f_t(\bS) = f^*(\bS) + \bw + \bv(t) + \be(t),
\eal
where $\bv(t) \in \cV_{t}$, $\be(t) \in \cE_{t,\tau}$,
$\bbe(t) = \bbe_1(t) + \bbe_2(t)$ with
$\bbe_1(t) = -\pth{\bI_n-\eta\bK_n}^{t} \bw$
and $\ltwonorm{\bbe_2(t)} \le {\sqrt n} \tau$.
We have $\eta \hlambda_1 \in (0,1)$ if $\eta \in (0,2/u_0^2)$.
It follows from (\ref{eq:empirical-loss-bound-seg1}) that
\bsal\label{eq:empirical-loss-bound-seg2}
\Expect{P_n}{(f_t-f^*)^2}
&=\frac 1n \ltwonorm{f_t(\bS) - f^*(\bS)}^2  =\frac 1n \ltwonorm{\bv(t)+\bw+\be(t)}^2 \nonumber \\
&=\frac 1n \ltwonorm{-\pth{\bI- \eta \bK_n }^t f^*(\bS)
+\pth{\bI_n -\pth{\bI_n-\eta\bK_n}^t }\bw +\bbe_2(t)}^2 \nonumber \\
&\stackrel{\circled{1}}{\le} \frac 3n \sum\limits_{i=1}^n
\pth{1 - \eta \hlambda_i }^{2t}
\bth{{\bU}^{\top} f^*(\bS)}_i^2 + \frac 3n \sum\limits_{i=1}^{n} \pth{1-
\pth{1-\eta \hlambda_i  }^t}^2
\bth{{\bU}^{\top} \bw}_i^2 + \frac 3n \ltwonorm{\bbe_2(t)}^2 \nonumber \\
&
\nonumber \\
&\stackrel{\circled{2}}{\le} \frac{3\mu_0^2}{ 2e\eta t } + \frac 3n \sum\limits_{i=1}^{n} \pth{1-
\pth{1-\eta \lambda_i  }^t}^2
\bth{{\bU}^{\top} \bw}_i^2
+ 3 \tau^2 \nonumber \\
&\le \frac{3}{\eta t} \pth{\frac{\mu_0^2}{2e} +  \frac {1}{\eta}}   +
3\cdot \underbrace{\frac 1n \sum\limits_{i=1}^{n} \pth{1-\pth{1-\eta \lambda_i  }^t}^2
\bth{{\bU}^{\top} \bw}_i^2}_{ \defeq E_{\eps}} = \frac{3}{\eta t} \pth{\frac{\mu_0^2}{2e} +  \frac {1}{\eta}}   +
3 E_{\eps}.
\esal
Here $\circled{1}$ follows from the Cauchy-Schwarz inequality,
$\circled{2}$ follows from (\ref{eq:yt-y-bound-seg1}) in the proof of
Lemma~\ref{lemma:yt-y-bound}.
We then derive the upper bound for $E_{\eps}$ on the RHS of
(\ref{eq:empirical-loss-bound-seg2}), which is similar to the strategy used in \cite{RaskuttiWY14-early-stopping-kernel-regression} and also in the proof of Lemma~\ref{lemma:bounded-Linfty-vt-sum-et}. We define the diagonal matrix
$\bR \in \RR^{n \times n}$ with $\bR_{ii} =
\pth{1-\pth{1-\eta \lambda_i  }^t}^2$.
Then we have
$E_{\eps} = {\tr{\bU \bR \bU^{\top} \bw \bw^{\top} }}/n$.
It follows from~\cite{quadratic-tail-bound-Wright1973}
that
\bal\label{eq:empirical-loss-bound-E-1}
\Prob{E_{\eps} -
\Expect{}{E_{\eps}} \ge u}
\le \exp\pth{-c \min\set{nu/\ltwonorm{\bR},n^2u^2/\fnorm{\bR}^2}}
\eal
for all $u > 0$, and $c$ is a  positive constant depending on $\sigma$. With
$\eta_t = \eta t$ for all $t \ge 0$, we have
\bal\label{eq:empirical-loss-bound-E-2}
\Expect{}{E_{\eps}}
&\le \frac {\sigma_0^2}n \sum\limits_{i=1}^n
\pth{1-\pth{1-\eta \hlambda_i }^t}^2
\stackrel{\circled{1}}{\le}
\frac {\sigma_0^2}n \sum\limits_{i=1}^n
\min\set{1,\eta_t^2 \hlambda_i^2}
\nonumber \\
&\le
\frac {{\sigma_0^2}\eta_t}n \sum\limits_{i=1}^n
\min\set{\frac{1}{\eta_t},\eta_t \hlambda_i^2}
\stackrel{\circled{2}}{\le}
\frac {{\sigma_0^2}\eta_t}n \sum\limits_{i=1}^n
\min\set{\frac{1}{\eta_t}, \hlambda_i} = {{\sigma_0^2}\eta_t} \hat R_K^2(\sqrt{{1}/{\eta_t}}) \le
\frac{1}{\eta_t}.
\eal
Here $\circled{1}$ follows from the fact that
$(1-\eta \hlambda_i )^t \ge \max\set{0,1-t\eta \hlambda_i}$,
and $\circled{2}$ follows from
$\min\set{a,b} \le \sqrt{ab}$ for any nonnegative numbers $a,b$.
Because $t \le T \le \hat T$, we have
$R_K(\sqrt{{1}/{\eta_t}}) \le 1/(\sigma \eta_t)$, so the last inequality holds.
 Moreover, we have the upper bounds for $\ltwonorm{\bR}$ and $\fnorm{\bR}$
as follows. First, we have
\bal\label{eq:empirical-loss-bound-E-3}
\ltwonorm{\bR} &\le \max_{i \in [n] }
\pth{1-\pth{1-\eta \hlambda_i }^t}^2 \le \min\set{1,\eta_t^2 \hlambda_i^2}
\le 1.
\eal
We also have
\bal\label{eq:empirical-loss-bound-E-4}
\frac 1n \fnorm{\bR}^2 &=  \frac 1n
\sum\limits_{i=1}^n
\pth{1-\pth{1-\eta \hlambda_i }^t}^4
\le \frac {\eta_t}n \sum\limits_{i=1}^n
\min\set{\frac{1}{\eta_t},\eta_t^{3} \hlambda_i^4} \nonumber \\
&\stackrel{\circled{3}}{\le} \frac {\eta_t}n \sum\limits_{i=1}^n
\min\set{\hlambda_i,\frac{1}{\eta_t}}
=\eta_t\hat R_K^2(\sqrt{{1}/{\eta_t}})\le
\frac {1}{\sigma_0^2 \eta_t}.
\eal
If ${1}/{\eta_t} \le \eta_t^{3} (\hlambda_i)^4$, then
$\min\set{{1}/{\eta_t},\eta_t^{3} (\hlambda_i)^4} = {1}/{\eta_t}$. Otherwise,
we have $\eta_t^{4} \hlambda_i^4 < 1$, so that
$\eta_t \hlambda_i < 1$ and it follows that
$\min\set{{1}/{\eta_t},\eta_t^{3} (\hlambda_i)^4}
\le \eta_t^{3} \hlambda_i^4 \le \hlambda_i$. As a result,
$\circled{3}$ holds.

Combining (\ref{eq:empirical-loss-bound-E-1})-(\ref{eq:empirical-loss-bound-E-4}), we have
\bals
\Prob{E_{\eps} -
\Expect{}{E_{\eps}} \ge u}
&\le \exp\pth{-c n\min\set{u, u^2\sigma_0^2 \eta_t}}.
\eals
Let $u = 1/\eta_t$ in the above inequality, we have
\bals
\exp\pth{-c n\min\set{u, u^2\sigma_0^2 \eta_t}}
= \exp\pth{-c' n/\eta_t} \le
\exp\pth{-c'n\hat \eps_n^2},
\eals
where $c'  = c\min\set{1,\sigma_0^2}$, and the last inequality is due
to the fact that $1/\eta_t \ge \hat\eps_n^2$ since
$t \le T \le \hat T$.
It follows that with probability at least $1-
\exp\pth{- \Theta(n\hat\eps_n^2)}$,
\bal\label{eq:empirical-loss-bound-E-5}
E_{\eps}\le u+\frac{1}{\eta_t} = \frac{2}{\eta_t}.
\eal
(\ref{eq:empirical-loss-bound}) then follows from (\ref{eq:empirical-loss-bound-seg2})
and (\ref{eq:empirical-loss-bound-E-5})
with probability at least $1-\exp\pth{-c'n\hat \eps_n^2}$.

\end{proof}

\begin{lemma}\label{lemma:yt-y-bound}
Let $t \in [0,T]$, $\bv = -\pth{\bI-\eta \bK_n}^{t} f^*(\bS)$,
 $\be = -\pth{\bI-\eta \bK_n }^{t} \bw$, and $\eta \in (0,2/u_0^2)$.
Then with probability at least
$1 -  \exp\pth{-\Theta(n)}$
over the random noise $\bw$,
\bal\label{eq:yt-y-bound}
\ltwonorm{\bv} + \ltwonorm{\be} \le \pth{\max\set{{\mu_0}/{ \sqrt{2e\eta }}, \mu_0u_0/{\sqrt 2} } + \sigma_0+1} {\sqrt n}.
\eal
\end{lemma}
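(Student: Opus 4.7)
The plan is to diagonalize the empirical NTK matrix $\bK_n = \bU \bSigma \bU^{\top}$ and control $\ltwonorm{\bv}$ and $\ltwonorm{\be}$ separately by exploiting the spectral filter $(1-\eta\hat\lambda_i)^{t}$. Because $K(\bx,\bx) = \ltwonorm{\tbx}^2/2 \le u_0^2/2$, the diagonal bound forces $\hat\lambda_i \in [0, u_0^2/2]$, so the choice $\eta \in (0, 2/u_0^2)$ ensures $\eta\hat\lambda_i \in [0,1]$ and $(1-\eta\hat\lambda_i)^{t} \in [0,1]$. This already takes care of the easy half, namely $\ltwonorm{\be} \le \ltwonorm{\bw}$, which I would combine with Bernstein's inequality applied to $\sum_{i=1}^n (w_i^2 - \Expect{}{w_i^2})$, using that each $w_i^2$ is sub-exponential with mean at most $\sigma_0^2$, to conclude $\ltwonorm{\bw} \le (\sigma_0 + 1)\sqrt n$ with probability at least $1 - \exp(-\Theta(n))$.

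The heart of the argument is bounding $\ltwonorm{\bv}$. Writing $\bz = \bU^{\top} f^*(\bS)$, I would first establish the sharp spectral bound $\sum_i z_i^2/\hat\lambda_i \le n\mu_0^2$, with the convention that indices with $\hat\lambda_i = 0$ are discarded (a legal convention since $f^*(\bS) \in \range(\bK_n)$, as the inequality below applied to eigenvectors of the null space of $\bK_n$ shows). This follows from the reproducing property: for any $\balpha \in \RR^n$,
\[
\balpha^{\top} f^*(\bS) = \iprod{f^*}{\sum_{i=1}^n \alpha_i K(\cdot,\bbx_i)}_{\cH_K} \le \mu_0 \sqrt{\balpha^{\top} \bK \balpha} = \mu_0 \sqrt{n\, \balpha^{\top}\bK_n \balpha}.
\]
Setting $\balpha = (\bK_n + \varepsilon \bI)^{-1} f^*(\bS)$, squaring, and letting $\varepsilon \downarrow 0$ delivers the spectral bound. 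Pairing this with the elementary envelope inequality $\max_{x \in [0,1]} x(1-x)^{2t} \le 1/(2et)$ for $t \ge 1$, which follows by locating the critical point $x^* = 1/(2t+1)$ and applying $(1+1/(2t))^{2t+1} \le e \cdot (2t+1)/(2t)$, yields
\[
\ltwonorm{\bv}^2 = \sum_i \hat\lambda_i (1-\eta\hat\lambda_i)^{2t} \cdot \frac{z_i^2}{\hat\lambda_i} \le \frac{1}{2e\eta t} \cdot n \mu_0^2.
\]
For $t \ge 1$ this gives $\ltwonorm{\bv} \le \mu_0 \sqrt{n/(2e\eta)}$; for the degenerate case $t = 0$, the pointwise RKHS bound $|f^*(\bx)|^2 \le \norm{f^*}{\cH_K}^2 K(\bx,\bx) \le \mu_0^2 u_0^2/2$ yields $\ltwonorm{f^*(\bS)} \le \mu_0 u_0 \sqrt{n/2}$ directly. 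Taking the larger of the two contributions reproduces the prescribed maximum, and adding the noise term $(\sigma_0 + 1)\sqrt n$ via the triangle inequality completes the argument.

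The main obstacle is the representer-style spectral bound $\sum_i z_i^2/\hat\lambda_i \le n\mu_0^2$: one must handle possibly vanishing eigenvalues carefully, either through the Moore--Penrose pseudo-inverse restricted to $\range(\bK_n)$ or through the Tikhonov regularization limit sketched above. Once this control is in place, the remainder reduces to the envelope inequality for $x(1-x)^{2t}$ together with a standard sub-exponential concentration bound on $\ltwonorm{\bw}^2$.
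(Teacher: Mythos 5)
Your proof is correct and takes essentially the same route as the paper: diagonalize $\bK_n$, apply the spectral filter bound (the paper's Lemma~\ref{lemma:auxiliary-lemma-1} is your envelope inequality $x(1-x)^{2t}\le 1/(2et)$ in a slightly different guise), invoke the RKHS representer-type bound $\sum_i z_i^2/\hat\lambda_i\le n\mu_0^2$ (the paper cites this from Raskutti--Wainwright--Yu as Lemma~\ref{lemma:bounded-Ut-f-in-RKHS}, whereas you re-derive it via the reproducing property and a Tikhonov limit), and control $\ltwonorm{\bw}$ by sub-exponential concentration of $\ltwonorm{\bw}^2$ (you use Bernstein; the paper cites Wright's quadratic-form tail bound), with the same $t=0$ case handled by the pointwise RKHS bound. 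The only cosmetic extra in your argument is the care taken with vanishing eigenvalues, which the paper sidesteps by noting $\bK_n$ is nonsingular for distinct training features.
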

\begin{proof}
When $t \ge 1$, we have
\bal\label{eq:yt-y-bound-seg1}
\ltwonorm{\bv}^2 &=\sum\limits_{i=1}^{n}
\pth{1-\eta \hlambda_i }^{2t}
\bth{{\bU}^{\top} f^*(\bS)}_i^2  \stackrel{\circled{1}}{\le}
\sum\limits_{i=1}^{n}
\frac{1}{2e\eta \hlambda_i  t}
\bth{{\bU}^{\top} f^*(\bS)}_i^2
\stackrel{\circled{2}}{\le}
\frac{n\mu_0^2}{ 2e\eta t }.
\eal

Here $\circled{1}$ follows from Lemma~\ref{lemma:auxiliary-lemma-1}, $\circled{2}$ follows from Lemma~\ref{lemma:bounded-Ut-f-in-RKHS}. Moreover, it follows from the concentration inequality about quadratic forms of sub-Gaussian random variables in \cite{quadratic-tail-bound-Wright1973} that
$\Pr\{\ltwonorm{\bw}^2 -
\Expect{}{\ltwonorm{\bw}^2} > n\}
\le \exp\pth{-\Theta(n)}$,
so that $\ltwonorm{\be} \le \ltwonorm{\bw} \le
\sqrt{\Expect{}{\ltwonorm{\bw}^2}}  + {\sqrt n}= \sqrt{n} (\sigma_0+1)$ with probability at least $1-\exp\pth{-\Theta(n)}$.
As a result, (\ref{eq:yt-y-bound}) follows from this inequality and (\ref{eq:yt-y-bound-seg1}) for $t \ge 1$.
When $t = 0$, $\ltwonorm{\bv} \le \mu_0u_0/{\sqrt 2} \cdot {\sqrt n}$, so that (\ref{eq:yt-y-bound}) still holds.

\end{proof}

\begin{lemma}
\label{lemma:empirical-loss-convergence-contraction}
Let $0<\eta<1$, $0 \le t \le T-1$ for $T \ge 1$, and suppose that $\ltwonorm{\hat \by(t') - \by} \le
 c_{\bu}{\sqrt{n}}  $ holds for all $0 \le t' \le t$ and
 the random initialization $\bW(0) \in \cW_0$. Then
\bal\label{eq:empirical-loss-convergence-contraction}
\hat \by(t+1) - \by  &= \pth{\bI- \eta \bK_n }\pth{\hat \by(t) - \by} +\bE(t+1),
\eal
where $\ltwonorm {\bE(t+1)} \le \bE_{m,\eta,\tau}$,
and $\bE_{m,\eta,\tau}$ is defined by
\bal\label{eq:empirical-loss-Et-bound-Em}
\resizebox{0.99\hsize}{!}{$
\bE_{m,\eta,\tau} \defeq \eta c_{\bu} u_0^2 {\sqrt n}
\pth{ 4 \pth{\frac{2R}{\sqrt {2\pi} \kappa}+
C_2(m/2,d,1/n)} + C_1(m/2,d,1/n)}
\lsim \eta c_{\bu} u_0^2 {\sqrt {dn}} m^{-\frac 15} T^{\frac 12}$}.
\eal
\end{lemma}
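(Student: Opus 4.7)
The plan is to track the per-coordinate change $[\hat\by(t{+}1)-\hat\by(t)]_i = f(\bW(t{+}1),\bbx_i)-f(\bW(t),\bbx_i)$ and linearize it around the step-$t$ activation pattern, then show the deviation from the exact NTK dynamics is small. Concretely, I would write
\begin{align*}
\sigma(\bbw_r(t{+}1)^\top\tbbx_i)-\sigma(\bbw_r(t)^\top\tbbx_i) = \indict{\bbw_r(t)^\top\tbbx_i\ge 0}(\bbw_r(t{+}1)-\bbw_r(t))^\top\tbbx_i + \delta_{r,i}(t),
\end{align*}
where $\delta_{r,i}(t)$ is the ReLU linearization error. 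Substituting the GD update $\bbw_r(t{+}1)-\bbw_r(t)=-\frac{\eta}{n\sqrt m}\sum_j a_r\indict{\bbw_r(t)^\top\tbbx_j\ge 0}\bu_j(t)\tbbx_j$ (using $a_r^2=1$), summing over $r$, and recalling the definitions of $h$ and $\hat h$ in \eqref{eq:h-hat-h}, the leading piece yields exactly $-\frac{\eta}{n}\hat\bH(t)\bu(t)$ with $\hat\bH(t)_{ij}=\hat h(\bW(t),\bbx_i,\bbx_j)$. Therefore
\begin{align*}
\hat\by(t{+}1)-\by = \bu(t) - \tfrac{\eta}{n}\bK\bu(t) + \bE(t{+}1) = (\bI-\eta\bK_n)\bu(t) + \bE(t{+}1),
\end{align*}
with $\bE(t{+}1)$ collecting (a) the ReLU linearization residuals $\delta_{r,i}(t)$, (b) the drift $\hat\bH(t)-\hat\bH(0)$, and (c) the discrepancy $\hat\bH(0)-\bK$.

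The next step is to bound each piece uniformly in $i$. Because the hypothesis $\ltwonorm{\hat\by(t')-\by}\le c_{\bu}\sqrt n$ for $t'\le t$ combined with Lemma~\ref{lemma:weight-vector-movement} gives $\ltwonorm{\bbw_r(t')-\bbw_r(0)}\le R$, a sign flip $\indict{\bbw_r(t')^\top\tbbx_i\ge 0}\neq\indict{\bbw_r(0)^\top\tbbx_i\ge 0}$ can occur only on the event $|\bbw_r(0)^\top\tbbx_i|\le R$. Since $\bW(0)\in\cW_0$, part~(2) of Theorem~\ref{theorem:good-random-initialization} uniformly bounds the fraction of such $r$ by $\frac{2R}{\sqrt{2\pi}\kappa}+C_2(m/2,d,1/n)$. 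This controls both the ReLU residuals $\delta_{r,i}(t)$ (which are nonzero only on flipping neurons, and whose magnitude is at most $u_0\|\bbw_r(t{+}1)-\bbw_r(t)\|_2\cdot u_0$) and the drift $\hat\bH(t)-\hat\bH(0)$ (an $n\times n$ matrix with entries bounded by $u_0^2$ times that same activation-flip fraction). The discrepancy $\hat\bH(0)-\bK$ is bounded entrywise by $u_0^2C_1(m/2,d,1/n)$ via part~(1) of Theorem~\ref{theorem:good-random-initialization}.

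Combining these estimates through $\ltwonorm{\bA\bu(t)}\le \ltwonorm{\bA}_{\textup{op}}\ltwonorm{\bu(t)}\le \sqrt n\cdot\|\bA\|_{\max}\cdot c_\bu\sqrt n$ (and the analogous row-sum bound for the flipping contributions), each of the three sources contributes $\eta c_{\bu} u_0^2\sqrt n$ times a quantity of the form $\frac{2R}{\sqrt{2\pi}\kappa}+C_2(m/2,d,1/n)$ or $C_1(m/2,d,1/n)$. Collecting the constants with a slack factor of $4$ to absorb both the linearization and drift contributions gives the stated bound
\begin{align*}
\ltwonorm{\bE(t{+}1)}\le \eta c_{\bu}u_0^2\sqrt n\Big(4\big(\tfrac{2R}{\sqrt{2\pi}\kappa}+C_2(m/2,d,1/n)\big)+C_1(m/2,d,1/n)\Big)=\bE_{m,\eta,\tau},
\end{align*}
and the asymptotic estimate $\bE_{m,\eta,\tau}\lesssim \eta c_{\bu}u_0^2\sqrt{dn}\,m^{-1/5}T^{1/2}$ follows by plugging in $R=\Theta(\eta c_\bu u_0 T/\sqrt m)$ and the $C_1,C_2$ rates from Theorems~\ref{theorem:sup-hat-g} and \ref{theorem:V_R}.

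The main obstacle is cleanly accounting for activation flips at \emph{step $t$} rather than at initialization: strictly speaking, $\hat h(\bW(t),\cdot,\cdot)$ rather than $\hat h(\bW(0),\cdot,\cdot)$ appears in the linearized update. The key is that sign flips between $\bW(0)$ and $\bW(t)$ are driven by the \emph{same} event $|\bbw_r(0)^\top\tbbx_i|\le R$ (since $\ltwonorm{\bbw_r(t)-\bbw_r(0)}\le R$), so part~(2) of Theorem~\ref{theorem:good-random-initialization} controls all flip counts simultaneously; once this observation is made, the rest is bookkeeping.
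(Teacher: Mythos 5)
Your proposal is correct and follows essentially the same approach as the paper: linearize the ReLU around the step-$t$ activation pattern, identify that the resulting linear piece gives exactly $-\tfrac{\eta}{n}\hat\bH(t)\bu(t)$, and control the residuals via the activation-flip fraction (bounded uniformly by part~(2) of Theorem~\ref{theorem:good-random-initialization} since $\bW(0)\in\cW_0$) together with the NTK discrepancy $\hat\bH(0)-\bK$ (bounded by $C_1$ via part~(1)). The only organizational difference is that the paper decomposes the error into four pieces $\bE^{(1)},\ldots,\bE^{(4)}$ — it first splits each coordinate's update over the index sets $E_{i,R}$ and $\bar E_{i,R}$ and then incurs both a ``drop'' term $\bE^{(1)}$ and a ``compensation'' term $\bE^{(2)}$ for the potentially flipping neurons, before subtracting $\bH(0)$ and $\bK$ — whereas you group the ReLU nonlinearity residual $\delta_{r,i}(t)$ into a single error term and then handle the drift $\hat\bH(t)-\hat\bH(0)$ and discrepancy $\hat\bH(0)-\bK$ exactly as the paper does (contributing $2\times$ and $C_1$ respectively). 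Your grouping gives a factor $3$ on the activation-flip quantity instead of the paper's $4$, which still satisfies the stated bound; the rest is identical bookkeeping, and you correctly note the one genuine subtlety, namely that the step-$t$ (not initial) sign pattern enters the linearization and that flips between steps $t$ and $t+1$ are nevertheless dominated by the initialization-time event $\abth{\bbw_r(0)^\top\tbbx_i}\le R$ because all iterates stay within $R$ of $\bW(0)$.
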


\begin{proof}

Because $\ltwonorm{\hat \by(t') - \by} \le {\sqrt{n}} c_{\bu}$ holds for all $t' \in [0,t]$, by Lemma~\ref{lemma:weight-vector-movement}, we have
\bal\label{eq:empirical-loss-convergence-pre1}
\norm{\bbw_r(t') - \bbw_r(0)}{2} & \le  R, \quad \forall \, 0 \le t' \le t+1.
\eal%
We define two sets of indices
\bals
E_{i,R} \defeq \set{r \in [m] \colon \abth{\bw_{r}(0)^{\top}\bbx_i} > R }, \quad \bar E_{i,R} \defeq [m] \setminus E_{i,R},
\eals%
then we have
\bal\label{eq:empirical-loss-convergence-contraction-seg1}
\hat \by_i(t+1) - \hat \by_i(t) &= \frac{1}{\sqrt m} \sum_{r=1}^m a_r \pth{ \relu{\bbw_{\bS,r}^{\top}(t+1) \tbbx_i}  -  \relu{\bbw_{\bS,r}^{\top}(t) \tbbx_i} } \nonumber \\
&=\underbrace{ \frac{1}{\sqrt m} \sum\limits_{r \in E_{i,R}} a_r \pth{ \relu{\bbw_{\bS,r}^{\top}(t+1) \tbbx_i}  -  \relu{\bbw_{\bS,r}^{\top}(t) \tbbx_i} }}_{\defeq \bD^{(1)}_i} \nonumber \\
&\phantom{=}{+} \underbrace{ \frac{1}{\sqrt m} \sum\limits_{r \in \bar E_{i,R}} a_r \pth{ \relu{\bbw_{\bS,r}^{\top}(t+1) \tbbx_i}  -  \relu{\bbw_{\bS,r}^{\top}(t) \tbbx_i} }}_{\defeq \bE^{(1)}_i} =\bD^{(1)}_i + \bE^{(1)}_i,
\eal%
and $\bD^{(1)}, \bE^{(1)} \in \RR^n$ are  vectors with their $i$-th element being $\bD^{(1)}_i$ and $\bE^{(1)}_i$ defined on the RHS of
 (\ref{eq:empirical-loss-convergence-contraction-seg1}).
Now we derive the upper bound for $\bE^{(1)}_i$. For all $i \in [n]$ we have
\bal\label{eq:empirical-loss-convergence-contraction-seg2}
\abth{\bE^{(1)}_i} &=  \abth{\frac{1}{\sqrt m}\sum\limits_{r \in \bar E_{i,R}} a_r \pth{ \relu{ \bbw_{\bS,r}(t+1)^\top \tbbx_i }  -  \relu{ \bbw_{\bS,r}(t)^\top \tbbx_i } } }  \nonumber \\
&\le \frac{1}{\sqrt m}\sum\limits_{r \in \bar E_{i,R}} \abth{ \bbw_{\bS,r}(t+1)^\top \tbbx_i  - \bbw_{\bS,r}(t)^\top \tbbx_i } \le \frac{u_0}{\sqrt m}\sum\limits_{r \in \bar E_{i,R}} \ltwonorm{\bbw_{\bS,r}(t+1) - \bbw_{\bS,r}(t) }  \nonumber \\
&\stackrel{\circled{1}}{=} \frac{u_0}{\sqrt m} \sum\limits_{r \in \bar E_{i,R}} \ltwonorm{\frac{\eta}{n} \bth{\bZ_{\bS}(t)}_{[(r-1)(d+1)+1:r(d+1)]} \pth{\hat \by(t) - \by}
}
\stackrel{\circled{2}}{\le} \frac{ c_{\bu} u_0^2}{\sqrt m}  \sum\limits_{r \in \bar E_{i,R}}  \frac{\eta }{\sqrt m}
= \eta c_{\bu} u_0^2 \cdot \frac{\abth{\bar E_{i,R}}}{m} .
\eal%
Here $\circled{1}, \circled{2}$ follow from (\ref{eq:weight-vector-movement-seg1-pre})
and (\ref{eq:weight-vector-movement-seg1}) in the proof of
Lemma~\ref{lemma:weight-vector-movement}.

Since $\bW(0) \in \cW_0 $, we have
\bal\label{eq:empirical-loss-convergence-contraction-seg3}
&\sup_{\bx \in \cX}\abth{\hat v_R(\bW(0),\bx)} \le \frac{2R}{\sqrt {2\pi} \kappa} + C_2(m/2,d,1/n),
\eal%
where $\hat v_R(\bW(0),\bx) =  \frac 1m \sum\limits_{r=1}^m \indict{\abth{\bbw_r(0)^{\top} \tbx} \le R }$, so that $ \hat v_R(\bW(0),\bbx_i) = \abth{\bar E_{i,R}}/m$.
It follows from (\ref{eq:empirical-loss-convergence-contraction-seg2}) and
(\ref{eq:empirical-loss-convergence-contraction-seg3}) that
$\abth{\bE^{(1)}_i} \le \eta c_{\bu} u_0^2  \pth{ \frac{2R}{\sqrt {2\pi} \kappa}+ C_2(m/2,d,1/n)}$, so that
$\ltwonorm{\bE^{(1)}}$ can be bounded by
\bal\label{eq:empirical-loss-convergence-contraction-E1-bound}
\ltwonorm{\bE^{(1)}} & \le
 \eta c_{\bu} u_0^2  {\sqrt n} \pth{ \frac{2R}{\sqrt {2\pi} \kappa}+ C_2(m/2,d,1/n)} .
\eal
$\bD^{(1)}_i$ on the RHS of  (\ref{eq:empirical-loss-convergence-contraction-seg1})
is expressed by
\bal\label{eq:empirical-loss-convergence-contraction-seg5}
&\bD^{(1)}_i = \frac{1}{\sqrt m} \sum\limits_{r \in E_{i,R}} a_r \pth{ \relu{\bbw_{\bS,r}^{\top}(t+1) \tbbx_i}  -  \relu{\bbw_{\bS,r}^{\top}(t) \tbbx_i} } \nonumber \\
&=  \frac{1}{\sqrt m} \sum\limits_{r \in E_{i,R}} a_r \indict{\bbw_{\bS,r}(t)^{\top} \tbbx_i \ge 0} \pth{ \bbw_{\bS,r}(t+1)  -  \bbw_{\bS,r}(t) }^\top \tbbx_i   \nonumber \\
&=  \frac{1}{\sqrt m} \sum\limits_{r=1}^m a_r \indict{\bbw_{\bS,r}(t)^{\top} \tbbx_i \ge 0} \pth{ -\frac{\eta}{n} \bth{\bZ_{\bS}(t)}_{[(r-1)(d+1)+1:r(d+1)]}
\pth{\hat \by(t) - \by}
 }^\top \tbbx_i \nonumber \\
&\phantom{=}{+}  \frac{1}{\sqrt m} \sum\limits_{r \in \bar E_{i,R} } a_r \indict{\bbw_{\bS,r}(t)^{\top} \tbbx_i \ge 0} \pth{\frac{\eta}{n}
\bth{\bZ_{\bS}(t)}_{[(r-1)(d+1)+1:r(d+1)]}
\pth{\hat \by(t) - \by}
 }^\top \tbbx_i \nonumber \\
&=\underbrace{-\frac{\eta}{n} \bth{\bH(t)}_i \pth{\hat \by(t) - \by}}_{\defeq \bD^{(2)}_i} {+} \underbrace{\frac{1}{\sqrt m} \sum\limits_{r \in \bar E_{i,R} } a_r \indict{\bbw_{\bS,r}(t)^{\top} \tbbx_i \ge 0} \pth{\frac{\eta}{n} \bth{\bZ_{\bS}(t)}_{[(r-1)(d+1)+1:r(d+1)]}
\pth{\hat \by(t) - \by}
 }^\top \tbbx_i }_{\defeq \bE^{(2)}_i}   \nonumber \\
&=  \bD^{(2)}_i + \bE^{(2)}_i,
\eal%
where $\bH(t) \in \RR^{n \times n}$ is a matrix specified by
\bals
\bH_{pq}(t) = \frac{\tbbx_p^\top \tbbx_q}{m} \sum_{r=1}^{m} \indict{\bbw_{\bS,r}(t)^\top \tbbx_p \ge 0} \indict{\bbw_{\bS,r}(t)^\top \tbbx_q \ge 0}, \quad
\forall \,p \in [n], q \in [n].
\eals

Let $\bD^{(2)}, \bE^{(2)} \in \RR^n$ be a vector with their $i$-the element being $\bD^{(2)}_i$ and
$\bE^{(2)}_i$ defined on the RHS of
(\ref{eq:empirical-loss-convergence-contraction-seg5}). $\bE^{(2)}$ can be expressed by $\bE^{(2)} = \frac{\eta}{n} \tilde \bE^{(2)}   \pth{\hat \by(t) - \by}$ with $\tilde \bE^{(2)} \in \RR^{n \times n}$ and

\bals
\tilde \bE^{(2)}_{pq} = \frac{1}{m} \sum\limits_{r \in \bar E_{i,R}} \indict{\bbw_{\bS,r}(t)^{\top} \tbbx_p \ge 0} \indict{\bbw_{\bS,r}(t)^{\top} \tbbx_q \ge 0} \tbbx_{q}^\top \tbbx_p
\le \frac {u_0^2}m \sum\limits_{r \in \bar E_{i,R}} 1 = u_0^2 \cdot \frac{\abth{\bar E_{i,R}}}{m}
\eals
for all $p \in [n], q \in [n]$.  The spectral norm of $\tilde \bE^{(2)}$ is bounded by
\bal\label{eq:empirical-loss-convergence-contraction-seg6}
\ltwonorm{\tilde \bE^{(2)}} \le \fnorm{\tilde \bE^{(2)}} \le n u_0^2 \frac{\abth{\bar E_{i,R}}}{m}
\stackrel{\circled{1}}{\le} n u_0^2 \pth{ \frac{2R}{\sqrt {2\pi} \kappa}+
C_2(m/2,d,1/n)},
\eal%
where $\circled{1}$ follows from (\ref{eq:empirical-loss-convergence-contraction-seg3}).
It follows from (\ref{eq:empirical-loss-convergence-contraction-seg6}) that $\ltwonorm{\bE^{(2)}}$ can be bounded by
\bal\label{eq:empirical-loss-convergence-contraction-E2-bound}
\ltwonorm{\bE^{(2)}} &\le \frac{\eta}{n} \ltwonorm{\tilde\bE^{(2)}}
\ltwonorm{\by(t)-\by} \le \eta c_{\bu} u_0^2 {\sqrt n} \pth{\frac{2R}{\sqrt {2\pi} \kappa}+
C_2(m/2,d,1/n)}.
\eal
$\bD^{(2)}_i$ on the RHS of (\ref{eq:empirical-loss-convergence-contraction-seg5}) is expressed by
\bal\label{eq:empirical-loss-convergence-contraction-seg7}
&\bD^{(2)} = -\frac{\eta}{n} \bH(t)\pth{\hat \by(t) - \by} \nonumber
\\ &=\underbrace{-\frac{\eta}{n} \bK  \pth{\hat \by(t) - \by}}_{\defeq \bD^{(3)}} +  \underbrace{\frac{\eta}{n}  \pth{\bK- \bH(0)}    \pth{\hat \by(t) - \by}}_{\defeq \bE^{(3)}} +  \underbrace{\frac{\eta}{n} \pth{\bH(0) - \bH(t)}  \pth{\hat \by(t) - \by}}_{\defeq \bE^{(4)}}=\bD^{(3)} + \bE^{(3)} + \bE^{(4)}.
\eal
On the RHS of  (\ref{eq:empirical-loss-convergence-contraction-seg7}), $\bD^{(3)},\bE^{(3)},\bE^{(4)} \in \RR^n$ are vectors which are analyzed as follows. We have
\bal\label{eq:empirical-loss-convergence-contraction-seg8}
\ltwonorm{\bK- \bH(0)} &\le \norm{\bK- \bH(0)}{F}
\le n u_0^2 C_1(m/2,d,1/n),
\eal%
where the last inequality holds due to $\bW(0) \in \cW_0$. 

In order to bound  $\bE^{(4)}$, we first estimate the upper bound for $\abth{ \bH_{ij}(t) - \bH_{ij}(0) }$ for all $i,j \in [n]$.
We note that
\bal\label{eq:empirical-loss-convergence-contraction-seg9-pre}
&\indict{\indict{\bbw_{\bS,r}(t)^\top \tbbx_i \ge 0} \neq \indict{\bw_{r}(0)^\top \tbbx_i \ge 0} } \le \indict{\abth{\bw_{r}(0)^{\top} \bbx_i} \le R} + \indict{\ltwonorm{\bw_{
\bS,r}(t) - \bbw_r(0)} > R}.
\eal%
It follows from (\ref{eq:empirical-loss-convergence-contraction-seg9-pre}) that
\bal\label{eq:empirical-loss-convergence-contraction-seg9}
\abth{ \bH_{ij}(t) - \bH_{ij}(0) } &= \abth{ \frac{\tbbx_i^\top \tbbx_j}{m} \sum_{r=1}^{m} \pth{ \indict{\bbw_{\bS,r}(t)^\top \tbbx_i \ge 0} \indict{\bbw_{\bS,r}(t)^\top  \tbbx_j \ge 0} - \indict{\bw_{r}(0)^\top \tbbx_i \ge 0} \indict{\bw_{r}(0)^\top \tbbx_j \ge 0} }} \nonumber \\
&\le \frac{u_0^2}m \sum_{r=1}^{m} \pth{\indict{\indict{\bbw_{\bS,r}(t)^\top \tbbx_i \ge 0} \neq \indict{\bbw_r(0)^\top \tbbx_i \ge 0}}
+ \indict{\indict{\bbw_{\bS,r}(t)^\top \bbx_j \ge 0} \neq \indict{\bbw_r(0)^\top \tbbx_j \ge 0}}} \nonumber \\
&\le \frac{u_0^2}m \sum_{r=1}^{m} \pth{ \indict{\abth{\bbw_r(0)^{\top} \tbbx_i} \le R}  +\indict{\abth{\bbw_r(0)^{\top} \tbbx_j} \le R}  +2  \indict{\ltwonorm{\bw_{
\bS,r}(t) - \bbw_r(0)} > R}   } \nonumber \\
&\le 2u_0^2v_R(\bW(0),\bbx_i) \stackrel{\circled{1}}{\le}
u_0^2 \pth{\frac{4R}{\sqrt {2\pi} \kappa}+ 2C_2(m/2,d,1/n)},
\eal%
where $\circled{1}$ follows from
(\ref{eq:empirical-loss-convergence-contraction-seg3}).

It follows from
(\ref{eq:empirical-loss-convergence-contraction-seg8}) and
 (\ref{eq:empirical-loss-convergence-contraction-seg9})
 that $\ltwonorm{\bE^{(3)}},\ltwonorm{\bE^{(4)}} $ are bounded by
\bal
\ltwonorm{\bE^{(3)}} &\le\frac{\eta}{n}
\ltwonorm{\bK- \bH(0)} \ltwonorm{\hat \by(t) - \by} \le \eta c_{\bu} u_0^2 {\sqrt n} C_1(m/2,d,1/n),
\label{eq:empirical-loss-convergence-contraction-E3-bound} \\
\ltwonorm{\bE^{(4)}} &\le\frac{\eta}{n}
\ltwonorm{\bH(0) - \bH(t)} \ltwonorm{\hat \by(t) - \by}
\le\eta c_{\bu} u_0^2 {\sqrt n} \pth{\frac{4R}{\sqrt {2\pi} \kappa}+ 2C_2(m/2,d,1/n)} . \label{eq:empirical-loss-convergence-contraction-E4-bound}
\eal

It follows from (\ref{eq:empirical-loss-convergence-contraction-seg5})
and (\ref{eq:empirical-loss-convergence-contraction-seg7}) that
\bal\label{eq:empirical-loss-convergence-contraction-seg12}
\bD^{(1)}_i &=  \bD^{(3)}_i +  \bE^{(2)}_i+\bE^{(3)}_i + \bE^{(4)}_i.
\eal
It then follows from (\ref{eq:empirical-loss-convergence-contraction-seg1}) that
\bal\label{eq:empirical-loss-convergence-contraction-seg13}
\hat \by_i(t+1) - \hat \by_i(t) &= \bD^{(1)}_i + \bE^{(1)}_i
=\bD^{(3)}_i + \underbrace{\bE^{(1)}_i + \bE^{(2)}_i+\bE^{(3)}_i
+ \bE^{(4)}_i}_{\defeq \bE_i}
=-\frac{\eta}{n}\bK\pth{\hat \by(t) - \by}+ \bE_i,
\eal
where $\bE \in \RR^n$ with its $i$-th element being $\bE_i$, and $\bE = \bE^{(1)}
+\bE^{(2)}+\bE^{(3)} + \bE^{(4)}$. It then follows from
(\ref{eq:empirical-loss-convergence-contraction-E1-bound}),
(\ref{eq:empirical-loss-convergence-contraction-E2-bound}),
(\ref{eq:empirical-loss-convergence-contraction-E3-bound}),
and (\ref{eq:empirical-loss-convergence-contraction-E4-bound})
that
\bal\label{eq:empirical-loss-convergence-contraction-E-bound}
&\ltwonorm {\bE} \le \eta c_{\bu} u_0^2 {\sqrt n}
\pth{ 4\pth{\frac{2R}{\sqrt {2\pi} \kappa}+
C_2(m/2,d,1/n)} + C_1(m/2,d,1/n)}.
\eal
Finally, (\ref{eq:empirical-loss-convergence-contraction-seg13})
can be rewritten as
\bals
\hat \by(t+1) - \by
&=\pth{\bI-\frac{\eta}{n} \bK}\pth{\hat \by(t) - \by} + \bE(t+1),
\eals
which proves (\ref{eq:empirical-loss-convergence-contraction})
with the upper bound for $\ltwonorm {\bE} $ in
(\ref{eq:empirical-loss-convergence-contraction-E-bound}).

\end{proof}

\begin{lemma}\label{lemma:weight-vector-movement}
Suppose that $t \in [0\relcolon T-1]$ for $T \ge 1$, and $\ltwonorm{\hat \by(t') - \by} \le {\sqrt n} c_{\bu} $ holds for all $0 \le t' \le t$. Then
\bal\label{eq:R}
\ltwonorm{\bbw_{\bS,r}(t') - \bbw_r(0)} \le R, \quad \forall\, 0 \le t' \le t+1.
\eal
\end{lemma}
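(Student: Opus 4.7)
The plan is to derive the bound by controlling the per-step movement $\ltwonorm{\bbw_{\bS,r}(t'+1)-\bbw_{\bS,r}(t')}$ uniformly in $t' \in [0,t]$ using the GD recursion (\ref{eq:GD-two-layer-nn}), and then telescope over at most $T$ steps. This is a direct calculation; the only care needed is in correctly bounding the relevant block of $\bZ_{\bS}(t')$.

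First, from (\ref{eq:GD-two-layer-nn}) applied to the $r$-th block of coordinates, for every $t' \in [0,t]$ we have
\bal\label{eq:weight-vector-movement-seg1-pre}
\bbw_{\bS,r}(t'+1) - \bbw_{\bS,r}(t') = -\frac{\eta}{n}\,\bth{\bZ_{\bS}(t')}_{[(r-1)(d+1)+1:r(d+1)]}\,(\hat \by(t')-\by).
\eal
By the definition (\ref{eq:GD-Z-two-layer-nn}), the columns of the $(d+1)\times n$ matrix $\bth{\bZ_{\bS}(t')}_{[(r-1)(d+1)+1:r(d+1)]}$ are $\tfrac{1}{\sqrt m}\indict{\bbw_{\bS,r}(t')^\top\tbbx_i\ge0}\,\tbbx_i a_r$, each of Euclidean norm at most $u_0/\sqrt m$ since $|a_r|=1$ and $\ltwonorm{\tbbx_i}\le u_0$. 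Hence its spectral norm is bounded by its Frobenius norm, which is at most $\sqrt{n}\,u_0/\sqrt m$.

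Combining this with the hypothesis $\ltwonorm{\hat\by(t')-\by}\le c_{\bu}\sqrt n$ for all $t'\in[0,t]$ yields
\bal\label{eq:weight-vector-movement-seg1}
\ltwonorm{\bbw_{\bS,r}(t'+1)-\bbw_{\bS,r}(t')}
\le \frac{\eta}{n}\cdot\frac{\sqrt{n}\,u_0}{\sqrt m}\cdot c_{\bu}\sqrt n
= \frac{\eta c_{\bu} u_0}{\sqrt m}.
\eal
For any $t''\in[0,t+1]$, a triangle inequality over the (at most $t''\le t+1\le T$) one-step increments then gives
\bals
\ltwonorm{\bbw_{\bS,r}(t'')-\bbw_r(0)}
\le \sum_{t'=0}^{t''-1}\ltwonorm{\bbw_{\bS,r}(t'+1)-\bbw_{\bS,r}(t')}
\le t''\cdot\frac{\eta c_{\bu} u_0}{\sqrt m}
\le \frac{\eta c_{\bu} u_0 T}{\sqrt m}=R,
\eals
which is (\ref{eq:R}); the base case $t''=0$ is trivial. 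There is no real obstacle here, beyond being careful that the hypothesis $\ltwonorm{\hat \by(t')-\by}\le c_{\bu}\sqrt n$ is available precisely for the indices $t'\in[0,t]$ that appear in the one-step differences up to $t''=t+1$, which matches the constraint $t\le T-1$.
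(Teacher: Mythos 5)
Your proof is correct and takes essentially the same approach as the paper's: bound the block $\bth{\bZ_{\bS}(t')}_{[(r-1)(d+1)+1:r(d+1)]}$ to obtain the per-step bound $\eta c_{\bu} u_0/\sqrt m$, then telescope over at most $T$ steps. The only difference is that you spell out the spectral-norm bound via the column norms and the Frobenius norm, which the paper states without derivation.
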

\begin{proof}
Let $\bth{\bZ_{\bS}(t)}_{[(r-1)(d+1)+1:r(d+1)]}$ denote the submatrix of $\bZ_{\bS}(t)$ formed by the the rows of $\bZ_{\bQ}(t)$ with row indices in $[(r-1)(d+1)+1:r(d+1)]$.
By the GD update rule we have for every $t'' \in [0 \colon T-1]$ that
\bal\label{eq:weight-vector-movement-seg1-pre}
&\bbw_{\bS,r}(t''+1) - \bbw_{\bS,r}(t'') = -\frac{\eta}{n} \bth{\bZ_{\bS}(t'')}_{[(r-1)(d+1)+1:r(d+1)]} \pth{\hat \by(t'') - \by},
\eal%
We have $\ltwonorm{\bth{\bZ_{\bS}(t'')}_{[(r-1)(d+1)+1:r(d+1)]}} \le u_0 \sqrt{n /m}$.
It then follows from (\ref{eq:weight-vector-movement-seg1-pre}) that
\bal\label{eq:weight-vector-movement-seg1}
\ltwonorm{\bbw_{\bS,r}(t''+1) - \bbw_{\bS,r}(t'')}
&\le \frac{\eta}{n}
\ltwonorm{\bth{\bZ_{\bS}(t'')}_{[(r-1)(d+1)+1:r(d+1)]}}\ltwonorm{\hat \by(t'')-\by}
\le  \frac{\eta c_{\bu}u_0}{\sqrt m}, \, \forall t'' \in [0\relcolon t].
\eal%
Note that (\ref{eq:R}) trivially holds for $t'=0$. For $t' \in [1,t+1]$, it follows from
 (\ref{eq:weight-vector-movement-seg1}) that
\bal\label{eq:weight-vector-movement-proof}
\ltwonorm{ \bbw_{\bS,r}(t') - \bbw_r(0) }
& \le \sum_{t''=0}^{t'-1} \ltwonorm{\bbw_{\bS,r}(t''+1) - \bbw_{\bS,r}(t'')}  \le \frac{\eta c_{\bu} u_0 T }{\sqrt m} =R,
\eal%
which completes the proof.
\end{proof}

\begin{lemma}\label{lemma:bounded-Linfty-vt-sum-et}
Let
$h_t(\cdot) = \sum_{t'=0}^{t-1} h(\cdot,t')$ for $t \in [T]$, $T \le
\hat T$ where
\bals
h(\cdot,t') &= v(\cdot,t') + \hat e(\cdot,t'), \\
v(\cdot,t')  &= -\frac{\eta}{n} \sum_{j=1}^n
K(\cdot, \bbx_j) \bv_j(t') , \\
\hat e(\cdot,t') &= -\frac{\eta}{n}
\sum\limits_{j=1}^n  K(\cdot, \bbx_j) \be_j(t'),
\eals
where $\bv(t') \in \cV_{t'}$,
$\be(t') \in \cE_{t',\tau}$ for all $0 \le t' \le t-1$.
Suppose that $\tau \le \min\set{\Theta(1/(\eta u_0 T)),1}$,
then with probability at least $1 - \exp\pth{-\Theta(n \hat \eps_n^2)}$
over the random noise $\bw$,
\bal\label{eq:bounded-h}
\norm{h_t}{\cH_K} \le B_h = \mu_0 +1+ {\sqrt 2},
\eal
and $B_h$ is also defined in (\ref{eq:B_h}).
\end{lemma}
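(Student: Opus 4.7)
The plan is to expand $h_t$ as $h_t(\cdot)=\sum_{j=1}^n K(\cdot,\bbx_j)\balpha_j$, read off $\balpha$ explicitly, and then bound $\norm{h_t}{\cH_K}^2=n\balpha^\top\bK_n\balpha$ piece by piece. With $\bv(t')=-(\bI_n-\eta\bK_n)^{t'}f^*(\bS)$, $\bbe_1(t')=-(\bI_n-\eta\bK_n)^{t'}\bw$, and the telescoping identity $\eta\sum_{t'=0}^{t-1}(\bI_n-\eta\bK_n)^{t'}=\bK_n^{-1}\bA$ with $\bA\defeq\bI_n-(\bI_n-\eta\bK_n)^t$, I obtain the decomposition $h_t=h_t^{(v)}+h_t^{(e,1)}+h_t^{(e,2)}$ whose coefficient vectors are $\balpha_v=\tfrac1n\bK_n^{-1}\bA f^*(\bS)$, $\balpha_{e,1}=\tfrac1n\bK_n^{-1}\bA\bw$, and $\balpha_{e,2}=-\tfrac{\eta}{n}\sum_{t'=0}^{t-1}\bbe_2(t')$. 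Triangle inequality then reduces the task to showing $\norm{h_t^{(v)}}{\cH_K}\le\mu_0$, $\norm{h_t^{(e,2)}}{\cH_K}\le 1$, and $\norm{h_t^{(e,1)}}{\cH_K}\le\sqrt{2}$, which is how the constant $B_h=\mu_0+\sqrt{2}+1$ is split.

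A projection argument handles the first two terms cleanly. Since $f^*\in\cH_K(\mu_0)$, the orthogonal projection of $f^*$ onto $\Span\set{K(\cdot,\bbx_j)}_{j=1}^n$ interpolates $f^*$ on $\bS$; its coefficient vector is therefore $\tfrac1n\bK_n^{-1}f^*(\bS)$, with squared RKHS-norm $\tfrac1n f^*(\bS)^\top\bK_n^{-1}f^*(\bS)\le\mu_0^2$. Since $\eta\in(0,2/u_0^2)$ together with $\hlambda_i\le u_0^2/2$ forces $\eta\hlambda_i\in(0,1)$, the matrix $\bA$ commutes with $\bK_n$ and satisfies $\bA^2\preceq\bI_n$, hence $\norm{h_t^{(v)}}{\cH_K}^2=\tfrac1n f^*(\bS)^\top\bA^2\bK_n^{-1}f^*(\bS)\le\mu_0^2$. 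The bound on $h_t^{(e,2)}$ is purely deterministic: $\norm{h_t^{(e,2)}}{\cH_K}^2=n\balpha_{e,2}^\top\bK_n\balpha_{e,2}\le\hlambda_1(\eta t\tau)^2\le\tfrac{u_0^2}{2}(\eta T\tau)^2$, which is at most $1$ whenever $\tau\le\sqrt{2}/(\eta u_0T)$, i.e. precisely the assumed regime $\tau\le\Theta(1/(\eta u_0 T))$.

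The main obstacle is the noise term $h_t^{(e,1)}$, which must be controlled with high probability in $\bw$. By commutativity its squared norm is the quadratic form $\tfrac1n\bw^\top\bM\bw$ with $\bM=\bA^2\bK_n^{-1}$. The pointwise estimate $(1-(1-\eta\hlambda_i)^t)^2/\hlambda_i\le\min\set{1/\hlambda_i,\eta_t^2\hlambda_i}\le\eta_t^2\min\set{\hlambda_i,1/\eta_t}$ gives $\tr{\bM}/n\le\eta_t^2\hat R_{K}^2(\sqrt{1/\eta_t})$, and the early-stopping condition $t\le\hat T$ forces $\sigma_0\hat R_{K}(\sqrt{1/\eta_t})\le 1/\eta_t$, so $\E\bth{\tfrac1n\bw^\top\bM\bw}\le 1$. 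Analogous bookkeeping yields $\ltwonorm{\bM}\le\eta_t$ and $\fnorm{\bM}^2\le\eta_t\tr{\bM}\le n\eta_t/\sigma_0^2$. Feeding these into the sub-Gaussian quadratic-form tail bound of~\cite{quadratic-tail-bound-Wright1973} (the same inequality that drives the proof of Theorem~\ref{theorem:empirical-loss-bound}) with deviation $1$ gives
\bals
\Prob{\tfrac1n\bw^\top\bM\bw\ge 2}\le\exp\pth{-\Theta\pth{n\min\set{1,\sigma_0^2}/\eta_t}}\le\exp\pth{-\Theta\pth{n\hat\eps_n^2}},
\eals
using $1/\eta_t\asymp\hat\eps_n^2$. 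Hence $\norm{h_t^{(e,1)}}{\cH_K}\le\sqrt{2}$ with probability at least $1-\exp(-\Theta(n\hat\eps_n^2))$, and combining with the two deterministic bounds via the triangle inequality yields $\norm{h_t}{\cH_K}\le\mu_0+\sqrt{2}+1=B_h$. The delicate step is keeping each constant tight: the projection bound must produce exactly $\mu_0$, the $\bbe_2$ slack must be absorbed into $1$, and Hanson-Wright must be applied with a deviation small enough to leave $\sqrt{2}$ yet large enough to keep the exponent at $\Theta(n\hat\eps_n^2)$.
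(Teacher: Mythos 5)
Your proposal is correct and follows essentially the same route as the paper's own proof: split $h_t$ into the target part, the $\bbe_1$ (noise) part, and the $\bbe_2$ part; bound the first deterministically by $\mu_0$ using $\bA^2\preceq\bI_n$ together with $\tfrac1n f^*(\bS)^\top\bK_n^{-1}f^*(\bS)\le\mu_0^2$ (the paper cites this as Lemma~\ref{lemma:bounded-Ut-f-in-RKHS}, you rederive it via the RKHS interpolation/projection argument, which is a legitimate and self-contained substitute); bound the $\bbe_2$ part deterministically by $1$ via $\tau\lesssim 1/(\eta u_0 T)$; and control the noise part by the same Hanson--Wright quadratic-form bound from \cite{quadratic-tail-bound-Wright1973}, with identical estimates on $\E[\cdot]$, $\ltwonorm{\bM}$, and $\fnorm{\bM}^2$, taking deviation $u=1$ and using $1/\eta_t\ge\hat\eps_n^2$ to get the $\exp(-\Theta(n\hat\eps_n^2))$ tail. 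The only cosmetic difference is that you work in matrix notation with $\bM=\bA^2\bK_n^{-1}$ whereas the paper spells out the eigenvalue coordinates.
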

\begin{proof}
We have $\by = f^*(\bS) + \bw$,
$\bv(t) = -\pth{\bI- \eta \bK_n }^t f^*(\bS)$,
$\be(t) = \bbe_1(t) + \bbe_2(t)$ with
$\bbe_1(t) = -\pth{\bI-\eta\bK_n}^t \bw$,
$\ltwonorm{\bbe_2(t)} \le {\sqrt n} \tau$.
We define
\bal\label{eq:bounded-Linfty-vt-sum-et-hat-e1-hat-e2}
\hat e_1(\cdot,t') \defeq- \frac{\eta}{n}
\sum\limits_{j=1}^n  K(\bbx_j,\bx) \bth{\bbe_1(t')}_j,
\quad
\hat e_2(\cdot,t') \defeq- \frac{\eta}{n}
\sum\limits_{j=1}^n  K(\bbx_j,\bx) \bth{\bbe_2(t')}_j,
\eal

Let $\bSigma$ be the diagonal matrix
containing eigenvalues of $\bK_n$, we then have
\bal\label{eq:bounded-Linfty-vt-sum-seg1}
\sum_{t'=0}^{t-1} v(\bx,t') &=\frac{\eta}{n} \sum\limits_{j=1}^n  \sum_{t'=0}^{t-1}
\bth{\pth{\bI- \eta \bK_n }^{t'} f^*(\bS)}_j K(\bbx_j,\bx) \nonumber \\
&=\frac{\eta}{n} \sum\limits_{j=1}^n \sum_{t'=0}^{t-1}
\bth{\bU \pth{\bI-\eta \bSigma }^{t'} {\bU}^{\top} f^*(\bS)}_j K(\bbx_j,\bx).
\eal
It follows from (\ref{eq:bounded-Linfty-vt-sum-seg1}) that
\bal\label{eq:bounded-Linfty-vt-sum-seg2}
\norm{\sum_{t'=0}^{t-1} v(\cdot,t')}{\cH_K}^2
&= \frac{\eta^2}{n^2} f^*(\bS)^{\top}
\bU \sum_{t'=0}^{t-1} \pth{\bI-\eta \bSigma}^{t'} {\bU}^{\top}
\bK \bU \sum_{t'=0}^{t-1} \pth{\bI-\eta \bSigma}^{t'}
{\bU}^{\top} f^*(\bS) \nonumber \\
&= \frac 1n \ltwonorm{\eta\pth{\bK_n}^{1/2} \bU \sum_{t'=0}^{t-1} \pth{\bI-\eta \bSigma}^{t'} {\bU}^{\top} f^*(\bS)}^2 \nonumber \\
&\le \frac 1n \sum\limits_{i=1}^{n} \frac{\pth{1-
\pth{1-\eta \hlambda_i }^t}^2}
{\hlambda_i}\bth{{\bU}^{\top} f^*(\bS)}_i^2
\le \mu_0^2,
\eal
where the last inequality
 follows from Lemma~\ref{lemma:bounded-Ut-f-in-RKHS}.

Similarly, we have
\bal\label{eq:bounded-Linfty-hat-et-sum-1}
&\norm{\sum_{t'=0}^{t-1} \hat e_1(\cdot,t')}{\cH_K}^2
\le \frac 1n \sum\limits_{i=1}^{n} \frac{\pth{1-
\pth{1-\eta \hlambda_i }^t}^2}
{\hlambda_i}\bth{{\bU}^{\top} \bw}_i^2.
\eal

It then follows from the argument in the proof of~\cite[Lemma 9]{RaskuttiWY14-early-stopping-kernel-regression}
that the RHS of (\ref{eq:bounded-Linfty-hat-et-sum-1}) is bounded with high probability. We define a diagonal matrix $\bR \in \RR^{n \times n}$
with $\bR_{ii} = \big(1-(1-\eta \hlambda_i )^t\big)^2/\hlambda_i$ for $i \in [n]$. Then the RHS of (\ref{eq:bounded-Linfty-hat-et-sum-1}) is
$1/n \cdot \tr{\bU \bR \bU^{\top} \bw \bw^{\top} }$.
 It follows from~\cite{quadratic-tail-bound-Wright1973}
that
\bal\label{eq:bounded-Linfty-hat-et-sum-2}
\Prob{\frac {\tr{\bU \bR \bU^{\top} \bw \bw^{\top} }}n  -
\Expect{}{\frac {\tr{\bU \bR \bU^{\top} \bw \bw^{\top} }}n } \ge u}
\le \exp\pth{-c \min\set{nu/\ltwonorm{\bR},n^2u^2/\fnorm{\bR}^2}}
\eal
for all $u > 0$, and $c$ is a  positive constant depending on $\sigma_0$. Recall that
$\eta_t = \eta t$ for all $t \ge 0$, we have
\bal\label{eq:bounded-Linfty-hat-et-sum-3}
\Expect{}{\frac {\tr{\bU \bR \bU^{\top} \bw \bw^{\top} }}n }
&\stackrel{\circled{1}}{\le} \frac {\sigma_0^2}n \sum\limits_{i=1}^n
\frac{\pth{1-\pth{1-\eta \hlambda_i }^t}^2}{\hlambda_i}
\stackrel{\circled{2}}{\le}
\frac {\sigma_0^2}n \sum\limits_{i=1}^n
\min\set{\frac{1}{\hlambda_i},\eta_t^2 \hlambda_i}
\nonumber \\
&\le
\frac {{\sigma_0^2}\eta_t}n \sum\limits_{i=1}^n
\min\set{\frac{1}{\eta_t\hlambda_i},\eta_t \hlambda_i}
\stackrel{\circled{3}}{\le}
\frac {{\sigma_0^2}\eta_t}n \sum\limits_{i=1}^n
\min\set{1,\eta_t \hlambda_i} \nonumber \\
&= \frac {{\sigma_0^2}\eta_t^2}n \sum\limits_{i=1}^n
\min\set{\eta_t^{-1},\hlambda_i} = {{\sigma_0^2}\eta_t^2} \hat R_K^2(\sqrt{{1}/{\eta_t}}) \le
1.
\eal
Here $\circled{1}$ follows from the Von Neumann’s trace inequality and the fact that $\Expect{}{w_i^2} \le \sigma_0^2$ for all $i \in [n]$.
$\circled{2}$ follows from the fact that
$(1-\eta \hlambda_i )^t \ge \max\set{0,1-t\eta \hlambda_i}$,
and $\circled{3}$ follows from
$\min\set{a,b} \le \sqrt{ab}$ for any nonnegative numbers $a,b$.
Because $t \le T \le \hat T$, we have
$\hat R_K(\sqrt{{1}/{\eta_t}}) \le 1/(\sigma_0 \eta_t)$, so the last inequality holds.

Moreover, we have the upper bounds for $\ltwonorm{\bR}$ and $\fnorm{\bR}$
as follows. First, we have
\bal\label{eq:bounded-Linfty-hat-et-sum-4}
\ltwonorm{\bR} &\le \max_{i \in [n] }\frac{\pth{1-\pth{1-\eta \hlambda_i }^t}^2}{\hlambda_i} \le \min\set{\frac{1}{\hlambda_i},\eta_t^2 \hlambda_i}
\le \eta_t.
\eal

We also have
\bal\label{eq:bounded-Linfty-hat-et-sum-5}
\frac 1n \fnorm{\bR}^2 &=  \frac 1n
\sum\limits_{i=1}^n
\frac{\pth{1-\pth{1-\eta \hlambda_i }^t}^4}{\hlambda_i^2}
\le \frac {\eta_t^3}n \sum\limits_{i=1}^n
\min\set{\frac{1}{\eta_t^3 \hlambda_i^2},\eta_t \hlambda_i^2} \nonumber \\
&\stackrel{\circled{3}}{\le} \frac {\eta_t^3}n \sum\limits_{i=1}^n
\min\set{\hlambda_i,\frac{1}{\eta_t}}
=\eta_t^3 \hat R_K^2(\sqrt{{1}/{\eta_t}})\le
\frac {\eta_t}{\sigma_0^2},
\eal
where $\circled{3}$ follows from
\bals
\min\set{\frac{1}{\eta_t^3 \hlambda_i^2},\eta_t \hlambda_i^2}
= \hlambda_i
\min\set{\frac{1}{\eta_t^3 \hlambda_i^3},\eta_t \hlambda_i}
\le  \hlambda_i.
\eals
Combining (\ref{eq:bounded-Linfty-hat-et-sum-2})-(\ref{eq:bounded-Linfty-hat-et-sum-5}) with $u=1$ in
(\ref{eq:bounded-Linfty-hat-et-sum-2}), we have
\bals
\Prob{1/n \cdot \tilde \bw^{\top} \bR \tilde \bw -
\Expect{}{1/n \cdot \tilde \bw^{\top} \bR \tilde \bw} \ge 1}
&\le \exp\pth{-c \min\set{n/\eta_t,n \sigma_0^2/\eta_t}} \\
&\le \exp\pth{-nc'/\eta_t} \le
\exp\pth{-c'n\hat\eps_n^2}
\eals
where $\tilde \bw = \bU^{\top}\bw$, $c'  = c\min\set{1,\sigma_0^2}$, and the last inequality is due
to the fact that $1/\eta_t \ge \hat\eps_n^2$ since
$t \le T \le \hat T$.
It then follows from (\ref{eq:bounded-Linfty-hat-et-sum-1}) that with probability at least $1-
\exp\pth{- \Theta(n\hat\eps_n^2)}$,
$\norm{\sum_{t'=0}^{t-1} \hat e_1(\cdot,t')}{\cH_{K}}^2 \le 2$.

We now find the upper bound for $\norm{\sum_{t'=0}^{t-1} \hat e_2(\cdot,t')}{\cH_K}$. We have
\bals
\norm{\hat e_2(\cdot,t')}{\cH_K}^2
&\le \frac{\eta^2}{n^2} \bbe_2^{\top}(t')\bK\bbe_2(t')
\le \eta^2 \hlambda_1 \tau^2,
\eals
so that
\bal\label{eq:bounded-Linfty-hat-et-sum-6}
&\norm{\sum_{t'=0}^{t-1} \hat e_2(\cdot,t')}{\cH_K}
\le \sum_{t'=0}^{t-1} \norm{\hat e_2(\cdot,t')}{\cH_K}
\le  T \eta \sqrt{\hlambda_1} \tau \le 1,
\eal
if $\tau \lsim 1/(\eta u_0 T) $ since $\hat \lambda_1 \in (0, u_0^2/2)$.

Finally, we have
\bals
\norm{h_t}{\cH_K}  &\le \norm{\sum_{t'=0}^{t-1} \hat v(\cdot,t')}{\cH_K}
+\norm{\sum_{t'=0}^{t-1} \hat e_1(\cdot,t')}{\cH_K} + \norm{\sum_{t'=0}^{t-1} \hat e_2(\cdot,t')}{\cH_K} \le \mu_0 +1+ {\sqrt 2} = B_h.
\eals
\end{proof}

\begin{lemma}[In the proof of
{\cite[Lemma 8]{RaskuttiWY14-early-stopping-kernel-regression}}]
\label{lemma:bounded-Ut-f-in-RKHS}
For any $f \in \cH_{K}(\mu_0)$, we have
\bal\label{eq:b.ounded-Ut-f-in-RKHS}
\frac 1n \sum_{i=1}^n \frac{\bth{\bU^{\top}f(\bS')}_i^2}{\hat \lambda_i} \le \mu_0^2.
\eal

\end{lemma}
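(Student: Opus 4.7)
The plan is to recognize the left-hand side as a quadratic form in the inverse empirical kernel matrix, and then invoke a standard RKHS projection argument to bound this quadratic form by $\|f\|_{\cH_K}^2 \le \mu_0^2$.

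First I would rewrite the sum using the eigendecomposition $\bK_n = \bU\bSigma\bU^\top$ with $\bSigma = \diag(\hat\lambda_1,\dots,\hat\lambda_n)$. Since $\bK_n$ is non-singular (as noted in the excerpt after the definition of $\bK_n$) and $\bK_n = \bK/n$, we have $\bK_n^{-1} = \bU\bSigma^{-1}\bU^\top$ and hence
\[
\frac{1}{n}\sum_{i=1}^n \frac{[\bU^\top f(\bS)]_i^2}{\hat\lambda_i}
= \frac{1}{n}\, f(\bS)^\top \bK_n^{-1} f(\bS)
= f(\bS)^\top \bK^{-1} f(\bS).
\]
So the statement reduces to the representer-style inequality $f(\bS)^\top \bK^{-1} f(\bS) \le \|f\|_{\cH_K}^2$.

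Next I would introduce the auxiliary function that realizes this quadratic form. Let $\balpha \defeq \bK^{-1} f(\bS) \in \RR^n$ and define $\hat f \defeq \sum_{i=1}^n \alpha_i K(\cdot,\bbx_i) \in \cH_K$. A direct computation using $\iprod{K(\cdot,\bbx_i)}{K(\cdot,\bbx_j)}_{\cH_K} = K(\bbx_i,\bbx_j)$ gives $\|\hat f\|_{\cH_K}^2 = \balpha^\top \bK \balpha = f(\bS)^\top \bK^{-1} f(\bS)$, and the reproducing property yields $\hat f(\bbx_j) = (\bK\balpha)_j = f(\bbx_j)$ for every $j \in [n]$, so $\hat f$ interpolates $f$ on $\bS$.

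Finally I would carry out the orthogonal decomposition argument. For each $i \in [n]$, $\iprod{f - \hat f}{K(\cdot,\bbx_i)}_{\cH_K} = f(\bbx_i) - \hat f(\bbx_i) = 0$, so by linearity $\iprod{f-\hat f}{\hat f}_{\cH_K} = 0$; applying the Pythagorean identity gives
\[
\mu_0^2 \ge \|f\|_{\cH_K}^2 = \|\hat f\|_{\cH_K}^2 + \|f - \hat f\|_{\cH_K}^2 \ge \|\hat f\|_{\cH_K}^2 = f(\bS)^\top \bK^{-1} f(\bS),
\]
which combined with the reformulation above closes the proof. There is no real obstacle here: the only points to be careful about are the $1/n$ normalization when passing between $\bK_n$ and $\bK$, and relying on the non-singularity of $\bK_n$ so that $\balpha$ is well defined.
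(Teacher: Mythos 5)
Your proof is correct and is essentially the standard argument: the paper itself does not supply a proof but defers to the cited source, where the same representer/projection reasoning is used. The reduction to $f(\bS)^\top \bK^{-1} f(\bS)$ is exactly right (noting $\bK_n^{-1} = n\bK^{-1}$ cancels the $1/n$ prefactor), and the interpolant $\hat f = \sum_i \alpha_i K(\cdot,\bbx_i)$ with $\balpha = \bK^{-1} f(\bS)$ is the minimum-norm element of $\cH_K$ agreeing with $f$ on $\bS$; the Pythagorean step then gives $\|\hat f\|_{\cH_K} \le \|f\|_{\cH_K} \le \mu_0$, closing the argument. (An equivalent phrasing, which some sources prefer, is the operator-norm observation that the sampling operator $\Phi f = f(\bS)$ satisfies $(\Phi\Phi^*)^{-1/2}\Phi\bigl((\Phi\Phi^*)^{-1/2}\Phi\bigr)^* = \bI_n$, so $\|(\Phi\Phi^*)^{-1/2}\Phi\|_{\mathrm{op}} = 1$ and the quadratic form is at most $\|f\|_{\cH_K}^2$; this is the same inequality with the projection hidden inside the adjoint.) One small caveat: the $\bS'$ appearing in the displayed equation in the lemma is a typo for $\bS$, as is clear from where the lemma is invoked, so your reading of the statement is the intended one.
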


\begin{lemma}
\label{lemma:auxiliary-lemma-1}
For any positive real number $a \in (0,1)$ and natural number $t$,
we have
\bal\label{eq:auxiliary-lemma-1}
(1-a)^t \le e^{-ta} \le \frac{1}{eta}.
\eal
\end{lemma}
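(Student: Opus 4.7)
The plan is to prove the two inequalities separately, both via standard calculus facts.

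\textbf{First inequality.} For the bound $(1-a)^t \le e^{-ta}$, I would start from the elementary inequality $1-x \le e^{-x}$, valid for all real $x$ (with equality iff $x=0$). This follows by noting that $g(x) = e^{-x} - (1-x)$ satisfies $g(0) = 0$ and $g'(x) = 1 - e^{-x}$, which is negative on $(-\infty,0)$ and positive on $(0,\infty)$, so $g \ge 0$ everywhere. Applied at $x = a \in (0,1)$ this gives $0 < 1-a \le e^{-a}$, and raising both positive sides to the $t$-th power preserves the inequality, yielding $(1-a)^t \le e^{-ta}$.

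\textbf{Second inequality.} For the bound $e^{-ta} \le \frac{1}{e t a}$, I would rearrange it into the equivalent form $(ta) \cdot e^{-ta} \le e^{-1}$ by multiplying both sides by $eta > 0$. Letting $x = ta > 0$, this reduces to showing $x e^{-x} \le e^{-1}$ for $x > 0$. This is a standard extremum: the function $\varphi(x) = xe^{-x}$ has derivative $\varphi'(x) = (1-x)e^{-x}$, which is positive on $(0,1)$ and negative on $(1,\infty)$, so $\varphi$ attains its global maximum on $(0,\infty)$ at $x=1$ with $\varphi(1) = 1/e$. Hence $xe^{-x} \le 1/e$ for all $x > 0$, which gives the claimed bound.

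\textbf{Difficulty.} There is essentially no obstacle here; both inequalities are standard calculus facts. The only minor subtlety is that $a \in (0,1)$ ensures $1-a > 0$ so that raising to the $t$-th power is monotone, and $t \ge 1$ (a natural number, hence positive) ensures $ta > 0$ so that the substitution $x = ta$ stays in the regime where $\max xe^{-x} = 1/e$ applies. No further machinery is needed, and the two short arguments above chain together to give the stated double inequality.
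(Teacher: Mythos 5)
Your proof is correct and uses essentially the same two ingredients as the paper's: the elementary bound $1-a\le e^{-a}$ (the paper cites this in the equivalent form $\log(1-a)\le -a$, though with a sign typo, writing $\le a$) and the global maximum $\sup_{x>0} x e^{-x} = 1/e$. Your write-up merely fills in the elementary calculus verifications that the paper leaves implicit.
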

\begin{proof}
The result follows from the facts that
$\log(1-a) \le a$ for $a \in (0,1)$ and $\sup_{u \in \RR}
ue^{-u} \le 1/e$.
\end{proof}


\begin{lemma}\label{lemma:hat-eps-eps-relation}
Suppose $\sup_{\bx \in \cX} K(\bx,\bx) \le
\tau_0^2$ for some positive
number $\tau_0$ and
$\inf_{\bx \in \cX} K(\bx,\bx) \gsim 1$ where $K$ is a PD kernel defined over
$\cX \times \cX$, and
 $n \gsim 1/\lambda_1$. Let
 the critical population rate and the critical empirical radius of $K$
be $\eps_{n}$ and $\hat \eps_{n}$, respectively. Then with probability at least $1-2\exp(-\Theta(n\eps_n^2))$,
\bal
{\eps_n^2} \lsim \hat \eps_n^2. \label{eq:eps-n-hat-eps-n-bound}
\eal
Furthermore, with probability at least
$1-2\exp(-\Theta(n\eps_n^2))$,
\bal
{\hat \eps_n^2} \lsim \eps_n^2. \label{eq:hat-eps-n-eps-n-bound}
\eal
where $E_{K,n}$ is defined in (\ref{eq:E-K-n}).
\end{lemma}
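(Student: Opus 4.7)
The plan is to transfer a concentration inequality between the empirical kernel complexity $\hat R_K$ and its population counterpart $R_K$ into a comparison of their respective sub-root fixed points, $\hat\eps_n^2$ and $\eps_n^2$. Recall that both $\sigma_0 R_K$ and $\sigma_0\hat R_K$ are sub-root as functions of $\eps^2$ by (\ref{eq:kernel-LRC-empirical}), so each has a unique positive fixed point, and sub-root functions enjoy the monotone comparison property that pointwise domination (up to a constant factor) transfers to the fixed points.

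First I would establish a concentration bound of the form
\begin{equation*}
\bigl|\hat R_K^2(\eps) - R_K^2(\eps)\bigr| \;\le\; E_{K,n}
\quad\text{uniformly in }\eps > 0,
\end{equation*}
with probability at least $1-2\exp(-\Theta(n\eps_n^2))$, and calibrate $E_{K,n}\lsim\eps_n^2$ under the assumption $n\gsim 1/\lambda_1$. Since $t\mapsto\min\{t,\eps^2\}$ is $1$-Lipschitz, Weyl's inequality controls each summand of $\hat R_K^2$ against the corresponding $\min\{\lambda_i,\eps^2\}$ in terms of $\|\hat T_K-T_K\|_{\mathrm{op}}$, where $\hat T_K$ denotes the empirical integral operator induced by $\bK_n$. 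The operator deviation concentrates like $\tau_0^2/\sqrt n$ by an operator Bernstein inequality, using $\sup_{\bx} K(\bx,\bx)\le\tau_0^2$. The population tail beyond index $n$ is controlled by $\sum_{i>n}\min\{\lambda_i,\eps^2\}\le\sum_{i>n}\lambda_i$, and is small because the total trace $\sum_i\lambda_i=\int K(\bx,\bx)\diff\mu(\bx)$ is bounded.

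Next I would use this estimate at the two relevant scales. For any $C>1$, the sub-root property gives $\sigma_0 R_K(C\eps_n)\le C\eps_n^2$, and combined with the concentration bound at $\eps=C\eps_n$ one obtains $\sigma_0\hat R_K(C\eps_n)\le C\eps_n^2+\sigma_0\sqrt{E_{K,n}}$. Choosing $C$ a sufficiently large absolute constant and using $E_{K,n}\lsim\eps_n^2$ forces $\sigma_0\hat R_K(C\eps_n)\le(C\eps_n)^2$; since $\eps^2-\sigma_0\hat R_K(\eps)$ is non-decreasing in $\eps^2$ past its fixed point, this yields $\hat\eps_n\le C\eps_n$, establishing (\ref{eq:hat-eps-n-eps-n-bound}). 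Swapping the roles of $R_K$ and $\hat R_K$, the same uniform concentration gives $\sigma_0 R_K(C\hat\eps_n)\le(C\hat\eps_n)^2$ on the same high-probability event, which yields (\ref{eq:eps-n-hat-eps-n-bound}).

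The hardest step will be securing the sharp scaling $E_{K,n}\lsim\eps_n^2$ in the spectral concentration, since a naive Weyl bound loses a factor independent of $\eps$ that could dominate $\eps_n^2$. The remedy is to exploit the truncation at level $\eps^2$ in $\hat R_K^2$: only the $O(1/\eps_n^2)$ largest population eigenvalues contribute meaningfully, so one really only needs concentration of a finite-dimensional spectral block. For that block, Bernstein applied to bounded empirical moments, together with the non-degeneracy $\inf_{\bx}K(\bx,\bx)\gsim 1$ and the sample-size lower bound $n\gsim 1/\lambda_1$, delivers the required $\exp(-\Theta(n\eps_n^2))$ tail and closes the argument.
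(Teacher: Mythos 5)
Your argument has a genuine gap at the final fixed-point comparison step, and the concentration bound you target is also not sharp enough even if the spectral argument went through.

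The problem is where the additive error sits. You propose a uniform concentration bound $\abth{\hat R_K^2(\eps) - R_K^2(\eps)} \le E_{K,n}$ with $E_{K,n} \lsim \eps_n^2$, and then push this through the sub-root machinery by adding the error to the function value: $\sigma_0 \hat R_K(C\eps_n) \le C\eps_n^2 + \sigma_0\sqrt{E_{K,n}}$. But $\sqrt{E_{K,n}} \asymp \eps_n$, and you need the right-hand side to be at most $C^2\eps_n^2$. This forces $C^2 - C \gtrsim \sigma_0/\eps_n$, which blows up as $\eps_n \to 0$, so no fixed constant $C$ works. Equivalently, in terms of the sub-root comparison (Lemma~\ref{lemma:sub-root-fix-point-properties}, case (1)): an additive perturbation $a$ to the function value shifts the fixed point by $\Theta(a)$, so the error $\sigma_0\sqrt{E_{K,n}} \asymp \eps_n$ shifts the fixed point by $\Theta(\eps_n)$, not $\Theta(\eps_n^2)$. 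To make your outside-error route close, you would need $E_{K,n} \lsim \eps_n^4$, a substantially sharper bound than the one you state you will prove.

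The paper avoids this by putting the slack \emph{inside the argument} rather than outside the function. It shows (via Lemma~\ref{lemma:gn-g-LRC-bound}, which compares $\norm{g}{L^2}^2$ and $\norm{g}{n}^2$ up to $E_{K,n}$ for $g$ in the unit RKHS ball, and Lemma~\ref{lemma:kernel-complexity-RC-relation}, which relates kernel complexity to local Rademacher complexity) that $\sigma_0 R_K(\eps) \lsim \sigma_0\hat R_K\pth{\sqrt{c_2\eps^2 + E_{K,n}}} + \Theta(x/n)$. By Lemma~\ref{lemma:sub-root-fix-point-properties} case (2), a perturbation $c$ \emph{inside} the argument shifts the fixed point by $\Theta(c/b)$, so $E_{K,n} \lsim \eps_n^2$ contributes only $\Theta(\eps_n^2)$ to the fixed point, which closes the argument. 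Your route has no analogue of this inside-shift and hence loses a crucial square.

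Separately, your preliminary concentration step is also doubtful at the desired scale. The operator Bernstein inequality gives $\norm{\hat T_K - T_K}{\textup{OP}} \lsim \tau_0^2\sqrt{\log(1/\delta)/n}$, which at confidence $\delta = \exp(-\Theta(n\eps_n^2))$ is $\lsim \tau_0^2\eps_n$, not $\lsim\eps_n^2$. Your proposed truncation to the top $O(1/\eps_n^2)$ eigenvalues is a plausible intuition, but you would still need $\frac{1}{n}\sum_{i \le Q}\abth{\hat\lambda_i - \lambda_i} \lsim \eps_n^4$ with $Q \asymp 1/\eps_n^2$, and a naive Weyl bound gives only $\frac{Q}{n}\norm{\hat T_K - T_K}{\textup{OP}}$, which is far too large. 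Making this rigorous would require eigengap assumptions or refined relative perturbation bounds that the lemma's hypotheses do not provide.
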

\begin{remark*}
$K$ in Lemma~\ref{lemma:hat-eps-eps-relation} can be a general PDS
kernel not limited to the NTK in (\ref{eq:kernel-two-layer}).
Lemma~\ref{lemma:hat-eps-eps-relation} shows that with probability at least $1-4\exp(-\Theta(n\eps_n^2))$,
$\eps_n^2 \asymp \hat \eps_n^2$, which is also a fact used in kernel complexity or local Rademacher based analysis for kernel regression in the statistical learning literature.
\end{remark*}
\begin{proof}
We define the function classes
\bal\label{eq:F-t-hat-Ft-def}
\cF_{K,t} \defeq \set{f \in \cH_{K} \colon \norm{f}{\cH_{K}} \le 1, \norm{f}{L^2} \le t}, \quad
\hat \cF_{K,t} \defeq \set{f \in \cH_{K} \colon \norm{f}{\cH_{K}} \le 1, \norm{f}{n} \le t},
\eal
where
$\norm{f}{n}^2 \defeq 1/n \cdot \sum_{i=1}^n f^2(\bbx_i)$.
Let $\cR_{K}(t), \hat \cR_{K}(t)$ be the Rademacher complexity and empirical Rademacher complexity of $\cF_{K,t}$ and $\hat \cF_{K,t}$, that is,
\bal
\label{eq:F-t-RC}
\cR_{K}(t) \defeq \cfrakR\pth{\cF_{K,t}} =\Expect{\set{\bbx_i},\set{\sigma_i}}{\sup_{f \in \cF_{K,t}} \frac 1n \sum\limits_{i=1}^n \sigma_i f(\bbx_i)},
\hat \cR_{K}(t) \defeq \Expect{\bsigma}{\sup_{f \in \hat \cF_{K,t}} \frac 1n \sum\limits_{i=1}^n \sigma_i f(\bbx_i)},
\eal
and we will also write $\cR_{K}(t) =
\Expect{}{\sup_{f \in \cF_{K,t}} \frac 1n \sum\limits_{i=1}^n \sigma_i f(\bbx_i),
}$ for simplicity of notations.

It follows from Lemma~\ref{lemma:kernel-complexity-RC-relation} there are universal positive constants $c_{\ell}$ and $C_u$ with $0 < c_{\ell} < C_u$ such that when  $n \gsim 1/\lambda_1$ and $t^2 \gsim 1/n$, we have
\bal\label{eq:kernel-complexity-Rademacher-complexity-relation}
c_{\ell}R_{K}(t) \le \cR_{K}(t) \le C_u R_{K}(t),
\quad c_{\ell} \hat R_{K}(t) \le
\hat \cR_{K}(t) \le C_u \hat R_{K}(t).
\eal
When $f \in \cF_{K,t}$, $\supnorm{f} \le \tau_0$ since $\sup_{\bx \in \cX}
K(\bx,\bx) \le \tau_0^2$.
It follows from Lemma~\ref{lemma:gn-g-LRC-bound} that with probability at least $1-\exp(-n\eps_n^2)$,
\bal\label{eq:hat-eps-eps-relation-seg1}
\cF_{K,t} \subseteq
\set{f \in \cH_{K} \colon \norm{f}{\cH_{K}} \le 1, \norm{f}{n} \le \sqrt{c_2 t^2 + E_{K,n}} } \defeq \hat \cF_{K,\sqrt{c_2 t^2 +E_{K,n} }},
\eal
where $E_{K,n}$ is defined in (\ref{eq:E-K-n}).
Moreover, by the relation between Rademacher
complexity and its empirical version in {\cite[Lemma A.4]{bartlett2005}}, for every $x > 0$, with probability at least $1-\exp(-x)$,
\bal\label{eq:hat-eps-eps-relation-seg2}
\Expect{}{\sup_{f \in \hat \cF_{K,\sqrt{c_2 t^2 + E_{K,n}}}} \frac 1n \sum\limits_{i=1}^n \sigma_i f(\bbx_i)}\le 2 \Expect{\bsigma}{\sup_{f \in\hat \cF_{K,\sqrt{c_2 t^2 + E_{K,n}}}} \frac 1n \sum\limits_{i=1}^n \sigma_i f(\bbx_i)}
+ \frac{2 \tau_0 x}{n}.
\eal
As a result,
\bals
\cR_{K}(t)
&\stackrel{\circled{1}}{\le}
\Expect{}{\sup_{f \in \hat \cF_{K,\sqrt{c_2 t^2 + E_{K,n}}}} \frac 1n \sum\limits_{i=1}^n \sigma_i f(\bbx_i)}
\stackrel{\circled{2}}{\le} 2 \Expect{\bsigma}{\sup_{f \in
\hat \cF_{K,\sqrt{c_2 t^2 + E_{K,n}}}} \frac 1n \sum\limits_{i=1}^n \sigma_i f(\bbx_i)} + \frac{2\tau_0x}{n} \nonumber \\
&=2 \hat \cR_{K}(\sqrt{c_2 t^2 + E_{K,n}})+ \frac{2x}{n}.
\eals
Here $\circled{1}$ follows from (\ref{eq:hat-eps-eps-relation-seg1}), and
$\circled{2}$ follows from (\ref{eq:hat-eps-eps-relation-seg2}).
It follows from (\ref{eq:kernel-complexity-Rademacher-complexity-relation})
and the above inequality that
\bals
c_{\ell}/\sigma_0 \cdot \sigma_0 R_{K}(t) \le 2C_u /\sigma_0 \cdot \sigma_0 \hat R_{K}(\sqrt{c_2 t^2 + E_{K,n}})+ \frac{2\tau_0x}{n}, \forall t^2 \gsim 1/n.
\eals
We rewrite $R_{K}(t)$ as a function  of $r = t^2$ as $R_{K}(t) = F_{K}(r)$. Similarly, $\hat R_{K}(t) = \hat F_{K}(r)$ with $r = t^2$. Then we have
\bal\label{eq:hat-eps-eps-relation-seg3}
\sigma_0 F_{K}(r) \le 2C_u/c_{\ell} \cdot  \sigma_0 \hat F_{K}(c_2 r+ E_{K,n}) + \frac{2\sigma_0 \tau_0x}{n c_{\ell}} \defeq G(r),
\forall r \gsim 1/n.
\eal
It can be verified that $G(r)$ is a sub-root function, and let $r^*_G$ be the fixed point of $G$. Let $x \gsim c_{\ell}/(2\sigma_0\tau_0)$,
then $r^*_G \gsim 1/n$. Moreover, $\sigma_0 F_{K}(r)$
and $\sigma_0 \hat F_{K}(r)$ are sub-root functions, and they have fixed points $\eps_n^2$ and $\hat \eps_n^2$, respectively. Set $r = r^*_G \gsim 1/n$ in (\ref{eq:hat-eps-eps-relation-seg3}), we have
\bals
\sigma_0 F_{K}(r^*_G) \le r^*_G,
\eals
and it follows from the above inequality and {\cite[Lemma 3.2]{bartlett2005}} that $\eps_n^2 \le r^*_G$. Since $c_2 > 1$, it then follows from the properties about the fixed point of a sub-root function in Lemma~\ref{lemma:sub-root-fix-point-properties}
that
\bals
\eps_n^2 \le r^*_G \le \Theta(1) \cdot
\pth{c_2 \hat \eps_n^2 + \frac{2E_{K,n}}{c_2}}
+ \frac{4\sigma_0 \tau_0x}{n c_{\ell}}.
\eals
Let $x = c' c_{\ell} n\eps_n^2/(\sigma_0\tau_0)$ where $c' > 0$ is a positive constant. Since
$E_{K,n} \lsim  \eps_n^2$, by choosing properly small $c'$
and properly large $c_2$, we have
\bals
\eps_n^2 \lsim \hat \eps_n^2,
\eals
and (\ref{eq:eps-n-hat-eps-n-bound}) is proved. We remark
that $n\eps_n^2 \gsim 1$ due to the definition of the fixed point of the kernel complexity, so choosing $x = c' n\eps_n^2/c_{\ell}$
also satisfies $x \gsim c_{\ell}/(2\sigma_0\tau_0)$.

We now prove (\ref{eq:hat-eps-n-eps-n-bound}).
It follows from Lemma~\ref{lemma:gn-g-LRC-bound} again that
with probability at least $1-\exp(-n\eps_n^2)$,
\bal\label{eq:hat-eps-eps-relation-seg4}
\hat \cF_{K,t} \subseteq
\set{f \in \cH_{K} \colon \norm{f}{\cH_{K}} \le 1, \norm{f}{L^2} \le\sqrt{c_2 t^2 + E_{K,n}} }
= \cF_{K,\sqrt{c_2 t^2 + E_{K,n}}}.
\eal
It follows from {\cite[Lemma A.4]{bartlett2005}} again that for every $x > 0$, with probability at least $1-\exp(-x)$,
\bal\label{eq:hat-eps-eps-relation-seg5}
\Expect{\bsigma}{\sup_{f \in \cF_{K,\sqrt{c_2 t^2 + E_{K,n}}}} \frac 1n \sum\limits_{i=1}^n \sigma_i f(\bbx_i)}\le 2 \Expect{}{\sup_{f \in \cF_{K,\sqrt{c_2 t^2 + E_{K,n}}}} \frac 1n \sum\limits_{i=1}^n \sigma_i f(\bbx_i)} + \frac{5\tau_0x}{6n}.
\eal
As a result, we have
\bals
&c_{\ell} \hat R_{K}(t) \le \hat \cR_{K}(t)
\stackrel{\circled{1}}{\le}
\Expect{\bsigma}{\sup_{f \in \cF_{K,\sqrt{c_2 t^2 + E_{K,n}}}} \frac 1n \sum\limits_{i=1}^n \sigma_i f(\bbx_i)} \stackrel{\circled{2}}{\le} 2 \Expect{}{\sup_{f \in  \cF_{K,\sqrt{c_2 t^2 + E_{K,n}}}} \frac 1n \sum\limits_{i=1}^n \sigma_i f(\bbx_i)} + \frac{5\tau_0x}{6n} \nonumber \\
&=2 \cR_{K}(\sqrt{c_2 t^2 + E_{K,n}})+ \frac{5\tau_0x}{6n} \le
2C_u R_{K}(\sqrt{c_2 t^2 + E_{K,n}})+ \frac{5\tau_0x}{6n},
\quad \forall t^2 \gsim 1/n,
\eals
where $\circled{1}$ follows from (\ref{eq:hat-eps-eps-relation-seg4}), and
$\circled{2}$ follows from (\ref{eq:hat-eps-eps-relation-seg5}).
Using a similar argument for the proof of (\ref{eq:eps-n-hat-eps-n-bound}), we have
\bals
\hat \eps_n^2 \le  \Theta(1) \cdot   \pth{c_2 \eps_n^2+ \frac{2E_{K,n} }{c_2}}  + \frac{5\sigma_0 \tau_0x}{3nc_{\ell}},
\eals
and (\ref{eq:hat-eps-n-eps-n-bound}) is proved by the above inequality with $x = c'' c_{\ell}n\eps_n^2/(\sigma_0\tau_0)$ where
 $c''$ is a positive constant.


\end{proof}

\begin{lemma}\label{lemma:gn-g-LRC-bound}
Let $K$ be a PD kernel defined over $\cX \times \cX$ with $\sup_{\bx \in \cX} K(\bx,\bx) \le
\tau_0^2$ for some positive number $\tau_0 \gsim 1$. Define
\bal\label{eq:E-K-n}
E_{K,n} \defeq \min\set{\Theta(\tau_0^2) \eps^{\textup{(eig)}}_{K} + \Theta(\tau_0^2)\eps_n^2,\Theta(\tau_0^4) \eps_n^2},
\eal
where $\eps^{\textup{(eig)}}_{K}$ is defined in (\ref{eq:eps-K-eig-def}).
Then with probability at least $1-\exp\pth{-n \eps_n^2}$,
\bal
\norm{g}{L^2}^2 &\le c_2 \norm{g}{n}^2 +E_{K,n},
\forall g \in \cH_K(1) \label{eq:g-le-gn-LRC-bound}.
\eal
Similarly, with probability at least $1-\exp\pth{-n \eps_n^2}$,
\bal
\norm{g}{n}^2 &\le c_2 \norm{g}{L^2}^2 + E_{K,n}, \forall g \in \cH_K(1). \label{eq:gn-le-g-LRC-bound-fixed-point}
\eal
Here $c_2 > 1$ is a  positive constant.
\end{lemma}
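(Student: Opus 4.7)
\textbf{Proof plan for Lemma~\ref{lemma:gn-g-LRC-bound}.}

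My plan is to apply the local-Rademacher-complexity machinery (Theorem~\ref{theorem:LRC-population}) to the squared function class
\[
\cG \defeq \set{g^2 \colon g \in \cH_K(1)}.
\]
Every $h = g^2 \in \cG$ is nonnegative and satisfies $\supnorm{h} \le \tau_0^2$ because $\supnorm{g} \le \sqrt{K(\bx,\bx)} \le \tau_0$ for $g \in \cH_K(1)$. Consequently $\Var(h) \le \Expect{P}{h^2} \le \tau_0^2 \Expect{P}{h}$, so we may take the functional $T(h) = \tau_0^2 \Expect{P}{h}$ and the variance parameter $\bar B = \tau_0^2$ required by Theorem~\ref{theorem:LRC-population}. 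Once we produce a sub-root upper bound $\psi$ on $\tau_0^2 \cfrakR(\set{h \in \cG \colon T(h) \le r})$, the theorem yields, with probability $1 - e^{-x}$ and $K_0 = 2$,
\[
\norm{g}{L^2}^2 \le 2 \norm{g}{n}^2 + \frac{1408}{\tau_0^2}\, r^* + \Theta(\tau_0^2 x/n), \qquad \forall\, g \in \cH_K(1),
\]
and symmetrically (also from Theorem~\ref{theorem:LRC-population}) $\norm{g}{n}^2 \le (3/2)\norm{g}{L^2}^2 + \Theta(r^*/\tau_0^2) + \Theta(\tau_0^2 x/n)$. Absorbing $3/2$ into a constant $c_2 > 1$ (and doubling $r^*$ accordingly), it suffices to show $r^* + \tau_0^2 x/n \lsim E_{K,n}$ when $x \asymp n\eps_n^2$.

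The next step is to bound $\cfrakR(\set{h \in \cG \colon T(h) \le r})$. Because $u \mapsto u^2$ is $2\tau_0$-Lipschitz on $[-\tau_0,\tau_0]$, the Ledoux--Talagrand contraction (Theorem~\ref{theorem:RC-contraction}) gives
\[
\cfrakR(\set{h \in \cG \colon T(h) \le r}) \le 2\tau_0\, \cfrakR\!\set{g \in \cH_K(1) \colon \Expect{P}{g^2} \le r/\tau_0^2}.
\]
For the restricted RKHS ball I will invoke the same Mercer/spectral argument used in the proof of Lemma~\ref{lemma:LRC-population-NN} (specializing $B=1$, $w=0$) to obtain
\[
\cfrakR\!\set{g \in \cH_K(1) \colon \Expect{P}{g^2} \le s} \le \min_{Q \ge 0} \left( \sqrt{s}\, \sqrt{\frac{Q}{n}} + \sqrt{\frac{\sum_{q>Q}\lambda_q}{n}} \right).
\]
Composing these two estimates yields the sub-root candidate $\psi(r) = 2\tau_0^3\min_{Q\ge 0}\bigl(\sqrt{r/\tau_0^2}\,\sqrt{Q/n} + \sqrt{\sum_{q>Q}\lambda_q/n}\bigr)$.

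Now I compute the fixed point $r^*$ of $\psi$ in two different ways to obtain the $\min$ appearing in $E_{K,n}$. \textbf{(i)} Splitting the $\min_Q$ via the AM-GM-type inequality $\sqrt{r/\tau_0^2}\sqrt{Q/n} \le r/(2\tau_0^4) + \tau_0^2 Q/(2n)$ and solving $r = \psi(r)$ directly gives $r^* \lsim \tau_0^4 \eps^{(\mathrm{eig})}_{K}$, whence the first branch $\tau_0^2 \eps^{(\mathrm{eig})}_{K} + \tau_0^2 \eps_n^2$ in $E_{K,n}$ (the $\tau_0^2 \eps_n^2$ piece comes from the $\tau_0^2 x/n$ term with $x = n\eps_n^2$). \textbf{(ii)} Alternatively, using the Cauchy--Schwarz step $\sqrt{Q/n}\cdot\sqrt{\sum_{q>Q}\lambda_q/n} \lsim R_K(\sqrt{s})$ one dominates $\psi(r)$ by $\Theta(\tau_0^4)\sigma_0 R_K(\sqrt{r/\tau_0^4})/\sigma_0$; the fixed-point properties of sub-root functions (Lemma~\ref{lemma:sub-root-fix-point-properties}) and the identity $\sigma_0 R_K(\eps_n) = \eps_n^2$ then give $r^* \lsim \tau_0^6 \eps_n^2$, producing the second branch $\tau_0^4 \eps_n^2$. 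Taking the minimum over the two routes yields $E_{K,n}$ as defined, and setting $x = \Theta(n\eps_n^2)$ produces the stated probability. The reverse inequality (\ref{eq:gn-le-g-LRC-bound-fixed-point}) follows from the second conclusion of Theorem~\ref{theorem:LRC-population} applied to the same class $\cG$.

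The main obstacle will be verifying that $\psi$ is genuinely sub-root (its sub-root-ness after the $\min_Q$ step requires a careful concave envelope argument) and, in route (i), ensuring that the $\tau_0$-dependence is tracked correctly so that the final constant inside $E_{K,n}$ really absorbs into $\Theta(\tau_0^2)$ rather than $\Theta(\tau_0^4)$. Once $\psi$ is sub-root, Lemma~\ref{lemma:sub-root-fix-point-properties} handles the fixed-point comparison routinely, and the contraction step is standard; the only other subtlety is that $\cG$ is not symmetric, but Theorem~\ref{theorem:LRC-population} does not require symmetry, only the variance bound $\Var(h) \le T(h)$, which we already verified.
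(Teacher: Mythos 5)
Your proposal is correct and mirrors the paper's intended argument: the paper's one-line proof simply defers to ``the argument similar to that in the proof of Theorem~\ref{theorem:LRC-population-NN-eigenvalue} (with $B_0 = \Theta(\tau_0)+1$)'', which is exactly what you carry out --- apply Theorem~\ref{theorem:LRC-population} to the squared class via the contraction lemma and Lemma~\ref{lemma:LRC-population-NN}, then compute the fixed point two ways to obtain the two branches of $E_{K,n}$. The sub-root ``obstacle'' you flag is not actually one: a pointwise minimum of sub-root functions is again sub-root (nonnegativity, monotonicity, and the monotonicity of $\psi(r)/\sqrt r$ are each preserved under infima), which is exactly what the paper invokes for $\psi$ in the proof of Theorem~\ref{theorem:LRC-population-NN-eigenvalue}, so no concave-envelope argument is needed.
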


\begin{proof}
The results follow from Theorem~\ref{theorem:Talagrand-inequality} and repeating the argument similar to that in the proof of
Theorem~\ref{theorem:LRC-population-NN-eigenvalue} (with $B_0 = \Theta(\tau_0)+1$).
\end{proof}

\begin{lemma}\label{lemma:sub-root-fix-point-properties}
Suppose $\psi \colon [0,\infty) \to [0,\infty)$ is a sub-root function with the unique fixed point $r^*$. Then the following properties hold.

\begin{itemize}[leftmargin=8pt]
\item[(1)] Let $a \ge 0$, then $\psi(r) + a$ as a function of $r$ is also a sub-root function with fixed point $r^*_a$, and
$r^* \le r^*_a \le r^* + 2a$.
\item[(2)] Let $b \ge 1$, $c \ge 0$ then $\psi(br+c)$ as a function of $r$ is also a sub-root function with fixed point $r^*_b$, and
$r^*_b \le br^* +2c/b$.
\item[(3)] Let $b \ge 1$, then $\psi_b(r) = b\psi(r)$ is also a sub-root function with fixed point $r^*_b$, and
$r^*_b \le b^2r^* $.
\end{itemize}
\end{lemma}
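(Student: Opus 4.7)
The plan is to handle each of the three properties by verifying the sub-root condition (nonnegative, nondecreasing, and $\phi(r)/\sqrt{r}$ nonincreasing) and then bounding the fixed point by evaluating the new sub-root function at the claimed upper bound and showing it lies below the identity. The key tool throughout is the standard consequence (from \cite[Lemma 3.2]{bartlett2005}) that, for a sub-root function $\phi$ with fixed point $r^*_\phi$, one has $\phi(r)\le r$ iff $r\ge r^*_\phi$, together with the inequality $\psi(\alpha r^*)\le\sqrt{\alpha}\,r^*$ for all $\alpha\ge 1$ obtained from monotonicity of $\psi(r)/\sqrt r$.

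For property (1), the sub-root check is immediate: $a/\sqrt r$ is nonincreasing so $(\psi(r)+a)/\sqrt r$ is nonincreasing. The lower bound $r^*_a\ge r^*$ follows because $r^*_a=\psi(r^*_a)+a\ge\psi(r^*_a)$ gives $\psi(r^*_a)\le r^*_a$. For the upper bound, I would evaluate at $r=r^*+2a$: using $\psi(r)/\sqrt{r}$ nonincreasing, $\psi(r^*+2a)\le r^*\sqrt{1+2a/r^*}$, and the inequality $r^*\sqrt{1+2a/r^*}\le r^*+a$ reduces to $0\le a^2$. Hence $\psi(r^*+2a)+a\le r^*+2a$ and therefore $r^*_a\le r^*+2a$.

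For property (3), the sub-root check is trivial since multiplying by $b$ preserves all three conditions. For the fixed point, I evaluate at $r=b^2r^*$: by the same monotonicity argument $\psi(b^2r^*)\le b\,r^*$, so $b\psi(b^2r^*)\le b^2r^*$, giving $r^*_b\le b^2r^*$.

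For property (2), I would reduce to property (1) via a change of variable. Define $\tilde\psi(s)\defeq\psi(bs)$; this is sub-root since $\tilde\psi(s)/\sqrt s=\psi(bs)/\sqrt{bs}\cdot\sqrt b$, and its fixed point $\tilde r$ is at most $br^*$ (evaluate at $s=br^*$: $\psi(b\cdot br^*)\le br^*$ by the $\sqrt\alpha$ bound with $\alpha=b^2$). Now observe $\psi(br+c)=\tilde\psi(r+c/b)$, so its fixed point $r^*_b$ satisfies $r^*_b=\tilde\psi(r^*_b+c/b)$. Setting $s=r^*_b+c/b$ gives $s=\tilde\psi(s)+c/b$, and applying property (1) to $\tilde\psi$ with $a=c/b$ yields $s\le\tilde r+2c/b$, hence $r^*_b\le\tilde r+c/b\le br^*+2c/b$. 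The sub-root check for $\psi(br+c)$ is by writing $\psi(br+c)/\sqrt r=[\psi(br+c)/\sqrt{br+c}]\cdot\sqrt{b+c/r}$, a product of two nonnegative nonincreasing functions of $r$.

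No step looks particularly delicate; the main thing to get right is the bookkeeping in property (2), where the interplay between the scaling $b$ and the shift $c$ makes it tempting to apply property (1) directly to $\psi(br+c)$, but the cleanest route is to first absorb the scaling into $\tilde\psi$ and only then use property (1) on the shift $c/b$. The constants $br^*+2c/b$ then fall out in one line.
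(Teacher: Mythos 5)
Your proposal is correct and, for properties (1) and (3), essentially reproduces the paper's argument: evaluate the new sub-root function at the candidate upper bound, use monotonicity of $\psi(r)/\sqrt r$ to get $\psi(\alpha r^*)\le\sqrt\alpha\,r^*$, and then invoke the characterization of the fixed point from \cite[Lemma 3.2]{bartlett2005}. Your verification that $r^*\sqrt{1+2a/r^*}\le r^*+a$ reduces to $0\le a^2$ is the same calculation the paper does in the form $\sqrt{r^*(r^*+2a)}+a\le r^*+2a$.

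For property (2), you take a genuinely different route. The paper evaluates $\psi(br+c)$ directly at $r=br^*+2c/b$, expands $b(br^*+2c/b)+c=b^2r^*+3c$, and uses the $\sqrt\alpha$ bound together with AM--GM to land on $br^*+3c/(2b)\le br^*+2c/b$. You instead factor the map $r\mapsto\psi(br+c)$ as a pure rescaling $\tilde\psi(s)\defeq\psi(bs)$ followed by a shift of the argument by $c/b$, bound the fixed point of $\tilde\psi$ by $br^*$ via the $\sqrt\alpha$ argument, and then observe that the fixed point $r^*_b$ of $\psi(br+c)$ satisfies $r^*_b+c/b=\tilde\psi(r^*_b+c/b)+c/b$, i.e.\ $r^*_b+c/b$ is the fixed point of $\tilde\psi+c/b$, to which property (1) applies with $a=c/b$. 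This modular reduction is cleaner bookkeeping and, as you essentially note, delivers the slightly sharper bound $r^*_b\le br^*+c/b$ (which of course implies the stated $br^*+2c/b$). The sub-root check via $\psi(br+c)/\sqrt r=[\psi(br+c)/\sqrt{br+c}]\cdot\sqrt{b+c/r}$ as a product of nonnegative nonincreasing factors is also correct. No gaps.
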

\begin{proof}
(1). Let $\psi_a(r) = \psi(r) + a$. It can be verified that $\psi_a(r)$ is a sub-root function because its nonnegative, nondecreasing and $\psi_a(r)/\sqrt{r}$ is nonincreasing. It follows from
{\cite[Lemma 3.2]{bartlett2005}} that $\psi_a$ has unique fixed point denoted
by $r^*_a$. Because $r^* = \psi(r^*) \le \psi(r^*) + a = \psi_a(r^*)$, it follows from {\cite[Lemma 3.2]{bartlett2005}} that $r^* \le  r^*_a $. Furthermore,
since
\bals
\psi_a(r^*+2a) = \psi(r^*+2a) + a \le \psi(r^*) \sqrt{\frac{r^*+2a}{r^*}} + a \le  \sqrt{r^*(r^*+2a)} + a \le r^*+2a,
\eals
it follows from {\cite[Lemma 3.2]{bartlett2005}} again that
$r^*_a \le r^* + 2a$.

(2). Let $\psi_b(r) = \psi(br+c)$. It can be verified that $\psi_b(r)$ a sub-root function by checking the definition. Also, we have
$\psi(b(br^* +2c/b)+c)/\sqrt{b(br^* +2c/b)+c} \le \psi(r^*)/\sqrt{r^*}$.
It follows that
\bals
\psi_b\pth{br^* +\frac {2c}b} = \psi\pth{b\pth{br^* + \frac {2c}b}+c} &\le b \sqrt{\pth{r^*+\frac {3c}{b^2}}r^*} \le b \pth{r^* + \frac {3c}{2b^2}} \le br^* +\frac {2c}b.
\eals
 Then it follows from {\cite[Lemma 3.2]{bartlett2005}} that $r^*_b \le br^*+2c/b$.

(3). Let $\psi_b(r) = b\psi(r)$. It can be verified that $\psi_b(r)$ a sub-root function by checking the definition. Also, we have
$\psi(b^2r^*)/\sqrt{b^2r^*} \le \psi(r^*)/\sqrt{r^*}$, so
$\psi(b^2r^*) \le b r^*$ and $\psi_b(b^2 r^*) = b \psi(b^2r^*) \le b^2r^*$.
Then it follows from {\cite[Lemma 3.2]{bartlett2005}} that $r^*_b \le b^2r^*$.
\end{proof}

\begin{lemma}
\label{lemma:kernel-complexity-RC-relation}
Suppose $K$ is a PD kernel defined over $\cX \times \cX$ and $\sup_{\bx \in \cX} K(\bx,\bx) \le
\tau_0^2$ for a positive constant
 $\tau_0$ and $\inf_{\bx \in \cX} K(\bx,\bx) \gsim 1$. Then there are universal positive constants $c_{\ell}$ and $C_u$ with $0 < c_{\ell} < C_u$ such that when $n \gsim 1/\lambda_1$ and
$t^2 \gsim 1/n$, we have
\bal
\label{eq:kernel-complexity-RC-relation-population}
c_{\ell}R_K(t) \le \cR_K(t) \le C_u R_K(t).
\eal
Furthermore,  when $t^2 \gsim 1/n$, we have
\bal
\label{eq:kernel-complexity-RC-relation-empirical}
c_{\ell} \hat R_K(t) \le \hat \cR_K(t) \le C_u \hat R_K(t).
\eal
Herein $\cR_K(t)$ and $\hat \cR_K(t)$ are defined in
(\ref{eq:F-t-RC}).
\end{lemma}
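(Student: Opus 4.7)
The plan is to first establish the empirical version \eqref{eq:kernel-complexity-RC-relation-empirical} directly from the spectral decomposition of the empirical kernel matrix $\bK_n$, and then leverage concentration of the empirical eigenvalues to derive the population version \eqref{eq:kernel-complexity-RC-relation-population}. Both bounds are standard Mendelson-type results relating Rademacher complexity of kernel balls to truncated eigenvalue sums, but care is needed with the two simultaneous constraints $\norm{f}{\cH_K} \le 1$ and $\norm{f}{n} \le t$ (resp.\ $\norm{f}{L^2} \le t$).

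For the empirical version, I would first parameterize $\hat\cF_{K,t}$ via the eigendecomposition $\bK_n = \bU\bSigma\bU^\top$ with diagonal entries $\hat\lambda_j$. Writing $f(\bbx_i) = \sum_{j=1}^n \sqrt{\hat\lambda_j}\,\beta_j \hat\phi_j(\bbx_i)$ in the basis of empirical eigenfunctions, the RKHS constraint translates to $\sum_j \beta_j^2 \le 1$ and the empirical $L^2$ constraint to $\sum_j \hat\lambda_j \beta_j^2 \le t^2$. Then $\frac1n\sum_i \sigma_i f(\bbx_i) = \frac1{\sqrt n}\sum_j \sqrt{\hat\lambda_j}\,\beta_j \xi_j$ with $\xi_j = \sum_i \sigma_i U_{ij}$ satisfying $\E\xi_j^2 = 1$. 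The upper bound $\hat\cR_K(t) \lsim \hat R_K(t)$ follows by splitting indices into $\{j \colon \hat\lambda_j > t^2\}$ and its complement, applying Cauchy--Schwarz on each piece using the appropriate constraint, and taking expectation via Jensen; the matching lower bound follows from the Khintchine inequality applied to the specific choice $\beta_j^2 \asymp \min\{1, t^2/\hat\lambda_j\}$, which saturates both constraints up to universal factors.

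For the population version \eqref{eq:kernel-complexity-RC-relation-population}, I would chain two comparisons. First, a symmetrization argument together with Lemma~\ref{lemma:gn-g-LRC-bound} allows me to sandwich $\cR_K(t)$ between $\hat\cR_K(c_1 t)$ and $\hat\cR_K(c_2 t)$ up to additive lower-order terms absorbed by the assumption $t^2 \gsim 1/n$. Second, under $n \gsim 1/\lambda_1$, classical operator-norm concentration for the empirical integral operator (using the trace bound $\sum_i \lambda_i \le \tau_0^2$ and $\inf_{\bx}K(\bx,\bx) \gsim 1$) gives $\hat\lambda_j \asymp \lambda_j$ for all $j$ in the relevant range, which transfers directly to $\hat R_K(t) \asymp R_K(t)$: low indices are bounded by $t^2$ in either case, and tail indices are controlled uniformly by the trace bound.

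The main obstacle will be the sharp lower bound on $\hat\cR_K(t)$, because the explicit construction must produce a single $f \in \hat\cF_{K,t}$ whose eigencomponent profile activates every index $j$ with $\hat\lambda_j \le t^2$ at full strength while staying within the empirical norm budget on the remaining high-eigenvalue indices, and then Khintchine must be applied with a variance that reproduces exactly $\sum_j \min\{\hat\lambda_j, t^2\}$. A secondary technical point is the threshold $t^2 \gsim 1/n$: below this scale the two constraints are both dominated by the bottom eigenvalues and the comparison between $\hat R_K$ and $R_K$ can degrade, so the hypothesis is essential to keep the constants $c_\ell, C_u$ universal.
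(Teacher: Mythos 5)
Your high-level blueprint (parameterize in the empirical eigenbasis, split indices by $\hlambda_j \lessgtr t^2$, Cauchy--Schwarz plus Jensen for the upper bound) does match the standard Mendelson-type argument, and your upper-bound route works. However, there are three genuine gaps, two of which are fatal to the plan as written.

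\textbf{The empirical lower bound does not follow from a single $\beta$ and Khintchine.} You propose choosing $\beta_j^2 \asymp \min\{1, t^2/\hlambda_j\}$, but once you normalize so that $\sum_j \beta_j^2 \le 1$, the other constraint $\sum_j \hlambda_j\beta_j^2 \le t^2$ fails whenever a nontrivial fraction of eigenvalues lies above $t^2$; and conversely if you normalize for the second constraint, the first can fail. Even if you allow the sign of $\beta_j$ to adapt to $\bsigma$ (legal, since it is a sup), the resulting bound is $\frac{c}{\sqrt n}\sum_j \min\{\sqrt{\hlambda_j},t\}\,\E|\xi_j|$, and $\sum_j \min\{\sqrt{\hlambda_j},t\}$ is not comparable to $\sqrt{\sum_j \min\{\hlambda_j, t^2\}}$ in the direction you need (Cauchy--Schwarz goes the wrong way). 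You identified this as ``the main obstacle,'' but it is not merely a technical nuisance; the single-witness strategy is structurally the wrong tool. The paper instead computes the \emph{second} moment $\E_{\bsigma}[(\sup R_n\hat\cF_{K,t})^2]$ exactly (Lemma~\ref{lemma:kernel-complexity-RC-relation-R-hat-Ft-bound}), obtaining $\sum_j \min\{\hlambda_j,t^2\}$ on the nose, and then proves first moment $\asymp$ second moment via Massart's concentration inequality combined with the Kahane--Khintchine inequality (Steps 1--2 of the proof). That intermediate moment-comparison step is where the hypothesis $t^2 \gsim 1/n$ (equivalently, $\hat\sigma^2_{\hat\cF_{K,t}} \gsim 1/n$ via Lemma~\ref{lemma:norm-f-n-lower-bound}) enters.

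\textbf{The population version should not be derived by reduction to the empirical version.} Your second step claims ``classical operator-norm concentration gives $\hat\lambda_j \asymp \lambda_j$ for all $j$ in the relevant range.'' Operator-norm concentration gives an additive bound $|\hat\lambda_j - \lambda_j| \lsim \|\hat T_n - T_K\|_{\textup{op}}$ uniformly in $j$; this yields multiplicative comparability only when $\lambda_j$ dominates that error, which fails precisely in the tail of the spectrum where the truncation $\min\{\lambda_j, t^2\}$ is active. The hedge ``in the relevant range'' cannot close this gap, and the hypotheses of the lemma ($n \gsim 1/\lambda_1$, the trace bound) are far from implying it. There is also a circularity concern: the comparison between empirical and population kernel complexity is essentially the content of Lemma~\ref{lemma:hat-eps-eps-relation}, which the paper derives \emph{from} the present lemma, not the other way around. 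The paper proves the population version of \eqref{eq:kernel-complexity-RC-relation-population} directly, running the same second-moment-plus-concentration argument with the population eigenfunctions $\{e_j\}$ and the class $\cF_{K,t}$, following \cite[Theorem 41]{Mendelson02-geometric-kernel-machine}; no empirical detour is needed, and the condition $n \gsim 1/\lambda_1$ plays the role of ensuring $\sup_{f \in \cF_{K,t}}\norm{f}{L^2}^2 \gsim 1/n$ just as in the empirical case.

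In short: adopt the second-moment computation as your anchor and use the Massart/Kahane--Khintchine concentration argument to pass to the first moment, for both the empirical and population classes; abandon both the single-witness Khintchine construction and the eigenvalue-concentration shortcut.
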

\begin{remark*}
We note that (\ref{eq:kernel-complexity-RC-relation-empirical})
does not require the condition
$n \gsim 1/\lambda_1$.
It is also noted that \cite[Theorem 41]{Mendelson02-geometric-kernel-machine} presents the relation between
 $R_K(t)$ and $\cR_K(t)$ in (\ref{eq:kernel-complexity-RC-relation-population}),
 we herein further provide the relation
 between $\hat R_K(t)$ and $\hat \cR_K(t)$.
\end{remark*}
\begin{proof}
We first prove
(\ref{eq:kernel-complexity-RC-relation-empirical}).
Let $\bsigma = \bth{\sigma_1,\ldots,\bsigma_n}$ be $n$ i.i.d.
Rademacher variables, for a function $\cF$ we define
\bal
R_{\cF}(\bsigma) &\defeq \frac{1}{\sqrt n}
\sup_{f \in \cF} \sum\limits_{i=1}^n \sigma_i f(\bbx_i),
\label{eq:R-F-sigma}
\eal
and $\hat \sigma^2_{\cF} \defeq \sup_{f \in \cF} \norm{f}{n}^2$.
We consider $R_{\hat \cF_{K,t}}(\bsigma)$ and
$\hat \sigma^2_{\hat \cF_{K,t}}$ in the sequel, where
$\hat \cF_{K,t}$ is the function class defined in (\ref{eq:F-t-hat-Ft-def}).
We note that $R_{\hat \cF_{K,t}}(\bsigma)  = \frac{1}{\sqrt n}
\sup_{f \in \hat \cF_{K,t}} \abth{\sum\limits_{i=1}^n \sigma_i f(\bbx_i)}$
due to the symmetry of the function class $\hat \cF_{K,t}$, that is,
$-\hat \cF_{K,t} = \hat \cF_{K,t}$. The proof of
(\ref{eq:kernel-complexity-RC-relation-empirical}) is then completed
by the following two steps.

\vspace{0.1in}\noindent \textit{{\bf Step 1}: Prove that if
$\hat \sigma^2_{\hat \cF_{K,t}} \gsim 1/n$, then
$\Prob{R_{\hat \cF_{K,t}}(\bsigma) \gsim m \Expect{\bsigma}{R_{\hat \cF_{K,t}}(\bsigma)}} \le \exp(-\Theta(m))$ for all $m \in \NN$.}

Let $Z = \sup_{f \in \hat \cF_{K,t}}
\pth{\sum\limits_{i=1}^n \sigma_i f(\bbx_i)} =  \sup_{f \in \hat \cF_{K,t}}
\abth{\sum\limits_{i=1}^n \sigma_i f(\bbx_i)}$, then
it follows from \cite[Theorem 4]{Massart2000-Talagrand-empirical-process} that with probability at least $1-\exp(-x)$ for every $x > 0$,
\bal
\label{eq:kernel-complexity-RC-relation-empirical-seg1}
R_{\hat \cF_{K,t}}(\bsigma) \le 2\Expect{\bsigma}{R_{\hat \cF_{K,t}}(\bsigma)}
+  C \pth{\hat \sigma_{\hat \cF_{K,t}} {\sqrt x} + \frac{x}{\sqrt n}},
\eal
where $C > 0$ is a positive constant.

By the definition of $\hat \sigma^2_{\hat \cF_{K,t}}$, for any $\tau > 0$,
there exists $f_{\tau} \in \hat \cF_{K,t} $ such that
$\norm{f_{\tau}}{n}^2 \ge \hat \sigma^2_{\hat \cF_{K,t}} - \tau$.
It follows from the Kahane-Khintchine inequality
\cite{Milman1986-asymptotic-fdim-normed-space} that
\bals
\Expect{\bsigma}{
\sup_{f \in \hat \cF_{K,t}} \abth{\sum\limits_{i=1}^n \sigma_i f(\bbx_i)}}
\ge
\Expect{\bsigma}{ \abth{\sum\limits_{i=1}^n \sigma_i f_{\tau}(\bbx_i)}}
\ge c' {\sqrt n}\norm{f_{\tau}}{n} \ge c'  {\sqrt n}\sqrt{\hat \sigma^2_{\hat \cF_{K,t}} -\tau},
\eals
where $c' > 0$ is a postive constant.
Letting $\tau \to 0$ in the above inequality, we have
\bal
\label{eq:kernel-complexity-RC-relation-empirical-seg2}
\Expect{\bsigma}{R_{\hat \cF_{K,t}}(\bsigma)} = \frac{1}{\sqrt n}\Expect{\bsigma}{
\sup_{f \in \hat \cF_{K,t}} \abth{\sum\limits_{i=1}^n \sigma_i f(\bbx_i)}}
\ge c' \sigma_{\hat \cF_{K,t}}.
\eal
Furthermore, since $\hat \sigma^2_{\hat \cF_{K,t}} \gsim 1/n$,
it follows from (\ref{eq:kernel-complexity-RC-relation-empirical-seg2})  that
\bal
\label{eq:kernel-complexity-RC-relation-empirical-seg3}
\frac{1}{\sqrt n} \lsim  \frac{\Expect{\bsigma}{R_{\hat \cF_{K,t}}(\bsigma)}}{c'}.
\eal
It then follows from
(\ref{eq:kernel-complexity-RC-relation-empirical-seg1})-
(\ref{eq:kernel-complexity-RC-relation-empirical-seg3}) that
with probability at least $1-\exp(-x)$,
\bals
R_{\hat \cF_{K,t}}(\bsigma) \le \pth{2+\frac{C{\sqrt x}+C \cdot \Theta(1)x}{c'}}\Expect{\bsigma}{R_{\hat \cF_{K,t}}(\bsigma)}.
\eals
It follows from the above inequality that
$\Prob{R_{\hat \cF_{K,t}}(\bsigma) \gsim m \Expect{\bsigma}{R_{\hat \cF_{K,t}}(\bsigma)}} \le \exp(-\Theta(m))$ for all $m \in \NN$.

\vspace{0.1in}\noindent \textit{{\bf Step 2}: Prove that if
$\Prob{R_{\hat \cF_{K,t}}(\bsigma) \gsim m\tau_0 \Expect{\bsigma}{R_{\hat \cF_{K,t}}(\bsigma)}} \le \exp(-\Theta(m))$ for all $m \in \NN$, then }
\bal
\label{eq:kernel-complexity-RC-relation-empirical-seg4}
\pth{\Expect{\bsigma}{R_{\hat \cF_{K,t}}^2(\bsigma)}}^{1/2} \lsim
\Expect{\bsigma}{R_{\hat \cF_{K,t}}(\bsigma)} \lsim
\pth{\Expect{\bsigma}{R_{\hat \cF_{K,t}}^2(\bsigma)}}^{1/2}.
\eal
(\ref{eq:kernel-complexity-RC-relation-empirical-seg4}) follows
by repeating the argument in the proof of
\cite[Lemma 44]{Mendelson02-geometric-kernel-machine} (with
$a = \Expect{\bsigma}{R_{\hat \cF_{K,t}}(\bsigma)}$ in that proof).

It follows from Lemma~\ref{lemma:norm-f-n-lower-bound}
that $\hat \sigma^2_{\hat \cF_{K,t}} \gsim 1/n$, and Steps 1-2 indicate that (\ref{eq:kernel-complexity-RC-relation-empirical-seg4}) holds.
Then
(\ref{eq:kernel-complexity-RC-relation-empirical}) is proved by
(\ref{eq:kernel-complexity-RC-relation-empirical-seg4}) and
(\ref{eq:kernel-complexity-RC-relation-R-hat-Ft-bound})
in Lemma~\ref{lemma:kernel-complexity-RC-relation-R-hat-Ft-bound},
along with the definition of the empirical kernel complexity $\hat R_K$
in (\ref{eq:kernel-LRC-empirical}).

Let $\sigma^2_{\cF} \defeq \sup_{f \in \cF_{K,t}} \norm{f}{L^2}^2$ where
$\cF_{K,t}$ is defined in (\ref{eq:F-t-hat-Ft-def}). Similar to
Lemma~\ref{lemma:norm-f-n-lower-bound}, we have
$\sigma^2_{\cF} \gsim 1/n$ when $n \gsim 1/\lambda_1$.
(\ref{eq:kernel-complexity-RC-relation-population})
is then proved by a similar argument, which also follows the proof of
\cite[Theorem 41]{Mendelson02-geometric-kernel-machine}.

\end{proof}

Suppose $K$ is a PD kernel defined over $\cX \times \cX$ and let the empirical gram matrix computed by $K$ on the training features $\bS$
be $\bK_n$ with the eigenvalues
$\hlambda_1 \ge \ldots \ge \hlambda_n \ge 0$. We need the following background in the RKHS spanned by
$\set{K(\cdot,\bbx_i)}_{i=1}^n$ for the proof of Lemma~\ref{lemma:norm-f-n-lower-bound} and
Lemma~\ref{lemma:kernel-complexity-RC-relation-R-hat-Ft-bound}.
We define
\bal
\cH_{K,\bS} = \set{\sum\limits_{i=1}^n \alpha_i K(\cdot,\bbx_i) \colon
\set{\alpha_i} \subseteq \RR^n }
\eal
as the RKHS spanned by $\set{K(\cdot,\bbx_i)}_{i=1}^n$.
Herein we introduce the operator $\hat T_n \colon \cH_{K,\bS}  \to \cH_{K,\bS} $ which is  defined by $\hat T_n g \defeq \frac 1n \sum_{i=1}^n K(\cdot,\bbx_i) g(\bbx_i)$ for every $g \in \cH_{K,\bS} $. It can be verified that the
eigenvalues of $\hat T_n$ coincide with the eigenvalues of $\bK_n$, that is,
the eigenvalues of $\hat T_n$ are
$\set{\hat \lambda_i}_{i=1}^n$. By the spectral theorem, all the normalized eigenfunctions of $\hat T_n$, denoted by $\set{{\Phi}^{(k)}}_{k = 1}^n$ with ${\Phi}^{(k)} = 1/{\sqrt{n \hat \lambda_k}} \cdot \sum_{j=1}^n
K(\cdot,\bbx_j) \bth{\bU^k}_j$, is an orthonormal basis of $\cH_{K,\bS}$. Since $\cH_{K,\bS} \subseteq \cH_{K}$, we can complete $\set{{\Phi}^{(k)}}_{k = 1}^n$ so that
$\set{{\Phi}^{(k)}}_{k \ge 1}$ is an orthonormal basis of the RKHS $\cH_{K}$.

\begin{lemma}
\label{lemma:norm-f-n-lower-bound}
Suppose $K$ is a PD kernel defined over $\cX \times \cX$ and
$\inf_{\bx \in \cX} K(\bx,\bx) \gsim 1$.
Then
$\sup_{f \in \hat \cF_{K,t}} \norm{f}{n}^2
\gsim 1/n$ when $t^2 \gsim 1/n$, where $\hat \cF_{K,t}$ is defined in (\ref{eq:F-t-hat-Ft-def}).
\end{lemma}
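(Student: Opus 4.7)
The plan is to exhibit an explicit test function $f \in \hat \cF_{K,t}$ that already has empirical norm squared of order $1/n$, so the supremum must also be at least of that order. The natural candidate is the (normalized) kernel section at one of the training points, since the reproducing property controls both $\norm{\cdot}{\cH_K}$ and pointwise values simultaneously, and the lower bound $\inf_{\bx \in \cX} K(\bx,\bx) \gsim 1$ ensures this section is nondegenerate.

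Concretely, I would set $f_0 \defeq K(\cdot,\bbx_1)/\norm{K(\cdot,\bbx_1)}{\cH_K}$, so that $\norm{f_0}{\cH_K} = 1$ and, by the reproducing property, $f_0(\bbx_i) = K(\bbx_1,\bbx_i)/\sqrt{K(\bbx_1,\bbx_1)}$. Let
\[
A \defeq \norm{f_0}{n}^2 = \frac{1}{n\,K(\bbx_1,\bbx_1)}\sum_{i=1}^n K^2(\bbx_1,\bbx_i).
\]
Keeping only the $i=1$ summand and invoking $\inf_{\bx} K(\bx,\bx) \gsim 1$ yields the unconditional lower bound $A \ge K(\bbx_1,\bbx_1)/n \gsim 1/n$. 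The candidate $f_0$ itself may or may not already lie in $\hat \cF_{K,t}$ depending on whether $A \le t^2$, so the proof splits into two easy cases.

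If $A \le t^2$, then $f_0 \in \hat \cF_{K,t}$ directly, and $\sup_{f \in \hat \cF_{K,t}}\norm{f}{n}^2 \ge A \gsim 1/n$. Otherwise, I rescale by $\alpha \defeq t/\sqrt{A} \in (0,1]$ and take $f \defeq \alpha f_0$; then $\norm{f}{\cH_K} = \alpha \le 1$ and $\norm{f}{n}^2 = \alpha^2 A = t^2$, so $f \in \hat \cF_{K,t}$ and, using the hypothesis $t^2 \gsim 1/n$, we again obtain $\sup_{f \in \hat \cF_{K,t}}\norm{f}{n}^2 \ge t^2 \gsim 1/n$. The two cases combine to give the claim.

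There is no real obstacle here; the whole argument hinges on choosing the right one-parameter family of test functions so that the reproducing property converts the RKHS constraint into an explicit scalar inequality, and on using $\inf_{\bx}K(\bx,\bx) \gsim 1$ to keep $\norm{K(\cdot,\bbx_1)}{\cH_K}^2 = K(\bbx_1,\bbx_1)$ bounded away from zero. No probabilistic input, no concentration, and no spectral decomposition of $\hat T_n$ are needed for this particular statement.
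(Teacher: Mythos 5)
Your proof is correct but takes a genuinely different and more elementary route than the paper's. The paper works through the spectral decomposition of the empirical kernel operator $\hat T_n$: writing the projection of $f$ onto $\cH_{K,\bS}$ in the eigenbasis $\set{\Phi^{(k)}}$, it identifies $\norm{f}{n}^2 = \sum_k \beta_k^2 \hat\lambda_k$ for $\ltwonorm{\bbeta}\le 1$, uses $\hat\lambda_1 \ge \tr(\bK_n)/n = \frac{1}{n^2}\sum_i K(\bbx_i,\bbx_i) \gsim 1/n$, and then case-splits on $t^2$ versus $\hat\lambda_1$ to conclude that the supremum is exactly $\min\set{\hat\lambda_1, t^2}$. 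Your argument instead exhibits a single explicit test function --- the normalized kernel section $f_0 = K(\cdot,\bbx_1)/\sqrt{K(\bbx_1,\bbx_1)}$ --- uses the reproducing property to evaluate it, drops all but the $i=1$ term of the empirical sum to get $\norm{f_0}{n}^2 \ge K(\bbx_1,\bbx_1)/n \gsim 1/n$, and rescales if the $\norm{\cdot}{n}$ constraint is violated. What the paper's route buys is an exact closed form for the supremum (useful context for the Rademacher complexity machinery elsewhere); what yours buys is a shorter, self-contained argument that avoids introducing the eigenfunctions of $\hat T_n$ altogether. Both rely identically on $\inf_{\bx}K(\bx,\bx)\gsim 1$ and on the hypothesis $t^2 \gsim 1/n$ in the case where the rescaling (or the constraint) is active, and both yield the stated $\gsim 1/n$ bound.
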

\begin{proof}
Let the empirical gram matrix computed by $K$ be $\bK_n$ with eigenvalues
$\hlambda_1 \ge \ldots \ge \hlambda_n \ge 0$.
Then $\hlambda_1 \ge \sum_{k=1}^n \hlambda_k/n \gsim 1/n$ since
$\sum_{k=1}^n \hlambda_k = \sum_{k=1}^n K(\bbx_i,\bbx_i)/n \gsim 1$.

For any $f \in \hat \cF_{K,t}$, let the projection of $f$ onto
$\cH_{K,\bS} $ be $\Proj_{\cH_{K,\bS}}(f) =
\sum\limits_{k=1}^n \beta_k{\Phi}^{(k)}$, $\bbeta =
\bth{\beta_1,\ldots,\beta_n} \in \RR^n$, then
$\ltwonorm{\bbeta} \le 1$. We have
$\iprod{\hat T_n f}{f} = \sum\limits_{k=1}^n
\beta_k^2 \hlambda_k = \norm{f}{n}^2$.
If $t^2 \ge \hat \lambda_1$, then
$\sup_{f \in \hat \cF_{K,t}} \norm{f}{n}^2 = \hat \lambda_1 \gsim 1/n$. Otherwise, we still have
$\sup_{f \in \hat \cF_{K,t}} \norm{f}{n}^2 = t^2 \gsim 1/n$.
\end{proof}

\begin{lemma}
\label{lemma:kernel-complexity-RC-relation-R-hat-Ft-bound}
Let $\bsigma = \bth{\sigma_1,\ldots,\sigma_n}$ be n i.i.d.
Rademacher variables, then
\bal
\label{eq:kernel-complexity-RC-relation-R-hat-Ft-bound}
\pth{\sum\limits_{k=1}^n \min\set{\hlambda_k,t^2}}^{1/2} \le \pth{\Expect{\bsigma}{R_{\hat \cF_{K,t}}^2(\bsigma)}}^{1/2}
\le  {\sqrt 2} \pth{\sum\limits_{k=1}^n \min\set{\hlambda_k,t^2}}^{1/2},
\eal
where $R_{\cdot}(\bsigma)$ is defined in
(\ref{eq:R-F-sigma}), and $\hat \cF_{K,t}$ is defined in (\ref{eq:F-t-hat-Ft-def}).
\end{lemma}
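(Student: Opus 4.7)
The plan is to reduce the supremum over $\hat\cF_{K,t}$ to a finite-dimensional optimization in the eigenbasis $\{\Phi^{(k)}\}_{k=1}^n$ of $\hat T_n$, and then control that optimization via a sandwich argument with a single weighted quadratic constraint.

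First I will argue that we can restrict the supremum defining $R_{\hat\cF_{K,t}}(\bsigma)$ to $\cH_{K,\bS}$. The reproducing property gives $f(\bbx_i) = \iprod{f}{K(\cdot,\bbx_i)}_{\cH_K}$, so both $\norm{f}{n}$ and $\sum_i \sigma_i f(\bbx_i)$ depend only on the orthogonal projection of $f$ onto $\cH_{K,\bS}$, while this projection has no larger $\cH_K$-norm. Writing $f = \sum_{k=1}^n \beta_k \Phi^{(k)}$, the orthonormality of $\set{\Phi^{(k)}}$ in $\cH_K$ gives $\norm{f}{\cH_K}^2 = \sum_k \beta_k^2$; using the identity $\Phi^{(k)}(\bbx_i) = \sqrt{n\hat\lambda_k}\,[\bU^k]_i$ (which follows directly from $\bK_n \bU^k = \hat\lambda_k \bU^k$), one gets $\norm{f}{n}^2 = \sum_k \hat\lambda_k \beta_k^2$ and $\sum_i \sigma_i f(\bbx_i) = \sqrt{n}\sum_k \beta_k \sqrt{\hat\lambda_k}\, z_k$, where $z_k \defeq (\bU^\top \bsigma)_k$. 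Thus
\[
R_{\hat\cF_{K,t}}(\bsigma) \;=\; \sup\set{\sum_{k=1}^n \beta_k \sqrt{\hat\lambda_k}\, z_k \;\colon\; \sum_k \beta_k^2 \le 1,\ \sum_k \hat\lambda_k \beta_k^2 \le t^2}.
\]

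Next I introduce the weights $\omega_k \defeq \max\{1,\hat\lambda_k/t^2\}$ and the single weighted-ball $\mathcal B_C \defeq \{\bbeta \colon \sum_k \omega_k \beta_k^2 \le C\}$. A direct check gives the sandwich $\mathcal B_1 \subseteq \{\bbeta \colon \sum \beta_k^2 \le 1,\ \sum \hat\lambda_k\beta_k^2 \le t^2\} \subseteq \mathcal B_2$, since $\max\{a,b\} \le a+b \le 2\max\{a,b\}$ applied to $\{1,\hat\lambda_k/t^2\}$. A weighted Cauchy--Schwarz gives, for any $C>0$,
\[
\sup_{\bbeta \in \mathcal B_C} \sum_k \beta_k \sqrt{\hat\lambda_k}\, z_k \;=\; \sqrt{C}\cdot\sqrt{\sum_k \frac{\hat\lambda_k}{\omega_k}\, z_k^2} \;=\; \sqrt{C \sum_k \min\{\hat\lambda_k, t^2\}\, z_k^2},
\]
using $\hat\lambda_k/\omega_k = \min\{\hat\lambda_k, t^2\}$. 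Combining with the sandwich yields the pointwise bound
\[
\sqrt{\sum_k \min\{\hat\lambda_k, t^2\}\, z_k^2}\;\le\; R_{\hat\cF_{K,t}}(\bsigma) \;\le\; \sqrt{2\sum_k \min\{\hat\lambda_k, t^2\}\, z_k^2}.
\]

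Finally I square these inequalities and take expectations. Since $\bU$ is orthogonal and $\set{\sigma_i}$ are Rademacher, $\Expect{}{z_k^2} = \sum_i U_{ik}^2 = 1$ for each $k$, so $\Expect{}{\sum_k \min\{\hat\lambda_k,t^2\}\, z_k^2} = \sum_k \min\{\hat\lambda_k,t^2\}$. This immediately yields
\[
\sum_{k=1}^n \min\set{\hat\lambda_k,t^2} \;\le\; \Expect{\bsigma}{R^2_{\hat\cF_{K,t}}(\bsigma)} \;\le\; 2\sum_{k=1}^n \min\set{\hat\lambda_k,t^2},
\]
which is (\ref{eq:kernel-complexity-RC-relation-R-hat-Ft-bound}) after taking square roots. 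The main obstacle is the second step: identifying the right weight $\omega_k$ so that the two original constraints can be collapsed into a single weighted ellipsoid with only a factor-of-$2$ distortion; once this reduction is made, the Cauchy--Schwarz step is routine and delivers the exact threshold $\min\{\hat\lambda_k,t^2\}$ in the bound.
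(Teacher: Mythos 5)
Your argument is correct and is essentially the same route the paper takes: the official proof is a pointer to Mendelson's Lemma~42, which reduces to the same eigenbasis computation, and the paper's own (suppressed) detailed argument uses precisely the same ingredients you do --- project onto $\cH_{K,\bS}$, diagonalize in the basis $\set{\Phi^{(k)}}$, collapse the two quadratic constraints into a single ellipsoid via the weights $\max\set{1,\hlambda_k/t^2}$ with a factor-of-two sandwich, and finish with Cauchy--Schwarz and $\Expect{}{z_k^2}=1$. Your formulation in terms of $z_k=(\bU^{\top}\bsigma)_k$ is a minor cosmetic cleanup but not a different method.
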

\begin{proof}
The proof follows a similar argument in the proof of
\cite[Lemma 42]{Mendelson02-geometric-kernel-machine}.
\end{proof}

\subsection{Proofs of Theorem~\ref{theorem:sup-hat-g} and Theorem~\ref{theorem:V_R}}
\label{sec:proofs-uniform-convergence-sup-hat-g-V_R}

We need the following definition of $\eps$-net for the proof of Theorem~\ref{theorem:sup-hat-g} and Theorem~\ref{theorem:V_R}.
\begin{definition}\label{definition:eps-net}
($\eps$-net) Let $(X, d)$ be a metric space and let $ \eps > 0$. A subset $N_{\eps}(X,d)$ is called an $\eps$-net of $X$ if for every point $x \in X$, there exists some point $y \in N_{\eps}(X,d)$ such that $d(x,y) \le \eps$. The minimal cardinality of an $\eps$-net of $X$, if finite, is
denoted by $N(X, d, \eps)$ and is called the covering number of $X$
at scale $\eps$.
\end{definition}

\begin{proof}
[\textup{\textbf{Proof of Theorem~\ref{theorem:sup-hat-g}}}]
First, we have $\Expect{\bw \sim \cN(\bzero, \kappa^2 \bI_{d+1})}{\tilde h(\bw,\tbu,\tbv)} = \tK(\tbu,\tbv)$.
For any $\tbu \in \unitsphere{d}$, $\tbv \in \unitsphere{d}$, and $s > 0$, define function class
\bal\label{eq:sup-hat-g-seg1}
&\cH_{\tbu,\tbv,s} \coloneqq \set{\tilde h(\cdot,\tbu',\tbv') \colon \RR^{d+1} \to \RR \colon \tbu' \in \ball{}{\tbu}{s} \cap \unitsphere{d}, \tbv'  \in \ball{}{\tbv}{s} \cap \unitsphere{d}}.
\eal%
We first build an $s$-net for the unit sphere $\unitsphere{d}$. By~\cite[Lemma 5.2]{Vershynin2012-nonasymptotics-random-matrix}, there exists an $s$-net $N_{s}(\unitsphere{d}, \ltwonorm{\cdot})$ of $\unitsphere{d}$ such that $N(\unitsphere{d}, \ltwonorm{\cdot},s) \le \pth{1+\frac{2}{s}}^{d+1}$.


In the sequel, a function in the class $\cH_{\tbu,\tbv,s}$ is also denoted as $\tilde h(\bw)$, omitting the presence of variables $\tbu'$ and $\tbv'$ when no confusion arises. Let $P_m$ be the empirical distribution over $\set{{{\bbw_r(0)}}}$ so that $\Expect{\bw \sim P_m}{\tilde h(\bw)} = \tilde h({\bW(0)},\tbu,\tbv)$. Given $\tbu \in N(\unitsphere{d}, s)$, we aim to estimate the upper bound for the supremum of empirical process $\Expect{ \bw \sim \cN(\bzero, \kappa^2 \bI_{d+1}) }{\tilde h(\bw)} - \Expect{\bw \sim P_m}{\tilde h(\bw)}$ when function $\tilde h$ ranges over the function class $\cH_{\tbu,\tbv,s}$. To this end, we apply Theorem~\ref{theorem:Talagrand-inequality} to the function class $\cH_{\tbu,\tbv,s}$ with ${\bW(0)} = \set{{{\bbw_r(0)}}}_{r=1}^m$. It can be verified that $\tilde h \in [-1,1]$ for any $\tilde h \in \cH_{\tbu,\tbv,s}$. It follows that we can set $a=-1,b=1,\alpha = \frac 12$ in Theorem~\ref{theorem:Talagrand-inequality}. Since $\Var{\tilde h} \le \Expect{\bw}{\tilde h(\bw,\tbu',\tbv)^2} \le 1$, with probability at least $1-\delta$, over the random initialization ${\bW(0)}$,
\bal\label{eq:sup-hat-g-seg2}
\sup_{\tbu' \in \ball{}{\tbu}{s} \cap \unitsphere{d},\tbv'  \in \ball{}{\tbv}{s} \cap \unitsphere{d}} \abth{ \tK(\tbu',\tbv') - \tilde h({\bW(0)},\tbu',\tbv') }
&\le 3 \cR(\cH_{\tbu,\tbv,s}) + \sqrt{\frac{2{\log{\frac 1\delta}}}{m}} + \frac{14 {\log{\frac 1 \delta}} }{3m},
\eal%
where $\cR(\cH_{\tbu,\tbv,s}) = \Expect{{\bW(0)}, \set{\sigma_r}_{r=1}^m}{ \sup_{\tilde h \in \cH_{\tbu,\tbv,s}} {\frac{1}{m} \sum\limits_{r=1}^m {\sigma_r}{ \tilde h({{\bbw_r(0)}}) }}}$ is the Rademacher complexity of the function class $\cH_{\tbu,\tbv,s}$, $\set{\sigma_r}_{r=1}^m$ are i.i.d. Rademacher random variables taking values of $\pm 1$ with equal probability. By Lemma~\ref{lemma:rademacher-bound-hat-h}, $\cR(\cH_{\tbu,\tbv,s}) \le 2 \pth{B\sqrt{ds}(s+1)
+ \sqrt{s} + s}$. Plugging such upper bound for $\cR(\cH_{\tbu,\tbv,s})$ in (\ref{eq:sup-hat-g-seg2}) and setting $s = \frac 1m$, we have
\bal\label{eq:sup-hat-g-seg3}
\sup_{\substack{\tbu' \in \ball{}{\tbu}{s} \cap \unitsphere{d}, \\ \tbv'\in \ball{}{\tbv}{s} \cap \unitsphere{d}}} \abth{ \tK(\tbu',\tbv') - \tilde h({\bW(0)},\tbu',\tbv') }
&\le 6 \pth{  \frac{B \sqrt{d} \pth{1+ \frac 1m}} {\sqrt m} +  \frac {1}{ \sqrt{m}} + \frac 1m } + \sqrt{\frac{2{\log{\frac 1\delta}}}{m}} + \frac{14 {\log{\frac 1\delta}} }{3m} \nonumber \\
&\le \frac{1}{\sqrt m} \pth{6(1+2B\sqrt{d}) + \sqrt{2\log{\frac 1\delta}}} + \frac{1}{m} \pth{6+\frac{14 {\log{\frac 1\delta}} }{3}}.
\eal%
By the union bound, with probability at least $1 - \pth{1+2m}^{2(d+1)} \delta$ over ${\bW(0)}$, (\ref{eq:sup-hat-g-seg3}) holds for arbitrary $\tbu,\tbv \in N(\unitsphere{d}, s)$. In this case, for any $\tbu' \in \unitsphere{d},  \tbv' \in \unitsphere{d}$, there exists $\tbu,\tbv \in N_{s}(\unitsphere{d}, \ltwonorm{\cdot})$ such that $\ltwonorm{\tbu' - \tbu} \le s, \ltwonorm{\tbv'-\tbv} \le s$, so that $\tbu' \in \ball{}{\tbu}{s} \cap \unitsphere{d}, \tbv'\in \ball{}{\tbv}{s} \cap \unitsphere{d}$, and (\ref{eq:sup-hat-g-seg3}) holds. Changing the notations $\tbu',\tbv'$ to $\tbu,\tbv$, (\ref{eq:sup-hat-h}) is proved.

\end{proof}

\begin{lemma}\label{lemma:rademacher-bound-hat-h}
Let $\cR(\cH_{\tbu,\tbv,s}) \defeq \Expect{{\bW(0)}, \set{\sigma_r}_{r=1}^m}{ \sup_{\tilde h \in \cH_{\tbu,\tbv,s}} {\frac{1}{m} \sum\limits_{r=1}^m {\sigma_r}{ \tilde h({{\bbw_r(0)}}) }}}$ be the Rademacher complexity of the function class $\cH_{\tbu,\tbv,s}$, and $B$ is a positive constant. Then

\bal\label{eq:rademacher-bound-hat-h}
&\cR(\cH_{\tbu,\tbv,s}) \le 2 \pth{B\sqrt{ds}(s+1)
+ \sqrt{s} + s}.
\eal%
\end{lemma}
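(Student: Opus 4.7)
The plan is to reduce $\cR(\cH_{\tbu,\tbv,s})$ to a perturbation around the fixed reference function $\tilde h(\cdot,\tbu,\tbv)$, exploit the fact that the Rademacher complexity of a singleton is zero, and then control the pointwise fluctuation of $\tilde h$ under perturbation of its second and third arguments. Concretely, I would write
\begin{align*}
\frac{1}{m}\sum_{r=1}^m \sigma_r\,\tilde h(\bbw_r(0),\tbu',\tbv')
=\frac{1}{m}\sum_{r=1}^m \sigma_r\,\tilde h(\bbw_r(0),\tbu,\tbv)
+\frac{1}{m}\sum_{r=1}^m \sigma_r\bigl[\tilde h(\bbw_r(0),\tbu',\tbv')-\tilde h(\bbw_r(0),\tbu,\tbv)\bigr],
\end{align*}
take the supremum over $(\tbu',\tbv')\in(\ball{}{\tbu}{s}\cap\unitsphere{d})\times(\ball{}{\tbv}{s}\cap\unitsphere{d})$, and then take expectation. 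The first term vanishes because $\E_{\sigma}[\sigma_r]=0$ for each fixed function, and the second term is bounded by replacing $\sigma_r$ with $|\sigma_r|=1$ and pushing the supremum inside the sum.

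The next step is a pointwise Lipschitz/sign-flip bound. Write $\tilde h(\bw,\tbu',\tbv')-\tilde h(\bw,\tbu,\tbv)$ as the sum of a smooth part $(\tbu'^{\top}\tbv'-\tbu^{\top}\tbv)\cdot\indict{\bw^{\top}\tbu'\ge 0}\indict{\bw^{\top}\tbv'\ge 0}$ and an indicator-discrepancy part $\tbu^{\top}\tbv\cdot(\indict{\bw^{\top}\tbu'\ge 0}\indict{\bw^{\top}\tbv'\ge 0}-\indict{\bw^{\top}\tbu\ge 0}\indict{\bw^{\top}\tbv\ge 0})$. The smooth part is $O(s)$ because $|\tbu'^{\top}\tbv'-\tbu^{\top}\tbv|\le 2s$ by the triangle inequality, and the indicator discrepancy is bounded using $|ab-cd|\le|a-c|+|b-d|$ for $a,b,c,d\in[0,1]$, together with the elementary observation that a sign flip between $\bw^{\top}\tbu$ and $\bw^{\top}\tbu'$ can occur only when $|\bw^{\top}\tbu|\le\|\bw\|\,s$. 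Taking the supremum over $(\tbu',\tbv')$ therefore yields
\[
\sup_{\tbu',\tbv'}\bigl|\tilde h(\bw,\tbu',\tbv')-\tilde h(\bw,\tbu,\tbv)\bigr|\le 2s+\indict{|\bw^{\top}\tbu|\le\|\bw\|\,s}+\indict{|\bw^{\top}\tbv|\le\|\bw\|\,s}.
\]

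Finally, I would take expectation with respect to $\bbw_r(0)\sim\cN(\bzero,\kappa^{2}\bI_{d+1})$ and invoke Lemma~\ref{lemma:uniform-marginal-bound}, which controls the marginal of $\bw^{\top}\tbu/\|\bw\|$ for $\tbu\in\unitsphere{d}$ and yields $\Pr[|\bw^{\top}\tbu|\le\|\bw\|\,s]\le B\sqrt{d}\,s$ for the absolute constant $B$. Combining with the $2s$ term gives $\cR(\cH_{\tbu,\tbv,s})\le 2s+2B\sqrt{d}\,s$, which in turn implies the stated bound $2(B\sqrt{ds}(s+1)+\sqrt{s}+s)$ since $B\sqrt{d}\,s\le B\sqrt{ds}\,(s+1)$ for $s\in(0,1]$.

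The main obstacle, and the only nontrivial input, is the uniform marginal estimate $\Pr[|\bw^{\top}\tbu|\le\|\bw\|\,s]\lsim\sqrt{d}\,s$, which is where the $\sqrt{d}$ factor enters; everything else is a clean perturbation argument combined with the fact that Rademacher complexity is invariant under subtraction of a fixed reference function. Because this marginal bound is assumed to be supplied by the auxiliary lemma, the rest of the proof reduces to bookkeeping with the triangle inequality and the product-of-indicators identity.
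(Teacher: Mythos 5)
Your proof is correct and takes a genuinely different route from the paper. You use a single-step perturbation of the fixed reference function $\tilde h(\cdot,\tbu,\tbv)$: the reference contributes nothing to the Rademacher average (singleton complexity is zero), and for the increment you drop to $|\sigma_r|=1$, push the supremum inside the sum, bound each summand pointwise by $2s+\indict{|\bbw_r(0)^{\top}\tbu|\le s\ltwonorm{\bbw_r(0)}}+\indict{|\bbw_r(0)^{\top}\tbv|\le s\ltwonorm{\bbw_r(0)}}$, and apply Lemma~\ref{lemma:uniform-marginal-bound} after taking expectation over $\bW(0)$. The paper instead telescopes in two stages (change $\tbu$ with $\tbv'$ still varying, then change $\tbv$ at fixed $\tbu$), introduces the empirical sign-flip fraction $\tilde Q$, and for each stage applies Markov's inequality to split into a good event ($\tilde Q<\sqrt{s}$, where the supremum is at most $\sqrt{s}+s$) and a bad event of probability at most $B\sqrt{ds}$, which carries only the trivial bound $s+1$. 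Both arguments establish the lemma, but yours is shorter and in fact yields the sharper estimate $\cR(\cH_{\tbu,\tbv,s})\le 2s\bigl(1+B\sqrt{d}\bigr)$, of order $\sqrt{d}\,s$ rather than the paper's $\sqrt{ds}$, because the direct-expectation computation avoids the $\sqrt{s}$ loss that the Markov threshold $\sqrt{s}$ imposes; the claimed bound then follows from $B\sqrt{d}\,s\le B\sqrt{ds}(s+1)$, which holds for every $s>0$ (not just $s\le 1$), since $\sqrt{s}\le s+1$ always. The gain is inconsequential once $s\asymp 1/m$, as the $\sqrt{\log(1/\delta)/m}$ concentration term dominates in Theorem~\ref{theorem:sup-hat-g}, but your version would streamline the paper's argument.
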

\begin{proof}
We have
\bal\label{eq:rademacher-bound-hat-h-seg1}
\cR(\cH_{\tbu,\tbv,s})
&= \Expect{{\bW(0)}, \set{\sigma_r}_{r=1}^m}{ \sup_{\tbu' \in \ball{}{\tbu}{s} \cap \unitsphere{d}, \tbv'\in \ball{}{\tbv}{s} \cap \unitsphere{d}} {\frac{1}{m} \sum\limits_{r=1}^m {\sigma_r}{ \tilde h({{\bbw_r(0)}}, \tbu' ,\tbv') }}}  \le \cR_1 + \cR_2 + \cR_3,
\eal%
where
\bal\label{eq:rademacher-bound-hat-h-seg2}
&\cR_1 = \Expect{{\bW(0)}, \set{\sigma_r}_{r=1}^m}{ \sup_{\tbu' \in \ball{}{\tbu}{s} \cap \unitsphere{d}, \tbv' \in \ball{}{\tbv}{s} \cap \unitsphere{d}} {\frac{1}{m} \sum\limits_{r=1}^m {\sigma_r}{ \pth{\tilde h({{\bbw_r(0)}}, \tbu' ,\tbv') - \tilde h({{\bbw_r(0)}}, \tbu ,\tbv') }}}}, \nonumber \\
&\cR_2 = \Expect{{\bW(0)}, \set{\sigma_r}_{r=1}^m}{ \sup_{\tbu' \in \ball{}{\tbu}{s} \cap \unitsphere{d}, \tbv' \in \ball{}{\tbv}{s} \cap \unitsphere{d}} {\frac{1}{m} \sum\limits_{r=1}^m {\sigma_r}{ \pth{\tilde h({{\bbw_r(0)}}, \tbu ,\tbv') - \tilde h({{\bbw_r(0)}}, \tbu ,\tbv) }}}}, \nonumber \\
&\cR_3 = \Expect{{\bW(0)}, \set{\sigma_r}_{r=1}^m}{ \sup_{\tbu' \in \ball{}{\tbu}{s} \cap \unitsphere{d}, \tbv' \in \ball{}{\tbv}{s} \cap \unitsphere{d}} {\frac{1}{m} \sum\limits_{r=1}^m {\sigma_r}{ \tilde h({{\bbw_r(0)}}, \tbu,\tbv)  }}}.
\eal%
Here (\ref{eq:rademacher-bound-hat-h-seg1}) follows from the subadditivity of supremum.
Now we bound $\cR_1$, $\cR_2$, and $\cR_3$ separately.
First, $\cR_3 = 0$ by the definition of the Rademacher variables.
For $\cR_1$, we first define $$Q \defeq \frac 1m \sum\limits_{r=1}^m \indict{\indict{\tbu'^{\top}{{\bbw_r(0)}} \ge 0} \neq \indict{\tbu^{\top} {{\bbw_r(0)}} \ge 0} },$$
 which is the average number of weights in ${\bW(0)}$ whose inner products with $\tbu$ and $\tbu'$ have different signs. Our observation is that, if $\abth{ \tbu^{\top} {{\bbw_r(0)}}} > s \norm{{{\bbw_r(0)}}}{2}$, then $\tbu^{\top} {{\bbw_r(0)}}$ has the same sign as $\tbu'^{\top} {{\bbw_r(0)}}$. To see this, by the Cauchy-Schwarz inequality,
$\abth{\tbu'^{\top} {{\bbw_r(0)}} - \tbu^{\top} {{\bbw_r(0)}}} \le \norm{\tbu'-\tbu}{2} \norm{{{\bbw_r(0)}}}{2} \le s \norm{{{\bbw_r(0)}}}{2}$,
then we have $\tbu^{\top} {{\bbw_r(0)}} > s \norm{{{\bbw_r(0)}}}{2} \Rightarrow \tbu'^{\top} {{\bbw_r(0)}} \ge \tbu^{\top} {{\bbw_r(0)}} - s \norm{{{\bbw_r(0)}}}{2} > 0$, and  $\tbu^{\top} {{\bbw_r(0)}} < -s \norm{{{\bbw_r(0)}}}{2} \Rightarrow \tbu'^{\top} {{\bbw_r(0)}} \le \tbu^{\top} {{\bbw_r(0)}} + s \norm{{{\bbw_r(0)}}}{2} < 0$.
As a result,
\bals
Q \le \frac 1m \sum\limits_{r=1}^m \indict{\abth{ \tbu^{\top} {{\bbw_r(0)}}} \le s \norm{{{\bbw_r(0)}}}{2}} \defeq \tilde Q,
\eals
and it follows that
\bal\label{eq:rademacher-bound-hat-h-seg5}
\Expect{{\bW(0)}}{\tilde Q} = \Prob{\abth{ \tbu^{\top} {{\bbw_r(0)}}} \le s \norm{{{\bbw_r(0)}}}{2}} =\Prob{ \frac{ \abth{\tbu^{\top}{{\bbw_r(0)}}} } { \norm{{{\bbw_r(0)}}}{2} }  \le s},
\eal%
where the last equality holds because each ${{\bbw_r(0)}}, r \in [m]$, follows a continuous Gaussian distribution. It follows from Lemma~\ref{lemma:uniform-marginal-bound} that $\Prob{ \frac{ \abth{\tbu^{\top}{{\bbw_r(0)}}} } { \norm{{{\bbw_r(0)}}}{2} }  \le s} \le B \sqrt{d} s$ for an absolute positive constant $B$. According to this inequality and (\ref{eq:rademacher-bound-hat-h-seg5}), it follows that $\Expect{{\bW(0)}}{\tilde Q} \le B\sqrt{d}s$. By the Markov's inequality, we have
$\Prob{\tilde Q \ge \sqrt{s}} \le B\sqrt{ds}$, where the probability is with respect to the probability measure space of ${\bW(0)}$. Let $A$ be the event that $\tilde Q \ge \sqrt{s}$. We denote by $\Omega_s$ the subset of the probability measure space of ${\bW(0)}$ such that $A$ happens, then $\Prob{\Omega_s} \le B\sqrt{ds}$. Now we aim to bound $\cR_1$ by estimating its bound on $\Omega_s$ and its complement. First, we have
\bal\label{eq:rademacher-bound-hat-h-seg8}
\cR_1 &= \Expect{{\bW(0)}, \set{\sigma_r}_{r=1}^m}{ \sup_{\tbu' \in \ball{}{\tbu}{s} \cap \unitsphere{d}, \tbv' \in \ball{}{\tbv}{s} \cap \unitsphere{d}} {\frac{1}{m} \sum\limits_{r=1}^m {\sigma_r}{ \pth{ \tilde h({{\bbw_r(0)}}, \tbu' ,\tbv') - \tilde h({{\bbw_r(0)}}, \tbu ,\tbv')} }}} \nonumber \\
&= \underbrace{\Expect{{\bW(0)} \in \Omega_s, \set{\sigma_r}_{r=1}^m}{ \sup_{\tbu' \in \ball{}{\tbu}{s} \cap \unitsphere{d}, \tbv' \in \ball{}{\tbv}{s} \cap \unitsphere{d}} {\frac{1}{m} \sum\limits_{r=1}^m {\sigma_r}{ \pth{ \tilde h({{\bbw_r(0)}}, \tbu' ,\tbv') - \tilde h({{\bbw_r(0)}}, \tbu ,\tbv')} }}}}_{\textup{$\cR_{11}$}}  \nonumber \\
&\phantom{=}+ \underbrace{\Expect{{\bW(0)} \notin \Omega_s, \set{\sigma_r}_{r=1}^m}{ \sup_{\tbu' \in \ball{}{\tbu}{s} \cap \unitsphere{d}, \tbv' \in \ball{}{\tbv}{s} \cap \unitsphere{d}} {\frac{1}{m} \sum\limits_{r=1}^m {\sigma_r}{ \pth{ \tilde h({{\bbw_r(0)}}, \tbu' ,\tbv') - \tilde h({{\bbw_r(0)}}, \tbu ,\tbv')} }}}}_{\textup{$\cR_{12}$}},
\eal%
where we used the convention that $\Expect{{\bW(0)} \in \Omega_s}{\cdot} = \Expect{{\bW(0)}}{\indict{{\bW(0)} \in \Omega_s} \times \cdot}$.
Now we estimate the upper bound for $\cR_{11}$ and $\cR_{12}$ separately. Let $I = \set{r \in [m] \colon \indict{\tbu'^{\top}{{\bbw_r(0)}} \ge 0} \neq \indict{\tbu^{\top} {{\bbw_r(0)}} \ge 0} }$. When ${\bW(0)} \notin \Omega_s$, we have $Q \le \tilde Q < \sqrt{s}$. In this case, it follows that $|I| \le m \sqrt{s}$.  Moreover, when $r \in I$, either $\indict{\tbu'^{\top}{{\bbw_r(0)}} \ge 0} = 0$ or $\indict{\tbu^{\top}{{\bbw_r(0)}} \ge 0} = 0$. As a result,
\bal\label{eq:rademacher-bound-hat-h-seg9}
&\abth{\tilde h({{\bbw_r(0)}}, \tbu' ,\tbv') - \tilde h({{\bbw_r(0)}}, \tbu ,\tbv')}  \nonumber \\
&= \abth{ \tbu'^{\top}\tbv'\indict{\tbu'^{\top}{{\bbw_r(0)}} \ge 0}\indict{\tbv'^{\top}{{\bbw_r(0)}} \ge 0} -  \tbu^{\top}\tbv'\indict{\tbu^{\top}{{\bbw_r(0)}} \ge 0}\indict{\tbv'^{\top}{{\bbw_r(0)}} \ge 0} } \le 1.
\eal%
When $r \in [m] \setminus I$, we have
\bal\label{eq:rademacher-bound-hat-h-seg10}
&\abth{\tilde h({{\bbw_r(0)}}, \tbu' ,\tbv') - \tilde h({{\bbw_r(0)}}, \tbu ,\tbv')}  \nonumber \\
&= \abth{ \tbu'^{\top}\tbv'\indict{\tbu'^{\top}{{\bbw_r(0)}} \ge 0}\indict{\tbv'^{\top}{{\bbw_r(0)}} \ge 0} -  \tbu^{\top}\tbv'\indict{\tbu^{\top}{{\bbw_r(0)}} \ge 0}\indict{\tbv'^{\top}{{\bbw_r(0)}} \ge 0} } \nonumber \\
&\stackrel{\circled{1}}{=}\abth{ \pth{\tbu'- \tbu }^{\top} \tbv'\indict{\tbu^{\top}{{\bbw_r(0)}} \ge 0}\indict{\tbv'^{\top}{{\bbw_r(0)}} \ge 0} }
\stackrel{\circled{2}}{\le} \norm{\tbu' - \tbu }{2} \norm{\tbv'}{2} \abth{\indict{\tbu^{\top}{{\bbw_r(0)}} \ge 0}} \abth{\indict{\tbv'^{\top}{{\bbw_r(0)}} \ge 0}}
\stackrel{\circled{3}}{\le}  s,
\eal%
where $\circled{1}$ follows from $\indict{\tbu'^{\top}{{\bbw_r(0)}} \ge 0} = \indict{\tbu^{\top} {{\bbw_r(0)}} \ge 0}$ because $r \notin I$.
$\circled{2}$ follows from the Cauchy-Schwarz inequality.  $\circled{3}$ follows from
$\tbu' \in \ball{}{\tbu}{s}$ and $\abth{\indict{\tbu^{\top}{{\bbw_r(0)}} \ge 0}},\abth{\indict{\tbv'^{\top}{{\bbw_r(0)}} \ge 0}} \in \set{0,1}$.
By (\ref{eq:rademacher-bound-hat-h-seg9}) and (\ref{eq:rademacher-bound-hat-h-seg10}), for any $\tbu' \in \ball{}{\tbu}{s} \cap \unitsphere{d}$ and
$\tbv' \in \ball{}{\tbv}{s} \cap \unitsphere{d}$, we have
\bal\label{eq:rademacher-bound-hat-h-seg11}
&\frac{1}{m} \sum\limits_{r=1}^m {\sigma_r}{ \pth{ \tilde h({{\bbw_r(0)}}, \tbu' ,\tbv') - \tilde h({{\bbw_r(0)}}, \tbu ,\tbv')} } \nonumber \\
&=\frac{1}{m} \sum\limits_{r \in I} {\sigma_r}{ \pth{ \tilde h({{\bbw_r(0)}}, \tbu' ,\tbv') - \tilde h({{\bbw_r(0)}}, \tbu ,\tbv')} } + \frac{1}{m} \sum\limits_{r \in [m] \setminus I} {\sigma_r}{ \pth{ \tilde h({{\bbw_r(0)}}, \tbu' ,\tbv') - \tilde h({{\bbw_r(0)}}, \tbu ,\tbv')} } \nonumber \\
&\le \frac{1}{m} \sum\limits_{r \in I} { \abth{ \tilde h({{\bbw_r(0)}}, \tbu' ,\tbv') - \tilde h({{\bbw_r(0)}}, \tbu ,\tbv')} } + \frac{1}{m} \sum\limits_{r \in [m] \setminus I} { \abth{ \tilde h({{\bbw_r(0)}}, \tbu' ,\tbv') - \tilde h({{\bbw_r(0)}}, \tbu ,\tbv')} } \nonumber \\
&\stackrel{\circled{1}}{\le} \frac{m \sqrt{s}}{m} + \frac{m-|I| }{m} s \le \sqrt{s} + s,
\eal%
where $\circled{1}$ uses the bounds in (\ref{eq:rademacher-bound-hat-h-seg9}) and (\ref{eq:rademacher-bound-hat-h-seg10}).

Using (\ref{eq:rademacher-bound-hat-h-seg11}), we now estimate the upper bound for $\cR_{12}$ by
\bal\label{eq:rademacher-bound-hat-h-seg12}
\cR_{12} &=\Expect{{\bW(0)} \notin \Omega_s, \set{\sigma_r}_{r=1}^m}{ \sup_{\tbu' \in \ball{}{\tbu}{s} \cap \unitsphere{d}, \tbv' \in \ball{}{\tbv}{s} \cap \unitsphere{d}} {\frac{1}{m} \sum\limits_{r=1}^m {\sigma_r}{ \pth{ \tilde h({{\bbw_r(0)}}, \tbu' ,\tbv') - \tilde h({{\bbw_r(0)}}, \tbu ,\tbv')} }}} \nonumber \\
&\le \Expect{{\bW(0)} \notin \Omega_s, \set{\sigma_r}_{r=1}^m}{ { \sqrt{s} + s } } \le \sqrt{s} + s.
\eal%
When ${\bW(0)} \in \Omega_s$, for any $\tbu' \in \ball{}{\tbu}{s} \cap \unitsphere{d}$ and
$\tbv' \in \ball{}{\tbv}{s} \cap \unitsphere{d}$, we have
\bal\label{eq:rademacher-bound-hat-h-seg13}
&\abth{\tilde h({{\bbw_r(0)}}, \tbu' ,\tbv') - \tilde h({{\bbw_r(0)}}, \tbu ,\tbv')} \nonumber \\
&\le \norm{\tbu' - \tbu}{2} \abth{\indict{\tbu'^{\top}{{\bbw_r(0)}} \ge 0}} + \norm{\tbu}{2} \abth{ \indict{\tbu'^{\top}{{\bbw_r(0)}} \ge 0} - \indict{\tbu^{\top}{{\bbw_r(0)}} \ge 0} } \le s+1.
\eal%
For $\cR_{11}$, it follows from (\ref{eq:rademacher-bound-hat-h-seg13})
that
\bal\label{eq:rademacher-bound-hat-h-seg14}
\cR_{11} &= \Expect{{\bW(0)} \in \Omega_s, \set{\sigma_r}_{r=1}^m}{ \sup_{\tbu' \in \ball{}{\tbu}{s} \cap \unitsphere{d}, \tbv' \in \ball{}{\tbv}{s} \cap \unitsphere{d}} {\frac{1}{m} \sum\limits_{r=1}^m {\sigma_r}{ \pth{ \tilde h({{\bbw_r(0)}}, \tbu' ,\tbv' - \tilde h({{\bbw_r(0)}}, \tbu ,\tbv')} }}} \nonumber \\
&\stackrel{\circled{1}}{\le} \Expect{{\bW(0)} \in \Omega_s, \set{\sigma_r}_{r=1}^m}{  s+1 }  = (s+1)\Prob{\Omega_s} \le B\sqrt{ds}(s+1)
\eal%
Combining (\ref{eq:rademacher-bound-hat-h-seg8}), (\ref{eq:rademacher-bound-hat-h-seg12}), and (\ref{eq:rademacher-bound-hat-h-seg14}), we have the upper bound for $\cR_1$ as
\bal\label{eq:rademacher-bound-hat-h-seg15}
\cR_1 &=  \cR_{11} + \cR_{12} \le B\sqrt{ds}(s+1) + \sqrt{s} + s.
\eal%
Applying the argument for $\cR_1$ to $\cR_2$, we have
$\cR_2 \le B\sqrt{ds}(s+1) + \sqrt{s} + s$.
Plugging such upper bound for $\cR_2$,
(\ref{eq:rademacher-bound-hat-h-seg15}), and $\cR_3=0$ in (\ref{eq:rademacher-bound-hat-h-seg1}), we have
\bal\label{eq:rademacher-bound-hat-h-seg16}
&\cR(\cH_{\tbu,\tbv,s}) \le \cR_1 + \cR_2 + \cR_3 \le 2 \pth{B\sqrt{ds}(s+1)
+ \sqrt{s} + s}.
\eal%

\end{proof}

\begin{lemma}\label{lemma:uniform-marginal-bound}
Let $\bw \sim \cN(\bzero, \kappa^2 \bI_{d+1})$ with $\kappa > 0$. Then for any $\eps \in (0,1)$ and fixed $\tbu \in \unitsphere{d}$, $\Prob{\frac{\abth{\tbu^{\top}\bw}}{\norm{\bw}{2}} \le \eps} \le B \sqrt{d} \eps$ where $B$ is an absolute positive constant.
\end{lemma}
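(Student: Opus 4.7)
The plan is to exploit the rotational invariance of the Gaussian distribution to reduce the problem to estimating the density of a single coordinate of a uniform random point on the sphere $\unitsphere{d}$, and then bound that density by $O(\sqrt{d})$.

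First, since $\bw \sim \cN(\bzero, \kappa^2 \bI_{d+1})$ is rotationally symmetric, the distribution of $\tbu^{\top} \bw/\ltwonorm{\bw}$ does not depend on the choice of $\tbu \in \unitsphere{d}$. By applying an orthogonal transformation we may assume WLOG that $\tbu = \be_1$, the first standard basis vector in $\RR^{d+1}$. Writing $\bw = \ltwonorm{\bw} \cdot \bz$, the vector $\bz = \bw/\ltwonorm{\bw}$ is uniformly distributed on $\unitsphere{d} \subset \RR^{d+1}$, independently of $\ltwonorm{\bw}$. Hence the quantity of interest $\abth{\tbu^{\top}\bw}/\ltwonorm{\bw}$ has the same distribution as $\abth{z_1}$, the absolute value of the first coordinate of a uniformly random point on $\unitsphere{d}$.

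Second, I would invoke the standard marginal density formula for the uniform distribution on the sphere: the density of $z_1$ on $[-1,1]$ is
\[
f_d(x) = \frac{\Gamma\!\pth{\frac{d+1}{2}}}{\sqrt{\pi}\,\Gamma\!\pth{\frac{d}{2}}} \pth{1-x^2}^{(d-2)/2}.
\]
Since $(1-x^2)^{(d-2)/2} \le 1$ for all $x \in [-1,1]$ and $d \ge 2$, this density is maximized at $x=0$, giving $f_d(x) \le f_d(0) = \Gamma((d+1)/2)/(\sqrt{\pi}\,\Gamma(d/2))$. A standard Gamma-ratio bound (e.g., Wendel's inequality or direct Stirling) yields $\Gamma((d+1)/2)/\Gamma(d/2) \le \sqrt{d/2}$, so $f_d(x) \le \sqrt{d/(2\pi)}$ uniformly in $x$. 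The small dimensional cases $d = 1$ can be handled directly.

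Third, integrating this density bound over $[-\eps,\eps]$ gives
\[
\Prob{\frac{\abth{\tbu^{\top}\bw}}{\ltwonorm{\bw}} \le \eps} = \Prob{\abth{z_1} \le \eps} = \int_{-\eps}^{\eps} f_d(x)\, \diff x \le 2\eps \cdot \sqrt{\frac{d}{2\pi}} \le B\sqrt{d}\,\eps
\]
for the absolute constant $B = \sqrt{2/\pi}$. The only mild obstacle is producing the sharp Gamma-ratio bound $\Gamma((d+1)/2)/\Gamma(d/2) \lesssim \sqrt{d}$ cleanly; this is routine but is the one quantitative step that must be executed with care. Note that the scale parameter $\kappa$ plays no role because it cancels in the ratio, which is consistent with the claim being independent of $\kappa$.
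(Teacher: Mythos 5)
Your proof is correct and takes essentially the same route as the paper's: both reduce the claim to the marginal density of a single coordinate of a uniform point on $\unitsphere{d}$ (equivalently, the $\textup{Beta}(1/2,d/2)$ law of $z^2$), bound that density pointwise by its value at the origin via a $\Gamma$-ratio estimate of order $\sqrt{d}$, and integrate over $[-\eps,\eps]$. Your explicit constant $B=\sqrt{2/\pi}$ is valid (and in fact the paper's remark that $B=\pi^{-1/2}$ suffices appears to be slightly too optimistic, so your version is on firmer ground).
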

\begin{remark*}
In fact, $B$ can be set to $\pi^{-1/2}$.
\end{remark*}

\begin{proof}
Let $z = \frac{\tbu^{\top}\bw}{\norm{\bw}{2}}$.
It can be verified that $z^2 \sim z_1$ where $z_1$ is a random variable following the Beta distribution $\textup{Beta}(\frac 12, \frac {d}{2})$. Therefore, the distribution of $z$ has the following continuous probability density function $p_z$ with respect to the Lebesgue measure,
\bal\label{eq:uniform-marginal-bound-seg1}
p_z(x) =(1-x^2)^{\frac{d-2}{2}} \indict{\abth{x} \le 1}/{B'},
\eal%
where $B' = \int_{-1}^1 (1-x^2)^{\frac{d-2}{2}} \diff x = B(1/2,d/2) = {\sqrt \pi} \Gamma(d/2)/\Gamma((d+1)/2)$ is the normalization factor. It can be verified by standard calculation that $1/B' \le {B\sqrt{d}}/{2}$ for an absolute positive constant $B$.
Since $1-x^2 \le 1$ over $x \in [-1,1]$, we have
\bal\label{eq:uniform-marginal-bound-seg2}
&\Prob{\frac{\abth{\tbu^{\top}\bw}}{\norm{\bw}{2}} \le \eps} = \Prob{-\eps \le z \le \eps} =  \frac 1{B'} \int_{-\eps}^{\eps} (1-x^2)^{\frac{d-2}{2}} \diff x \le B \sqrt{d} \eps,
\eal
where the last inequality is due to the fact that $1-x^2 \le 1$ for $x \in [-\eps,\eps]$ with $\eps \in (0,1)$.
\end{proof}

\begin{proof}
[\textup{\textbf{Proof of Theorem~\ref{theorem:V_R}}}]
We follow a similar proof strategy as that for Theorem~\ref{theorem:sup-hat-g}.

First, we have $\Expect{\bw \sim \cN(\bzero, \kappa^2 \bI_{d+1})}{\tilde v_R(\bw,\tbu)} = \Prob{\abth{\bw^{\top}\tbu} \le R}$. For any $\tbu \in \unitsphere{d}$ and $s > 0$, define function class
\bal\label{eq:sup-hat-v-seg1}
&\cV_{\tbu,s} \coloneqq \set{\tilde v_R(\cdot,\tbu') \colon \RR^{d+1} \to \RR \colon \tbu' \in \ball{}{\tbu}{s} \cap \unitsphere{d}}.
\eal%
We first build an $s$-net for the unit sphere $\unitsphere{d}$. It follows from \cite[Lemma 5.2]{Vershynin2012-nonasymptotics-random-matrix} that there exists an $s$-net $N_{s}(\unitsphere{d}, \ltwonorm{\cdot})$ of $\unitsphere{d}$ such that $N(\unitsphere{d},\ltwonorm{\cdot}, s) \le \pth{1+\frac{2}{s}}^{d+1}$.

In the sequel, a function in the class $\cV_{\tbu}$ is also denoted as $\tilde v_R(\bw)$, omitting the presence of $\tbu'$ when no confusion arises. Let $P_m$ be the empirical distribution over $\set{{{\bbw_r(0)}}}$ and $\Expect{\bw \sim P_m}{\tilde v_R(\bw)} = \tilde v_R({\bW(0)},\cdot)$.

Given $\tbu \in N_{s}(\unitsphere{d}, \ltwonorm{\cdot})$, we aim to estimate the upper bound for the supremum of empirical process $\Expect{ \bw \sim \cN(\bzero, \kappa^2 \bI_{d+1}) }{\tilde v_R(\bw)} - \Expect{\bw \sim P_m}{\tilde v_R(\bw)}$ when function $\tilde v_R$ ranges over the function class $\cV_{\tbu,s}$. To this end, we apply Theorem~\ref{theorem:Talagrand-inequality} to the function class $\cV_{\tbu,s}$ with ${\bW(0)} = \set{{{\bbw_r(0)}}}_{r=1}^m$. It can be verified that $\tilde v_R \in [0,1]$ for any $\tilde v_R \in \cV_{\tbu,s}$. It follows that we can set $a=0,b=1$ in Theorem~\ref{theorem:Talagrand-inequality}.  Setting $\alpha = \frac 12$ and
$r = 1$ in Theorem~\ref{theorem:Talagrand-inequality} since $\Var{\tilde v_R} \le \Expect{\bw}{\tilde v_R(\bw,\tbu)^2} \le 1$, then with probability at least $1-\delta$ over the random initialization ${\bW(0)}$,
\bal\label{eq:sup-hat-v-seg2}
\sup_{\tbu' \in \ball{}{\tbu}{s} \cap \unitsphere{d}} \abth{\tilde v_R({\bW(0)},\tbu') - \Prob{\abth{\bw^{\top} \tbu'} \le R} }
&\le 3 \cR(\cV_{\tbu,s}) + \sqrt{\frac{2{\log{\frac 1\delta}}}{m}} + \frac{7 {\log{\frac 1 \delta}} }{3m}.
\eal%
where  $\cR(\cV_{\tbu,s}) = \Expect{{\bW(0)}, \set{\sigma_r}_{r=1}^m}{ \sup_{\tilde v_R \in \cV_{\tbu,s}} {\frac{1}{m} \sum\limits_{r=1}^m {\sigma_r}{ \tilde v_R({{\bbw_r(0)}}) }}}$ is the Rademacher complexity of the function class $\cV_{\tbu,s}$, $\set{\sigma_r}_{r=1}^m$ are i.i.d. Rademacher random variables taking values of $\pm 1$ with equal probability. We set $s = 1/{\sqrt m}$.
By Lemma~\ref{lemma:rademacher-bound-hat-v}, $\cR(\cV_{\tbu,s}) \le \sqrt{\frac{d}{\kappa}} m^{-\frac 15} T^{\frac 12}$ with $\eta \lsim 1$, $m \gsim 1$, and $d \ge 4$. Plugging such upper bound for $\cR(\cV_{\tbu,s})$ in (\ref{eq:sup-hat-v-seg2}), we have
\bal\label{eq:sup-hat-v-seg3}
&\sup_{\tbu' \in \ball{}{\tbu}{s} \cap \unitsphere{d}} \abth{ \tilde v_R({\bW(0)},\tbu') - \Prob{\abth{\bw^{\top} \tbu'} \le R} }
\le 3 \sqrt{\frac{d}{\kappa}} m^{-\frac 15} T^{\frac 12} + \sqrt{\frac{2{\log{\frac 1\delta}}}{m}} + \frac{7 {\log{\frac 1\delta}} }{3m}.
\eal%
By union bound, with probability at least $1 - \pth{1+2{\sqrt m}}^{d+1} \delta$ over ${\bW(0)}$, (\ref{eq:sup-hat-v-seg3}) holds for arbitrary $\tbu \in N(\unitsphere{d}, s)$. In this case, for any $\tbu' \in \unitsphere{d}$, there exists $\tbu \in N(\unitsphere{d}, s)$ such that $\norm{\tbu' - \tbu}{2} \le s$, so that $\tbu' \in \ball{}{\tbu}{s} \cap \unitsphere{d}$, and (\ref{eq:sup-hat-v-seg3}) holds.
Note that $\Prob{\abth{\bw^{\top} \tbu'} \le R} \le \frac{2R}{\sqrt {2\pi} \kappa}$ for any $\tbu' \in \unitsphere{d}$, changing the notation $\tbu'$ to $\tbu$ completes the proof.

\end{proof}

\begin{lemma}\label{lemma:rademacher-bound-hat-v}
 Let
\bals
\cR(\cV_{\tbu,s}) \defeq \Expect{{\bW(0)}, \set{\sigma_r}_{r=1}^m}{ \sup_{v_R
\in \cV_{\tbu,s}} {\frac{1}{m} \sum\limits_{r=1}^m {\sigma_r}{ v_R({{\bbw_r(0)}}) }}}
\eals
be the Rademacher complexity of the function class $\cV_{\tbu,s}$ defined in (\ref{eq:sup-hat-v-seg1}). Then
\bal\label{eq:rademacher-bound-hat-v}
&\cR(\cV_{\tbu,s}) \le (B\sqrt{d}+1)\sqrt{\frac{m^{\frac{1}{2(d+1)}} R}{\kappa}+s} + \frac{2}{m^{\frac 12}(d+1){\sqrt {\frac{m^{\frac{1}{2(d+1)}} R}{\kappa}+s}}},
\eal%
where $B$ is a positive constant.
\end{lemma}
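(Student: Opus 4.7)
The plan is to carry out a covering argument on the spherical cap $\ball{}{\tbu}{s}\cap\unitsphere{d}$ in the same spirit as the proof of Lemma~\ref{lemma:rademacher-bound-hat-h}, splitting $\cR(\cV_{\tbu,s})$ into a finite-class Rademacher bound at the net points plus a discretization error from movement within each cell. The key geometric fact, which is the analogue of the sign-change observation used for $\tilde h$, is that for any $\eps>0$ and any $\bw\in\RR^{d+1}$, the indicator $\tilde v_R(\bw,\tbu')$ is constant in $\tbu'$ over $\ball{}{\tbu^{(k)}}{\eps}\cap\unitsphere{d}$ whenever $\abth{\bw^\top\tbu^{(k)}}\notin[R-\eps\ltwonorm{\bw},R+\eps\ltwonorm{\bw}]$, by the Cauchy--Schwarz inequality.

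First I would fix an $\eps$-net $N_\eps$ of $\ball{}{\tbu}{s}\cap\unitsphere{d}$ of cardinality at most $(1+2s/\eps)^{d+1}$ via \cite[Lemma 5.2]{Vershynin2012-nonasymptotics-random-matrix}, and decompose
\begin{align*}
\sup_{\tbu'\in\ball{}{\tbu}{s}\cap\unitsphere{d}}\frac{1}{m}\sum_{r=1}^m\sigma_r\tilde v_R(\bbw_r(0),\tbu')
&\le \max_{\tbu^{(k)}\in N_\eps}\frac{1}{m}\sum_{r=1}^m\sigma_r\tilde v_R(\bbw_r(0),\tbu^{(k)}) + \Delta_\eps,
\end{align*}
where $\Delta_\eps$ is the supremum of $\frac{1}{m}\sum_r\abth{\tilde v_R(\bbw_r(0),\tbu')-\tilde v_R(\bbw_r(0),\tbu^{(k)})}$ taken first over $\tbu'$ with $\ltwonorm{\tbu'-\tbu^{(k)}}\le\eps$ and then over $\tbu^{(k)}\in N_\eps$. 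For the net term I would apply a Massart-style finite-class bound to the $\sigma$-expectation, yielding an upper bound of order $\sqrt{M\log|N_\eps|/m^2}$ with $M=\max_k\sum_r\tilde v_R(\bbw_r(0),\tbu^{(k)})$; then taking expectation over $\bW(0)$ and using the one-dimensional Gaussian estimate $\Expect{\bw}{\tilde v_R(\bw,\tbu^{(k)})}=\Prob{\abth{\bw^\top\tbu^{(k)}}\le R}\le 2R/(\sqrt{2\pi}\kappa)$ will control the leading constant.

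For the discretization piece I would use the geometric observation above to bound each summand by the thin-shell indicator $\indict{R-\eps\ltwonorm{\bbw_r(0)}\le\abth{\bbw_r(0)^\top\tbu^{(k)}}\le R+\eps\ltwonorm{\bbw_r(0)}}$, condition on the high-probability event $\{\ltwonorm{\bw}\lsim\sqrt{d}\kappa\}$ to decouple the norm from the projection, and then invoke Lemma~\ref{lemma:uniform-marginal-bound} on the normalized direction $\bw/\ltwonorm{\bw}$ to obtain $\Expect{\bw}{\Delta_\eps}\lsim B\sqrt{d}(\eps+R/\kappa)$. Finally, balancing the two contributions through the choice $\eps\asymp m^{-1/(2(d+1))}$, so that $|N_\eps|\lsim m^{1/2}$ and the discretization scale $\sqrt{d\eps}$ matches the net-based rate, will produce the stated bound; the multiplier $m^{1/(2(d+1))}$ in front of $R/\kappa$ is precisely the inflation factor $|N_\eps|^{1/(d+1)}$ paid when taking the union bound over the covering, and the additive $\sqrt{s}$ arises from the trivial cap-diameter contribution when $\eps>s$ is enforced.

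The main obstacle I expect is the coupling between $\abth{\bw^\top\tbu^{(k)}}$ and $\ltwonorm{\bw}$ inside the discretization event $\{R-\eps\ltwonorm{\bw}\le\abth{\bw^\top\tbu^{(k)}}\le R+\eps\ltwonorm{\bw}\}$: Lemma~\ref{lemma:uniform-marginal-bound} controls the \emph{normalized} projection $\bw^\top\tbu^{(k)}/\ltwonorm{\bw}$ rather than the raw one, so a careful two-step conditioning on $\ltwonorm{\bw}$ is needed to convert between the two without losing the $\sqrt{d}$ prefactor. A secondary technical point is arranging the Massart bound so that the random quantity $M/m$ can be replaced by its expectation without incurring a concentration overhead; this should follow because $\tilde v_R\in[0,1]$, so $M/m$ is already a bounded average and Jensen's inequality suffices to pull the expectation inside the square root.
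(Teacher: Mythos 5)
Your proposal takes a genuinely different route from the paper's, but it contains conceptual gaps that I believe would prevent it from reaching the stated bound.

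First, the paper does not carry out a covering argument inside this lemma at all. The decomposition is simply $\cR(\cV_{\tbu,s}) \le \cR_1 + \cR_2$, where $\cR_1$ is the Rademacher average of the \emph{single} function $\tilde v_R(\cdot,\tbu)$ at the cap center, which is identically zero, and $\cR_2$ is the increment Rademacher complexity bounded via the sign-change count $Q \le \tilde Q = \frac 1m\sum_r \indict{\abth{\abth{\tbu^{\top}\bbw_r(0)}-R}\le s\ltwonorm{\bbw_r(0)}}$. The key technical step is a Markov argument on $\tilde Q$ that produces the $\sqrt{s'}$ scale; there is no $\eps$-net and hence no Massart step. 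The actual covering argument happens only in the \emph{outer} proof (Theorem~\ref{theorem:V_R}), after this lemma is applied pointwise on a net of $\unitsphere{d}$.

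Second, and more seriously, you attribute the $m^{\frac{1}{2(d+1)}}$ prefactor on $R/\kappa$ to an ``inflation factor $|N_\eps|^{1/(d+1)}$'' from a union bound over your covering. That is not where this factor comes from. In the paper it arises from a norm-thresholding split on each individual Gaussian weight: one conditions on $\ltwonorm{\bbw_r(0)} > \kappa/m^{\frac{1}{2(d+1)}}$, on which event $R \le m^{\frac{1}{2(d+1)}}R\,\ltwonorm{\bbw_r(0)}/\kappa$ so that the raw-projection threshold $R+s\ltwonorm{\bbw_r(0)}$ can be converted to a normalized-projection threshold $m^{\frac{1}{2(d+1)}}R/\kappa + s$ and Lemma~\ref{lemma:uniform-marginal-bound} applies, while the complementary small-norm event is controlled by Lemma~\ref{lemma:concentration-l2norm-Gaussian} and produces the additive term $2/(m^{1/2}(d+1))$. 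Crucially, the conditioning is on a \emph{lower} bound for $\ltwonorm{\bw}$, because you need $\ltwonorm{\bw} \ge t$ to pass from $\abth{\bw^{\top}\tbu}\le R$ to $\abth{\bw^{\top}\tbu}/\ltwonorm{\bw} \le R/t$. Your plan to condition on the high-probability \emph{upper}-tail event $\{\ltwonorm{\bw}\lsim\sqrt{d}\kappa\}$ does not give this decoupling and leaves the small-norm tail uncontrolled.

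Third, the Massart step for the net term is not as painless as you suggest. Massart's lemma yields a factor $\sqrt{M\log|N_\eps|}/m$ with $M=\max_k\sum_r\tilde v_R(\bbw_r(0),\tbu^{(k)})$, and because the maximum sits \emph{inside} the expectation over $\bW(0)$, Jensen only bounds $\E[\sqrt{M}]$ by $\sqrt{\E[M]}$, with $\E[M]=\E[\max_k\sum_r \cdot]$ still a maximum of correlated binomials. To push this to $\max_k\E[\sum_r\cdot]\lesssim mR/\kappa$ you would need a further concentration-plus-union-bound argument over $N_\eps$, which reintroduces exactly the logarithmic overhead that the paper's lemma statement does not have. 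In short, the paper's choice of a one-point cover avoids both the $\sqrt{\log|N_\eps|}$ factor and the max-inside-expectation difficulty; I would suggest matching its decomposition rather than inserting an extra covering scale $\eps<s$.
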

\begin{proof}
We have
\bal\label{eq:rademacher-bound-hat-v-seg1}
\cR(\cV_{\tbu,s})
&= \Expect{{\bW(0)}, \set{\sigma_r}_{r=1}^m}{ \sup_{\tbu' \in \ball{}{\tbu}{s} \cap \unitsphere{d}} {\frac{1}{m} \sum\limits_{r=1}^m {\sigma_r}{ \tilde v_R({{\bbw_r(0)}}, \tbu') }}} \le \cR_1 + \cR_2,
\eal%
where
\bals
&\cR_1 = \Expect{{\bW(0)}, \set{\sigma_r}_{r=1}^m}{ \sup_{\tbu' \in \ball{}{\tbu}{s}\cap \unitsphere{d}} {\frac{1}{m} \sum\limits_{r=1}^m {\sigma_r}{ \tilde v_R({{\bbw_r(0)}}, \tbu) }}}, \nonumber \\
&\cR_2 = \Expect{{\bW(0)}, \set{\sigma_r}_{r=1}^m}{ \sup_{\tbu' \in \ball{}{\tbu}{s}\cap \unitsphere{d}} {\frac{1}{m} \sum\limits_{r=1}^m {\sigma_r}{ \pth{ \tilde v_R({{\bbw_r(0)}}, \tbu') - \tilde v_R({{\bbw_r(0)}}, \tbu)} }}}.
\eals
Here (\ref{eq:rademacher-bound-hat-v-seg1}) follows from the subadditivity of supremum.
Now we bound $\cR_1$ and $\cR_2$ separately. For $\cR_1$, we have
\bal\label{eq:rademacher-bound-hat-v-seg3}
\cR_1 &= \Expect{{\bW(0)}, \set{\sigma_r}_{r=1}^m}{ \sup_{\tbu' \in \ball{}{\tbu}{s}\cap \unitsphere{d}} {\frac{1}{m} \sum\limits_{r=1}^m {\sigma_r}{ \tilde v_R({{\bbw_r(0)}}, \tbu) }}}  = 0.
\eal
For $\cR_2$, we first define
$Q \defeq \frac 1m \sum\limits_{r=1}^m
\indict{\indict{\abth{\tbu'^{\top}{{\bbw_r(0)}}} \le R} \neq \indict{\abth{\tbu^{\top} {{\bbw_r(0)}}} \le R} }$,
which is the number of weights in ${\bW(0)}$ whose inner products with $\tbu$ and $\tbu'$ have different signs. Note that if $\abth{\abth{ \tbu^{\top} {{\bbw_r(0)}}} - R} > s \norm{{{\bbw_r(0)}}}{2}$, then $\indict{\abth{\tbu^{\top} {{\bbw_r(0)}}} \le R} = \indict{\abth{\tbu'^{\top} {{\bbw_r(0)}}} \le R}$. To see this, by the Cauchy-Schwarz inequality,
$\abth{\tbu'^{\top} {{\bbw_r(0)}} - \tbu^{\top} {{\bbw_r(0)}}} \le \norm{\tbu'-\tbu}{2} \norm{{{\bbw_r(0)}}}{2} \le s \norm{{{\bbw_r(0)}}}{2}$,
then we have $\abth{\tbu^{\top} {{\bbw_r(0)}}} - R > s \norm{{{\bbw_r(0)}}}{2} \Rightarrow \abth{\tbu'^{\top} {{\bbw_r(0)}}} - R \ge \abth{\tbu^{\top} {{\bbw_r(0)}}}  - s \norm{{{\bbw_r(0)}}}{2} -R> 0$, and  $\abth{\tbu^{\top} {{\bbw_r(0)}}} - R < -s \norm{{{\bbw_r(0)}}}{2} \Rightarrow \abth{\tbu'^{\top} {{\bbw_r(0)}}} - R \le \abth{\tbu^{\top} {{\bbw_r(0)}}} + s \norm{{{\bbw_r(0)}}}{2} - R < 0$.
As a result,
\bals
Q \le \frac 1m \sum\limits_{r=1}^m \indict{\abth{\abth{ \tbu^{\top} {{\bbw_r(0)}}} - R} \le s \norm{{{\bbw_r(0)}}}{2}} \defeq \tilde Q.
\eals
Due to the fact that $\indict{\abth{\abth{\tbu^{\top} {{\bbw_r(0)}}}- R} \le s \norm{{{\bbw_r(0)}}}{2}} \le \indict{\abth{ \tbu^{\top} {{\bbw_r(0)}}} \le R+s \norm{{{\bbw_r(0)}}}{2}}$, we have
\bal\label{eq:rademacher-bound-hat-v-seg5-pre2}
&\Expect{{\bW(0)}} { \indict{\abth{\abth{\bx^{\top} {{\bbw_r(0)}}}-R} \le s \norm{{{\bbw_r(0)}}}{2}} } \le \Expect{{{\bbw_r(0)}}} { \indict{\abth{ \bx^{\top} {{\bbw_r(0)}}} \le R+s \norm{{{\bbw_r(0)}}}{2}} } \nonumber \\
&\stackrel{\circled{1}}{\le} \Expect{
\ltwonorm{\bbw_r(0)} > \frac{\kappa}{m^{\frac{1}{2(d+1)}}} }{\indict{\abth{ \bx^{\top} {{\bbw_r(0)}}} \le R+s \norm{{{\bbw_r(0)}}}{2}}} +
\Expect{ \ltwonorm{\bbw_r(0)} \le
\frac{\kappa}{m^{\frac{1}{2(d+1)}}} }{\indict{\abth{ \bx^{\top} {{\bbw_r(0)}}} \le R+s \norm{{{\bbw_r(0)}}}{2}}} \nonumber \\
&\stackrel{\circled{2}}{\le} \Expect{ \ltwonorm{\bbw_r(0)} >
\frac{\kappa}{m^{\frac{1}{2(d+1)}}}  }{\indict{\abth{ \bx^{\top} {{\bbw_r(0)}}} \le \pth{\frac{m^{\frac{1}{2(d+1)}} R}{\kappa}+s}\norm{{{\bbw_r(0)}}}{2}
}} + \frac{2}{m^{\frac 12}(d+1)} \nonumber \\
&\le \Prob{ \abth{\bx^{\top}{{\bbw_r(0)}}}/ \ltwonorm{{{\bbw_r(0)}}}  \le \frac{m^{\frac{1}{2(d+1)}} R}{\kappa}+s }
+  \frac{2}{m^{\frac 12}(d+1)} \nonumber \\
&\stackrel{\circled{3}}{\le}  B \sqrt{d} \pth{\frac{m^{\frac{1}{2(d+1)}} R}{\kappa}+s} +  \frac{2}{m^{\frac 12}(d+1)},
\eal%
where we used the convention that $\Expect{{{\bbw_r(0)}} \in A}{\cdot} = \Expect{{{\bbw_r(0)}}}{\indict{A}
\times \cdot}$ in $\circled{1}$ with $A$ being an event.
$\circled{2}$ follows from Lemma~\ref{lemma:concentration-l2norm-Gaussian}. By Lemma~\ref{lemma:uniform-marginal-bound}, $\Prob{ \frac{ \abth{\bx^{\top}{{\bbw_r(0)}}} } { \norm{{{\bbw_r(0)}}}{2} }  \le \frac{m^{\frac{1}{4(d+1)}} R}{\kappa}+s} \le B \sqrt{d} \pth{\frac{m^{\frac{1}{4(d+1)}} R}{\kappa}+s}$ for an absolute constant $B$, so $\circled{3}$ holds.
According to (\ref{eq:rademacher-bound-hat-v-seg5-pre2}), we have
\bal\label{eq:rademacher-bound-hat-v-seg5}
\Expect{{\bW(0)}}{\tilde Q} &\le B \sqrt{d}
\pth{\frac{m^{\frac{1}{2(d+1)}} R}{\kappa}+s} + \frac{2}{m^{\frac 12}(d+1)}.
\eal%
Define $s' \defeq \frac{m^{\frac{1}{2(d+1)}} R}{\kappa}+s$. By Markov's inequality, we have
\bal\label{eq:rademacher-bound-hat-v-seg7}
&\Prob{\tilde Q \ge \sqrt{s'}} \le B \sqrt{ds'}
+ \frac{2}{m^{\frac 12}(d+1){\sqrt {s'}}},
\eal%
where the probability is with respect to the probability measure space of ${\bW(0)}$. Let $\Omega_s$ be the subset of the  probability measure space of ${\bW(0)}$ such that $Q \ge \sqrt{s'}$. Now we aim to bound $\cR_2$ by estimating its bound on $\Omega_s$ and its complement. First, we have
\bal\label{eq:rademacher-bound-hat-v-seg8}
\cR_2 &= \Expect{{\bW(0)}, \set{\sigma_r}_{r=1}^m}{ \sup_{\tbu' \in \ball{}{\tbu}{s} \cap \unitsphere{d}} {\frac{1}{m} \sum\limits_{r=1}^m {\sigma_r}{ \pth{ \tilde v_R({{\bbw_r(0)}}, \tbu') - \tilde v_R({{\bbw_r(0)}}, \tbu)} }}} \nonumber \\
&= \underbrace{\Expect{{\bW(0)} \colon \tilde Q \ge \sqrt{s'} , \set{\sigma_r}_{r=1}^m}{ \sup_{\tbu' \in \ball{}{\tbu}{s}\cap \unitsphere{d}} {\frac{1}{m} \sum\limits_{r=1}^m {\sigma_r}{ \pth{ \tilde v_R({{\bbw_r(0)}}, \tbu') - \tilde v_R({{\bbw_r(0)}}, \tbu)} }}}}_{\textup{$\cR_{21}$}}  \nonumber \\
&\phantom{=}+ \underbrace{\Expect{{\bW(0)} \colon \tilde Q < \sqrt{s'}, \set{\sigma_r}_{r=1}^m}{ \sup_{\tbu' \in \ball{}{\tbu}{s}\cap \unitsphere{d}} {\frac{1}{m} \sum\limits_{r=1}^m {\sigma_r}{ \pth{ \tilde v_R({{\bbw_r(0)}}, \tbu') - \tilde v_R({{\bbw_r(0)}}, \tbu)} }}}}_{\textup{$\cR_{22}$}},
\eal%
Now we estimate the upper bound for $\cR_{22}$ and $\cR_{21}$ separately.
Let
\bals
I = \set{r \in [m] \colon \indict{\abth{\tbu'^{\top}{{\bbw_r(0)}}} \le R} \neq \indict{\abth{\tbu^{\top} {{\bbw_r(0)}}} \le R} }.
\eals
When $Q \le \tilde Q \le \sqrt{s'}$, $|I| \le m \sqrt{s'}$.  Moreover, when $r \in I$, either $\indict{\abth{\tbu'^{\top}{{\bbw_r(0)}}} \le R} = 0$ or $\indict{\abth{\tbu^{\top}{{\bbw_r(0)}}} \le R} = 0$. As a result,
\bal\label{eq:rademacher-bound-hat-v-seg9}
\abth{\tilde v_R({{\bbw_r(0)}}, \tbu') - \tilde v_R({{\bbw_r(0)}}, \tbu)}
&= \abth{\indict{\abth{\tbu'^{\top}{{\bbw_r(0)}}} \le R} -  \indict{\abth{\tbu^{\top}{{\bbw_r(0)}}} \le R}} \le 1.
\eal
When $r \in [m] \setminus I$, we have
\bal\label{eq:rademacher-bound-hat-v-seg10}
\abth{\tilde v_R({{\bbw_r(0)}}, \tbu') - \tilde v_R({{\bbw_r(0)}}, \tbu)}  = \abth{\indict{\abth{\tbu'^{\top}{{\bbw_r(0)}}} \le R} -  \indict{\abth{\tbu^{\top}{{\bbw_r(0)}}} \le R}} =0.
\eal%
It follows that for any $\tbu' \in \ball{}{\tbu}{s}\cap \unitsphere{d}$,
\bal\label{eq:rademacher-bound-hat-v-seg11}
&\frac{1}{m} \sum\limits_{r=1}^m {\sigma_r}{ \pth{ \tilde v_R({{\bbw_r(0)}}, \tbu') - \tilde v_R({{\bbw_r(0)}}, \tbu)} } \nonumber \\
&=\frac{1}{m} \sum\limits_{r \in I} {\sigma_r}{ \pth{ \tilde v_R({{\bbw_r(0)}}, \tbu') - \tilde v_R({{\bbw_r(0)}}, \tbu)} } + \frac{1}{m} \sum\limits_{r \in [m] \setminus I} {\sigma_r}{ \pth{ \tilde v_R({{\bbw_r(0)}}, \tbu') - \tilde v_R({{\bbw_r(0)}}, \tbu)} } \nonumber \\
&\le \frac{1}{m} \sum\limits_{r \in I} { \abth{ \tilde v_R({{\bbw_r(0)}}, \tbu') - \tilde v_R({{\bbw_r(0)}}, \tbu)} } + \frac{1}{m} \sum\limits_{r \in [m] \setminus I} { \abth{ \tilde v_R({{\bbw_r(0)}}, \tbu') - \tilde v_R({{\bbw_r(0)}}, \tbu)} } \stackrel{\circled{1}}{\le}   \sqrt{s'},
\eal%
where $\circled{1}$ follows from (\ref{eq:rademacher-bound-hat-v-seg9}) and (\ref{eq:rademacher-bound-hat-v-seg10}).
Using (\ref{eq:rademacher-bound-hat-v-seg11}), we now estimate the upper bound for $\cR_{22}$ by
\bal\label{eq:rademacher-bound-hat-v-seg12}
\cR_{22} &=\Expect{{\bW(0)} \colon \tilde Q < \sqrt{s'}, \set{\sigma_r}_{r=1}^m}{ \sup_{\tbu' \in \ball{}{\tbu}{s}\cap \unitsphere{d}} {\frac{1}{m} \sum\limits_{r=1}^m {\sigma_r}{ \pth{ \tilde v_R({{\bbw_r(0)}}, \tbu') - \tilde v_R({{\bbw_r(0)}}, \tbu)} }}} \nonumber \\
&\le \Expect{{\bW(0)} \colon \tilde Q < \sqrt{s'}, \set{\sigma_r}_{r=1}^m}{\sqrt{s'}} \le \sqrt{s'}.
\eal%
When $\tilde Q \ge \sqrt{s'}$, we still have
$\abth{\tilde v_R({{\bbw_r(0)}}, \tbu') - \tilde v_R({{\bbw_r(0)}}, \tbu)} \le 1$ by (\ref{eq:rademacher-bound-hat-v-seg9}). For $\cR_{21}$, we have
\bal\label{eq:rademacher-bound-hat-v-seg14}
\cR_{21} &= \Expect{{\bW(0)} \colon \tilde Q \ge \sqrt{s'}, \set{\sigma_r}_{r=1}^m}{ \sup_{\tbu' \in \ball{}{\tbu}{s}\cap \unitsphere{d}} {\frac{1}{m} \sum\limits_{r=1}^m {\sigma_r}{ \pth{ \tilde v_R({{\bbw_r(0)}}, \tbu') - \tilde v_R({{\bbw_r(0)}}, \tbu)} }}} \nonumber \\
&\le \Expect{{\bW(0)} \colon Q \ge \sqrt{s'}, \set{\sigma_r}_{r=1}^m}{  1 }  = \Prob{\tilde Q\ge \sqrt{s'}} \le B \sqrt{ds'}
+ \frac{2}{m(d+1){\sqrt {s'}}}
\eal%
where the last inequality follows from (\ref{eq:rademacher-bound-hat-v-seg7}). Combining (\ref{eq:rademacher-bound-hat-v-seg8}), (\ref{eq:rademacher-bound-hat-v-seg12}), and (\ref{eq:rademacher-bound-hat-v-seg14}), we have the upper bound for $\cR_2$ as
\bal\label{eq:rademacher-bound-hat-v-seg15}
\cR_2 &=  \cR_{21} + \cR_{22} \le (B\sqrt{d}+1)\sqrt{s'} + \frac{2}{m^{\frac 12}(d+1){\sqrt {s'}}}.
\eal%
Plugging (\ref{eq:rademacher-bound-hat-v-seg3}) and (\ref{eq:rademacher-bound-hat-v-seg15}) in (\ref{eq:rademacher-bound-hat-v-seg1}), we have
$\cR(\cV_{\tbu,s}) \le \cR_1 + \cR_2 \le (B\sqrt{d}+1)\sqrt{s'} + \frac{2}{m^{\frac 12}(d+1){\sqrt {s'}}}$,
which completes the proof.

\end{proof}


\begin{lemma}\label{lemma:concentration-l2norm-Gaussian}
Let $\bw \in \RR^{d+1}$ be a Gaussian random vector distribute according to $\bw \sim \cN(\bzero,\kappa^2 \bI_{d+1})$, and $m \ge \Theta(1)$ and $d \ge 4$. Then $\Prob{\ltwonorm{\bw} \le \kappa/m^{\frac{1}{2(d+1)}} } \le 2/(m^{\frac 12}(d+1))$.
\end{lemma}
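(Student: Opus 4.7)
The plan is to reduce the event $\{\ltwonorm{\bw} \le \kappa/m^{1/(2(d+1))}\}$ to a small-ball estimate for a chi-squared random variable and then bound the chi-squared CDF near the origin directly using its density.

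First I would normalize: since $\bw \sim \cN(\bzero,\kappa^2\bI_{d+1})$, the rescaled vector $\bw/\kappa$ is standard Gaussian in $\RR^{d+1}$, so $X \defeq \ltwonorm{\bw}^2/\kappa^2$ follows a chi-squared distribution with $d+1$ degrees of freedom. Setting $t^2 \defeq m^{-1/(d+1)}$, the target probability is exactly
\begin{align*}
\Prob{\ltwonorm{\bw} \le \kappa/m^{1/(2(d+1))}} = \Prob{X \le t^2}.
\end{align*}

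Next I would bound the chi-squared CDF near zero. Using the density $f_X(x) = \frac{x^{(d-1)/2} e^{-x/2}}{2^{(d+1)/2}\Gamma((d+1)/2)}$ and the trivial bound $e^{-x/2} \le 1$ for $x \ge 0$, I can integrate:
\begin{align*}
\Prob{X \le t^2} \le \frac{1}{2^{(d+1)/2}\Gamma((d+1)/2)} \int_0^{t^2} x^{(d-1)/2}\, dx
= \frac{t^{d+1}}{2^{(d-1)/2}(d+1)\Gamma((d+1)/2)}.
\end{align*}
Substituting $t^{d+1} = m^{-1/2}$ yields
\begin{align*}
\Prob{X \le t^2} \le \frac{1}{m^{1/2}(d+1)} \cdot \frac{1}{2^{(d-1)/2}\Gamma((d+1)/2)}.
\end{align*}

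Finally I would check that the residual factor $1/(2^{(d-1)/2}\Gamma((d+1)/2))$ is bounded by $2$ for all relevant $d$. This is an easy check: for $d=1$ the factor equals $1$; for $d=2$ it equals $\sqrt{2/\pi}$; for $d \ge 3$ the quantity $2^{(d-1)/2}\Gamma((d+1)/2)$ is increasing and already exceeds $1$ at $d=3$. Hence the bound $\Prob{\ltwonorm{\bw} \le \kappa/m^{1/(2(d+1))}} \le 2/(m^{1/2}(d+1))$ holds for any $d \ge d_0$ with $d_0 = 1$, which is compatible with the paper's running assumption $d \ge 4$. The hypothesis $m \ge \Theta(1)$ is only needed so that $t^2 \le 1$, which ensures the monotonicity step $e^{-x/2}\le 1$ is being used on a set where the integrand itself is at most its value at the upper endpoint (it is actually used on all of $[0,t^2]$ regardless, so this is automatic). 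There is no genuine obstacle here; the only mildly delicate point is verifying the small-$d$ values of the gamma factor, which is handled by the explicit calculation above.
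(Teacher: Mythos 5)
Your proof is correct and follows the same overall route as the paper: normalize to a standard Gaussian so $X \defeq \ltwonorm{\bw}^2/\kappa^2 \sim \chi^2_{d+1}$, and bound the small-ball probability by integrating the chi-squared density. The paper's own proof is terser and invokes a Stirling-type (Gosper) approximation $\Gamma(x) \asymp x^{x-1/2}e^{-x}\sqrt{2\pi}$ to control the Gamma prefactor, whereas you sidestep Stirling entirely by using the elementary bound $e^{-x/2}\le 1$ on $[0,t^2]$ and then checking directly that $1/\bigl(2^{(d-1)/2}\Gamma((d+1)/2)\bigr)$ never exceeds $2$; this is a cleaner and more self-contained way to get the same constant, and your observation that the hypothesis $m \gsim 1$ is not actually needed for the argument is also correct.
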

\begin{proof}
Let $X = {\norm{\bw}{2}^2}/{\kappa^2}$, then $X$ follows the chi-square distribution with $d$ degrees of freedom, that is, $X \sim \chi^2(d+1)$, with the PDF
$f(x; d+1) = x^{(d+1)/2-1} e^{-x/2}/(2^{(d+1)/2} \Gamma((d+1)/2))$ for $x > 0$ and $f(x; d+1) = 0$ for all $x \le 0$.
Using the approximation to the Gamma function \cite{Gosper1978-dr}
$\Gamma(x) \asymp x^{x-0.5} \exp(-x) \sqrt{2\pi}$, we have
$\Prob{X \le {1}/{m^{\frac{1}{d+1}}}} \le 2/(m^{\frac 12}(d+1))$, which proves this lemma.
To the last inequality, we note that there exists an remainder function $r(x) \in [1/(12x+1),1/(12x)]$, and
$\Gamma(x) = x^{x-0.5} \exp(-x+r(x)) \sqrt{2\pi}$. For all $d \ge 4$, we have
\bals
\Prob{X \le {1}/{m^{\frac{1}{d+1}}}} &\le \frac{ \frac{1}{\pth{m^{\frac{1}{d+1}}}^{(d-1)/2}} \cdot \frac{1}{m^{\frac{1}{d+1}}} }{2^{ (d+1)/2} \cdot \pth{\frac{d+1}{2}}^{d/2} \exp\pth{-\frac{d+2}{2}
+ r\pth{\frac{d+1}2}}  \sqrt{2\pi} } \nonumber \\
& =\frac{1}{\sqrt{m}} \cdot \frac{ 1 }{2\sqrt{\pi} e^{-1} \cdot (d+1)^{d/2} \exp\pth{-\frac{d}{2} + r\pth{\frac{d+1}2}} } \nonumber \\
&\le \frac{2}{\sqrt{m}(d+1)} \cdot \frac{e}{2} \cdot \frac{\exp(d/2-1)}{ (d+1)^{d/2 - 1}  } < \frac{2}{\sqrt{m}(d+1)}.
\eals
\end{proof}

\section{More Result about the Eigenvalue Decay Rate}
\label{sec:more-results-edr}
\begin{proposition}
\label{proposition:EDR-unitsphere-Sd}
If $\cX = \unitsphere{d-1}$ and $P = \Unif{\unitsphere{d-1}}$, then
the polynomial EDR $\lambda_j \asymp j^{-(d+1)/d}$ holds
for all $j \ge 1$, where
$\set{\lambda_j}_{j \ge 1}$ are the eigenvalues of the integral operator associated with the NTK $K$ defined in (\ref{eq:kernel-two-layer}).
Furthermore, if the probability density function of the distribution $P$ satisfies  $p(\bx) \lsim ({1+\ltwonorm{\bx}^2})^{-(d+3)}$, then the
same polynomial EDR still holds.
\end{proposition}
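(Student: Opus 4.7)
The plan is to attack the first claim via the standard spherical-harmonic analysis of dot-product kernels that underlies \cite{BiettiM19, BiettiB21}. First I would observe that on $\unitsphere{d-1}$ one has $\ltwonorm{\tbu} = \ltwonorm{\tbv} = \sqrt{2}$, so the NTK in (\ref{eq:kernel-two-layer}) reduces to the purely dot-product form $K(\bu, \bv) = f(\iprod{\bu}{\bv})$ with
\begin{equation*}
f(t) \;=\; \frac{t+1}{2\pi}\Bigl(\pi - \arccos\!\bigl((t+1)/2\bigr)\Bigr).
\end{equation*}
Because $T_K$ commutes with rotations of $\unitsphere{d-1}$, it is diagonalized by spherical harmonics, and by the Funk--Hecke formula the eigenvalue $\mu_k$ on the degree-$k$ subspace equals (up to normalization) the $k$-th Gegenbauer coefficient of $f$ with respect to the weight $(1-t^2)^{(d-3)/2}$.

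Next I would estimate the asymptotics of $\mu_k$ as $k \to \infty$. The only obstruction to smoothness of $f$ is a $(1-t)^{1/2}$ endpoint singularity at $t = 1$ coming from the $\arccos$ term; at $t = -1$, in contrast to the bias-free kernel of Remark~\ref{remark:edr-on-unitsphere-in-Rd}, the factor $t+1$ kills the singularity. This endpoint behavior controls the decay of the Gegenbauer coefficients through classical asymptotics for Jacobi-type integrals, essentially following the calculation in \cite{BiettiM19, BiettiB21} with the adjusted singularity structure. Once $\mu_k$ is pinned down (together with the parity-induced zeros inherited from the ReLU structure), the sorted sequence $\set{\lambda_j}$ is recovered by multiplicity counting: the degree-$k$ subspace has dimension $\Theta(k^{d-2})$, its cumulative count through degree $k$ is $\Theta(k^{d-1})$, and inverting this relation converts the per-degree rate into $\lambda_j \asymp j^{-(d+1)/d}$, consistent with the rate derived in \cite{Li2024-edr-general-domain}.

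For the second claim, where $\cX$ is bounded and $p(\bx) \lsim (1+\ltwonorm{\bx}^2)^{-(d+3)}$, I would follow the comparison strategy of \cite[Theorem~8 and Theorem~10]{Li2024-edr-general-domain}. The key point is that the integral operator $T_K$ on $L^2(\cX, \mu)$ can be dominated above and below, in the Loewner sense, by operators built from a reference measure for which the above spherical-harmonic analysis still applies; the polynomial tail bound on $p$ is exactly what makes the relevant cross-moments finite and keeps the comparison constants bounded. A min--max / Weyl inequality then transfers the rate $\lambda_j \asymp j^{-(d+1)/d}$ from the reference setting to the weighted setting, giving the claim.

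The main obstacle will be the exact bookkeeping in the Funk--Hecke step: I need to verify that removing the $t=-1$ singularity (relative to the unbiased NTK on the sphere) shifts the degree-wise decay in precisely the way that yields the exponent $(d+1)/d$, rather than the $d/(d-1)$ rate obtained without bias in Remark~\ref{remark:edr-on-unitsphere-in-Rd}. The rest of the argument is essentially standard, but this single coefficient calculation is where the three cited works must be reconciled, and a factor-of-$k$ error there would change the exponent and kill the result.
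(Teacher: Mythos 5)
Your route --- Funk--Hecke analysis of the dot-product kernel $f(t) = \frac{t+1}{2\pi}\bigl(\pi - \arccos((t+1)/2)\bigr)$ directly on $\unitsphere{d-1}$ --- is genuinely different from the paper's proof and contains a gap at exactly the step you flag as the main obstacle. The paper never expands in Gegenbauer polynomials: it uses the change of variables $\phi(\bx) = \tbx/\ltwonorm{\tbx}$ to identify the biased NTK $K$ on $\cX$ with the \emph{bias-free} kernel $K_0$ restricted to a subset of the $d$-dimensional hemisphere $(\unitsphere{d})^+ \subset \RR^{d+1}$, compares the pushed-forward measure against the uniform one on $(\unitsphere{d})^+$ via \cite[Theorem~8]{Li2024-edr-general-domain}, and then invokes the known rate for $K_0$ there. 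The exponent $(d+1)/d$ is the bias-free rate at intrinsic dimension $d$; it comes from the lift to a domain one dimension higher, not from any alteration of the endpoint singularity.

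Your premise that killing the $t=-1$ singularity shifts the exponent from $d/(d-1)$ to $(d+1)/d$ does not hold up. Near $t=1$ your $f$ expands as $f(t) = (\text{smooth}) - \tfrac{1}{\pi}\sqrt{1-t}\,\bigl(1 + O(1-t)\bigr)$, which is exactly the same $(1-t)^{1/2}$-type singularity as the bias-free zonal kernel of Remark~\ref{remark:edr-on-unitsphere-in-Rd}, and the Funk--Hecke asymptotics are driven by the order of this dominant singularity, giving per-degree decay $\mu_k \asymp k^{-d}$ in both cases. Removing the $t=-1$ singularity only eliminates the parity cancellation, so all degrees contribute rather than one parity, which changes a constant but not the order in $k$; and since the cumulative multiplicity through degree $k$ on $\unitsphere{d-1}$ is $\Theta(k^{d-1})$ either way, your sorted sequence would still read $\lambda_j \asymp j^{-d/(d-1)}$. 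There is no integer or half-integer singularity order that would produce the non-integer per-degree exponent $(d^2-1)/d$ required by a direct $\unitsphere{d-1}$ analysis targeting $(d+1)/d$. Your handling of the second claim via the measure-comparison theorems of \cite{Li2024-edr-general-domain} matches the paper, but without the lifting step you do not reach the correct base rate to compare against.
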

\begin{proof}
The proof follows by applying
\cite[Theorem 10]{Li2024-edr-general-domain}. First, it can be verified that the probability
density function of $P$ is $p(\bx)  = 1/S_{d-1}$
for all $\bx \in \unitsphere{d-1}$ and $p(\bx) = 0$ for all
$\bx \in \RR^d \setminus \unitsphere{d-1}$. As a result, under the setting of fixed $d$ in this paper, we have
$p(\bx) \lsim ({1+\ltwonorm{\bx}^2})^{-(d+3)}$ .

Let $\mu_j(K,\cX,\mu)$ be the $j$-th eigenvalue of the integral operator
$T_K$ associated with $K$ with distribution $\mu$ supported on $\cX$.
Define the kernel $\hat K(\bu,\bv) \defeq
\iprod{\tbu/\ltwonorm{\tbu}}{\tbv/\ltwonorm{\tbv}}
\pth{\pi - \arccos \iprod{\tbu/\ltwonorm{\tbu}}{\tbv/\ltwonorm{\tbv}} }/(2\pi)$
for all $\bu,\bv \in \cX$ and
another kernel $K_0(\bx,\bx') \defeq
\iprod{\bx}{\bx'}
\pth{\pi - \arccos \iprod{\bx}{\bx'}  }/(2\pi)$
for all $\bx,\bx' \in (\unitsphere{d})^+$,
where $(\unitsphere{d})^+ \defeq
\set{\bx' \in \RR^{d+1}, \bx' \in \unitsphere{d},
\bx'_{d+1} > 0}$.
We define the function
$\phi \colon \cX \to (\unitsphere{d})^+,
\phi(\bx) = \tbx/\ltwonorm{\tbx}$ for all $\bx \in \cX$.
Then it can be verified that the Jacobian and the Gram matrix for
$\phi$ is $J \phi =  {1}/{\ltwonorm{\tbx}} \cdot \bth{\bI_d; \bzero} -
\tbx \bx^{\top}/\ltwonorm{\tbx}^3 $, and $G = (J \phi)^{\top} J \phi$ with
$\abth{\det G} = \ltwonorm{\tbx}^{-2(d+1)}$.

For any
probability measure $\mu'$ on $\cX$, the push-forward probability measure
on $(\unitsphere{d})^+$, denoted as $\phi^* \mu'$ which is
induced by $\phi$, is defined by $(\phi^* \mu')(A)
\defeq \mu' (\phi^{-1}(\bA))$ for any set $A
\subseteq (\unitsphere{d})^+$.
We recall that $\mu$ is the probability measure of $P$.
Then it follows by
repeating the proof of \cite[Theorem 10]{Li2024-edr-general-domain} that
that
$\mu_j(K,\cX,\mu) = \mu_j(\hat K,\cX, q^2 \mu)
= \mu_j(K_0,(\unitsphere{d})^+ ,\phi^*(q^2 \mu))$ with
$q(\bx) \defeq \ltwonorm{\tbx}$ for all $\bx \in \cX$, and $K$ is defined in (\ref{eq:kernel-two-layer}).
Let $\tilde \mu  = \phi^*(q^2 \mu)$, then
$\tilde \mu = q^2 \phi^* \mu = q^2 p \cdot \phi^*(\diff \bx)$ where
$\diff \bx$ is the usual Lebesgue measure on $\cX$. Let $\tilde \sigma$ be the uniform measure
on $(\unitsphere{d})^+$, then it can be verified that
$\tilde \sigma = \abth{\det G}^{1/2} \phi^* (\diff \bx) $, and
it follows that $\tilde \mu(\tbx) =
\abth{\det G}^{-1/2} q^2 p \tilde \sigma(\tbx)
=\ltwonorm{\tbx}^{(d+3)}p(\bx) \cdot \tilde \sigma(\tbx) $.

Since $p(\bx) \lsim ({1+\ltwonorm{\bx}^2})^{-(d+3)}$,
we have $\ltwonorm{\tbx}^{(d+3)}p \lsim 1$, so it follows from
\cite[Theorem 8]{Li2024-edr-general-domain} that
$\mu_j(K_0,(\unitsphere{d})^+ ,\phi^*(q^2 \mu)) \asymp
\mu_j(K_0,(\unitsphere{d})^+,\tilde \sigma)$. Moreover, it follows
from \cite{BiettiM19,BiettiB21} that
$\mu_j(K_0,(\unitsphere{d})^+,\tilde \sigma)
 \asymp j^{-(d+1)/d}$, which completes the first part of this proposition.

For the case that $p(\bx) \lsim ({1+\ltwonorm{\bx}^2})^{-(d+3)}$, the same polynomial EDR can be obtained by repeating the above argument.
\end{proof}

\begin{remark}
\label{remark:edr-on-unitsphere-in-Rd}
[Another special case for the eigenvalue decay rate.]
We consider the case that $\cX = \unitsphere{d-1}$ and $\bth{\bbw_r}_{d+1} = 0$ for all $r \in [m]$ when training the neural network
 (\ref{eq:two-layer-nn}) by GD in Algorithm~\ref{alg:GD}, or equivalently, $\tbx = \bx$ in
 the neural network (\ref{eq:two-layer-nn}) with all the weights
 $\set{\bbw_r \in \RR^d}_{r=1}^m$ initialized by
 $\cN(\bzero,\kappa^2 \bI_{d})$. In this case we let
 $f^* \in \cH_{K_1}(\mu_0)$ where $K_1 (\bx,\bx') \defeq
 \iprod{\bx}{\bx'} \pth{\pi - \arccos  \iprod{\bx}{\bx'} }/(2\pi)$ for all $\bx,\bx' \in \cX$.
 It can be verified by repeating the proof of Theorem~\ref{theorem:LRC-population-NN-fixed-point}
 and all the results leading to
 Theorem~\ref{theorem:LRC-population-NN-fixed-point}
 that Theorem~\ref{theorem:LRC-population-NN-fixed-point}
 still hold. Furthermore, it follows
from \cite{BiettiM19,BiettiB21} that the polynomial EDR, $\lambda_j \asymp j^{-d/(d-1)}$ for all $j \ge 1$, holds for $K_1$.
In this case $\eps^2_n \asymp n^{-\frac{d}{2d-1}}$ according to
\cite[Corollary 3]{RaskuttiWY14-early-stopping-kernel-regression},
 where $\eps_n$ is the critical population rate of the kernel $K_1$.
As a result, the rate of nonparametric regression risk is
$\eps^2_n \asymp n^{-\frac{d}{2d-1}}$ which is the same minimax optimal rate
obtained by \cite{HuWLC21-regularization-minimax-uniform-spherical,
SuhKH22-overparameterized-gd-minimax}. In this way, we obtain
such minimax optimal rates obtained by \cite{HuWLC21-regularization-minimax-uniform-spherical,
SuhKH22-overparameterized-gd-minimax} as a special case of
 Theorem~\ref{theorem:LRC-population-NN-fixed-point}.
\end{remark}
\begin{figure}[!htbp]
\begin{center}
\includegraphics[width=0.7\textwidth]{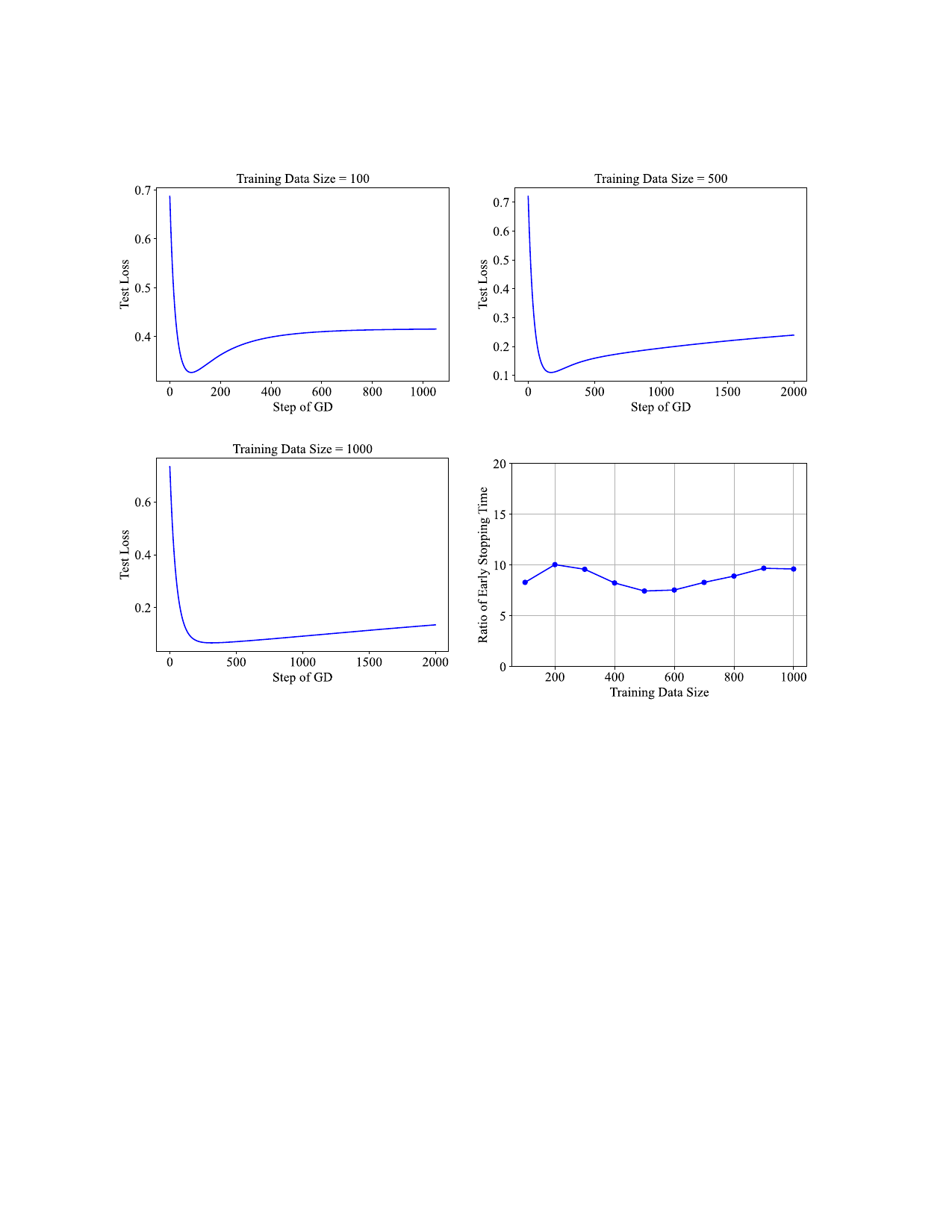}
\end{center}
\caption{Illustration of the test loss by GD and the ratio of early stopping time.}
\label{fig:simulation}
\end{figure}
\section{Simulation Study}
\label{sec:simulation}

We present simulation results in this section.
We randomly sample $n$ points $\set{\bbx_i}_{i=1}^n$  as a i.i.d. sample of random
variables distributed uniformly on the unit sphere in
$\RR^{50}$.  $n$ ranges within $[100,1000]$ with a step size of $100$.
We set the target function to $f^*(\bx) = \bs^{\top} \bx$
where $\bs \sim \Unif{\cX}$ is randomly sampled.
We also uniformly and independently sample $1000$ points on the unit sphere in $\RR^{50}$ as the test data.
We train the two-layer NN (\ref{eq:two-layer-nn}) using GD by
Algoirthm~\ref{alg:GD}
with $m \asymp n^2$ on a NVIDIA A100 GPU card with a learning rate $\eta = 0.1$, and report the test loss
in Figure~\ref{fig:simulation}. It can be observed that the early-stopping mechanism is always helpful in training neural networks with better generalization,
as the test loss initially decreases and then increases with over-training.
Figure~\ref{fig:simulation} illustrates the test loss with respect to the steps (or epochs) of GD for $n=100,500,1000$. For each $n$ in $[100,1000]$ with a step size of $100$,
we find the step of GD $\hat t_n$ where the minimum test loss is achieved, which is the empirical early stopping time.
We note that the early stopping time theoretically predicted by Corollary~\ref{corollary:minimax-nonparametric-regression}
 is $1/\hat \eps^2_n \asymp n^{(d+1)/(2d+1)}$, and we compute the ratio of early stopping time for each $n$ by
$\hat t_n/n^{(d+1)/(2d+1)}$. Such ratios for different values of $n$ are illustrated in the bottom right figure of Figure~\ref{fig:simulation}. It is observed that the ratio of early stopping time is roughly stable and distributed between
$[8,10]$, suggesting that the theoretically predicted early stopping time is empirically proportional to the empirical early stopping time.

\end{appendices}

\bibliographystyle{IEEEtran}
\bibliography{ref}

\end{document}